\documentclass{article}
\PassOptionsToPackage{numbers, compress}{natbib}
\usepackage[preprint]{neurips_2026}

\usepackage[utf8]{inputenc} 
\usepackage[T1]{fontenc}    
\usepackage[
    colorlinks=true,   
    linkcolor=red,     
    citecolor=blue,    
    urlcolor=blue      
]{hyperref}       
\usepackage{url}            
\usepackage{booktabs}       
\usepackage{amsfonts}       
\usepackage{mathtools,amssymb}
\usepackage{amsthm}         
\newtheorem{proposition}{Proposition}
\usepackage{nicefrac}       
\usepackage{microtype}      
\usepackage{xcolor}         
\usepackage{graphicx}       
\usepackage{booktabs}
\usepackage{multirow}
\usepackage{graphicx}
\usepackage{colortbl}
\usepackage{subcaption}
\usepackage{wrapfig}
\usepackage{tikz}
\usepackage{pgfplots}
\pgfplotsset{compat=1.18}
\pgfplotsset{
  colormap={magma}{
    rgb255=(0,0,4) rgb255=(29,17,71) rgb255=(81,18,124)
    rgb255=(131,38,129) rgb255=(183,55,121) rgb255=(229,80,100)
    rgb255=(251,135,97) rgb255=(254,194,135) rgb255=(252,253,191)
  }
}

\definecolor{zoomcolor}{RGB}{255,200,0}

\definecolor{ourgreen}{RGB}{218,242,208}    
\definecolor{catband}{RGB}{240,240,246}     
\definecolor{deltapink}{RGB}{198,68,136}    
\newcommand{\tablegroup}[2]{\rowcolor{catband}\multicolumn{#1}{c}{\textit{#2}} \\}
\newcommand{\tabgain}[1]{\textcolor{deltapink}{\scriptsize(#1)}}
\newcommand{\muted}[1]{\textcolor{black!60}{#1}}

\newlength{\zcellW}

\newcommand{\zcell}[6]{%
  \begin{tikzpicture}[x=\zcellW,y=0.5625\zcellW,inner sep=0pt,outer sep=0pt]
    \node[anchor=south west,inner sep=0pt] at (0,0)
      {\includegraphics[width=\zcellW]{#1}};
    \draw[zoomcolor,line width=0.35pt] (#3,#4) rectangle (#5,#6);
    \node[anchor=south east,draw=zoomcolor,line width=0.4pt,inner sep=0pt]
      at (1,0) {\includegraphics[width=0.7\zcellW,trim={#2},clip]{#1}};
  \end{tikzpicture}%
}

\newcommand{\diffcell}[1]{%
  \begin{tikzpicture}[x=\zcellW,y=0.5625\zcellW,inner sep=0pt,outer sep=0pt]
    \useasboundingbox (0,0) rectangle (1,1);
    \node[anchor=south west,inner sep=0pt] at (0,0)
      {\includegraphics[width=\zcellW]{#1}};
    \draw[zoomcolor,line width=0.45pt] (0.01,0.70) rectangle (0.55,0.985);
    \draw[zoomcolor,line width=0.45pt] (0.77,0.27) rectangle (0.94,0.70);
  \end{tikzpicture}%
}

\definecolor{roired}{RGB}{231,76,60}

\newcommand{\fullroi}[5]{%
  \begin{tikzpicture}[x=\zcellW,y=0.5625\zcellW,inner sep=0pt,outer sep=0pt]
    \useasboundingbox (0,0) rectangle (1,1);
    \node[anchor=south west,inner sep=0pt] at (0,0)
      {\includegraphics[width=\zcellW]{#1}};
    \draw[roired,line width=0.7pt] (#2,#3) rectangle (#4,#5);
  \end{tikzpicture}%
}

\newcommand{\zoomcrop}[2]{%
  \includegraphics[width=\zcellW,trim={#2},clip]{#1}%
}

\newcommand{\cbarpanel}{%
  \begin{tikzpicture}[inner sep=0pt,outer sep=0pt]
    \useasboundingbox (0mm,0mm) rectangle (15mm,50mm);
    \node[anchor=south west,inner sep=0pt] (cb) at (0,0)
      {\includegraphics[height=47mm]{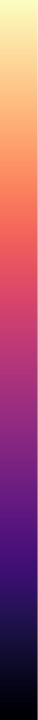}};
    \draw[line width=0.3pt] (cb.south east) -- ++(0.9mm,0)
      node[anchor=west,font=\scriptsize,inner sep=1pt] {0.0};
    \draw[line width=0.3pt] ([yshift=23.5mm]cb.south east) -- ++(0.9mm,0)
      node[anchor=west,font=\scriptsize,inner sep=1pt] {0.5};
    \draw[line width=0.3pt] (cb.north east) -- ++(0.9mm,0)
      node[anchor=west,font=\scriptsize,inner sep=1pt] {1.0};
    \node[anchor=south,inner sep=0pt]
      at ([xshift=6mm,yshift=15mm]cb.south east)
      {\rotatebox{90}{\scriptsize Pixel Difference}};
  \end{tikzpicture}%
}

\title{LiBrA-Net: Lie-Algebraic Bilateral Affine Fields for Real-Time 4K Video Dehazing}

\renewcommand{\thefootnote}{\fnsymbol{footnote}}
\author{%
  \bfseries
  Yongcong Wang$^{1}$\quad
  Chengchao Shen$^{1}$\quad
  Guangwei Gao$^{2}$\quad
  Wei Wang$^{3}$\quad
  Pengwen Dai$^{3}$\\
  \bfseries
  Dianjie Lu$^{4}$\quad
  Guijuan Zhang$^{4}$\quad
  Zhuoran Zheng$^{5,\ast}$\\[5pt]
  \mdseries\small
  $^{1}$\,Central South University\qquad
  $^{2}$\,Nanjing University of Science and Technology\\[1pt]
  \mdseries\small
  $^{3}$\,Sun Yat-sen University\qquad
  $^{4}$\,Shandong Normal University\qquad
  $^{5}$\,Qilu University of Technology\\[3pt]
}

\begin{document}

\maketitle
\footnotetext[1]{Corresponding author: \texttt{zhengzr@njust.edu.cn}}
\renewcommand{\thefootnote}{\arabic{footnote}}
\setcounter{footnote}{0}

\begin{abstract}
Currently, there is a gap in the field of ultra-high-definition (UHD) video dehazing due to the lack of a benchmark for evaluation.
Furthermore, existing video dehazing methods cannot run on consumer-grade GPUs when processing continuous UHD sequences of 3--5 frames at a time.
In this paper, we address both issues with a new benchmark and an efficient method.
Our key observation is that atmospheric dehazing reduces to a per-pixel affine transform governed by the low-frequency depth field, which can be compactly encoded in bilateral grids whose prediction cost is decoupled from the output resolution.
Building on this, we propose LiBrA-Net, which factorizes the spatiotemporal affine field into a spatial--color and a temporal bilateral sub-grid predicted at a fixed low resolution, fuses their coefficients in the $\mathfrak{gl}(3)$ Lie algebra under group-theoretic regularization, maps the result to invertible $GL(3)$ transforms via a Cayley parameterization, and restores high-frequency detail through a lightweight input-guided branch.
We further release UHV-4K, the first paired 4K video dehazing benchmark with depth, transmission, and optical-flow annotations on every frame.
Across UHV-4K, REVIDE, and HazeWorld, LiBrA-Net sets a new state of the art among compared video dehazing methods while running native 4K at 25\,FPS on a single GPU with only 6.12\,M parameters. Code and data are available at \url{https://anonymous.4open.science/r/LiBrA-Net-42B8}.
\end{abstract}

\vspace{-6pt}
\section{Introduction}
\vspace{-4pt}

\begin{figure*}[!t]
  \centering
  \includegraphics[width=\textwidth]{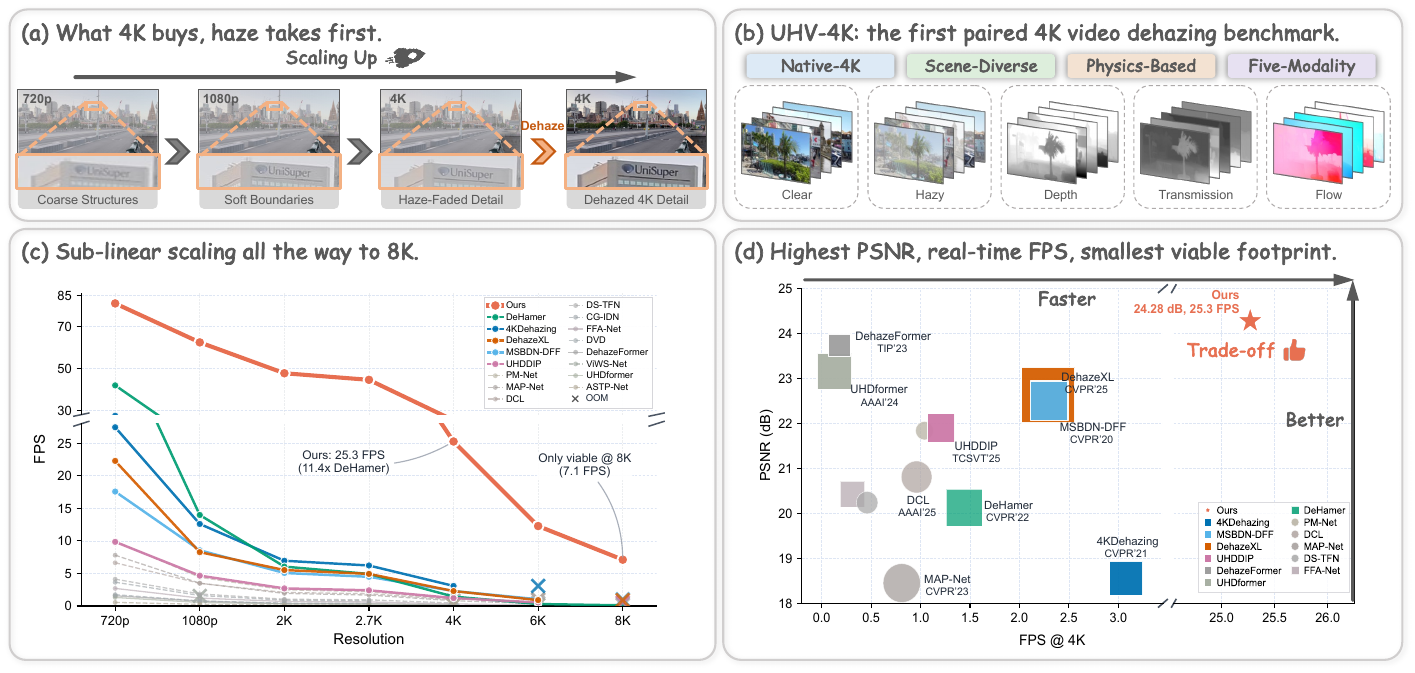}
  \caption{\textbf{(a)}~Higher resolution reveals finer scene structure that haze attenuates first.
  \textbf{(b)}~UHV-4K benchmark with five aligned modalities.
  \textbf{(c)}~Bilateral-grid prediction is resolution-decoupled: our throughput drops sub-linearly with pixel count from 720p to 8K, while dense baselines slow down roughly linearly.
  \textbf{(d)}~Quality--speed trade-off on UHV-4K; LiBrA-Net reaches 24.28\,dB at real-time 4K throughput.}
  \label{fig:teaser}
  \vspace{-4mm}
\end{figure*}

UHD video at $3840{\times}2160$ is now standard for autonomous driving and outdoor surveillance, where fine textures enable small-object detection at range. Atmospheric haze degrades this high-frequency detail: under the scattering model~\citep{narasimhan2002vision,he2010single}, it suppresses contrast and blurs the structures that justify a 4K sensor, causing downstream perception to fail in adverse weather~\citep{xie2024synfog}. Restoring these details on continuous video streams---in real time---remains an open problem.

Bilateral-grid-based dehazing offers a promising direction. 4KDehazing~\citep{4KDehazing} showed that inverting the atmospheric scattering model (ASM) reduces to a per-pixel affine color transform governed by the low-frequency depth field; encoding this transform in a bilateral grid~\citep{chen2007real,gharbi2017deep} decouples prediction cost from output resolution, enabling real-time UHD single-image dehazing. However, the approach operates frame by frame and introduces no temporal constraint, so applying it to video produces visible flicker between consecutive frames~\citep{lei2020blind}. Meanwhile, dedicated video dehazing methods~\citep{CG-IDN_REVIDE_dataset,MAP-Net_HazeWorld_dataset,DVD_GoProHazy_and_DrivingHazy_dataset,DCL,PM-Net,DS-TFN,ASTP-Net} aggregate temporal cues through optical flow or dense attention, but their spatiotemporal backbones are designed for $1080$p or below. Scaling them to 4K is prohibitive: dense attention grows quadratically with pixel count, flow-based warping at least linearly, and common workarounds---spatial patching or bilinear downsampling---either destroy the global atmospheric context that the ASM requires or discard the high-frequency detail that 4K sensors exist to capture. The shared root cause is that conventional dense-prediction architectures couple spatiotemporal modeling cost to the output resolution.

We address this coupling with a single design principle: \emph{model spatiotemporal structure at low resolution and apply the learned transform at the target resolution.} Native 4K video dehazing is not a single-image method scaled up in time---it requires reconciling spatial bandwidth and temporal coherence under the same compute budget. We propose the \textbf{Li}e-algebraic \textbf{B}ilateral-g\textbf{r}id \textbf{A}ffine \textbf{Net}work (LiBrA-Net), which factorizes the full spatiotemporal affine field into a spatial--color and a temporal sub-grid, fuses them through a Lie-algebraic operator that enforces frame-to-frame coherence and channel symmetry, and slices the result back to native 4K, leaving only a lightweight detail branch to recover the high-frequency residual. Because all learned components run at a fixed $256{\times}256$ regardless of output size, grid prediction cost is constant from 720p to 8K; only the trilinear slice scales with the target resolution. We make three contributions:
\begin{itemize}
\item \textbf{New pipeline.} We show that the spatiotemporal affine field of a video dehazer can be compactly represented as a pair of low-resolution bilateral sub-grids whose 12-dimensional coefficients lie in $\mathfrak{aff}(3)$ and combine additively; a Cayley map sends the fused result back to $GL(3)$. The resulting architecture, LiBrA-Net, has 6.12\,M parameters and costs 282\,GFLOPs at native 4K.

\item \textbf{New benchmark.} We release UHV-4K, the first paired 4K video dehazing benchmark with depth, transmission, and optical-flow annotations on every frame, filling the data void that has prevented systematic study of UHD video dehazing.

\item \textbf{State-of-the-art results.} Across UHV-4K, REVIDE, and HazeWorld, LiBrA-Net is the only method that exceeds 23\,dB PSNR on the 4K benchmark while sustaining 25\,FPS at native $3840{\times}2160$ on a single GPU. Its output drives off-the-shelf YOLOv8-X and SegFormer-B5 close to clean-image detection and segmentation quality.
\end{itemize}

\vspace{-6pt}
\section{Related Work}
\label{sec:related}
\vspace{-6pt}

\subsection{Dehazing Datasets and Methods}
\label{subsec:related_dehazing}
\vspace{-4pt}

Single-image dehazing has progressed from handcrafted priors~\citep{he2010single,berman2016nonlocal} through deep CNNs and attention-based architectures~\citep{FFA-Net,MSBDN-DFF,DeHamer,DehazeFormer} to UHD-specialized methods~\citep{4KDehazing,UHDformer,wang2025ultra,liu2025uhd,Li2023EmbeddingFF} that extract features at a reduced resolution and recover the full image via guided upsampling or frequency decomposition, reaching 4K input at manageable cost. Image benchmarks---synthetic RESIDE~\citep{Li2017BenchmarkingSD} and recent 4K paired sets~\citep{4KDehazing,liu2025uhd,wang2025ultra}---have driven this line, but all these methods treat each frame in isolation; applying them to video introduces visible flicker~\citep{lei2020blind} that no per-frame architecture can suppress.
Video dehazing methods address temporal coherence through optical-flow warping~\citep{Ren2019DeepVD,MAP-Net_HazeWorld_dataset}, deformable-convolution alignment~\citep{CG-IDN_REVIDE_dataset,DVD_GoProHazy_and_DrivingHazy_dataset,DCL}, or motion-free propagation such as phase-based memory and windowed spatiotemporal attention~\citep{PM-Net,ASTP-Net,DS-TFN,ViWS-Net}. Paired video corpora have advanced from real indoor scenes~\citep{CG-IDN_REVIDE_dataset} to large-scale outdoor synthesis~\citep{MAP-Net_HazeWorld_dataset} and 1080p driving sequences~\citep{DVD_GoProHazy_and_DrivingHazy_dataset}, yet none provides native 4K resolution with aligned depth, transmission, and optical-flow annotations (Appendix~Table~\ref{tab:dataset_comparison}). Their temporal modules operate on feature maps whose spatial extent scales at least linearly with pixel count, capping practical training at $1080$p. Native 4K video dehazing therefore remains at the unattended intersection of these two lines: the UHD branch offers spatial scalability without temporal coherence, and the video branch offers temporal coherence without resolution scalability.

\vspace{-4pt}
\subsection{Bilateral Grids and Locally Affine Color Transforms}
\label{subsec:related_bilateral}
\vspace{-4pt}

The bilateral filter~\citep{tomasi1998bilateral} and the guided filter~\citep{he2013guided} introduced edge-aware smoothing; the bilateral grid~\citep{chen2007real} cast the same operation as a 3D splat--blur--slice pipeline, and Bilateral Guided Upsampling~\citep{Chen2016BilateralGU} showed that locally affine transforms fitted in this space can approximate a wide class of image operators. HDRNet~\citep{gharbi2017deep} made the fitting step learnable via differentiable slicing, and subsequent work extended the paradigm to image-adaptive 3D LUTs~\citep{Zeng2020LearningI3,Yang2022AdaIntLA,Yang2022SepLUTSI}. A shared property is that grid coefficients are predicted at a fixed low resolution and applied at full resolution by trilinear interpolation. Within dehazing, the closest prior is 4KDehazing~\citep{4KDehazing}, which fits a single-frame spatial--color bilateral grid with unconstrained affine coefficients and carries no temporal axis. Cayley parameterizations have been used for orthogonal RNN weights~\citep{helfrich2018orthogonal,lezcano2019cheap} with $M$ constrained to be skew-symmetric; we apply the Cayley map---to our knowledge for the first time---to unrestricted $M\in\mathfrak{gl}(3)$ for invertible affine color transforms in video restoration.

\vspace{-6pt}
\section{Methodology}
\label{sec:method}
\vspace{-4pt}

\begin{figure}[t]
  \centering
  \includegraphics[width=\linewidth]{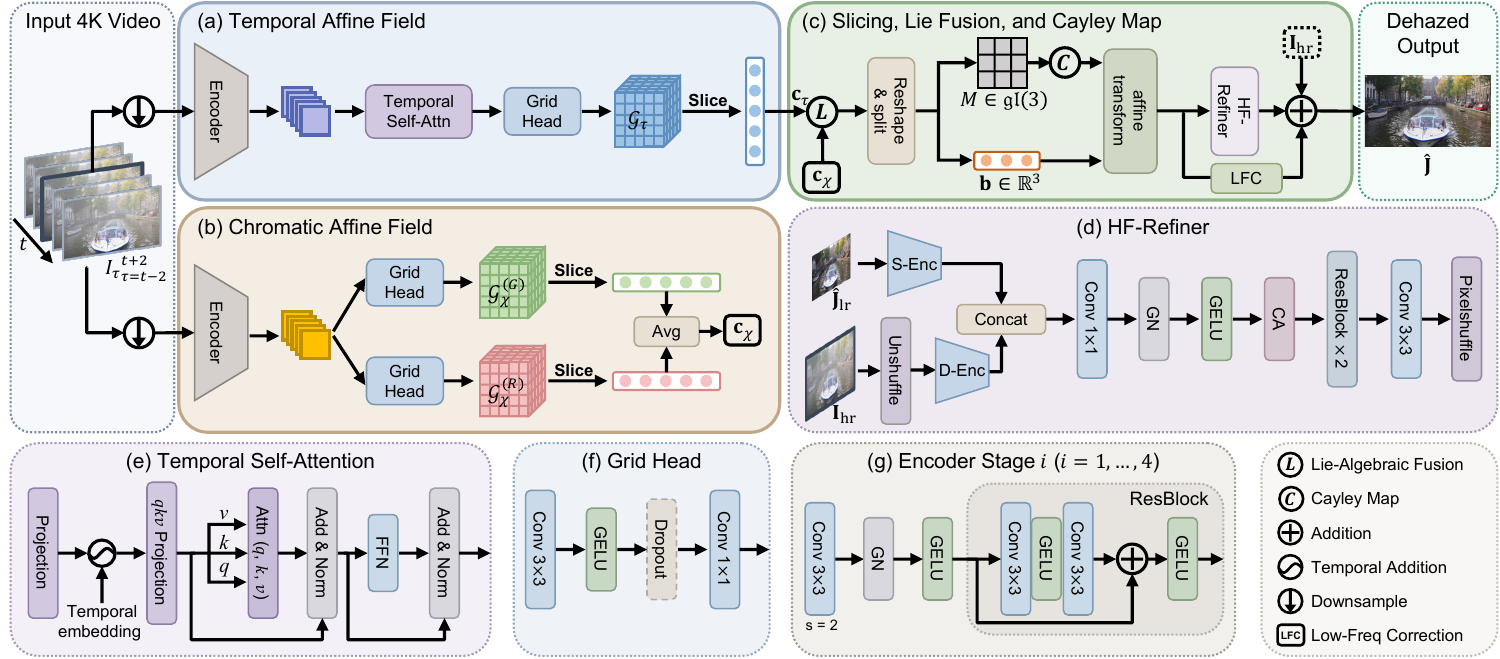}
  \caption{\textbf{Overview of LiBrA-Net.}
  Two encoder branches predict bilateral grid coefficients at a fixed low resolution: the Chromatic Affine Field consumes the center frame, and the Temporal Affine Field consumes all $T$ frames.
  Their coefficients are fused in $\mathfrak{gl}(3)$ and mapped to per-pixel $GL(3)$ affine transforms via the Cayley map.
  A lightweight HF-Refiner restores high-frequency detail.}
  \label{fig:overview}
  \vspace{-4mm}
\end{figure}

Given $T$ consecutive hazy frames $\{\mathbf{I}_\tau\}_{\tau=1}^{T}$ at resolution $H \times W$, with the full-resolution center frame denoted $\mathbf{I}_{\mathrm{hr}} := \mathbf{I}_{\lceil T/2 \rceil}$, LiBrA-Net restores the clean center frame $\hat{\mathbf{J}}_{\lceil T/2 \rceil}$ at the same resolution. The ASM reduces dehazing to a per-pixel affine color transform whose spatial variation is inherited from the low-frequency depth field, a structure the bilateral grid encodes compactly. Two lightweight branches---the \textbf{Chromatic Affine Field} and the \textbf{Temporal Affine Field}---predict 4D bilateral grid coefficients at a fixed $S \times S$ resolution, and a lightweight \textbf{High-Frequency Refiner (HF-Refiner)} restores fine textures directly from $\mathbf{I}_{\mathrm{hr}}$; Figure~\ref{fig:overview} previews the data flow.
\vspace{-4pt}
\subsection{From Physics to a Bilateral Grid}
\label{sec:bilateral_grid}
\vspace{-4pt}
Our pipeline starts from the physics. The ASM writes the hazy radiance as
\begin{equation}
  \mathbf{I}(x) = \mathbf{J}(x)\,t(x) + A_\infty\bigl(1 - t(x)\bigr), \qquad
  t(x) = \exp\!\bigl(-\beta\, d(x)\bigr),
  \label{eq:asm}
\end{equation}
with $\mathbf{J}$ the clean radiance, $d$ the normalized depth, $\beta$ the scattering coefficient, and $A_\infty$ the global atmospheric light. Inverting it gives a per-pixel affine map
\begin{equation}
  \mathbf{J}(x) \;=\; a(x)\,\mathbf{I}(x) + b(x), \qquad
  a(x) = \tfrac{1}{t(x)}, \quad b(x) = A_\infty\!\left(1 - \tfrac{1}{t(x)}\right),
  \label{eq:dehaze_affine}
\end{equation}
whose gain and offset are determined solely by the local depth.

\vspace{-0.5em}
\paragraph{From scalar to $GL(3)$.}
In the ideal single-channel ASM the gain reduces to a scaled identity $t^{-1}\mathbf{I}_3$. Rayleigh/Mie wavelength dependence breaks the identity to a non-uniform diagonal, and ISP pipelines (demosaicking, white balance, color correction) couple the three channels. We absorb these effects by promoting the per-pixel transform to a full-rank matrix $\mathbf{A}(x) \in GL(3)$:
\begin{equation}
  \mathbf{J}(x) \;=\; \mathbf{A}(x)\,\mathbf{I}(x) + \mathbf{b}(x), \qquad
  \mathbf{A}(x) \in \mathbb{R}^{3\times 3},\;\; \mathbf{b}(x) \in \mathbb{R}^{3},
  \label{eq:matrix_affine}
\end{equation}
which contributes 12 affine coefficients per pixel, stacked as $\mathbf{c}(x) \in \mathbb{R}^{12}$.

\vspace{-0.5em}
\paragraph{Low-frequency coefficients meet bilateral grids.}
The spatial bandwidth of $\mathbf{c}(x)$ is inherited entirely from the depth field, since $a(x) = \exp(\beta\,d(x))$ varies only where $d(x)$ does. Natural-scene depth is dominated by large-scale geometry and stays smooth except at object boundaries, exactly the low-frequency edge-aware structure bilateral grids were designed to encode. A grid of spatial extent $G_h \times G_w = 16 \times 16$ suffices whether the output is 720p or $3840 \times 2160$: enlarging the output enlarges only the slicing cost, not the grid itself, a property already exploited at single-image scale by 4KDehazing~\citep{4KDehazing}.

\vspace{-0.5em}
\paragraph{6D to two 4D grids.}
A full spatiotemporal bilateral grid $\mathcal{G}[\tau, x, y, r, g, b] \in \mathbb{R}^{12}$ would carry six grid axes and become impractical to learn or store. We approximate it by two sub-grids, each a tensor with three grid axes carrying a 12-dim coefficient per cell: a \emph{spatial--color grid} $\mathcal{G}_\chi \in \mathbb{R}^{12 \times G_c \times G_h \times G_w}$ (the subscript $\chi$ indexes its chromatic axis) and a \emph{temporal grid} $\mathcal{G}_\tau \in \mathbb{R}^{12 \times T \times G_h \times G_w}$ ($T{=}5$). Their coefficients are added component-wise in the affine Lie algebra $\mathfrak{aff}(3) \cong \mathfrak{gl}(3) \oplus \mathbb{R}^3$ (as a vector space); the validity of this additive composition is established below. The full 3D color coordinate $(r,g,b)$ is projected onto a scalar color guide (instantiated as the R and G channels in \S\ref{sec:grid_prediction}), reducing the color axis to $G_c{=}8$ bins. This projection is justified physically: all three channels share the same scalar transmission $t(x)$, so their affine coefficients vary much more with depth than across channels. We fix $G_c{=}8$ and $G_h{=}G_w{=}16$ throughout.
\vspace{-4pt}
\subsection{Predicting the Two Bilateral Affine Fields}
\label{sec:grid_prediction}
\vspace{-4pt}
We predict $\mathcal{G}_\chi$ and $\mathcal{G}_\tau$ with two lightweight branches that share a stride-16 encoder structure but use independent weights and consume different inputs, all at a fixed resolution $S \times S = 256 \times 256$ regardless of the input size.

\vspace{-0.5em}
\paragraph{Chromatic Affine Field.}
The center frame $\mathbf{I}_{\lceil T/2 \rceil}$ is bilinearly downsampled to $S \times S$ and processed by a stride-16 encoder with channel widths $[24, 48, 96, 192]$. Each stage applies a stride-2 convolution, GroupNorm, GELU, and a residual block with stochastic depth. The encoder output, a $192$-channel $16 \times 16$ feature map, is passed through a grid head (two convolutional layers interleaved with GELU and Dropout2d) that produces a \emph{pair} of spatial--color grids $\mathcal{G}_\chi^{(R)}, \mathcal{G}_\chi^{(G)} \in \mathbb{R}^{12 \times G_c \times G_h \times G_w}$, guided by the R and G channels respectively.

Because the R and G channels have different spectral sensitivities, each induces a complementary edge-aware partition of the color axis. After slicing, the two predictions are averaged into a single coefficient vector
\begin{equation*}
  \mathbf{c}_\chi \;=\; \tfrac{1}{2}\bigl(\mathbf{c}_\chi^{(R)} + \mathbf{c}_\chi^{(G)}\bigr),
\end{equation*}
and a guide-consistency penalty in $\mathcal{L}_{\mathrm{lie}}$ (\S\ref{sec:lie_cayley}) keeps the two grids aligned.

\vspace{-0.5em}
\paragraph{Temporal Affine Field.}
Temporal affine coefficients vary across frames in ways that a frame-averaged feature would erase, so we model them with per-position self-attention rather than temporal convolution.
All $T$ frames are downsampled to $S \times S$ and encoded in a single batch of $B \cdot T$ samples, sharing the Chromatic Field's architecture but with independent weights. Per-frame features are reshaped to $[B, T, 192, 16, 16]$ and augmented with a learnable temporal positional embedding $\mathbf{p} \in \mathbb{R}^{1 \times T \times 192 \times 1 \times 1}$. A pre-norm multi-head self-attention block (8 heads) with a two-layer FFN then operates along the temporal axis at each spatial location independently, treating the tensor as $T$ tokens of dimension $192$ at each of the $B{\cdot}16{\cdot}16$ spatial positions. With only $T{=}5$ tokens per position the attention cost is negligible, yet unlike a $1{\times}1$ temporal convolution that fixes the cross-frame weights, it adapts to each input and preserves per-frame variation. A grid head mirroring the Chromatic Field's produces per-frame coefficients, yielding $\mathcal{G}_\tau \in \mathbb{R}^{12 \times T \times G_h \times G_w}$.

Both grid predictors operate entirely at $16 \times 16$, so their cost is constant in $H$ and $W$; the following section shows how trilinear slicing and Lie-algebraic fusion bridge the gap to the target resolution while keeping the resulting affine transform invertible.
\vspace{-4pt}
\subsection{Slicing, Lie Fusion, and the Cayley Map}
\label{sec:lie_cayley}
\vspace{-4pt}
With both grids predicted, three operations remain: querying them at the output resolution, fusing them coherently across the spatial--color and temporal axes, and turning their coefficients into invertible affine matrices.

\vspace{-0.5em}
\paragraph{Resolution-adaptive trilinear slicing.}
Both grids are queried at the target resolution by trilinear interpolation, the standard bilateral-grid slice. Writing $\tilde{x}, \tilde{y} \in [-1,1]$ for the normalized spatial coordinates and $\tilde{c}_R := 2I_R(x,y) - 1$ for the R-channel guide, the R-guided spatial--color slice reads
\begin{equation}
  \mathbf{c}_\chi^{(R)}(x,y) \;=\; \operatorname{trilinear}\!\bigl(\mathcal{G}_\chi^{(R)};\,\tilde{x},\,\tilde{y},\,\tilde{c}_R\bigr).
  \label{eq:sc_slice}
\end{equation}
The G-guided grid is sliced analogously with $\tilde{c}_G$, and $\mathcal{G}_\tau$ reuses the same spatial coordinates with $\tilde{\tau} = 0$ (the center frame). The slicing resolution is set to $H' \times W' = H/p \times W/p$ with $p{=}4$, matching the PixelUnshuffle factor of the detail recovery stage, rather than the fixed encoder size $S \times S$. This ties training crops and 4K inference to the same resolution path, removing the domain shift a fixed-resolution slice would create.

\vspace{-0.5em}
\paragraph{Residual fusion in the Lie algebra.}
With $\mathbf{c}_\chi$ already obtained from the dual-guide average above, the two sliced coefficient vectors are combined by component-wise addition:
\begin{equation}
  \mathbf{c}_{\mathrm{fused}} \;=\; \mathbf{c}_\chi + \mathbf{c}_\tau \;\in\; \mathbb{R}^{12}.
  \label{eq:lie_fusion}
\end{equation}
The first nine components form a matrix $M \in \mathfrak{gl}(3)$ and the remaining three a translation; the translation adds trivially as vectors, and the argument below concerns the matrix part. Adding in $\mathfrak{gl}(3)$ stays inside a vector space and preserves invertibility downstream, whereas the analogous matrix sum of two $GL(3)$ elements is not even guaranteed to lie in $GL(3)$.

\begin{proposition}[Lie-algebraic fusion approximates group composition]
\label{prop:lie_composition}
For any $M_\chi, M_\tau \in \mathfrak{gl}(3)$ with $\rho := \max(\|M_\chi\|_F, \|M_\tau\|_F)$,
\begin{equation}
  \operatorname{Cay}(M_\chi + M_\tau) \;=\; \operatorname{Cay}(M_\chi)\,\operatorname{Cay}(M_\tau) \;+\; O(\rho^2).
  \label{eq:cayley_expand}
\end{equation}
\end{proposition}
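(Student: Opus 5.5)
Since the paper has not yet written out the Cayley map, I fix the standard convention $\operatorname{Cay}(M) := (\mathbf{I}_3 - M)^{-1}(\mathbf{I}_3 + M)$; the argument is unchanged for scaled variants such as $(\mathbf{I}_3 - M/2)^{-1}(\mathbf{I}_3 + M/2)$. The plan is to expand both sides of \eqref{eq:cayley_expand} to first order around $M = 0$ and show that their zeroth- and first-order terms agree, so the difference is quadratic in $\rho$. The statement is asymptotic, so I restrict to a neighbourhood $\rho \le \rho_0 < 1/2$ (say $\rho_0 = 1/4$). There, $\|M_\chi + M_\tau\|_F \le 2\rho < 1$, and since the operator norm is at most the Frobenius norm, $\mathbf{I}_3 - M$ is invertible for $M \in \{M_\chi, M_\tau, M_\chi + M_\tau\}$. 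The Neumann series then converges. Outside this ball the claim carries no content: it is a statement about a small-coefficient regime, and I will say so explicitly.

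\textbf{Step 1 (single-argument expansion).} For $\|M\|_F \le r < 1$, write $(\mathbf{I}_3 - M)^{-1} = \mathbf{I}_3 + M + R_1(M)$ with $\|R_1(M)\|_F \le \|M\|_F^2/(1-r)$, using submultiplicativity of $\|\cdot\|_F$ and summing the geometric tail. Multiplying by $\mathbf{I}_3 + M$ gives
\[
\operatorname{Cay}(M) = \mathbf{I}_3 + 2M + R(M), \qquad \|R(M)\|_F \le C_r \|M\|_F^2,
\]
with an explicit constant, for instance $C_r = 1 + (1+r)/(1-r)$. In fact $\operatorname{Cay}(M) = \mathbf{I}_3 + 2M + 2M^2(\mathbf{I}_3 - M)^{-1}$ exactly, which gives a cleaner bound.

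\textbf{Step 2 (compare the two sides).} Apply Step 1 three times: to $M_\chi + M_\tau$ with $r = 2\rho_0$, and to $M_\chi$ and $M_\tau$ individually. The left side becomes $\mathbf{I}_3 + 2M_\chi + 2M_\tau + O(4\rho^2)$. The right side is the product $(\mathbf{I}_3 + 2M_\chi + R_\chi)(\mathbf{I}_3 + 2M_\tau + R_\tau)$. Expanding it gives $\mathbf{I}_3 + 2M_\chi + 2M_\tau$ plus the cross term $4M_\chi M_\tau$, plus terms involving $R_\chi$ or $R_\tau$. Each of the extra terms is bounded in Frobenius norm by a constant times $\rho^2$, using $\|M_\chi M_\tau\|_F \le \rho^2$ and $\|\mathbf{I}_3 + 2M\|_F \le \sqrt{3} + 2\rho_0$. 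Subtracting the two sides cancels the identity and linear terms. This yields $\|\operatorname{Cay}(M_\chi + M_\tau) - \operatorname{Cay}(M_\chi)\operatorname{Cay}(M_\tau)\|_F \le K\rho^2$ with an explicit $K$ depending only on $\rho_0$.

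\textbf{Optional refinement.} I would remark that the leading error is $-2M_\chi M_\tau + O(\rho^3)$, or equivalently a commutator-type term plus symmetric quadratic terms, so the approximation becomes exact to higher order when the factors commute. This parallels Baker--Campbell--Hausdorff for the exponential map.

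The main obstacle is not the algebra but the bookkeeping needed to make the $O(\rho^2)$ uniform. I have to fix the ball $\rho \le \rho_0$ so that all three inverses exist simultaneously, noting that the sum can have norm up to $2\rho$. I also have to track the Frobenius-norm constants, including the $\sqrt{3}$ from $\|\mathbf{I}_3\|_F$, so that the final constant $K$ is genuinely independent of the particular matrices.
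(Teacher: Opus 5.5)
Your Steps 1--2 are correct and prove the proposition in the only sense in which it can hold: as $\rho \to 0$, on a ball where all three resolvents exist. No uniform constant is possible near the singular set of $\operatorname{Cay}$.

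\textbf{Comparison with the paper.} The technique is the paper's own: Neumann-expand the Cayley map and cancel terms between the two sides. The paper uses $\operatorname{Cay}(M) = (I - M/2)^{-1}(I + M/2)$, which is your map evaluated at $M/2$. It expands to $I + M + \tfrac12 M^2 + O(\|M\|^3)$ and cancels the constant, linear and symmetric quadratic terms. This isolates the residual $-\tfrac12[M_\chi, M_\tau] + O(\rho^3)$, which it bounds by $\rho^2$ via submultiplicativity. You stop one order earlier.

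For the bare $O(\rho^2)$ claim, your first-order version is the more economical and more careful argument:
\begin{itemize}
\item You check that $I - (M_\chi + M_\tau)$ is invertible. The paper states the Neumann condition only for a single argument and then substitutes the sum without checking it.
\item You obtain an explicit uniform constant $K$, where the paper leaves its $O(\rho^3)$ remainders unquantified.
\end{itemize}
What you give up is the identification of the leading term. The paper uses that term afterwards, both for the commutator intuition behind the identity prior and for the remark that the error drops to third order when the factors commute.

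\textbf{Your optional refinement is wrong as written.} In your convention $\operatorname{Cay}(M) = I + 2M + 2M^2 + O(\|M\|^3)$.
\begin{itemize}
\item The second-order part of the left side is $2(M_\chi + M_\tau)^2 = 2M_\chi^2 + 2M_\chi M_\tau + 2M_\tau M_\chi + 2M_\tau^2$.
\item The second-order part of the right side is $2M_\chi^2 + 4M_\chi M_\tau + 2M_\tau^2$.
\end{itemize}
The symmetric parts cancel exactly, so the leading error is the pure commutator $-2[M_\chi, M_\tau] = 2(M_\tau M_\chi - M_\chi M_\tau)$. There are no leftover symmetric quadratic terms.

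Your expression $-2M_\chi M_\tau$ also contradicts your own commuting-case remark. For $M_\chi = M_\tau = \epsilon I$ the true error is $4\epsilon^3 I + O(\epsilon^4)$, whereas $-2M_\chi M_\tau = -2\epsilon^2 I$. Substituting $M \mapsto M/2$ turns the correct term into the paper's $-\tfrac12[M_\chi, M_\tau]$.
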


\noindent\textit{Intuition.} Cayley is locally an exponential at the identity, so adding two near-zero matrices in $\mathfrak{gl}(3)$ approximates multiplying the corresponding $GL(3)$ matrices; the leading residual is proportional to the commutator $[M_\chi, M_\tau]$, kept small by the identity prior below. The proof is given in Appendix~\ref{app:proof_lie}.
\vspace{-0.5em}
\paragraph{Cayley affine parameterization.}
The matrix part $M$ is mapped to a per-pixel invertible affine by the Cayley map,
\begin{equation}
  \mathbf{A} = \operatorname{Cay}(M) = (I - M/2)^{-1}(I + M/2),
  \label{eq:cayley}
\end{equation}
the $[1/1]$ Pad\'e approximant of $\exp(M)$, computed by one closed-form $3 \times 3$ solve per pixel. Two properties make this parameterization useful here. First, $M = 0$ implies $\mathbf{A} = I$, so zero-initializing the last layer of every grid head sets the initial output to the input and lets the network learn corrections incrementally. Second, $\det \mathbf{A} = \det(I + M/2)/\det(I - M/2)$ is non-zero whenever $\pm 2$ are not eigenvalues of $M$, a condition the identity prior maintains in practice. Applying $(\mathbf{A}(x), \mathbf{b}(x))$ pixel-wise via Eq.~\eqref{eq:matrix_affine} at $H/p \times W/p$ produces the low-resolution dehazed image $\hat{\mathbf{J}}_{\mathrm{lr}} \in [0,1]^{3 \times H/p \times W/p}$.
\vspace{-0.5em}
\paragraph{Lie-algebraic regularization as grid compactification.}
Without constraints the grid carries far more degrees of freedom than the underlying physics warrants, leading to temporal flicker, hue drift between the two guide-conditioned grids, and departure from the near-identity regime of Proposition~\ref{prop:lie_composition}. Four parameter-free penalties applied \emph{before} the Cayley map address each axis:
\emph{identity prior} $\|M\|^2$ ($\lambda_{\mathrm{id}}{=}0.01$) keeps the $\mathfrak{gl}(3)$ coefficients small;
\emph{spatial smoothness} $\|\nabla_{xy}\mathcal{G}\|^2$ ($\lambda_{\mathrm{sp}}{=}0.05$) band-limits each grid spatially;
\emph{temporal smoothness} $\|\partial_t \mathcal{G}_\tau\|^2$ ($\lambda_{\mathrm{tm}}{=}0.10$) enforces continuity across frames;
\emph{guide consistency} $\|\mathcal{G}_\chi^{(R)} - \mathcal{G}_\chi^{(G)}\|_1$ ($\lambda_{\mathrm{g}}{=}0.02$) ties the two chromatic grids together.
Together they confine the grid to a smooth, compact subset on which the first-order Cayley composition remains valid; \S\ref{sec:analysis_chapter} verifies this empirically.
\vspace{-4pt}
\subsection{High-Frequency Refinement}
\label{sec:drn}
\vspace{-4pt}
The bilateral path delivers low-frequency color correction; high-frequency texture, suppressed by transmission, has yet to come back.

\vspace{-0.5em}
\paragraph{High-frequency preservation under the ASM.}
Differentiating Eq.~\eqref{eq:asm} spatially gives $\nabla \mathbf{I} = t\,\nabla \mathbf{J} + (\mathbf{J} - A_\infty)\,\nabla t \approx t\,\nabla \mathbf{J}$, where the approximation holds because $t = \exp(-\beta\,d)$ inherits the low-frequency character of the depth field, so $\nabla t \approx 0$ at most pixels. Equivalently,
\begin{equation}
  \nabla \mathbf{I}(x) \;\approx\; t(x)\,\nabla \mathbf{J}(x), \qquad t(x) \in (0,1],
  \label{eq:asm_gradient}
\end{equation}
so the hazy input already carries a scaled copy of the clean gradients. The HF-Refiner therefore only needs to extract these high-frequency features from $\mathbf{I}_{\mathrm{hr}}$ and undo the transmission-induced attenuation, an easier task than super-resolution.

Concretely, the HF-Refiner operates at $H/p \times W/p$ on two streams: the hazy frame split via PixelUnshuffle and the low-resolution dehazed estimate $\hat{\mathbf{J}}_{\mathrm{lr}}$ from the bilateral path. Both share the same spatial dimensions by construction, requiring no resampling. We denote the encoded features $\mathbf{f}_{\mathrm{detail}}$ and $\mathbf{f}_{\mathrm{struct}}$; a learned fusion path combines them and produces a high-frequency residual via PixelShuffle. GroupNorm is used throughout so that normalization statistics remain independent of spatial extent, avoiding mismatch between training crops and 4K inference. The final output is
\begin{equation}
  \hat{\mathbf{J}} \;=\; \operatorname{clamp}\!\Bigl(\;
    \underbrace{\mathbf{I}_{\mathrm{hr}}}_{\mathclap{\text{high-freq carrier}}}
    \;+\; \underbrace{\operatorname{Up}\bigl(\hat{\mathbf{J}}_{\mathrm{lr}} - \operatorname{Down}(\mathbf{I}_{\mathrm{hr}})\bigr)}_{\mathclap{\text{low-freq color correction}}}
    \;+\; \underbrace{\operatorname{PixelShuffle}\bigl(\operatorname{fuse}(\mathbf{f}_{\mathrm{detail}}, \mathbf{f}_{\mathrm{struct}})\bigr)}_{\mathclap{\text{learned high-freq residual}}}
  \;,\; 0,\,1\Bigr),
  \label{eq:drn_output}
\end{equation}
with $\operatorname{Up}$ and $\operatorname{Down}$ denoting bilinear resampling. The fusion path's last convolution is zero-initialized, so at step zero $\hat{\mathbf{J}}$ reduces to $\mathbf{I}_{\mathrm{hr}}$ plus the bilateral-grid correction; together with the zero-initialized grid heads (\S\ref{sec:lie_cayley}), the entire forward path starts as an identity mapping.
\vspace{-4pt}
\subsection{Supervision}
\label{sec:supervision}
\vspace{-4pt}
LiBrA-Net is trained end-to-end by minimizing
\begin{equation}
  \mathcal{L} = \mathcal{L}_{\mathrm{pixel}} + \lambda_{\mathrm{perc}}\,\mathcal{L}_{\mathrm{perc}} + \lambda_{\mathrm{lie}}\,\mathcal{L}_{\mathrm{lie}}, \qquad \lambda_{\mathrm{perc}}{=}0.04,\; \lambda_{\mathrm{lie}}{=}0.2,
  \label{eq:total_loss}
\end{equation}
with three complementary terms.
\emph{Pixel.} $\mathcal{L}_{\mathrm{pixel}} = \tfrac{1}{N}\sum_i \sqrt{(\hat{J}_i - J_i^{\mathrm{gt}})^2 + \epsilon^2}$ is the Charbonnier penalty ($\epsilon{=}10^{-3}$) on the full-resolution HF-Refiner output.
\emph{Perceptual.} $\mathcal{L}_{\mathrm{perc}}$ is the layer-averaged $\ell_1$ distance between DINOv2 ViT-S/14~\citep{oquab2024dinov2} features of the prediction and the ground truth.
\emph{Lie regularizer.} $\mathcal{L}_{\mathrm{lie}}$ is the four-term penalty of \S\ref{sec:lie_cayley}.
Temporal consistency is not enforced by an explicit flow-warped loss; as our ablation in \S\ref{sec:ablation} confirms, it emerges from the Temporal Field's self-attention together with the temporal-smoothness component of $\mathcal{L}_{\mathrm{lie}}$; we revisit the per-frame consequence in \S\ref{sec:analysis}. We next describe the 4K paired video corpus on which LiBrA-Net is trained and evaluated.

\vspace{-6pt}
\section{UHV-4K Dataset}
\label{sec:dataset}
\vspace{-4pt}

No existing video dehazing dataset provides native 4K paired data with aligned auxiliary annotations. We construct \textbf{UHV-4K} to fill this gap: 100 videos at $3840{\times}2160$ (2{,}500 frames, 80/20 train-test split) drawn from Inter4K~\citep{inter4k_dataset} and UVG~\citep{uvg_dataset}, each providing five per-frame modalities---hazy input, clean ground truth, monocular depth, transmission map, and optical flow. We sub-sample 60\,fps sources at 5\,fps to yield 25 consecutive frames per video, preserving camera motion and depth parallax while bounding storage. Figure~\ref{fig:uhv4k_composition}(a) summarizes the scene and geographic coverage.
\vspace{-4pt}
\subsection{Haze Synthesis}
\label{sec:haze_synthesis}
\vspace{-4pt}

Each video is assigned to one of nine $(\beta,A_\infty)$ regimes, with haze parameters fixed across frames for temporal coherence. We estimate depth and flow at proxy resolution, upsample depth to native 4K, and render hazy frames with Eq.~\eqref{eq:asm}. Pipeline details, quality checks, and the perceptual study are in Appendices~\ref{app:synthesis_pipeline} and~\ref{app:user_study}.

\vspace{-4pt}
\subsection{Why Native 4K Supervision Matters}
\label{sec:why_4k}
\vspace{-4pt}
\begin{wrapfigure}{r}{0.58\linewidth}
  \vspace{-12pt}
  \centering
  \begin{subfigure}[b]{0.5\linewidth}
    \centering
    \includegraphics[width=\linewidth]{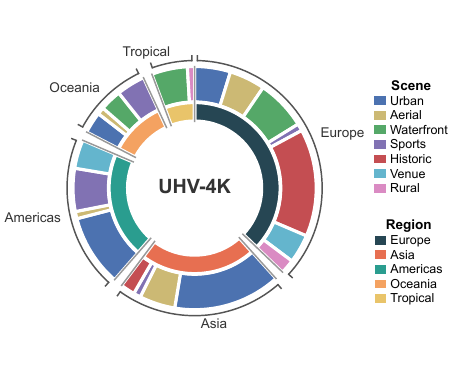}
    \caption{Scene and region diversity.}
    \label{fig:uhv4k_scene_geo}
  \end{subfigure}\hfill
  \begin{subfigure}[b]{0.5\linewidth}
    \centering
    \includegraphics[width=\linewidth]{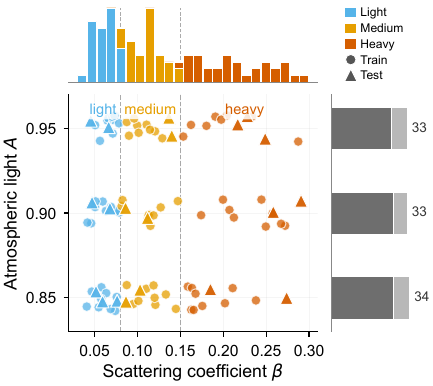}
    \caption{Haze parameters $(\beta, A_\infty)$.}
    \label{fig:uhv4k_beta_A}
  \end{subfigure}
  \caption{\textbf{UHV-4K composition.} (a)~Joint distribution of source scenes and geographic regions across the 100 videos. (b)~Per-video scattering coefficient $\beta$ and atmospheric light $A_\infty$, grouped by haze tier (color) and train/test split (marker shape).}
  \label{fig:uhv4k_composition}
  \vspace{-10pt}
\end{wrapfigure}

The ASM attenuates image gradients by the local transmission: $\nabla \mathbf{I} \approx t\,\nabla \mathbf{J}$ (Eq.~\ref{eq:asm_gradient}). Fine structures near the 4K Nyquist limit---thin cables, distant railings, signage strokes---are the first casualties, and bilinear downsampling to 720p discards them entirely since the 720p Nyquist sits at roughly one-third of the 4K limit. A dehazer trained only on 720p data therefore never sees the low-contrast, high-frequency signal it must recover at 4K. The ablation in \S\ref{sec:ablation} confirms this gap: removing the HF-Refiner costs 3.41\,dB on UHV-4K versus 2.59\,dB on REVIDE (Table~\ref{tab:ablation}), reflecting the richer high-frequency content that native 4K supervision provides. UHV-4K fills this role as both the training source and the evaluation testbed under matched native-4K conditions.

\vspace{-6pt}
\section{Experiments}
\vspace{-6pt}

\subsection{Experimental Setup}
\label{sec:exp_settings}
\vspace{-4pt}
\paragraph{Datasets and evaluation metrics.}
We evaluate on three paired video dehazing benchmarks. REVIDE~\citep{CG-IDN_REVIDE_dataset} is a real-world indoor dataset with 42 training and 6 testing videos in its official split. From the synthetic HazeWorld corpus~\citep{MAP-Net_HazeWorld_dataset} we retain three outdoor subsets---DAVIS~\citep{DAVIS_dataset}, DDAD~\citep{DDAD_dataset}, and UA-DETRAC~\citep{UA-DETRAC_dataset}---totaling 1{,}016 training and 584 testing videos. UHV-4K, introduced in \S\ref{sec:dataset}, follows its standard split. Restoration quality is measured by PSNR~\citep{PSNR}, SSIM~\citep{SSIM}, and LPIPS~\citep{LPIPS}. Efficiency is reported as parameter count, GFLOPs, and FPS. Temporal consistency is evaluated with tOF~\citep{lai2018learning} and within-video $\sigma$-PSNR; all tOF scores use the RAFT~\citep{teed2020raft} estimator.

\vspace{-0.5em}
\paragraph{Implementation details.}
All models are trained on $8{\times}$ V100 GPUs with AdamW, cosine annealing after a 5-epoch warmup, and FP16 mixed precision on random $512{\times}512$ crops of $T{=}5$ consecutive frames. At inference, 4K videos are processed at native resolution without patching. All sixteen baselines share the same optimizer and schedule; baselines are trained without $\mathcal{L}_{\mathrm{lie}}$, since its penalties act on bilateral-grid tensors exclusive to our architecture. Full hyperparameters are in Appendix~Table~\ref{tab:hyperparams}.

\begin{table*}[!t]
  \centering
  \caption{Quantitative comparison on three benchmarks. Efficiency is measured at native 4K on a single V100 32\,GB GPU. Best in \textbf{bold}, second \underline{underlined} within the video methods.}
  \label{tab:overall_comparison}
  \resizebox{\textwidth}{!}{
  \begin{tabular}{ll|ccc|ccc|ccc|ccc}
    \toprule
    \multirow{2}{*}{Methods} & \multirow{2}{*}{Venue} & \multicolumn{3}{c|}{UHV-4K} & \multicolumn{3}{c|}{REVIDE} & \multicolumn{3}{c|}{HazeWorld} & \multicolumn{3}{c}{Efficiency at 4K} \\
    \cmidrule(lr){3-5} \cmidrule(lr){6-8} \cmidrule(lr){9-11} \cmidrule(l){12-14}
    & & PSNR$\uparrow$ & SSIM$\uparrow$ & LPIPS$\downarrow$ & PSNR$\uparrow$ & SSIM$\uparrow$ & LPIPS$\downarrow$ & PSNR$\uparrow$ & SSIM$\uparrow$ & LPIPS$\downarrow$ & FPS$\uparrow$ & GFLOPs & Params\,(M) \\
    \midrule
    \textcolor{black!55}{No Dehazing} & \textcolor{black!55}{--} & \textcolor{black!55}{13.19} & \textcolor{black!55}{0.7690} & \textcolor{black!55}{0.2256} & \textcolor{black!55}{14.98} & \textcolor{black!55}{0.7820} & \textcolor{black!55}{0.3476} & \textcolor{black!55}{14.37} & \textcolor{black!55}{0.8032} & \textcolor{black!55}{0.1605} & \textcolor{black!55}{--} & \textcolor{black!55}{--} & \textcolor{black!55}{--} \\
    \midrule
    \rowcolor{catband} \multicolumn{14}{c}{\textit{(a) Image dehazing methods}} \\
    \midrule
    MSBDN-DFF & CVPR'20 & 22.51 & 0.9257 & 0.0748 & 21.67 & 0.8618 & 0.2932 & 28.99 & 0.9702 & 0.0223 & 2.31 & 5.74K & 30.31 \\
    FFA-Net & AAAI'20 & 20.43 & 0.9042 & 0.0731 & 18.16 & 0.8334 & 0.2903 & 26.78 & 0.9581 & 0.0352 & 0.31 & 75.81K & 4.70 \\
    4KDehazing & CVPR'21 & 18.56 & 0.8608 & 0.1011 & 17.68 & 0.8325 & 0.2996 & 27.15 & 0.9649 & 0.0291 & 3.11 & 10.20K & 17.26 \\
    DeHamer & CVPR'22 & 20.13 & 0.8589 & 0.1583 & 18.13 & 0.8181 & 0.3305 & 25.84 & 0.9403 & 0.0696 & 1.50 & 2.61K & 23.89 \\
    DehazeFormer & TIP'23 & 23.74 & 0.9313 & 0.0468 & 20.76 & 0.8397 & 0.2603 & 28.38 & 0.9663 & 0.0270 & 0.18 & 6.00K & 2.54 \\
    UHDformer & AAAI'24 & 23.16 & 0.9381 & 0.0746 & 18.22 & 0.8135 & 0.3141 & 22.32 & 0.9288 & 0.0605 & 0.13 & 33.30K & 20.33 \\
    UHDDIP & TCSVT'25 & 21.90 & 0.9344 & 0.0644 & 18.20 & 0.8366 & 0.2734 & 28.02 & 0.9692 & 0.0244 & 1.25 & 18.35K & 7.58 \\
    DehazeXL & CVPR'25 & 22.63 & 0.9235 & 0.0656 & 19.47 & 0.8355 & 0.2856 & 27.45 & 0.9659 & 0.0257 & 2.27 & 6.50K & 125.50 \\
    \midrule
    \rowcolor{catband} \multicolumn{14}{c}{\textit{(b) Video dehazing methods}} \\
    \midrule
    CG-IDN & CVPR'21 & 20.13 & 0.8037 & 0.2756 & 17.25 & 0.8060 & 0.4624 & 25.62 & 0.9224 & 0.1258 & \underline{6.97} & 2.19K & 1.09 \\
    PM-Net & MM'22 & 21.84 & 0.9113 & 0.0911 & 18.16 & 0.8234 & 0.2984 & 24.22 & 0.9309 & 0.0654 & 1.06 & 17.26K & \underline{0.91} \\
    MAP-Net & CVPR'23 & 18.45 & 0.8147 & 0.2055 & 18.42 & 0.8070 & 0.3641 & 20.31 & 0.7692 & 0.2130 & 0.81 & 8.62K & 28.33 \\
    ViWS-Net & ICCV'23 & \underline{22.06} & 0.7742 & 0.3586 & \underline{20.74} & 0.8188 & 0.4643 & \textbf{27.66} & 0.9320 & 0.1202 & 4.21 & 1.20K & 57.67 \\
    DVD & CVPR'24 & 21.02 & 0.8184 & 0.2409 & 20.63 & \underline{0.8383} & 0.4064 & 27.05 & 0.9310 & 0.1153 & 3.35 & 4.93K & 3.26 \\
    DCL & AAAI'25 & 20.81 & \underline{0.9167} & \underline{0.0818} & 18.29 & 0.8353 & \underline{0.2939} & 27.45 & \textbf{0.9617} & \underline{0.0370} & 0.98 & 20.55K & 12.31 \\
    DS-TFN & VC'25 & 20.24 & 0.8975 & 0.0850 & 15.79 & 0.8143 & 0.3484 & 25.79 & 0.9512 & 0.0428 & 0.46 & 51.99K & 2.47 \\
    ASTP-Net & VC'25 & 19.00 & 0.7051 & 0.4904 & 17.33 & 0.7835 & 0.5120 & 22.42 & 0.8408 & 0.2517 & 3.56 & \underline{0.41K} & \textbf{0.63} \\
    \midrule
    \rowcolor{ourgreen} \textbf{Ours} & -- & \textbf{24.28} & \textbf{0.9437} & \textbf{0.0401} & \textbf{21.43} & \textbf{0.8678} & \textbf{0.2397} & \underline{27.46} & \underline{0.9593} & \textbf{0.0336} & \textbf{25.27} & \textbf{0.28K} & 6.12 \\
    \bottomrule
  \end{tabular}
  }
  \vspace{-2mm}
\end{table*}

\vspace{-4pt}
\subsection{Comparison with State-of-the-Art Methods}
\label{sec:sota}
\vspace{-4pt}
\paragraph{Compared methods.}
We compare LiBrA-Net against sixteen published methods: eight single-image dehazers, of which 4KDehazing, UHDformer, UHDDIP, and DehazeXL are UHD-specialized, and eight video dehazers. Among all baselines, only 4KDehazing also adopts bilateral grids and is thus our closest architectural reference.

\vspace{-0.5em}
\paragraph{Dehaze quality.}
Table~\ref{tab:overall_comparison} reports results across three benchmarks. On UHV-4K, LiBrA-Net leads all sixteen compared methods in PSNR, SSIM, and LPIPS. On REVIDE, it ranks first among video dehazers and matches MSBDN-DFF overall. Both datasets present the scenario the bilateral grid is built for: high output resolution where resolution-decoupled prediction pays off.
HazeWorld caps at 720p, where the resolution-decoupling advantage is muted---dense feature maps fit in memory. LiBrA-Net remains competitive, ranking second in PSNR among video methods and first in LPIPS.
Figure~\ref{fig:visual_comparison} confirms that LiBrA-Net recovers faithful color and sharp edges across all three benchmarks.

\vspace{-0.5em}
\paragraph{Efficiency.}
At native 4K, LiBrA-Net reaches 25\,FPS---an $8{\times}$ speedup over 4KDehazing at $36{\times}$ fewer GFLOPs. No other method simultaneously exceeds 23\,dB PSNR on UHV-4K and sustains real-time 4K throughput. From 720p to 4K the pixel count grows $9{\times}$, yet our FPS drops only $3.2{\times}$ (dense baselines drop ${\ge}7.6{\times}$), because grid prediction is bounded by the fixed-resolution encoder and only the trilinear slice scales with the output (Figure~\ref{fig:teaser}c).

\begin{figure*}[t]
  \centering
  \newcommand{\cimg}[1]{\raisebox{-0.5\height}{\includegraphics[width=0.118\textwidth]{#1}}}
  \setlength{\zcellW}{0.118\textwidth}%
  \newcommand{\zcimgBR}[6]{\raisebox{-0.5\height}{%
    \begin{tikzpicture}[x=\zcellW,y=0.5625\zcellW,inner sep=0pt,outer sep=0pt]
      \useasboundingbox (0,0) rectangle (1,1);
      \node[anchor=south west,inner sep=0pt] at (0,0)
        {\includegraphics[width=\zcellW]{#1}};
      \draw[roired,line width=0.6pt] (#3,#4) rectangle (#5,#6);
      \node[anchor=south east,draw=roired,line width=0.7pt,inner sep=0pt]
        at (1,0) {\includegraphics[width=0.5\zcellW,trim={#2},clip]{#1}};
    \end{tikzpicture}}}%
  \newcommand{\zcimgBL}[6]{\raisebox{-0.5\height}{%
    \begin{tikzpicture}[x=\zcellW,y=0.5625\zcellW,inner sep=0pt,outer sep=0pt]
      \useasboundingbox (0,0) rectangle (1,1);
      \node[anchor=south west,inner sep=0pt] at (0,0)
        {\includegraphics[width=\zcellW]{#1}};
      \draw[roired,line width=0.6pt] (#3,#4) rectangle (#5,#6);
      \node[anchor=south west,draw=roired,line width=0.7pt,inner sep=0pt]
        at (0,0) {\includegraphics[width=0.5\zcellW,trim={#2},clip]{#1}};
    \end{tikzpicture}}}%
  \newcommand{\zcimgTRr}[6]{\raisebox{-0.5\height}{%
    \begin{tikzpicture}[x=\zcellW,y=0.6645\zcellW,inner sep=0pt,outer sep=0pt]
      \useasboundingbox (0,0) rectangle (1,1);
      \node[anchor=south west,inner sep=0pt] at (0,0)
        {\includegraphics[width=\zcellW]{#1}};
      \draw[roired,line width=0.6pt] (#3,#4) rectangle (#5,#6);
      \node[anchor=north east,draw=roired,line width=0.7pt,inner sep=0pt]
        at (1,1) {\includegraphics[width=0.5\zcellW,trim={#2},clip]{#1}};
    \end{tikzpicture}}}%
  \newcommand{\zoomA}[1]{\zcimgBR{#1}{100 1620 3320 80}{0.03}{0.75}{0.14}{0.96}}
  \newcommand{\zoomB}[1]{\zcimgTRr{#1}{1000 620 1208 800}{0.37}{0.34}{0.55}{0.56}}
  \newcommand{\zoomC}[1]{\zcimgBL{#1}{1024 216 0 302}{0.8}{0.3}{1}{0.58}}
  \setlength{\tabcolsep}{0.8pt}
  \renewcommand{\arraystretch}{0}
  \begin{tabular}{@{}c@{\hspace{1pt}}ccccccc@{\hspace{2pt}}c@{}}
    &
    \scriptsize Input(Hazy) &
    \scriptsize CG-IDN &
    \scriptsize MAP-Net &
    \scriptsize PM-Net &
    \scriptsize ViWS-Net &
    \scriptsize DVD &
    \scriptsize\textbf{Ours} &
    \scriptsize\textit{GT}
    \\[3pt]
    \rotatebox[origin=c]{0}{\scriptsize(a)} &
      \zoomA{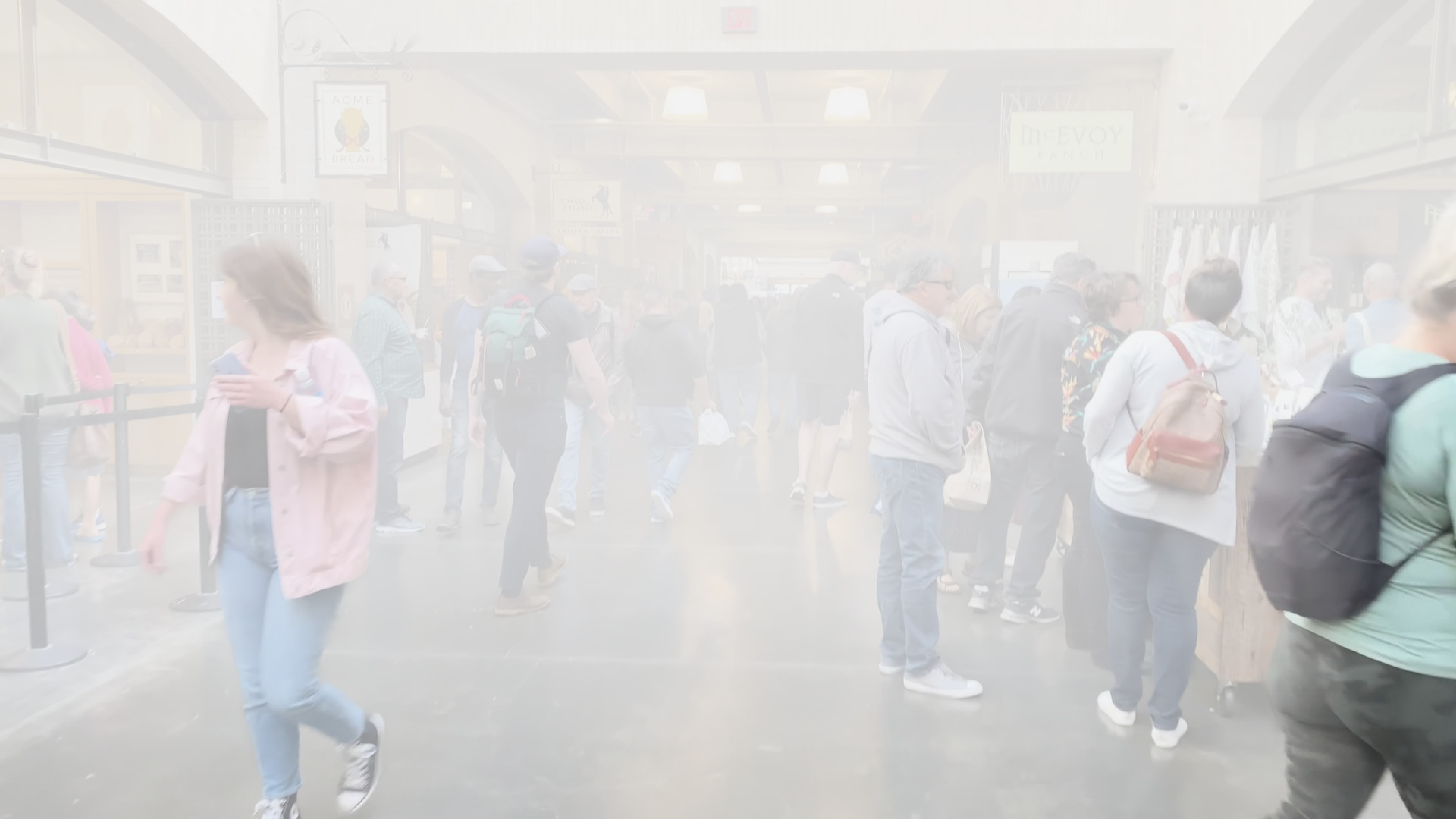} &
      \zoomA{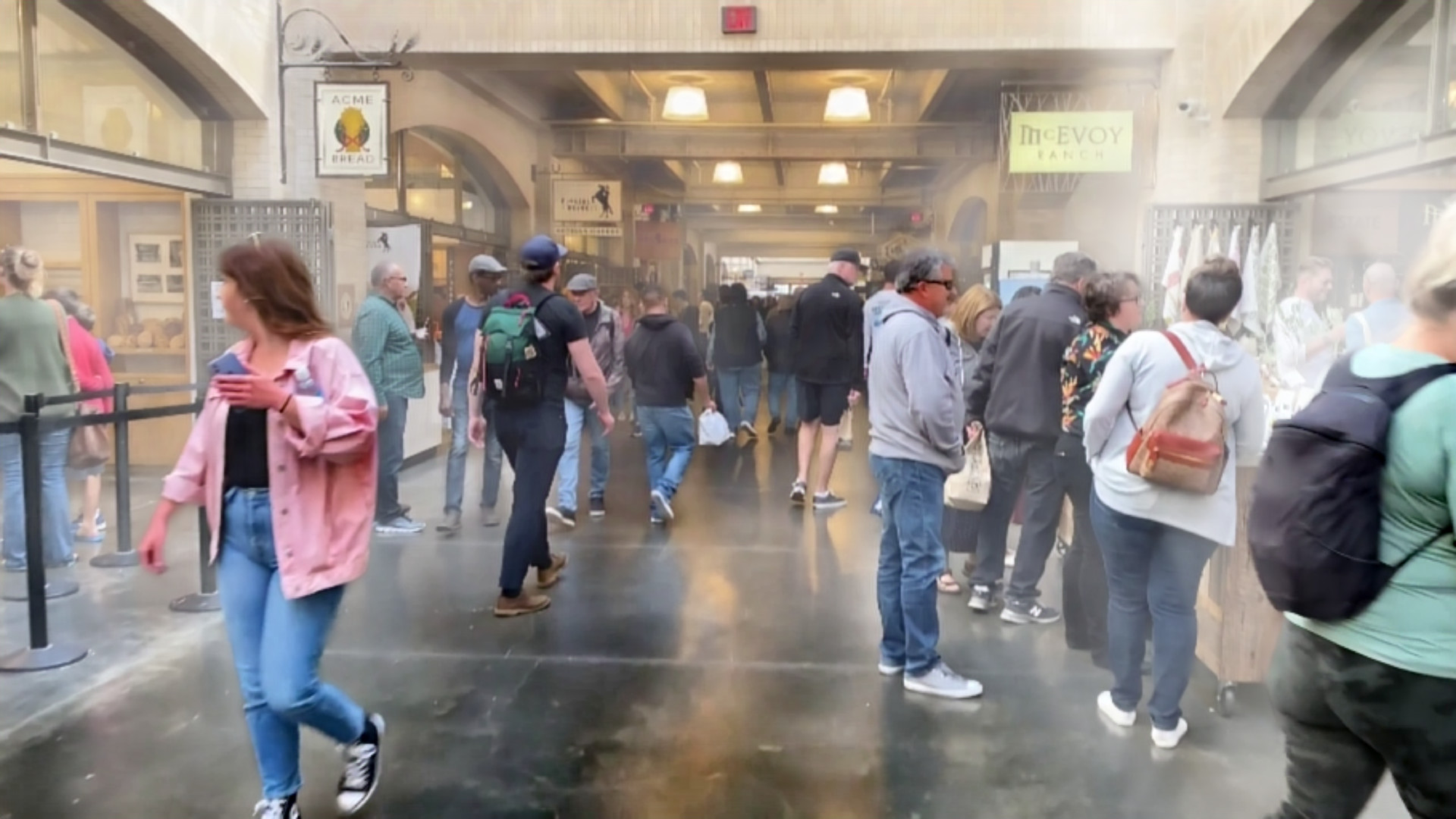} &
      \zoomA{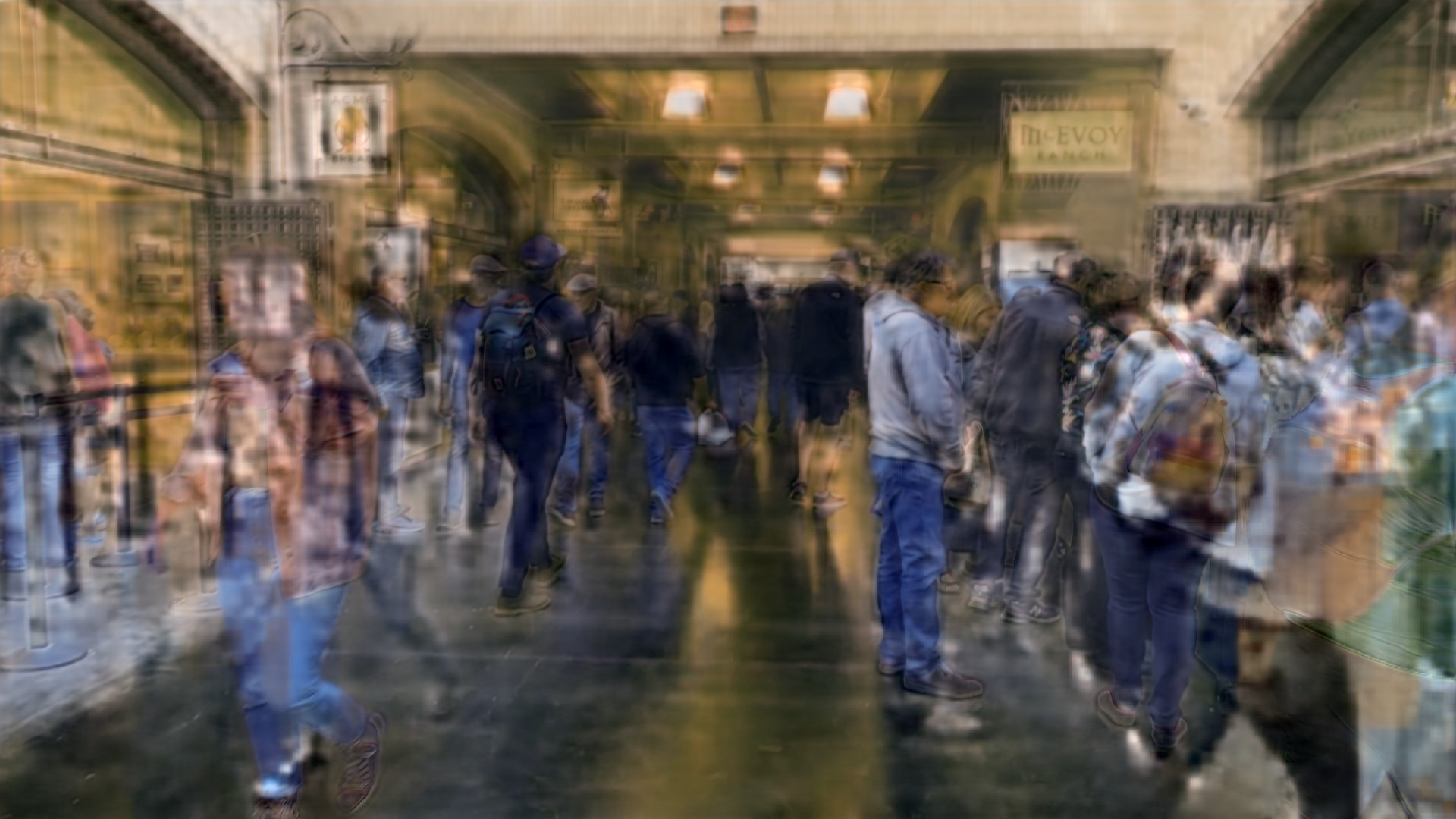} &
      \zoomA{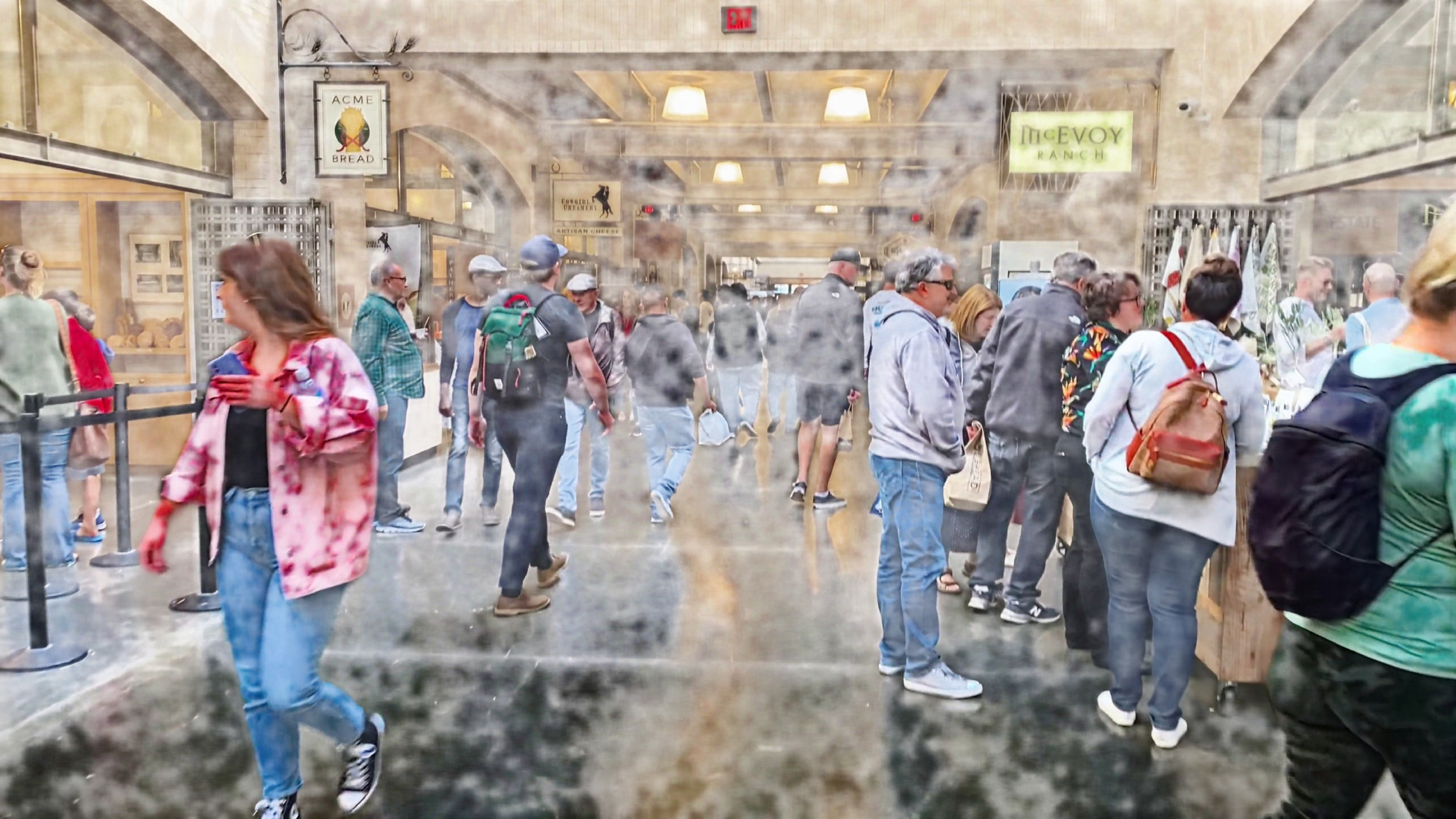} &
      \zoomA{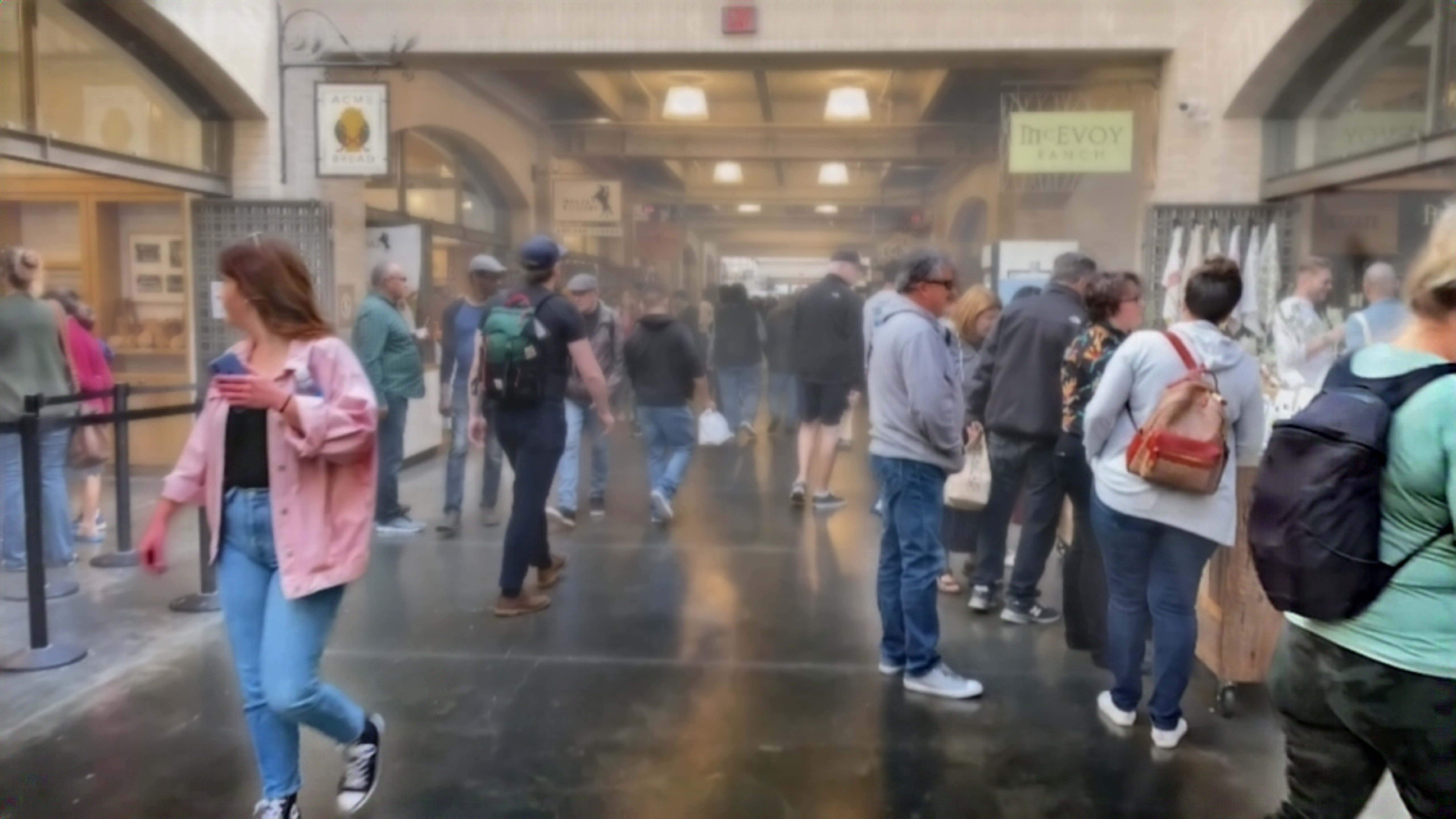} &
      \zoomA{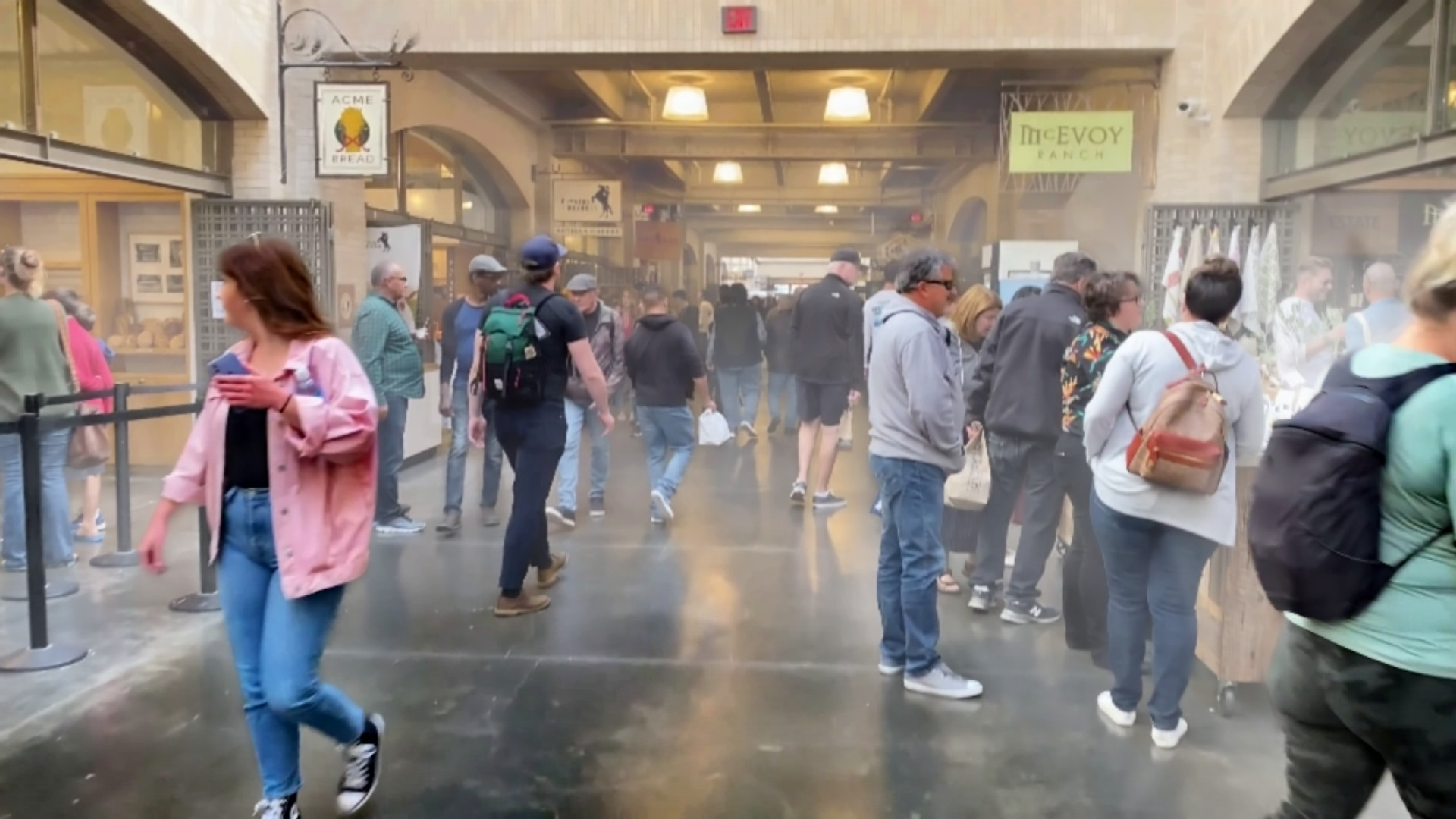} &
      \zoomA{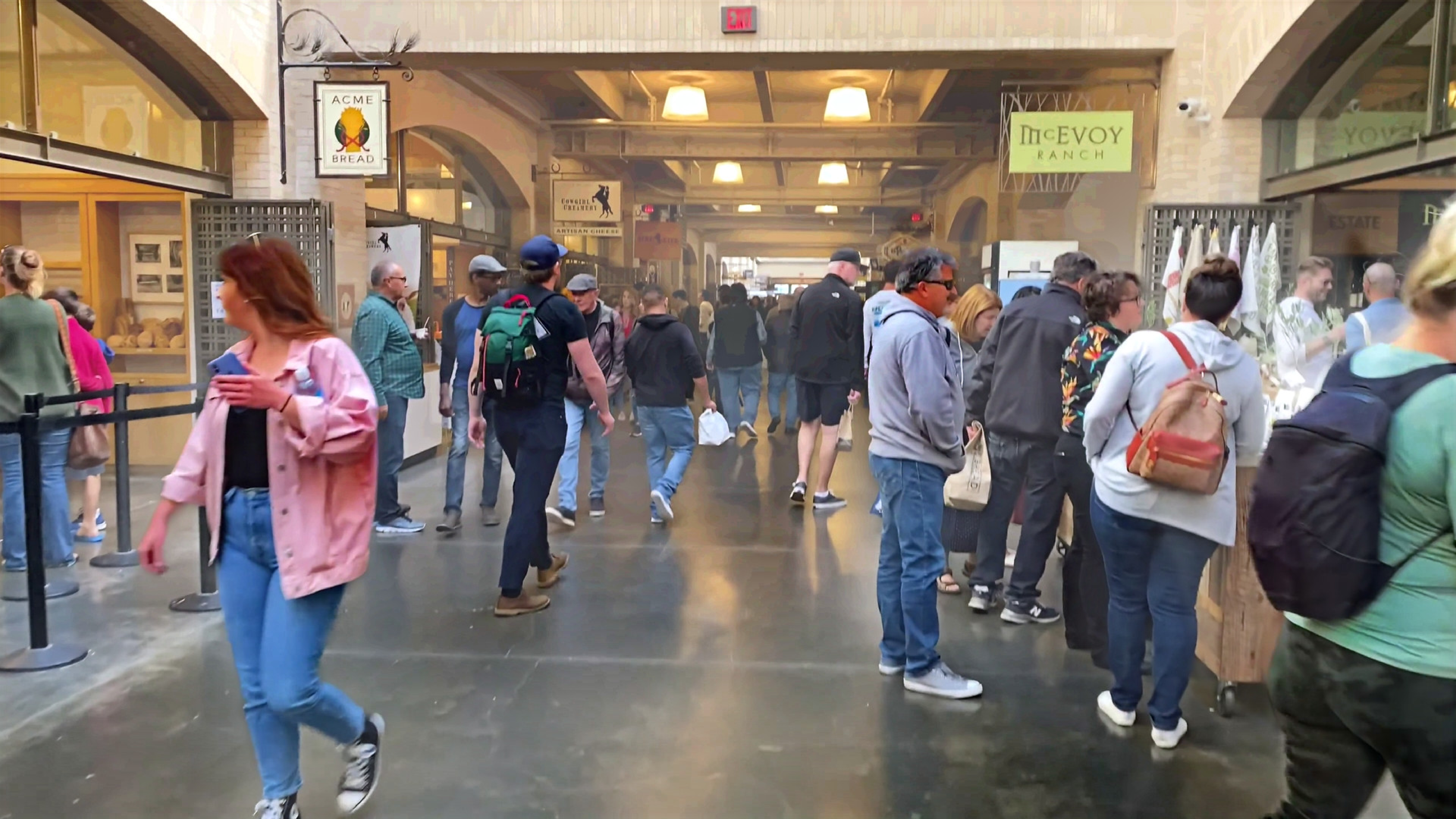} &
      \zoomA{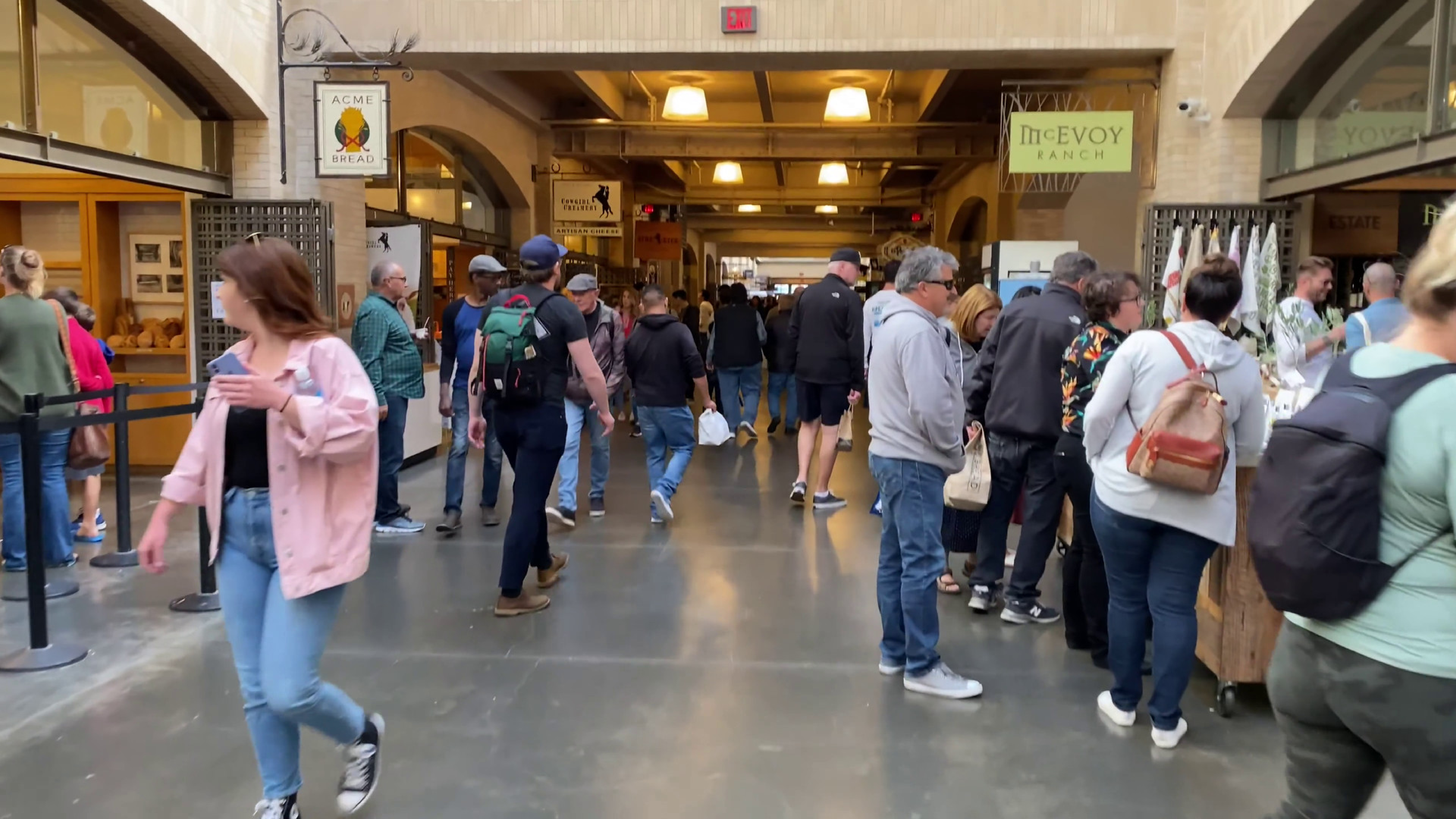}
    \\[14.3pt]
    \rotatebox[origin=c]{0}{\scriptsize} &
    \cimg{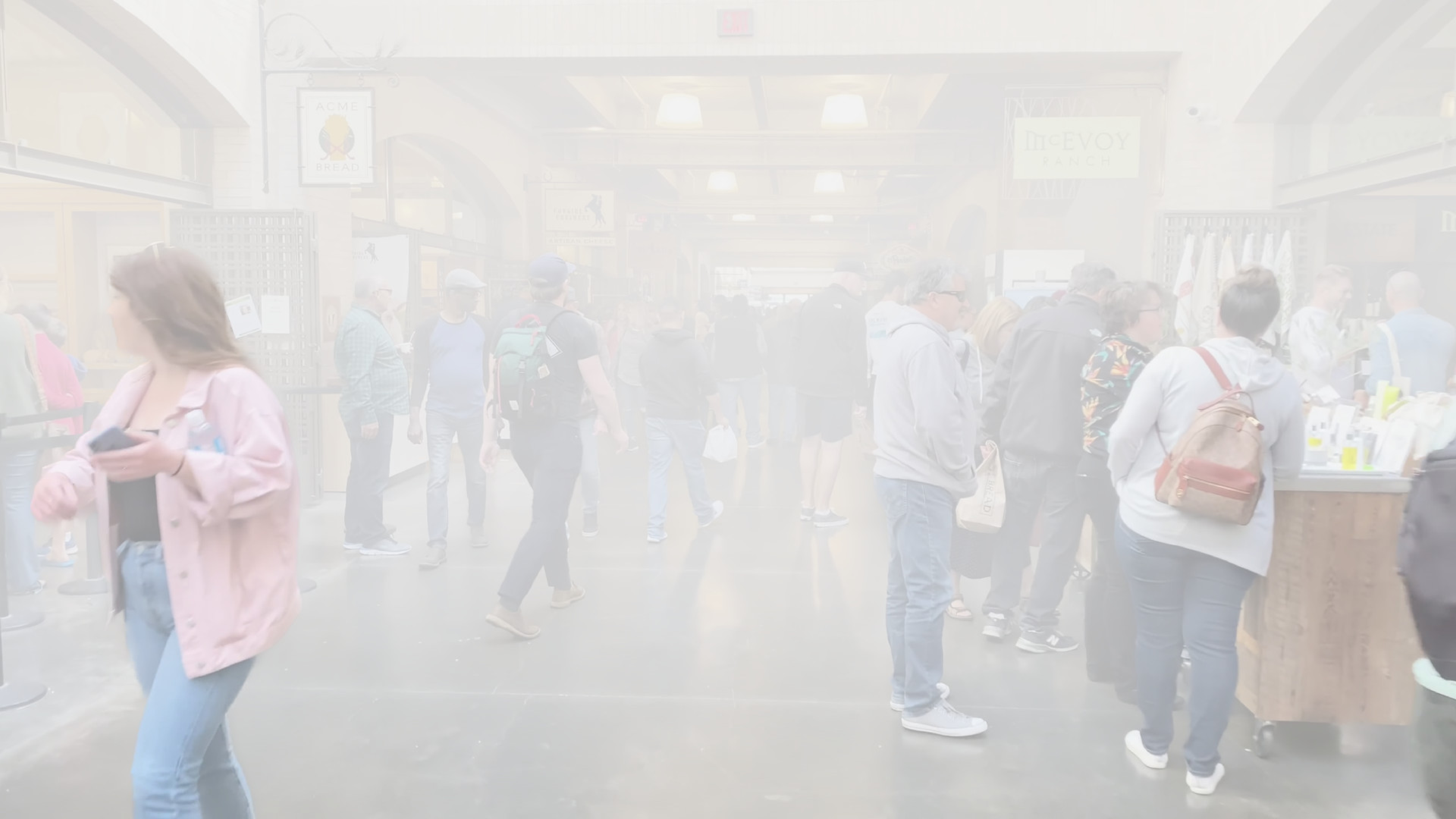} &
    \cimg{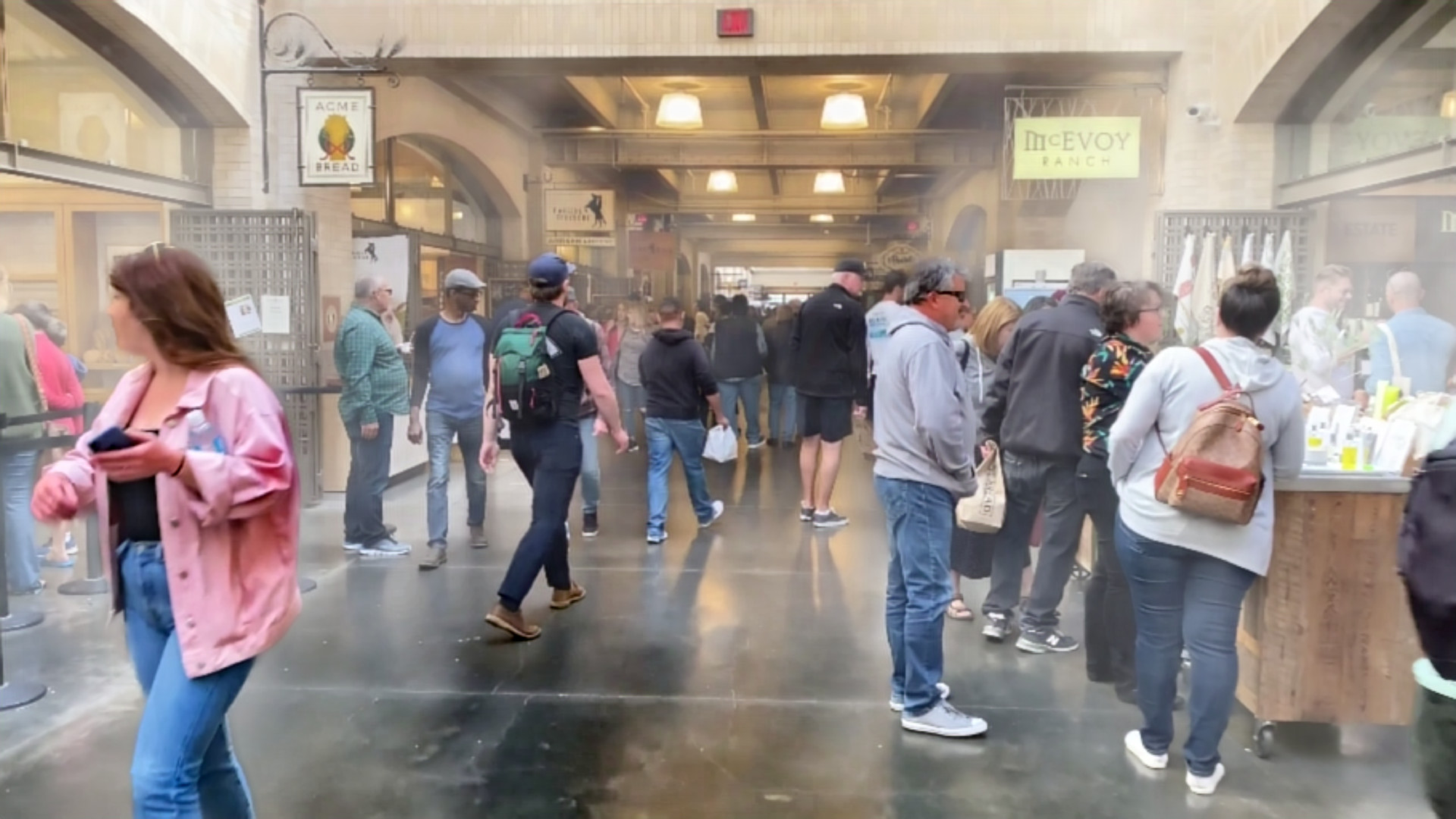} &
    \cimg{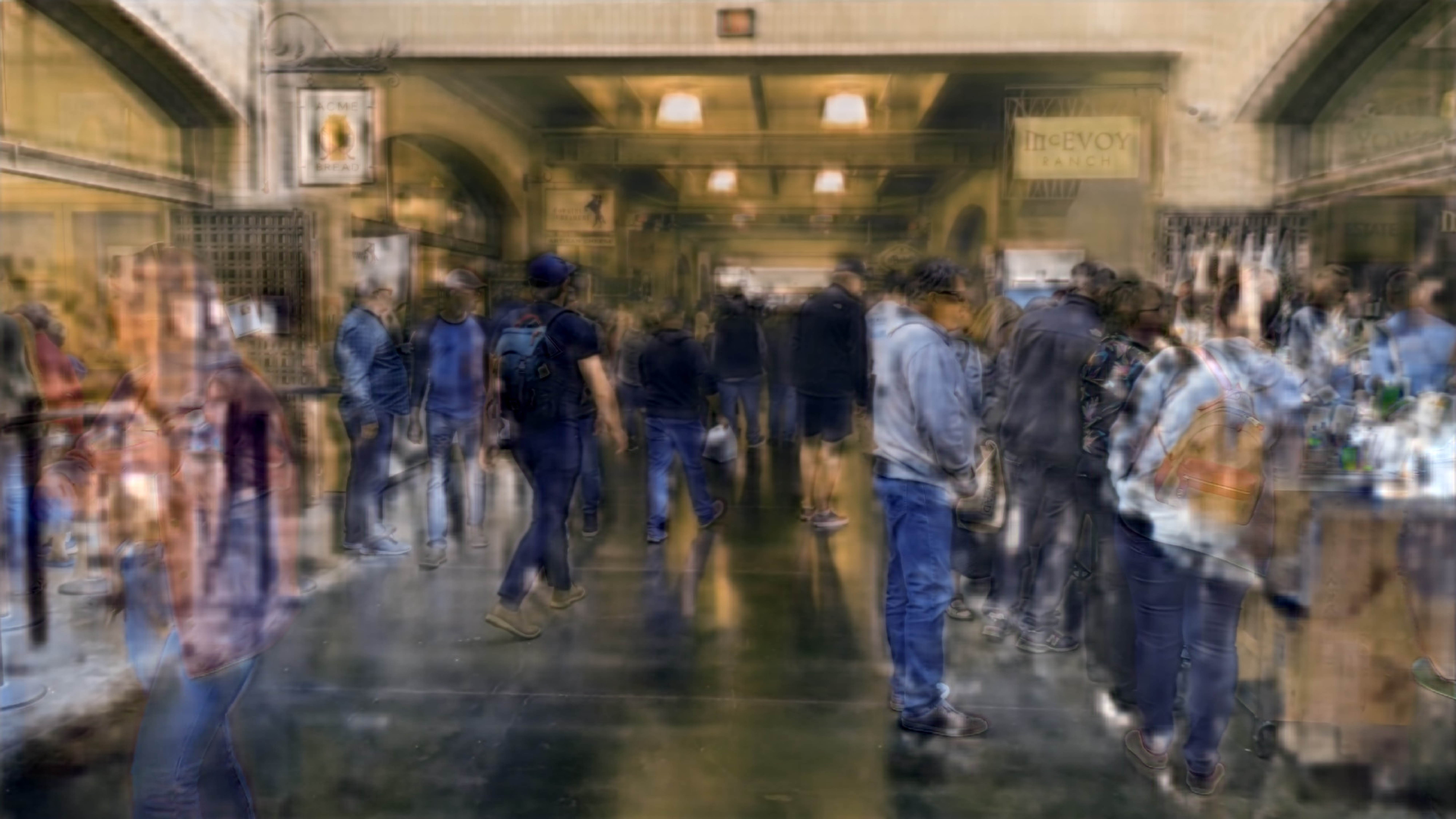} &
    \cimg{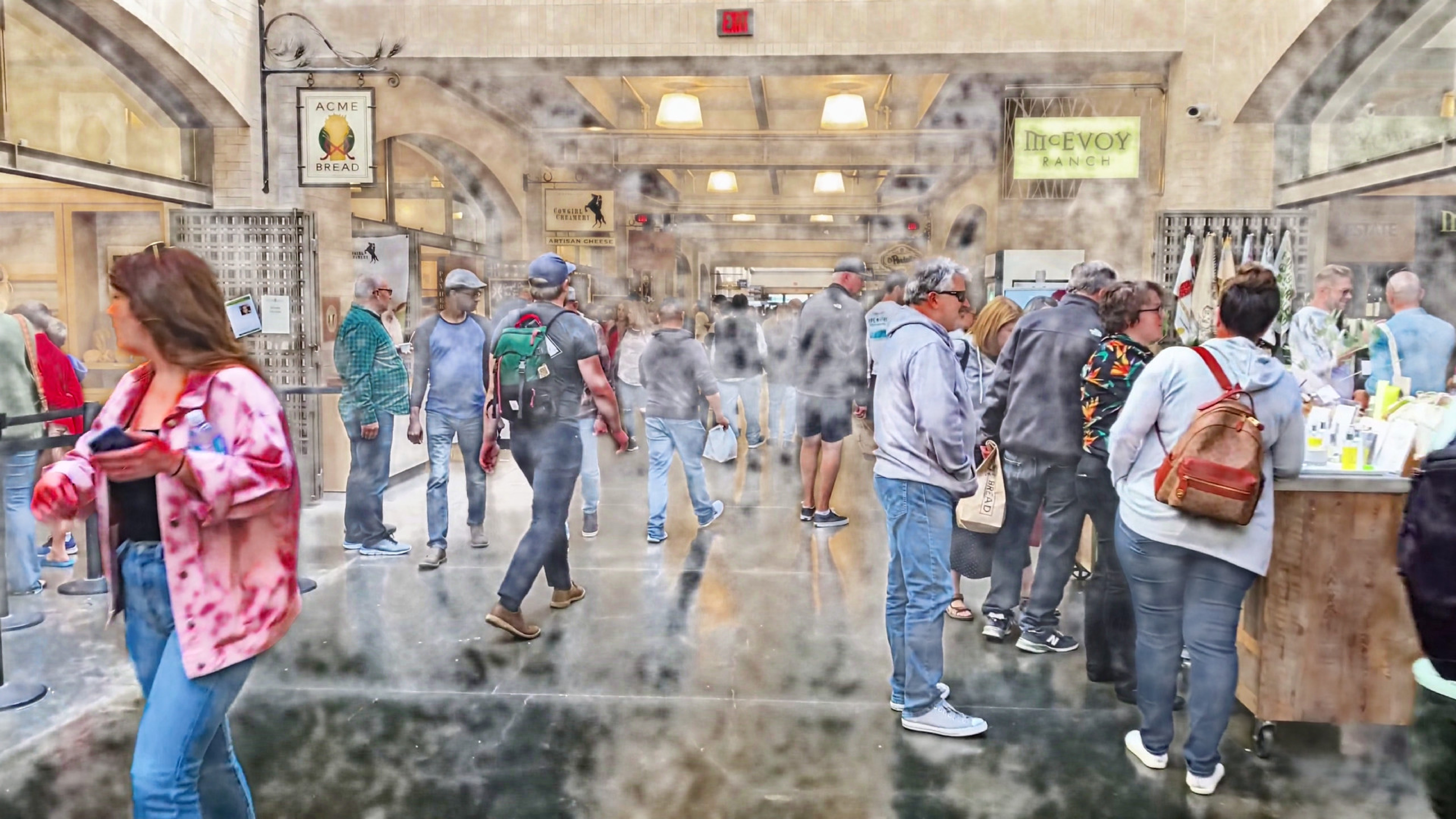} &
    \cimg{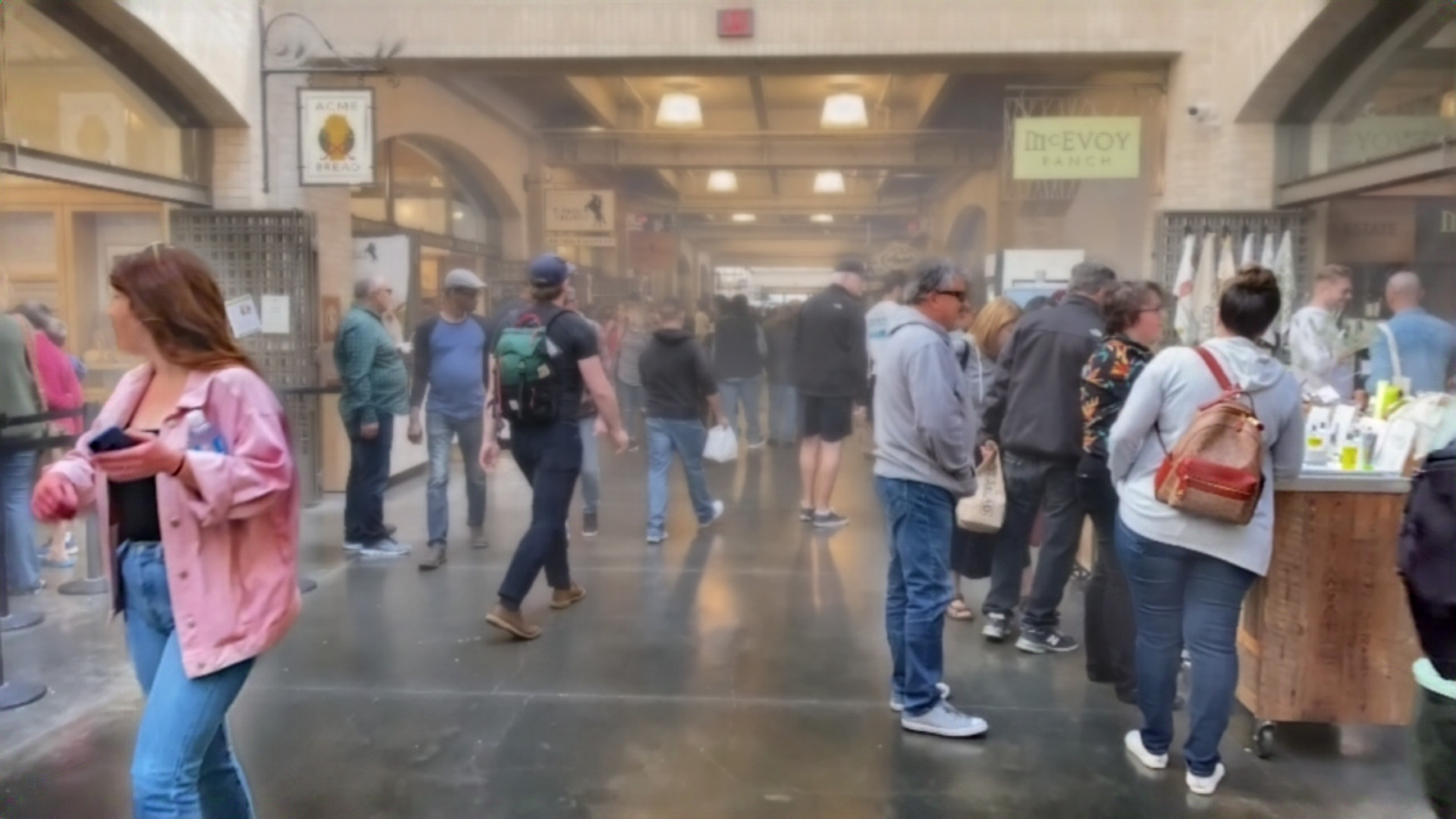} &
    \cimg{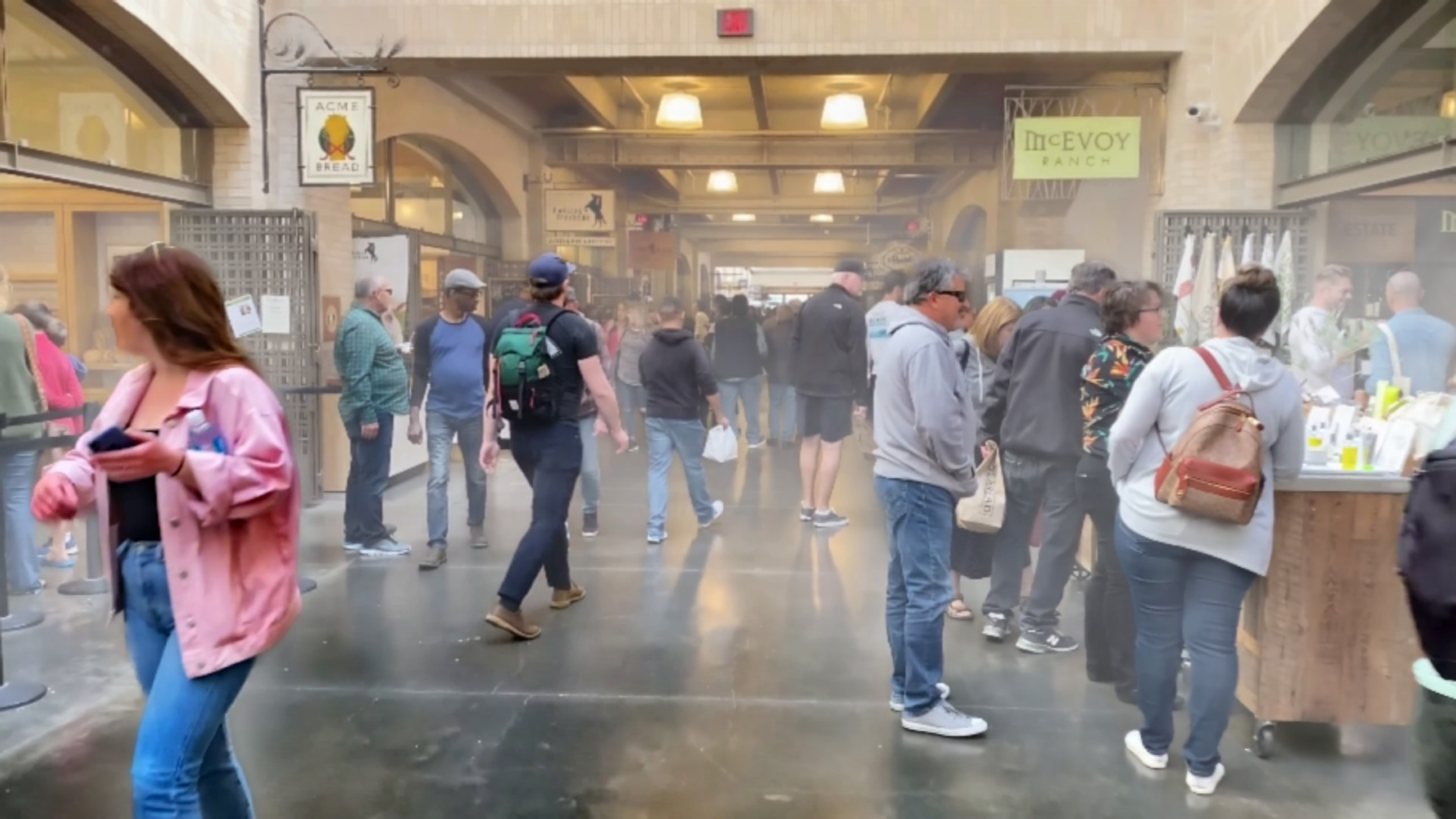} &
    \cimg{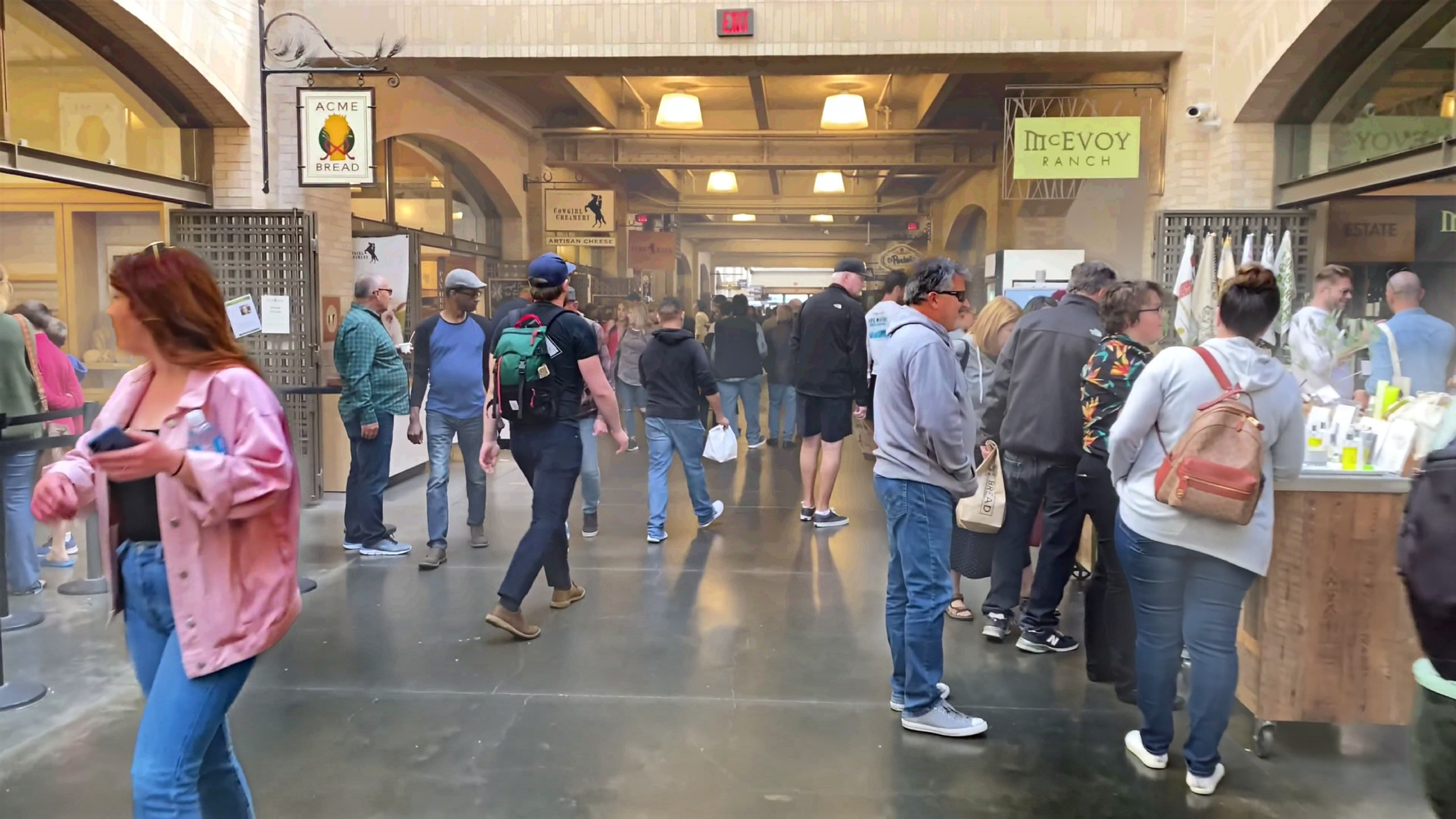} &
    \cimg{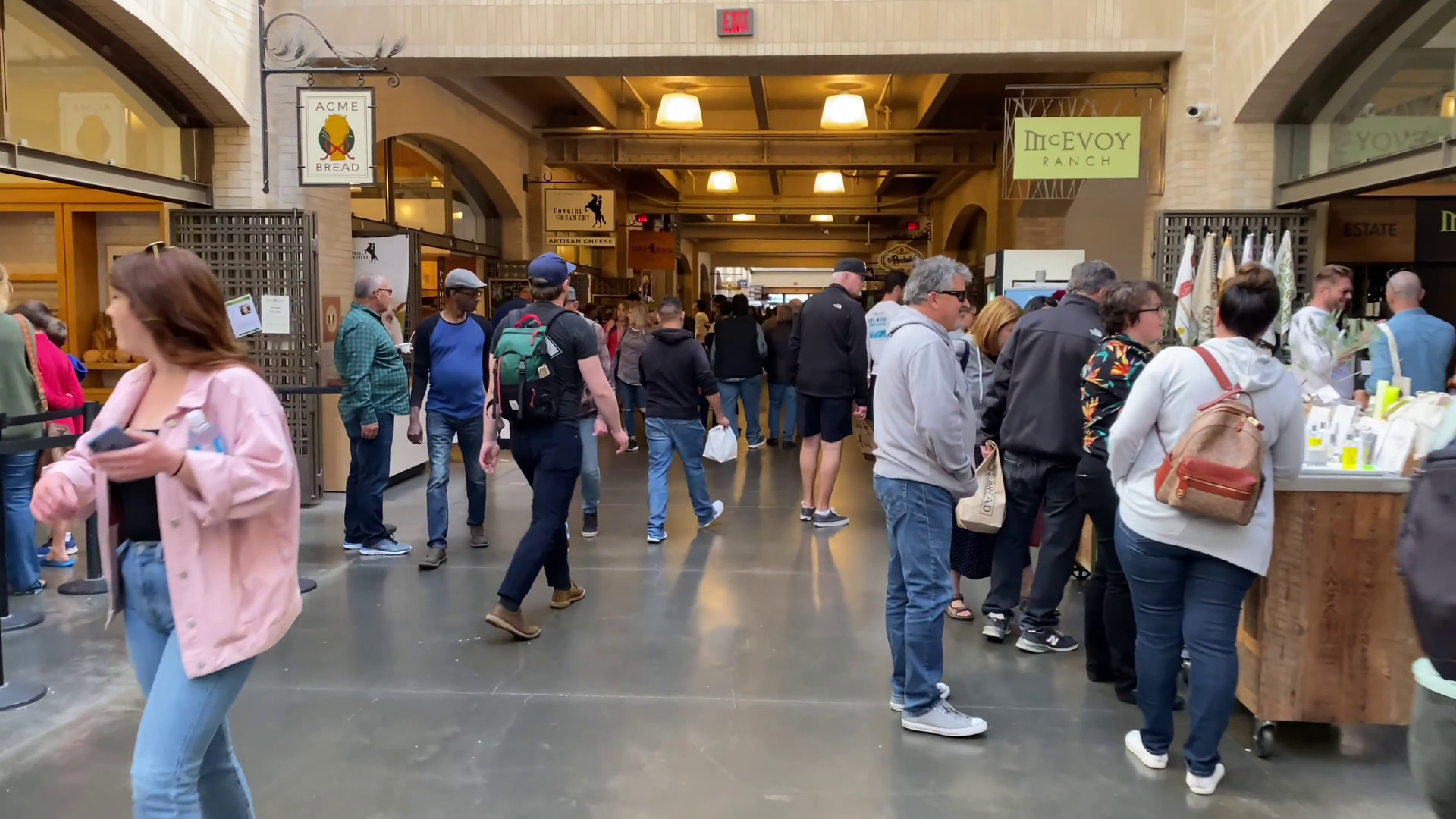}
  \\[14.3pt]
    \rotatebox[origin=c]{0}{\scriptsize(b)} &
      \zoomB{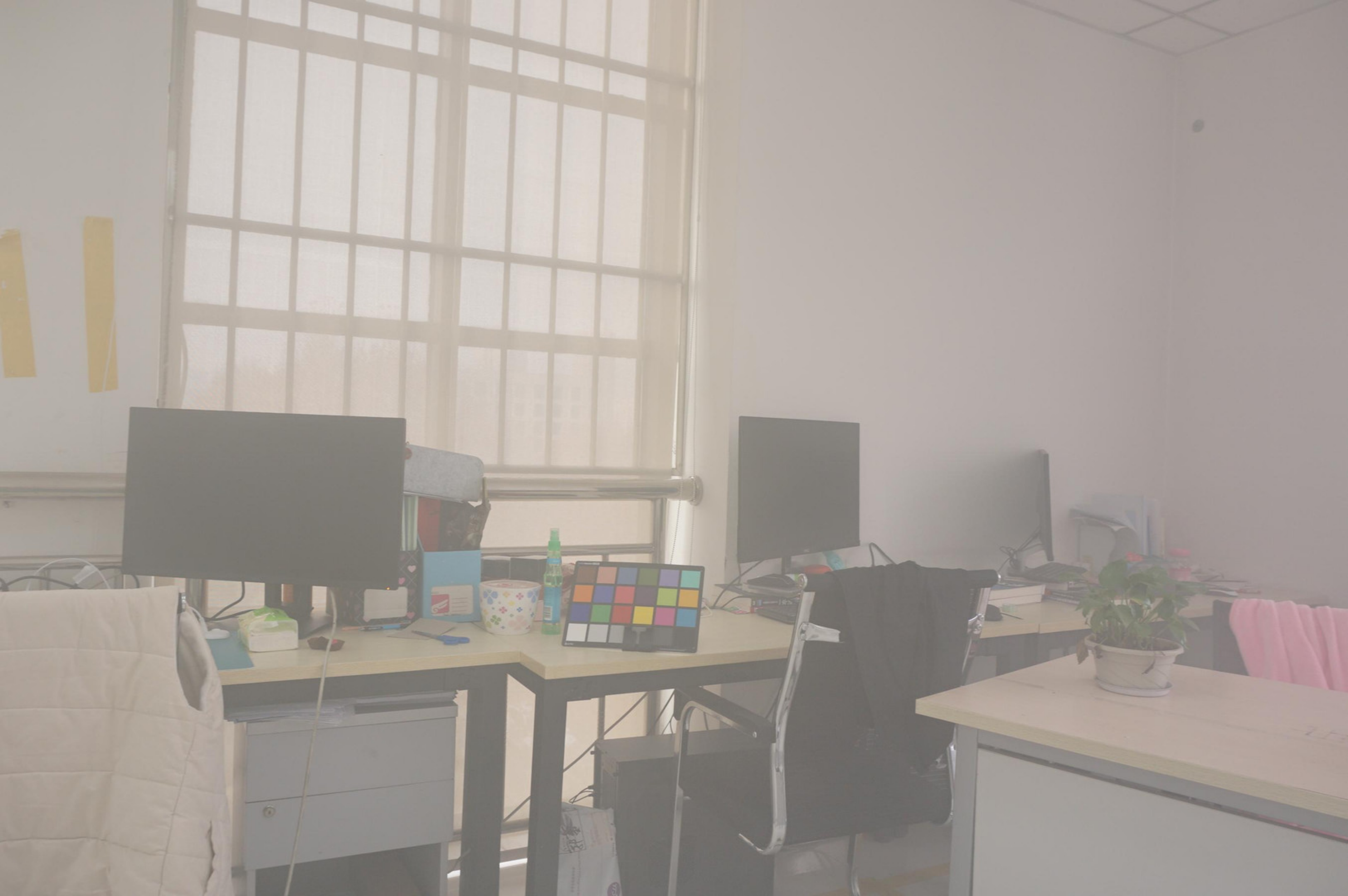} &
      \zoomB{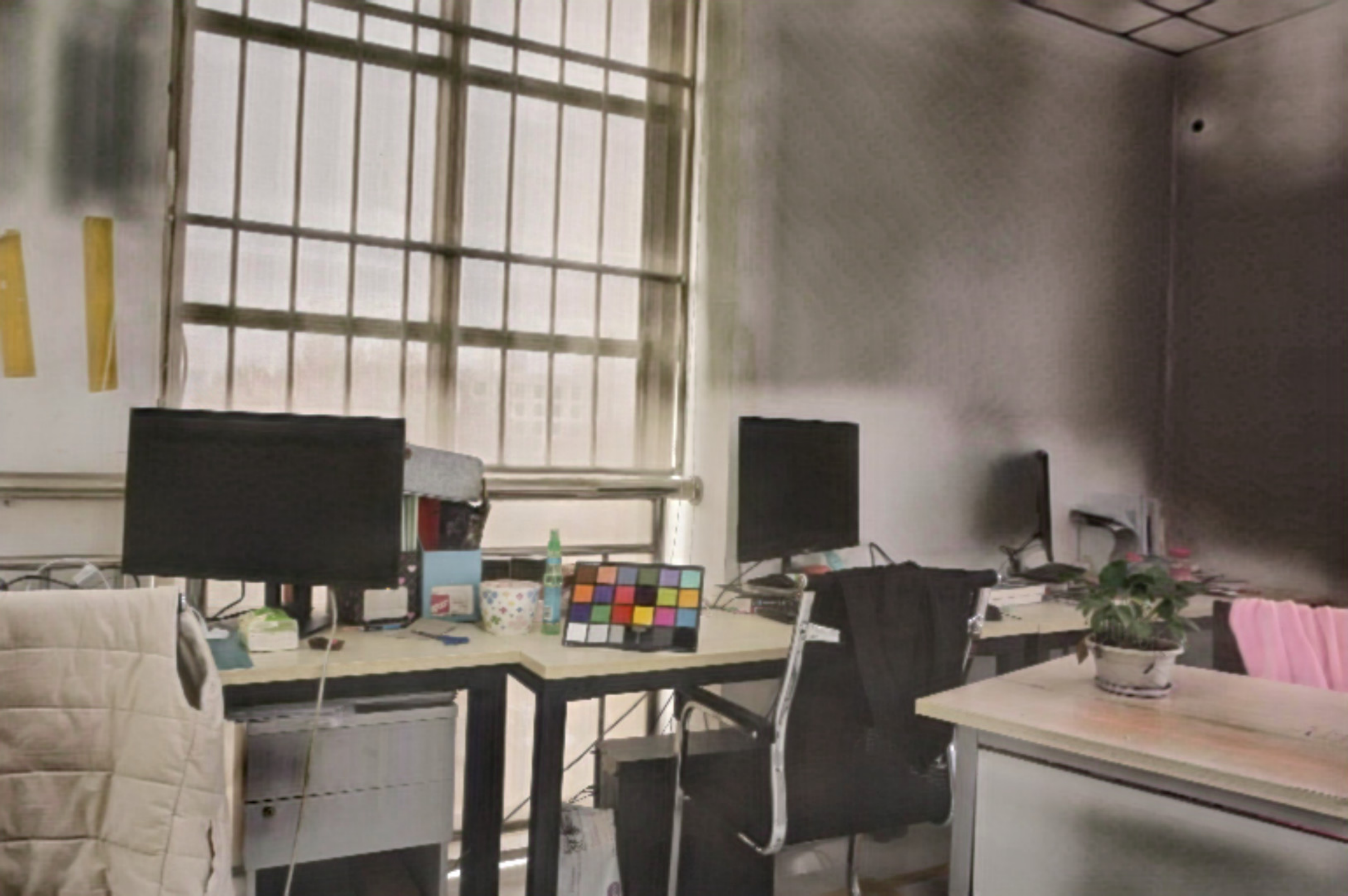} &
      \zoomB{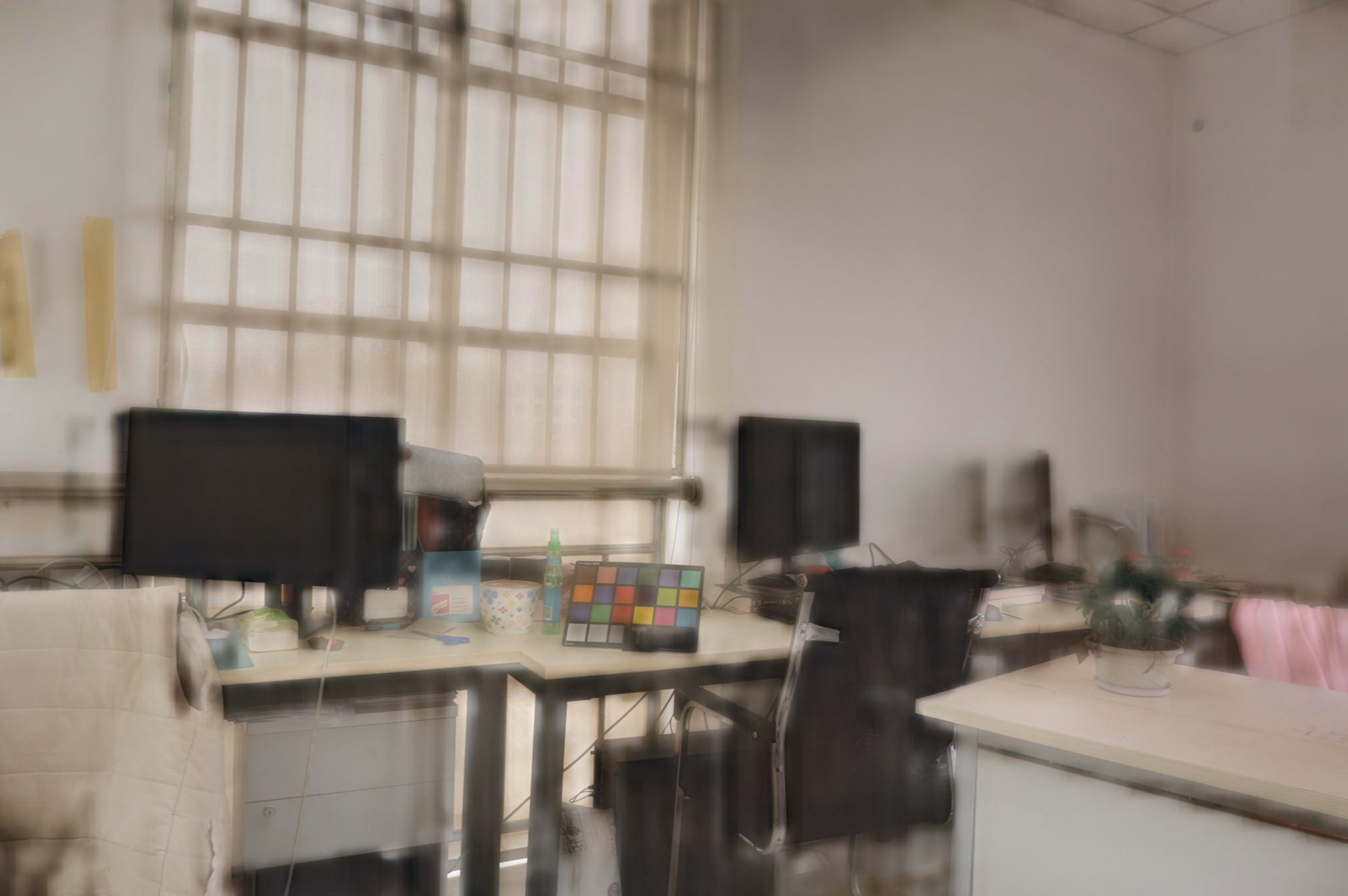} &
      \zoomB{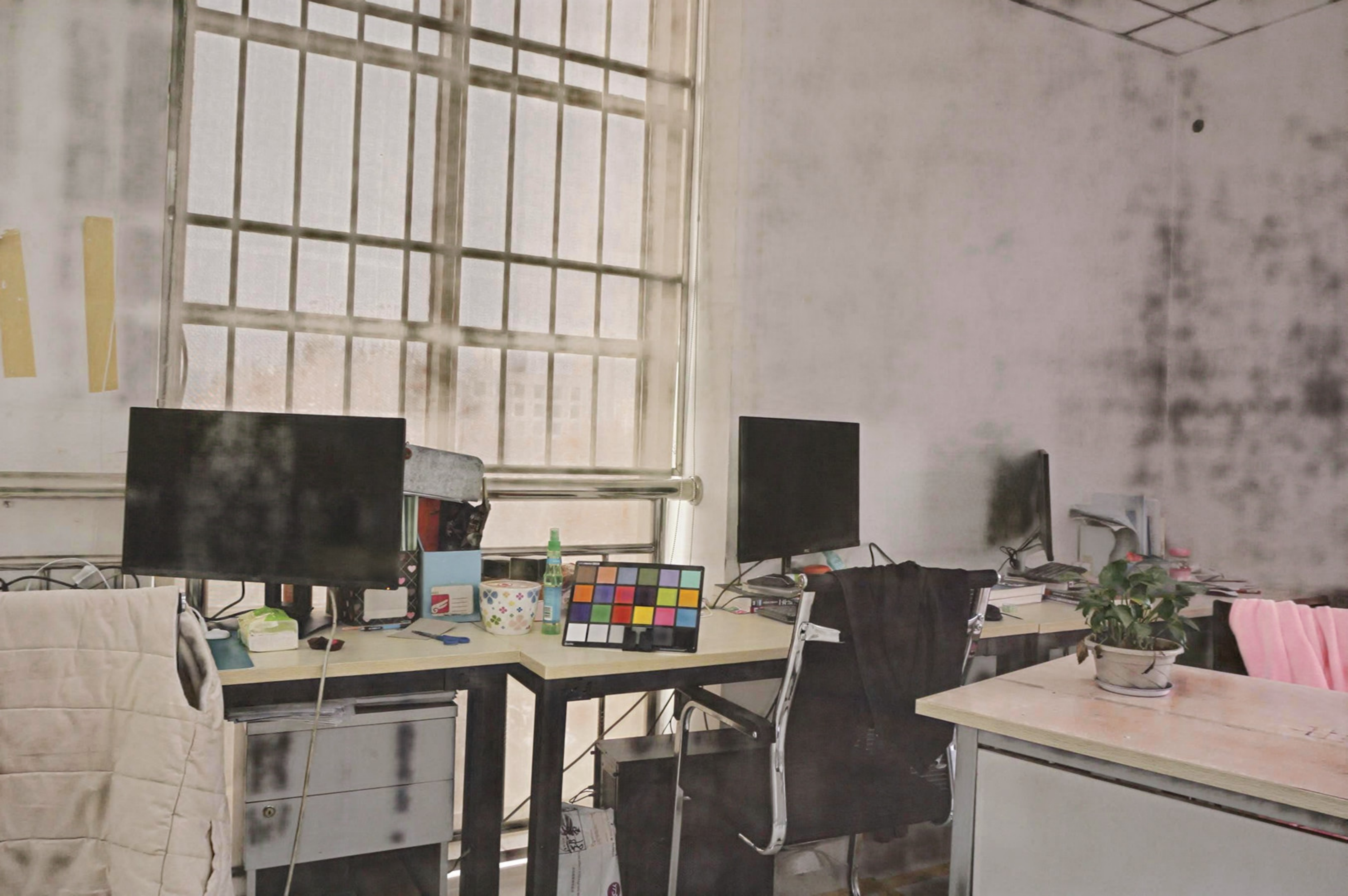} &
      \zoomB{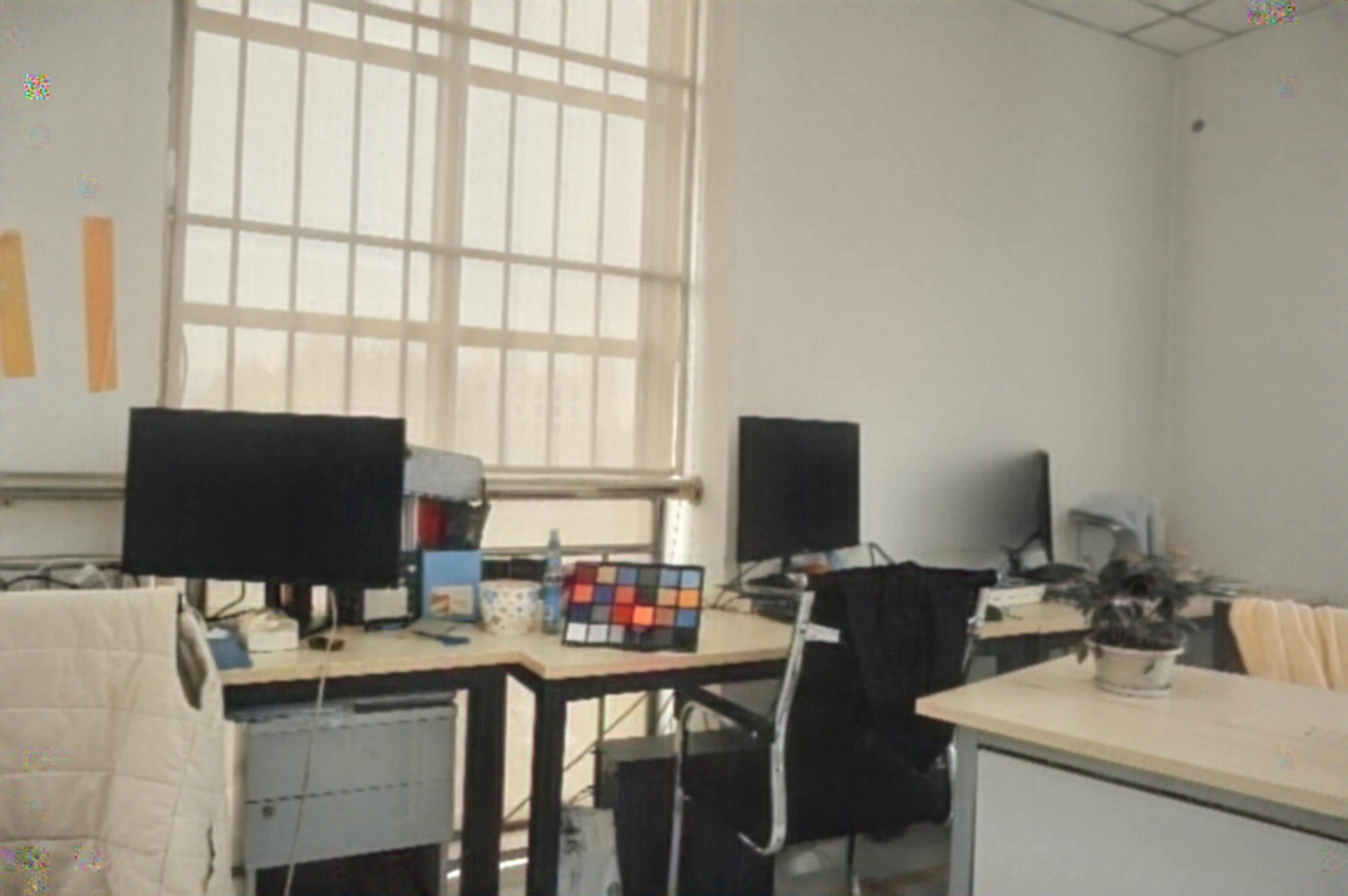} &
      \zoomB{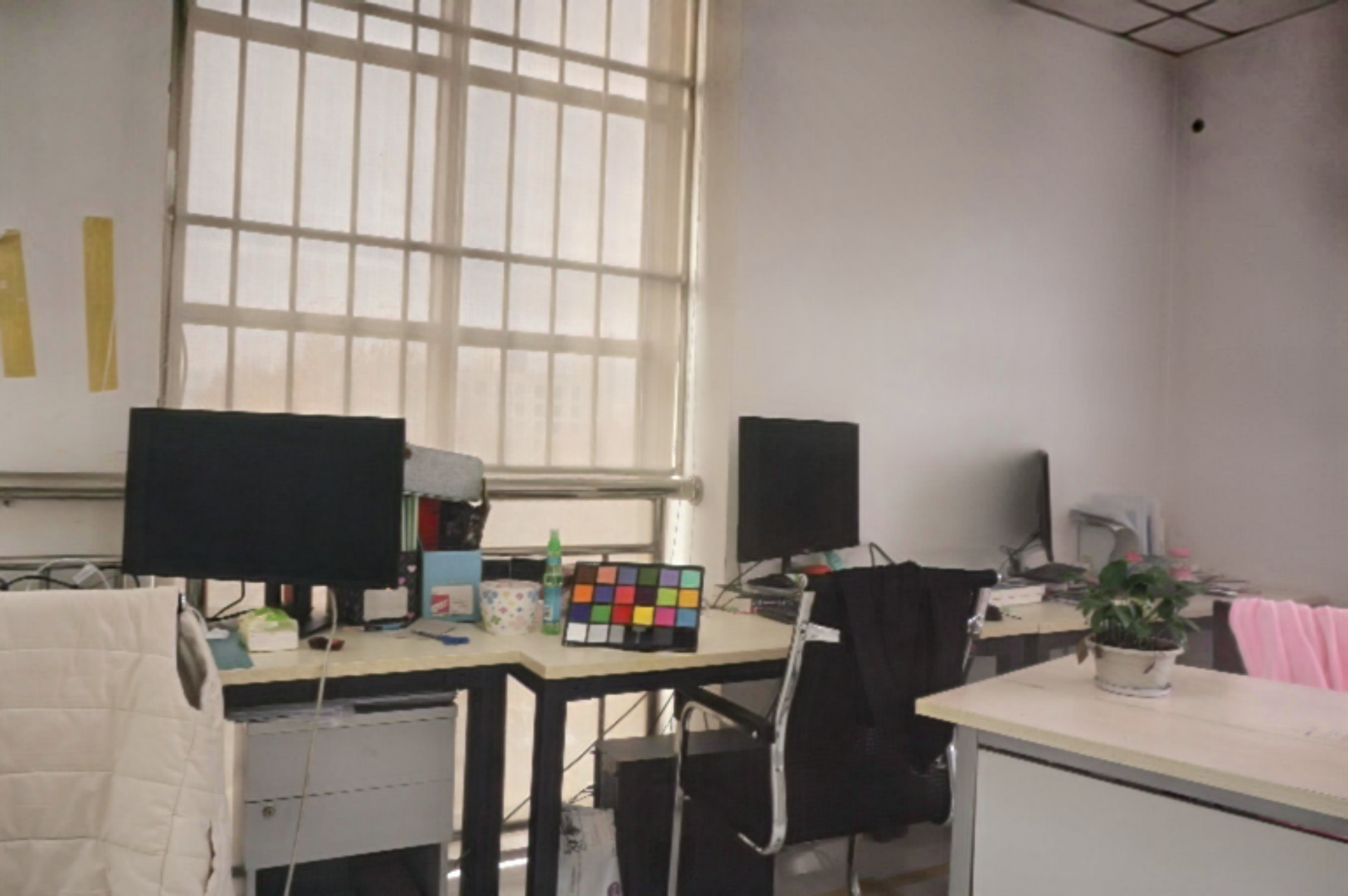} &
      \zoomB{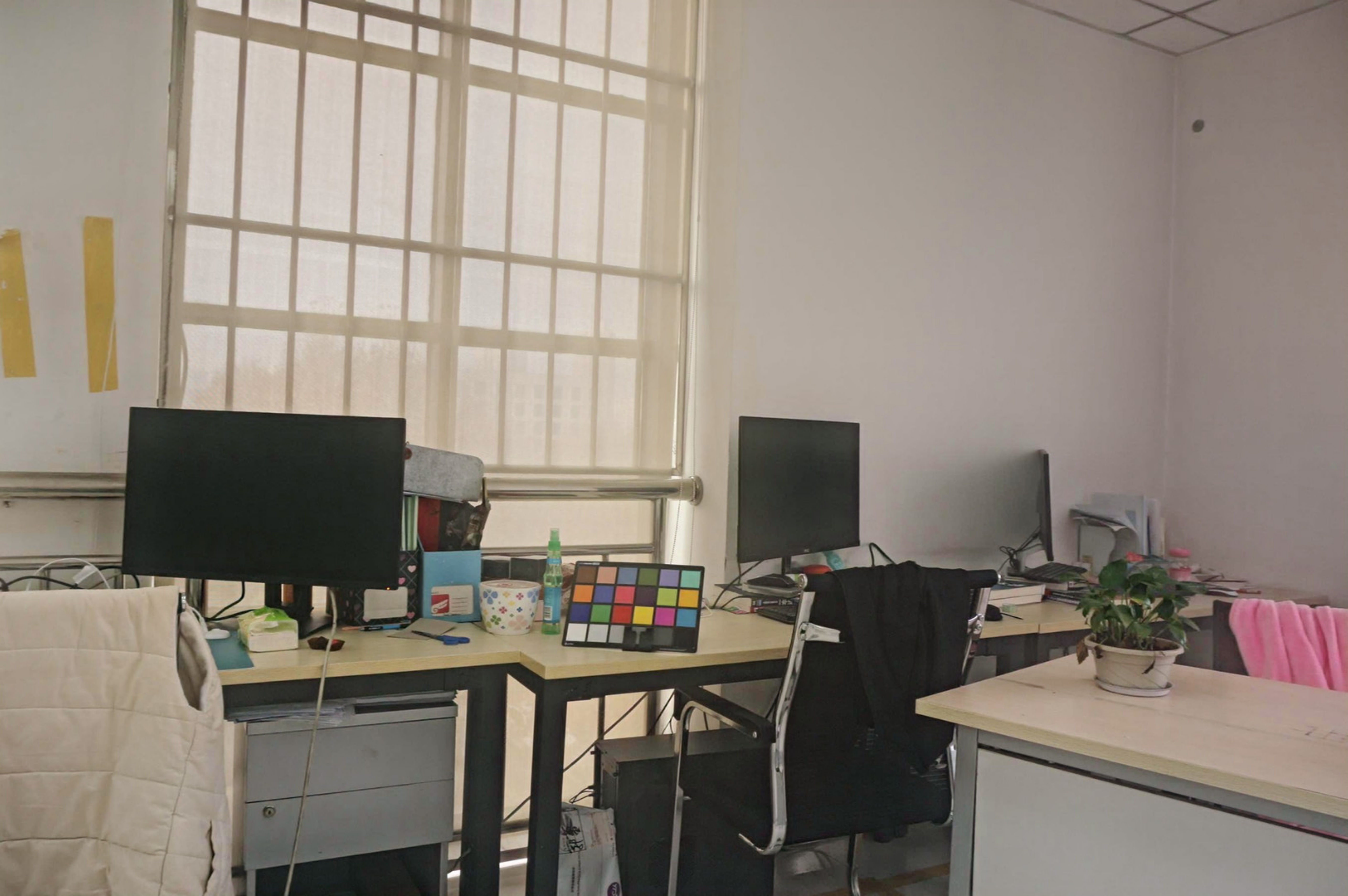} &
      \zoomB{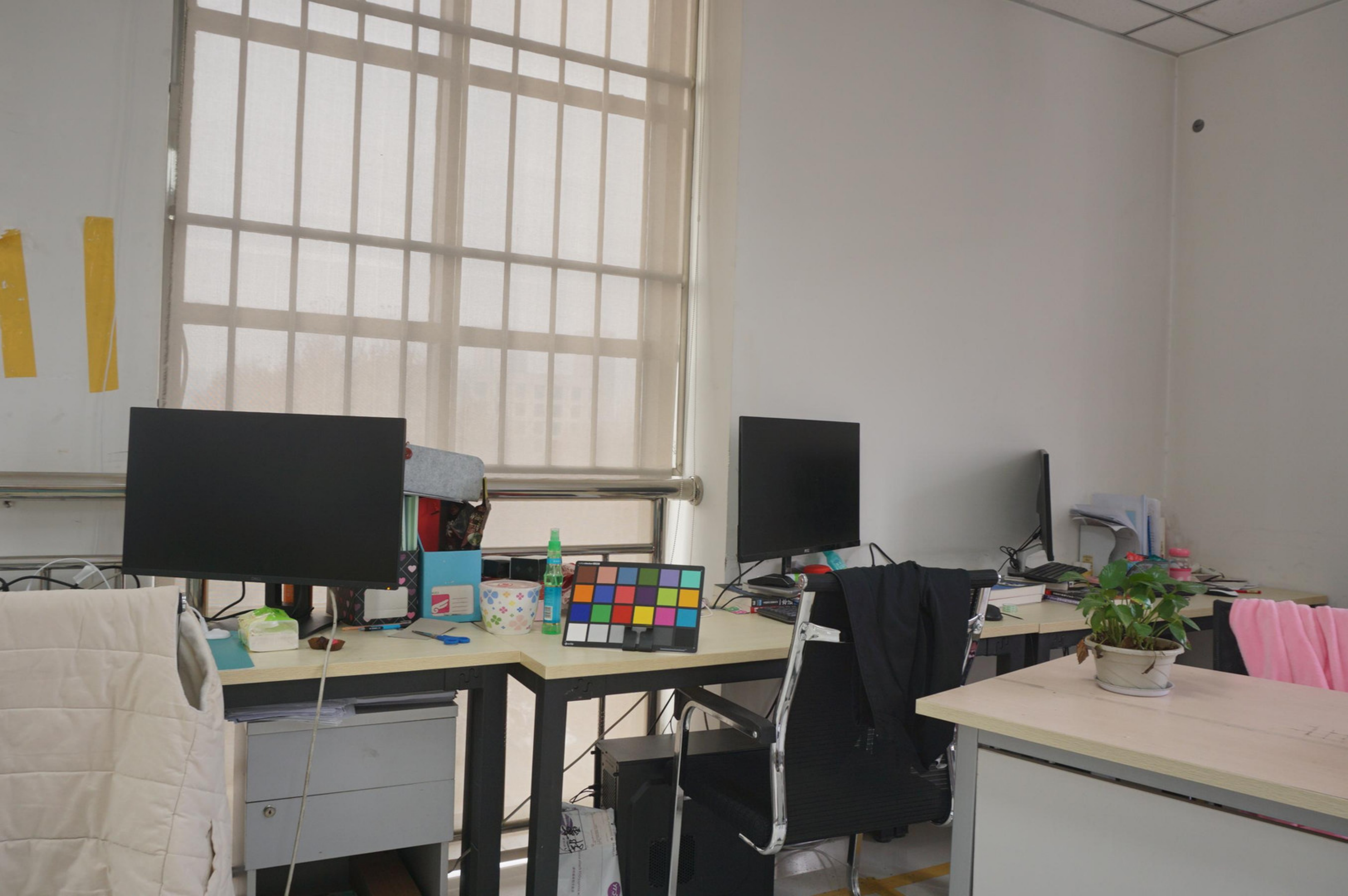}
    \\[16.4pt]
    \rotatebox[origin=c]{0}{\scriptsize} &
      \cimg{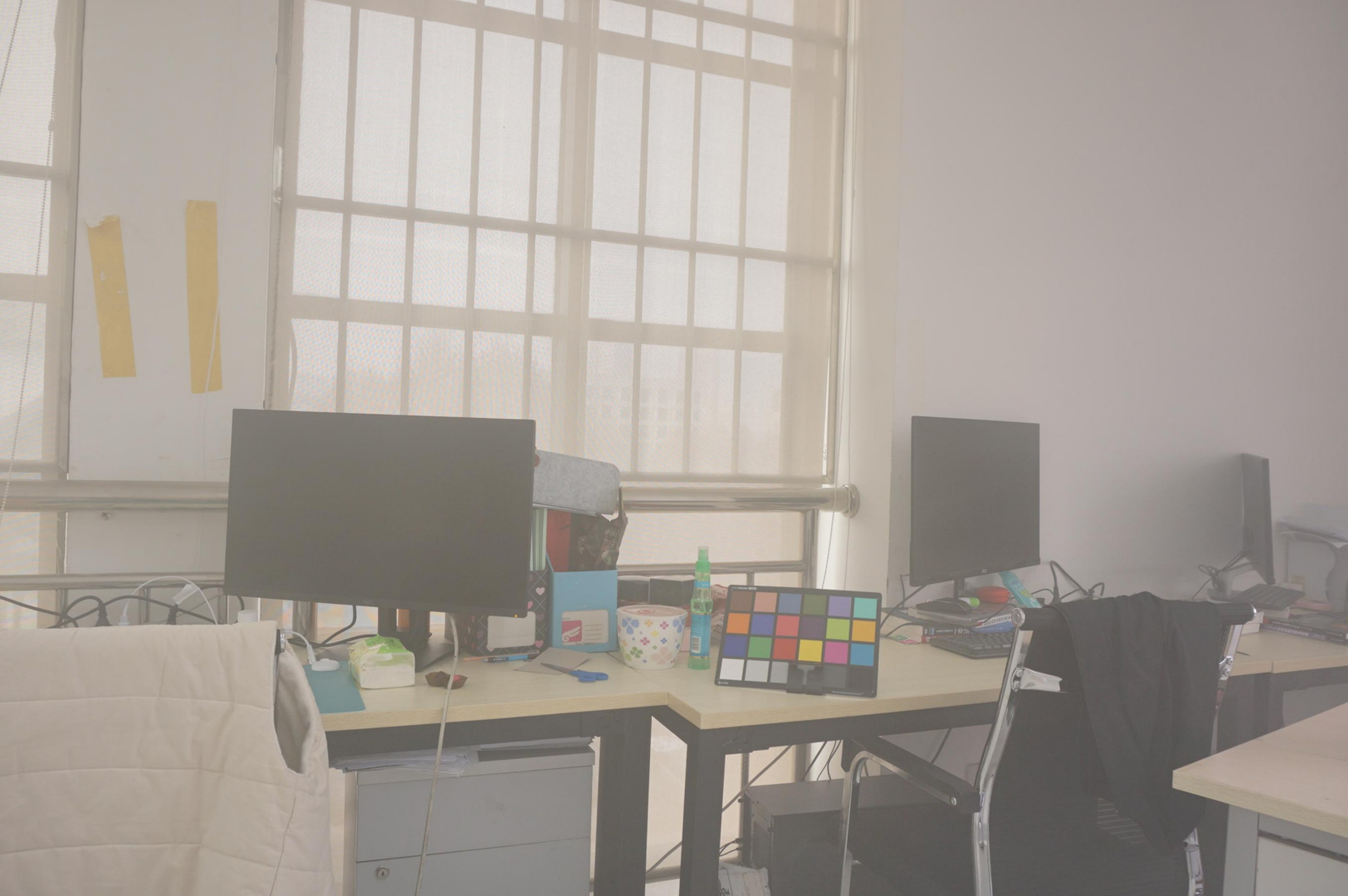} &
      \cimg{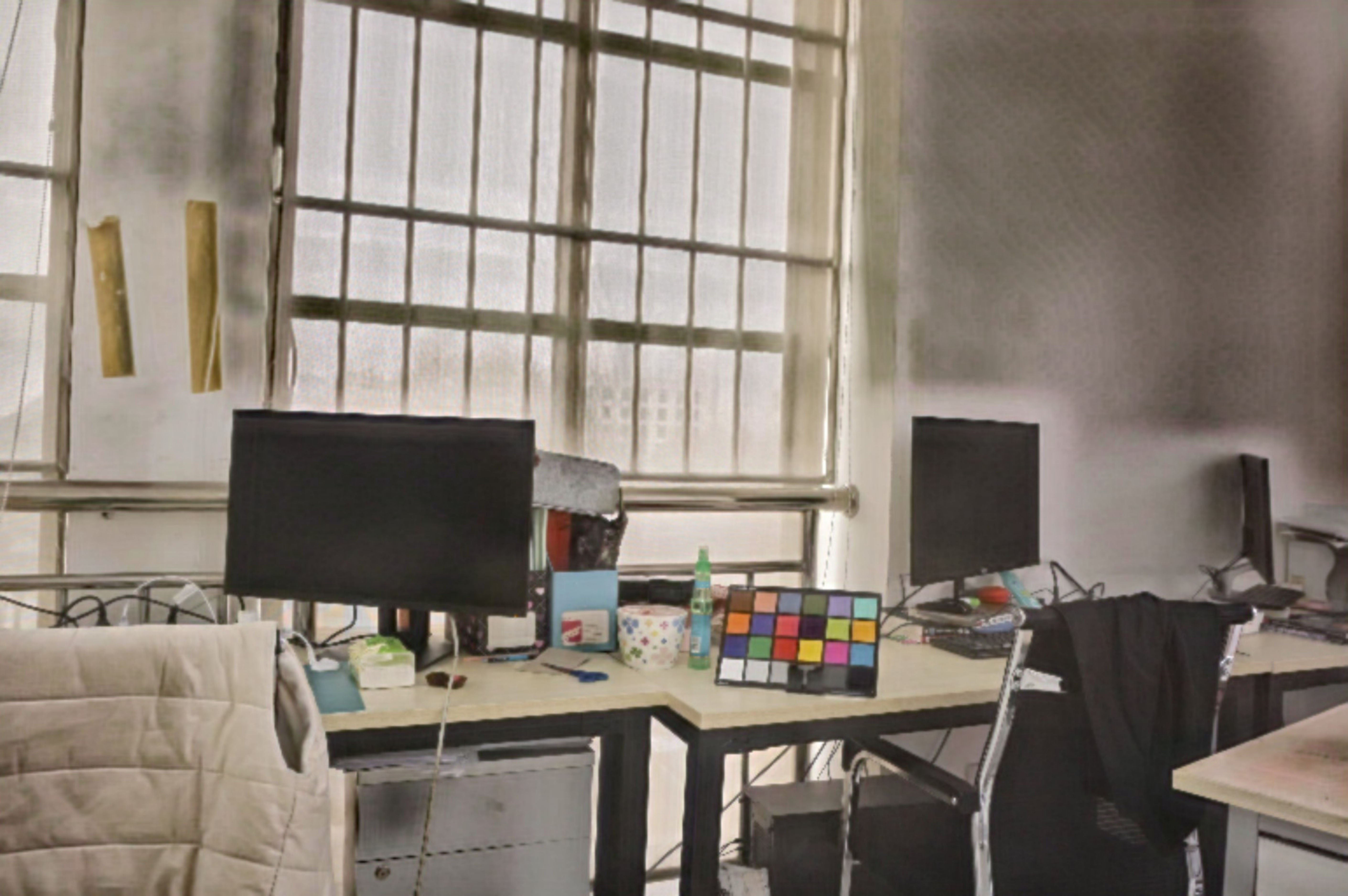} &
      \cimg{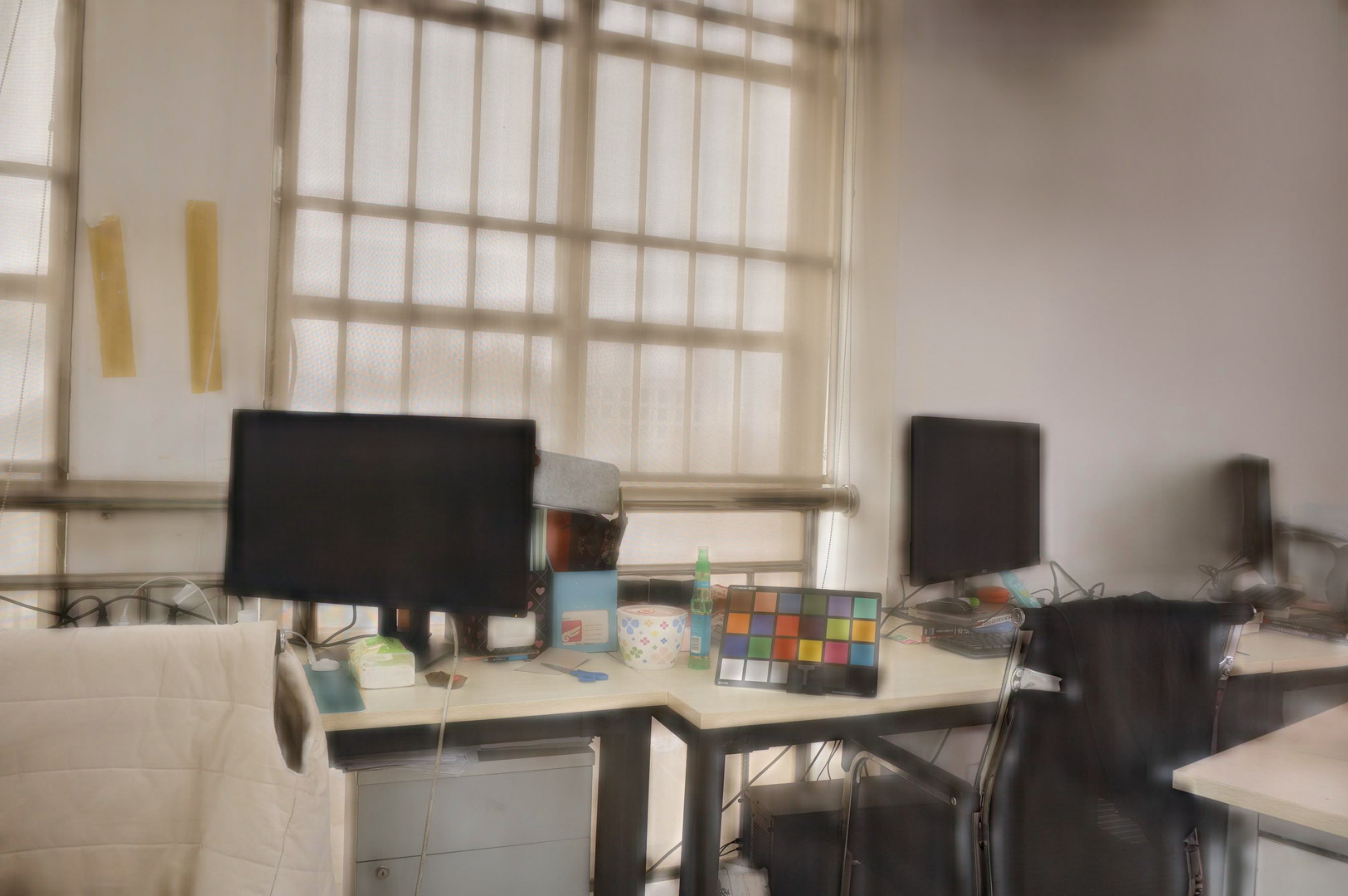} &
      \cimg{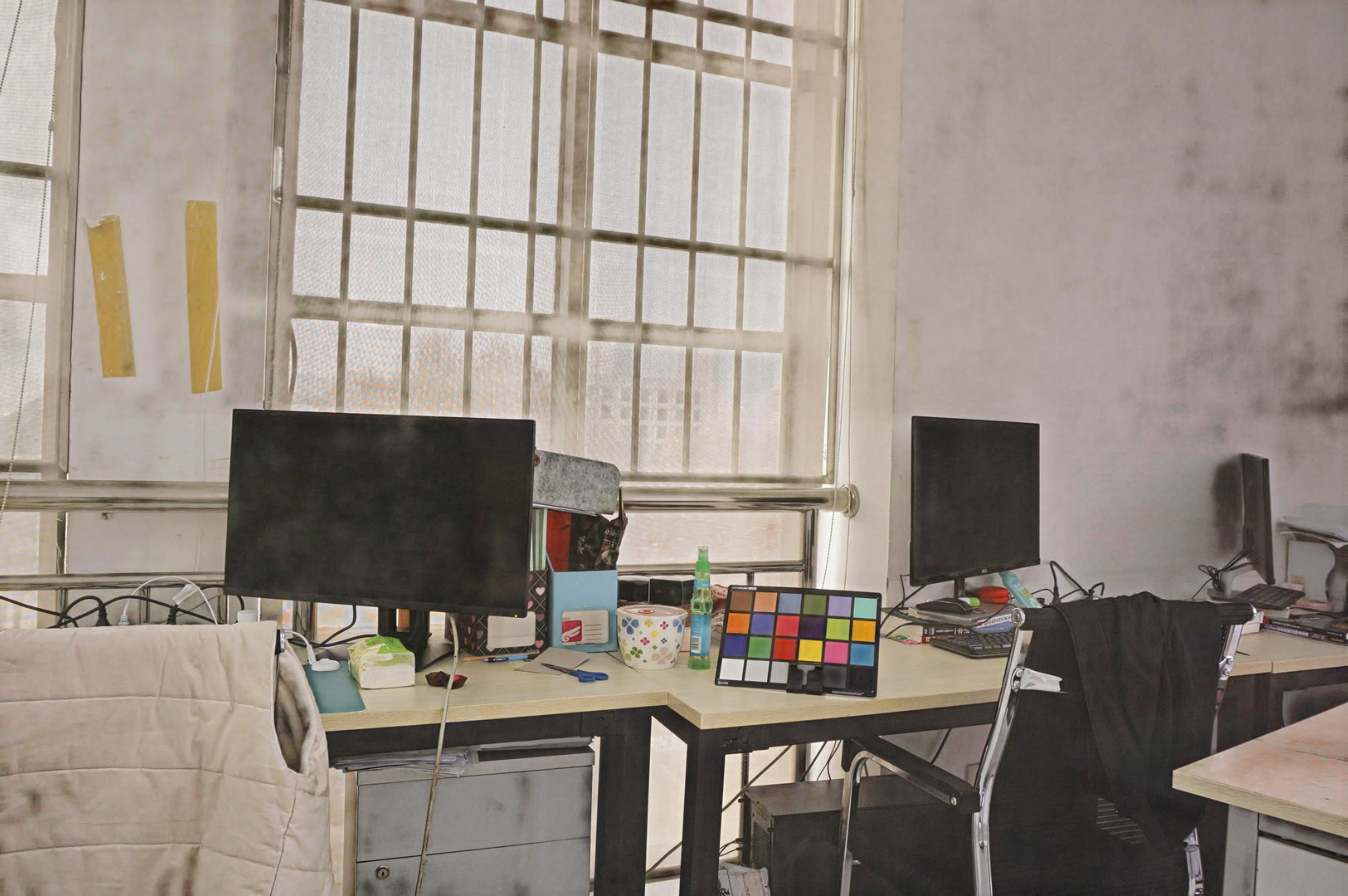} &
      \cimg{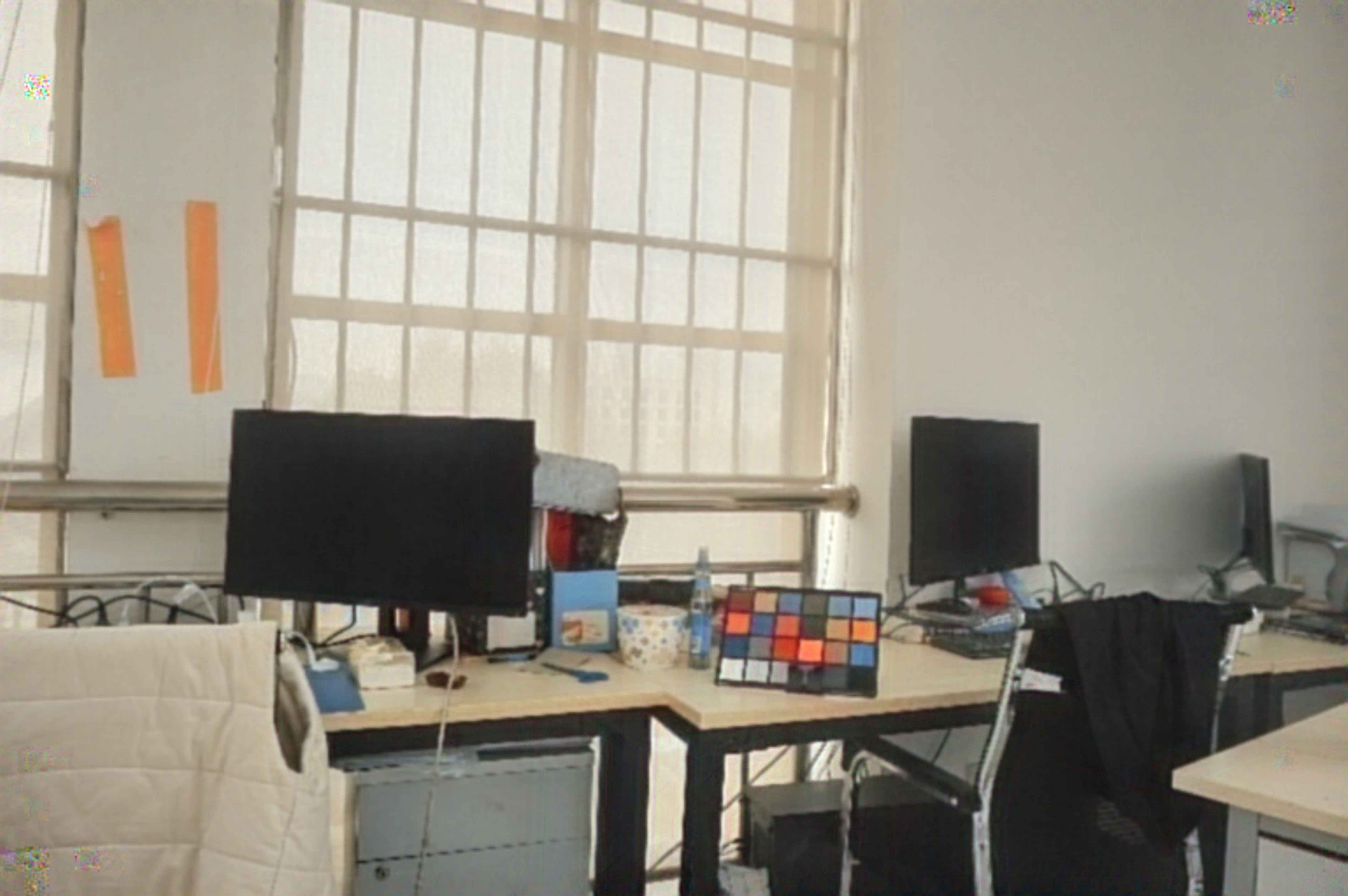} &
      \cimg{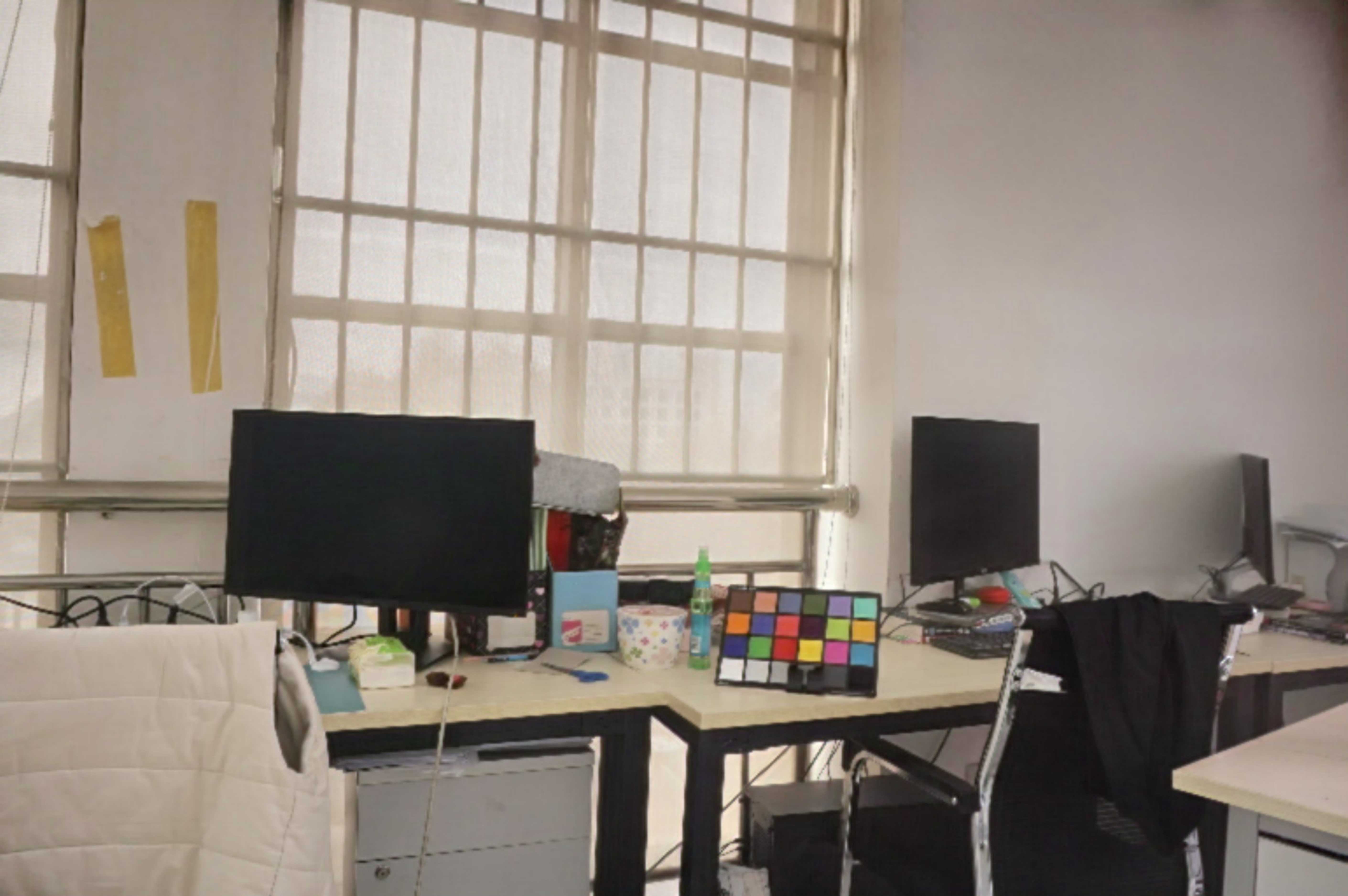} &
      \cimg{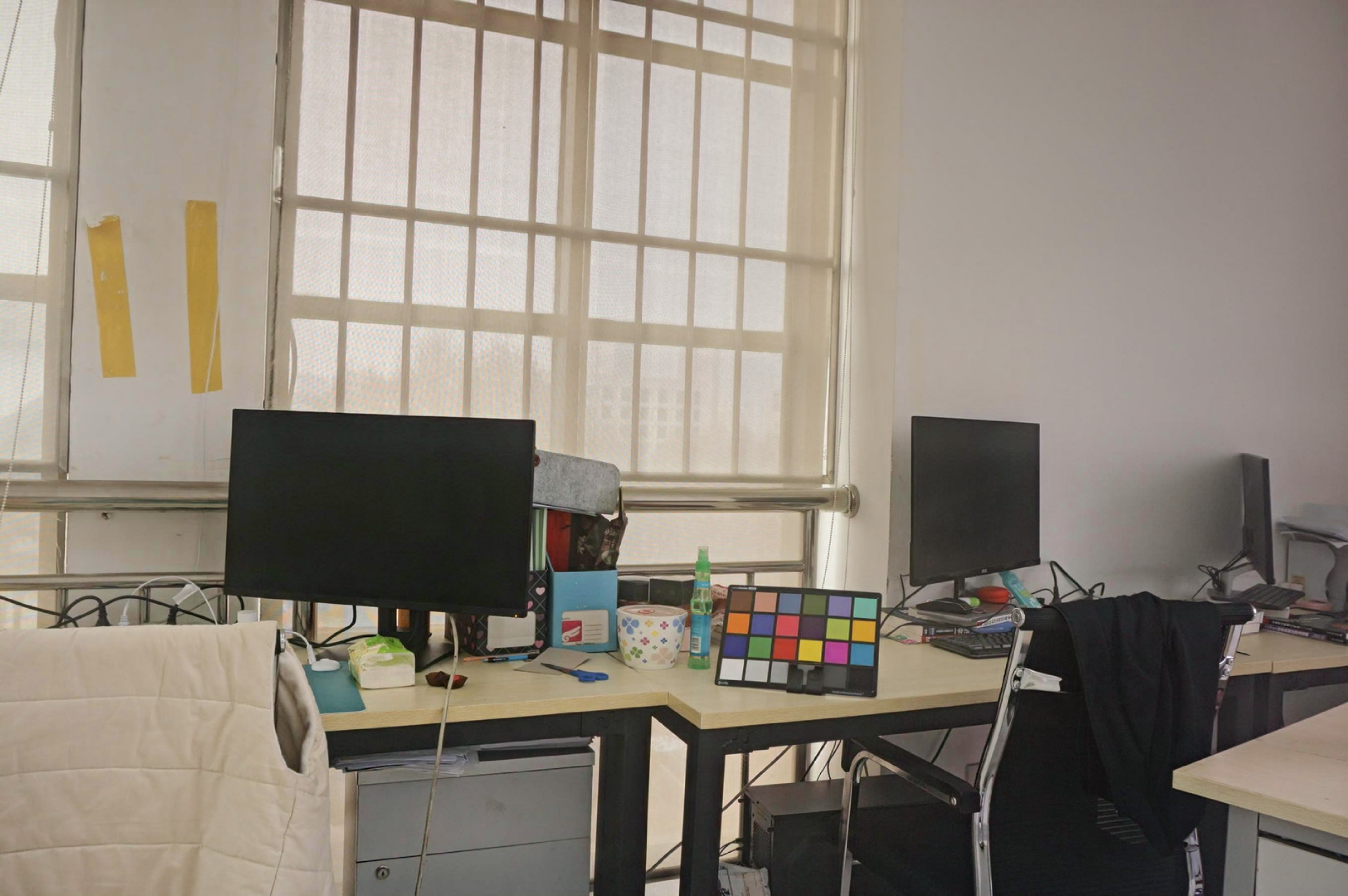} &
      \cimg{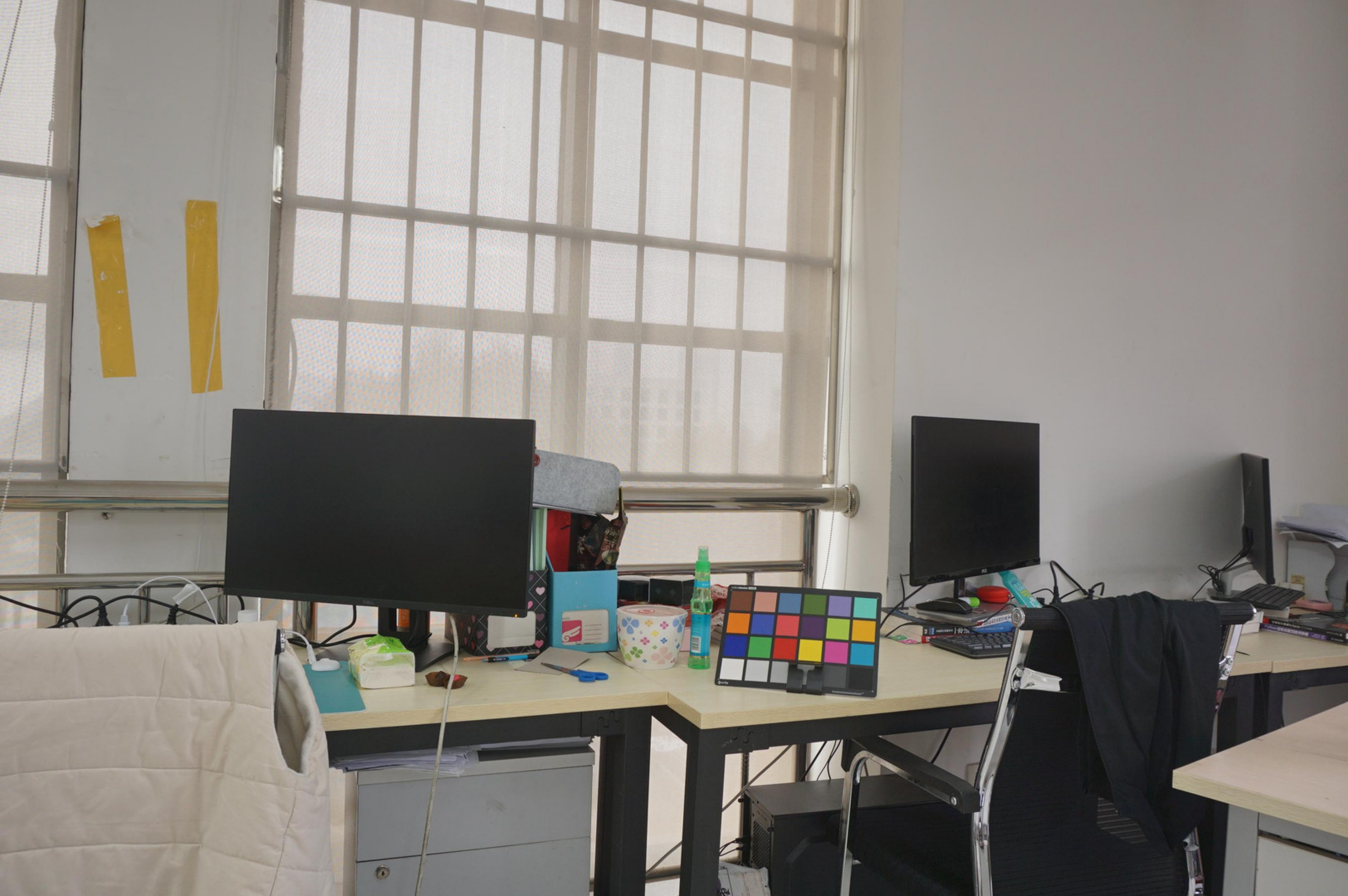}
    \\[16.4pt]
    \rotatebox[origin=c]{0}{\scriptsize(c)} &
      \zoomC{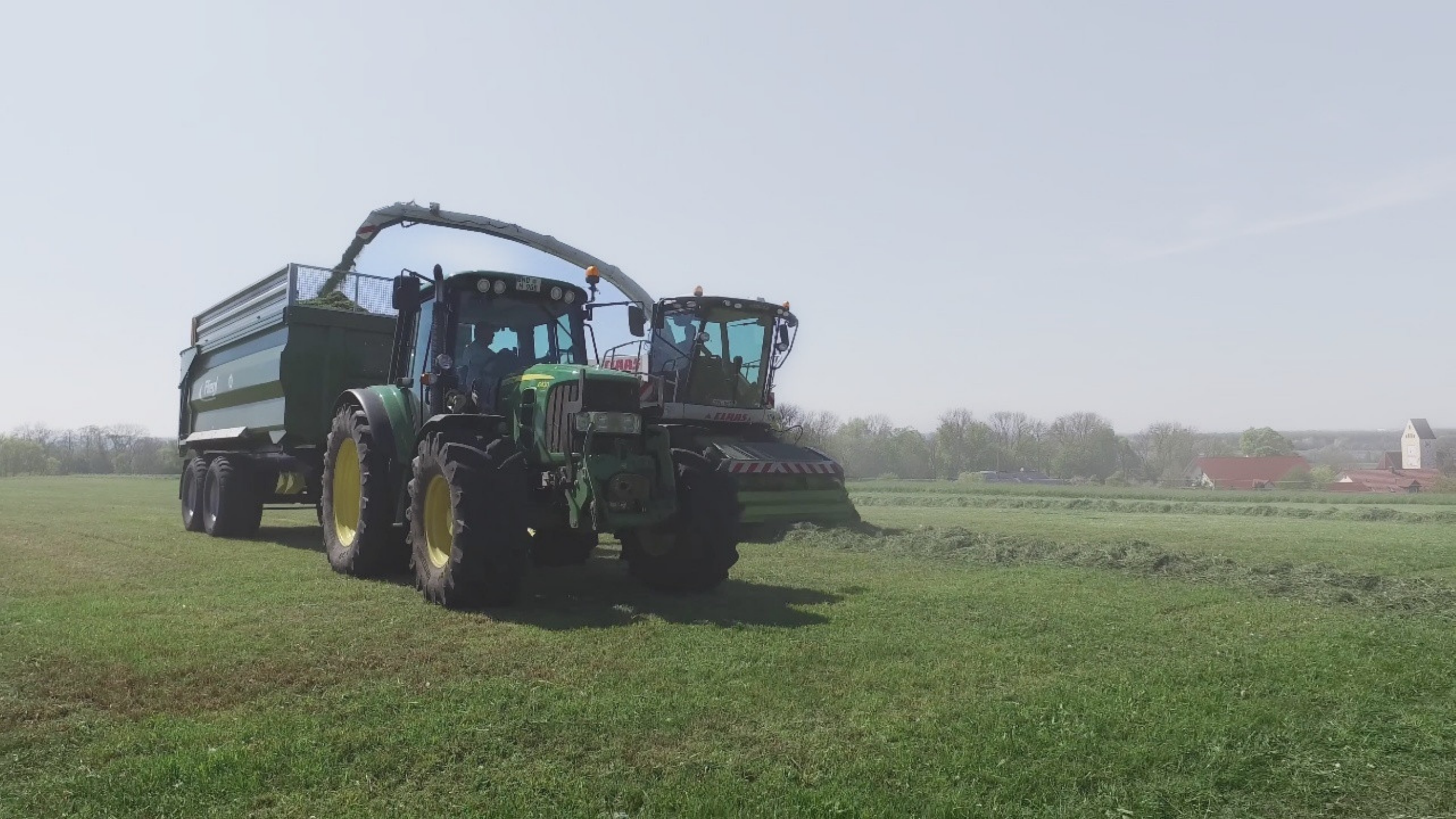} &
      \zoomC{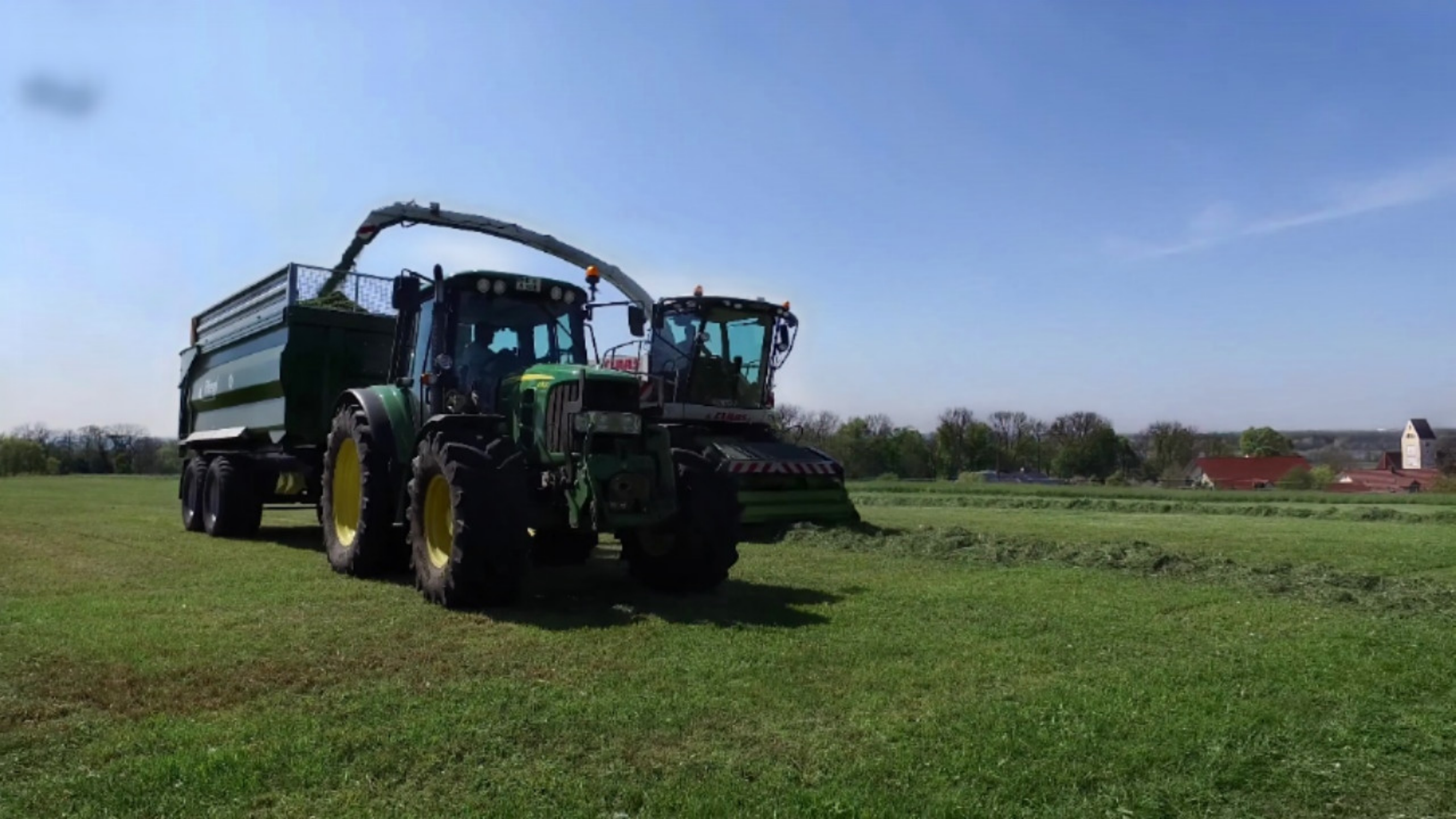} &
      \zoomC{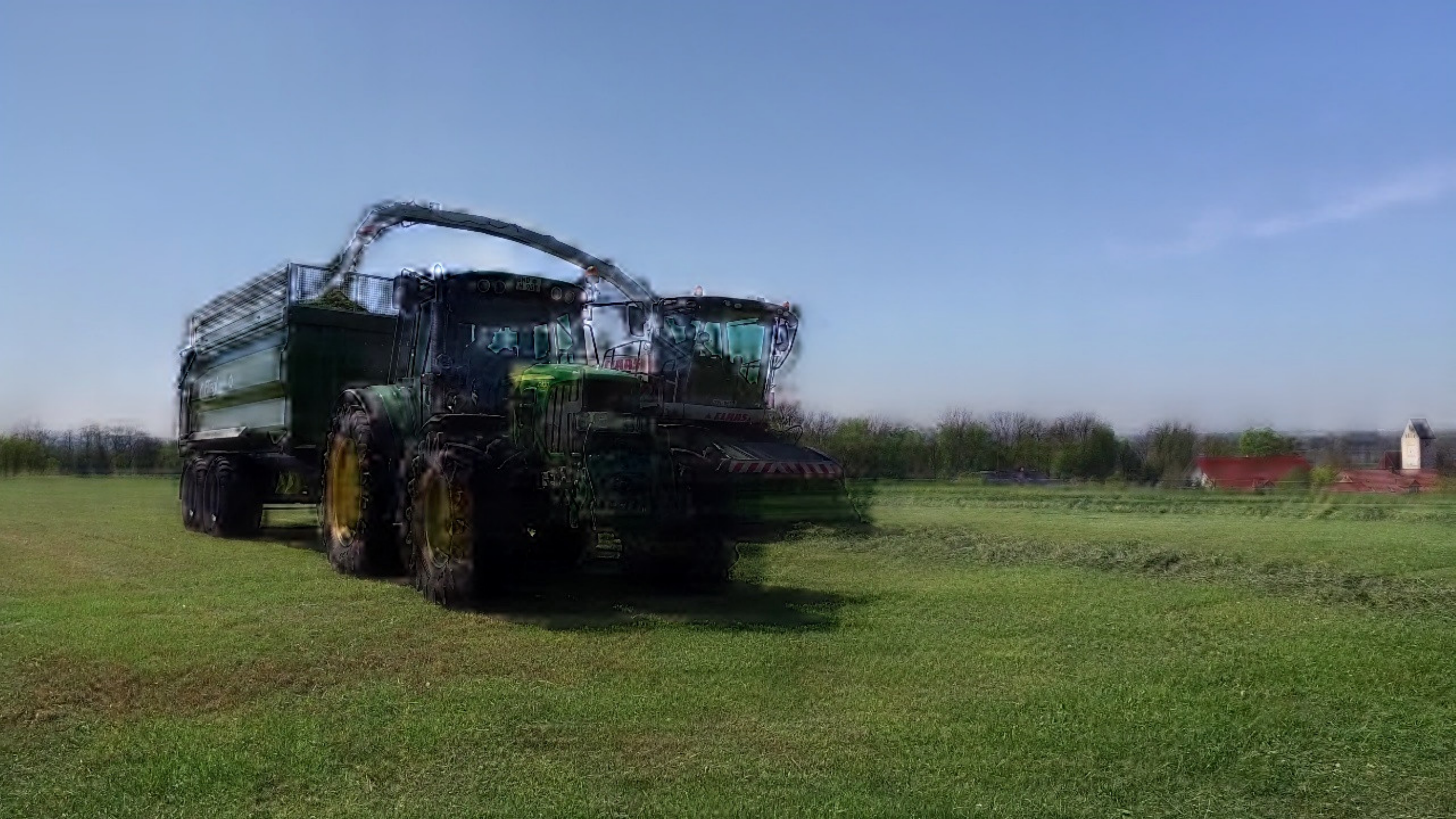} &
      \zoomC{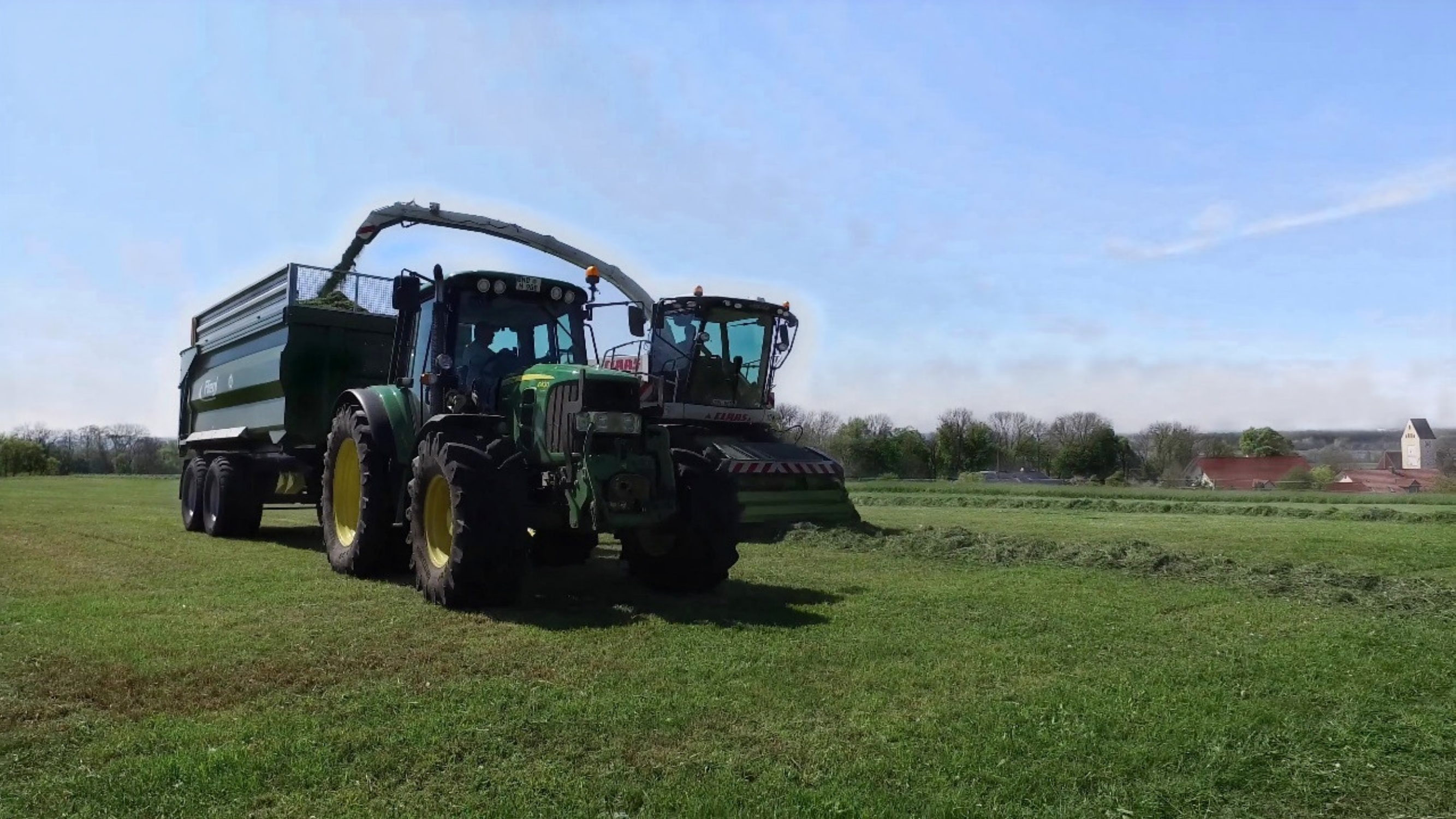} &
      \zoomC{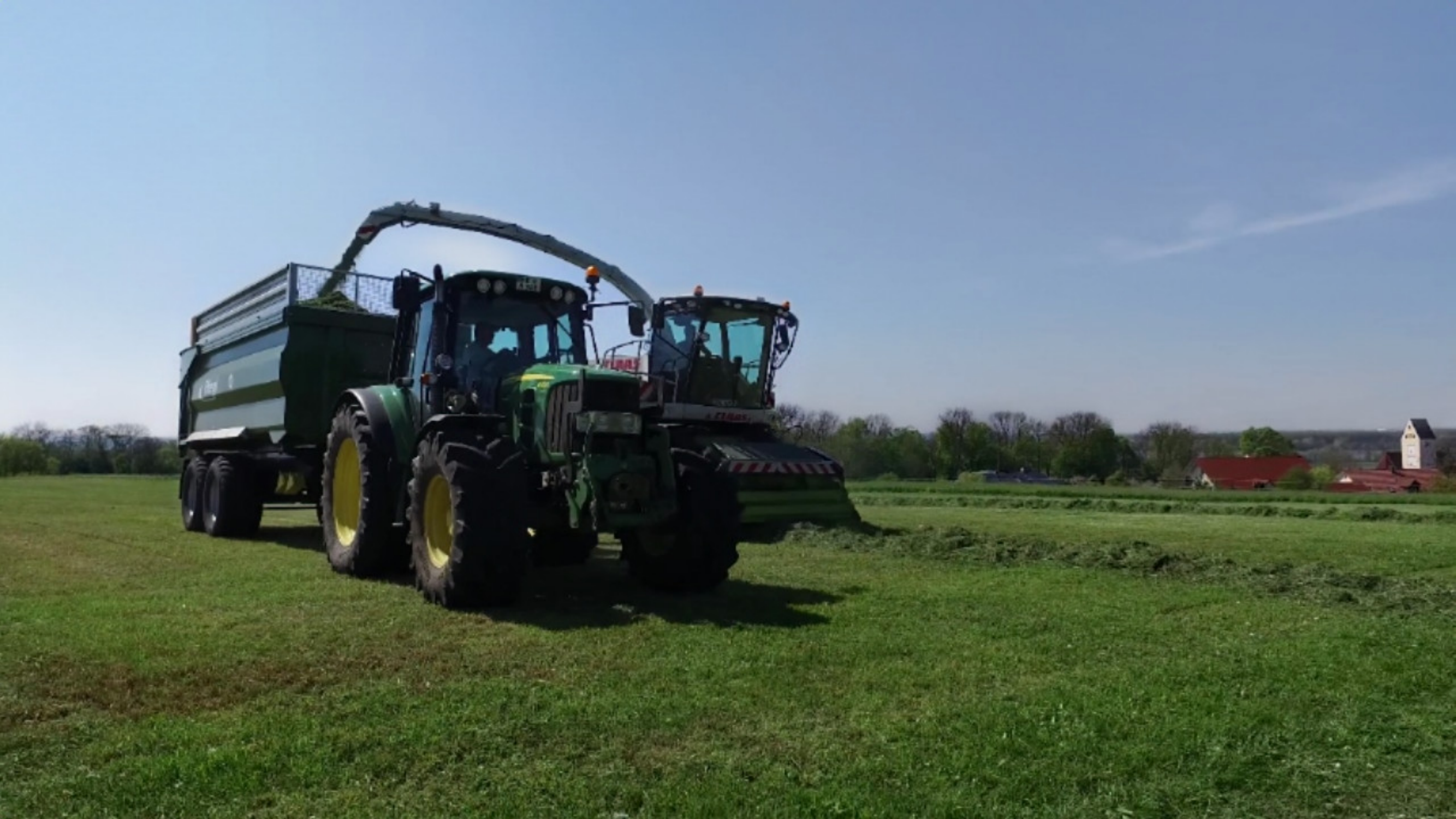} &
      \zoomC{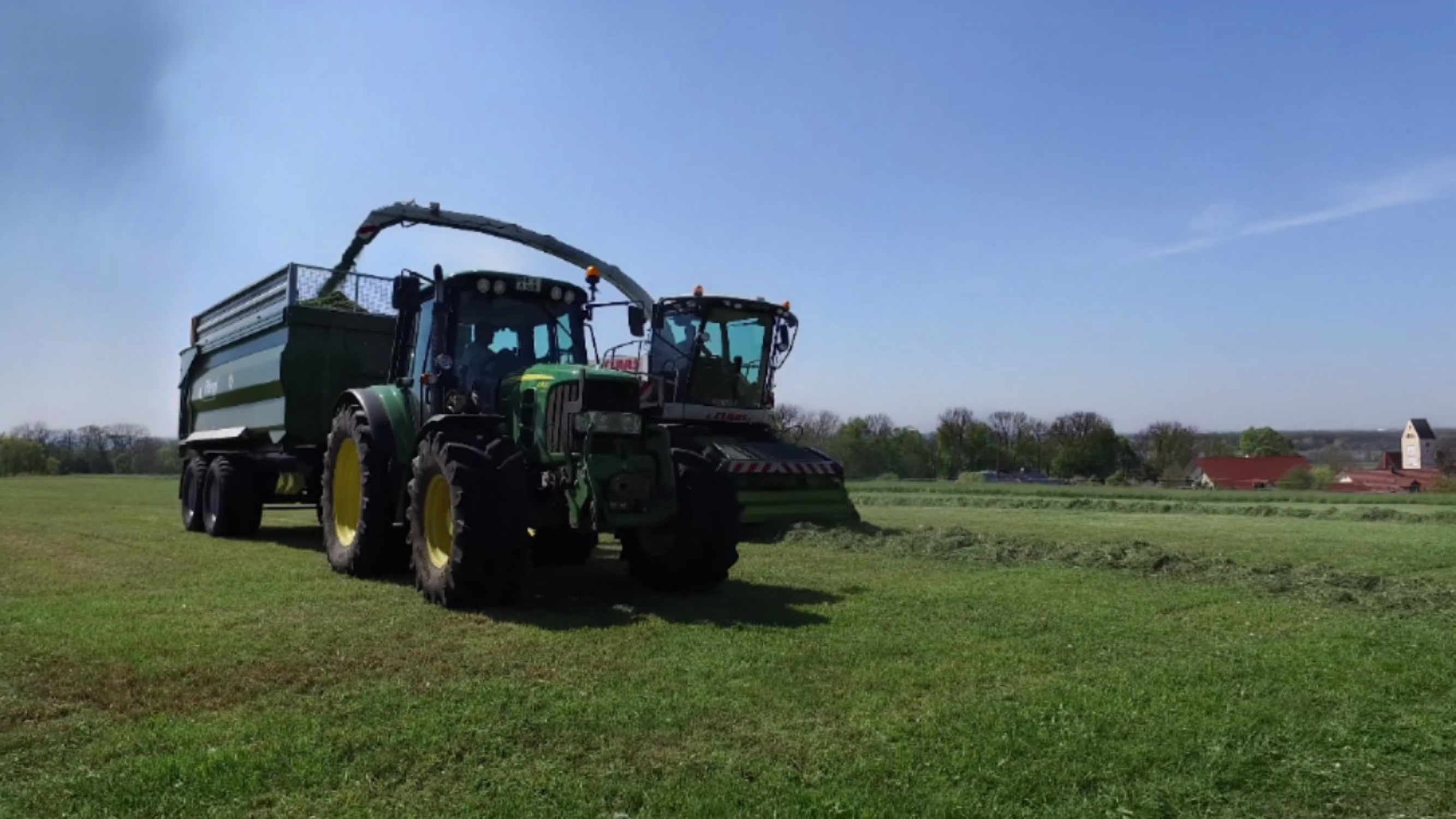} &
      \zoomC{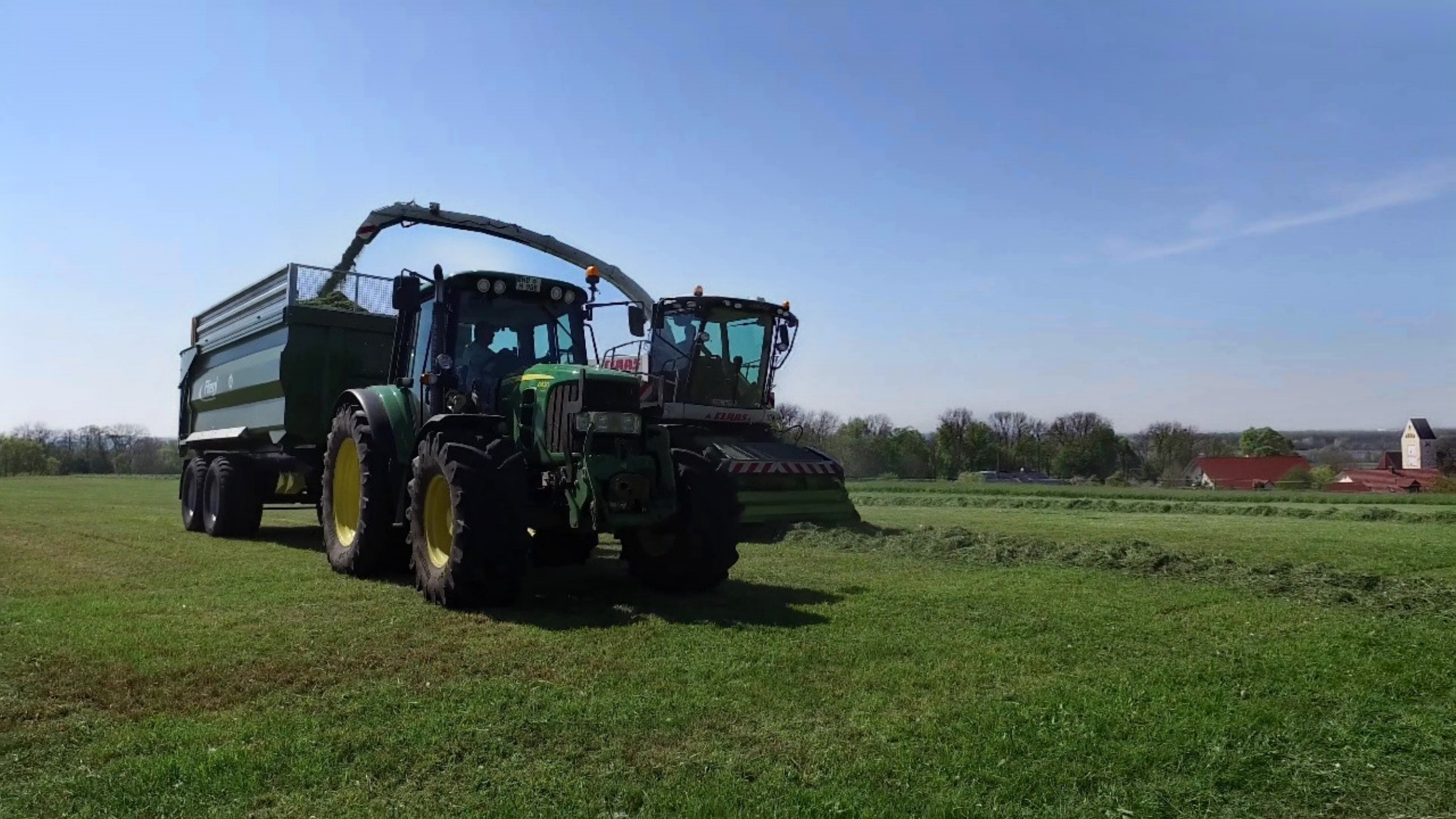} &
      \zoomC{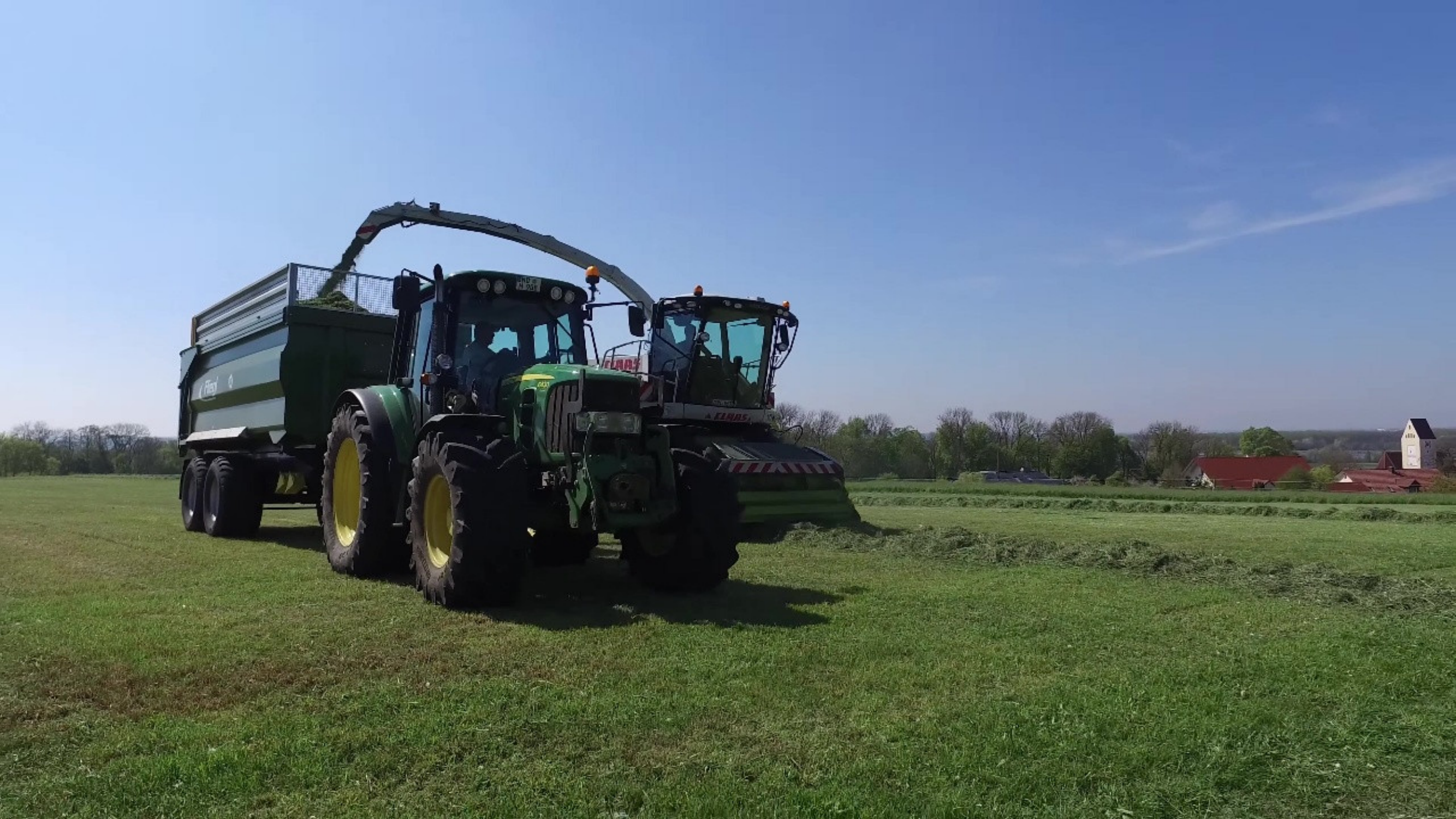}
      \\[14.2pt]
    \rotatebox[origin=c]{0}{\scriptsize} &
      \cimg{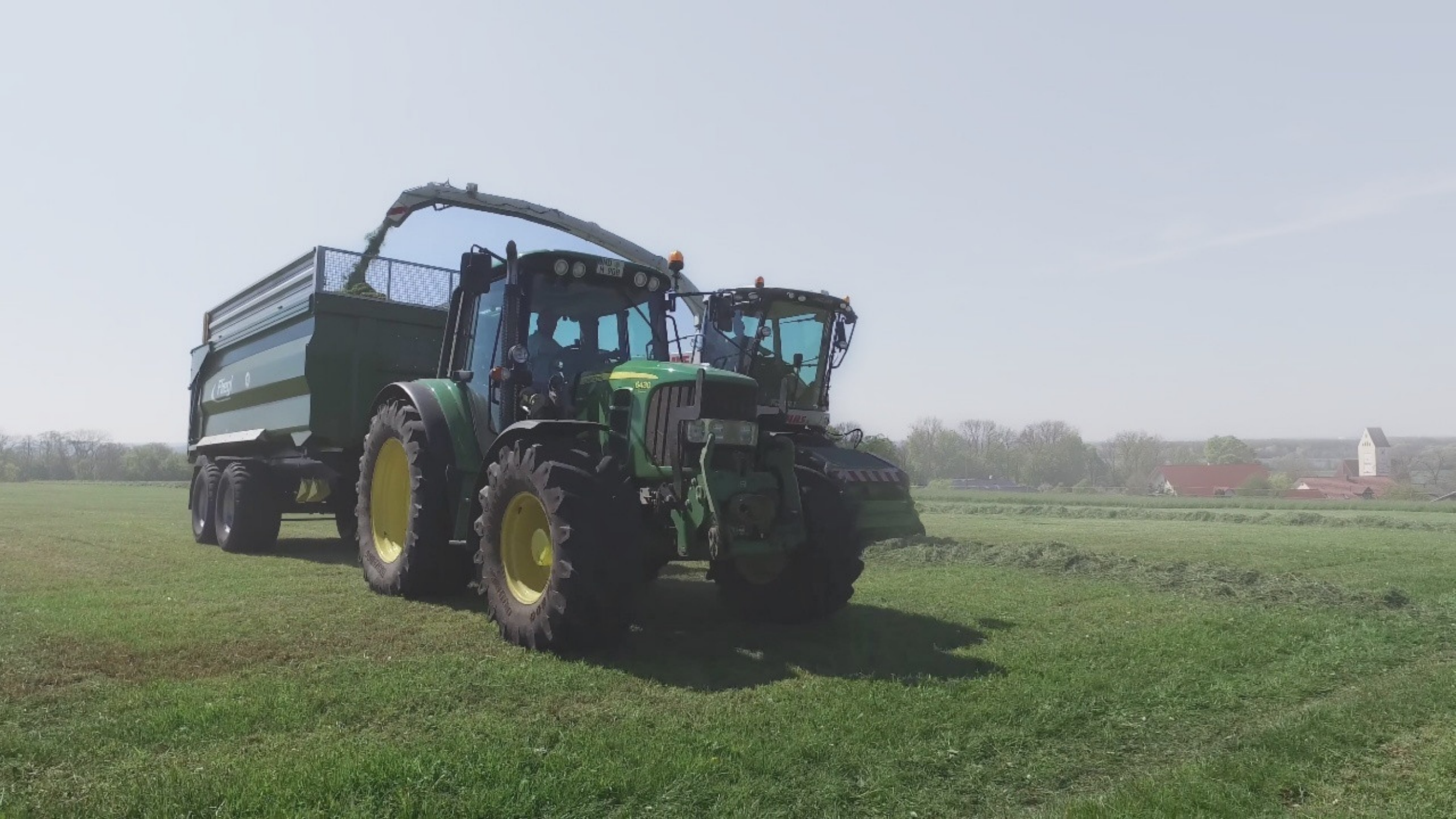} &
      \cimg{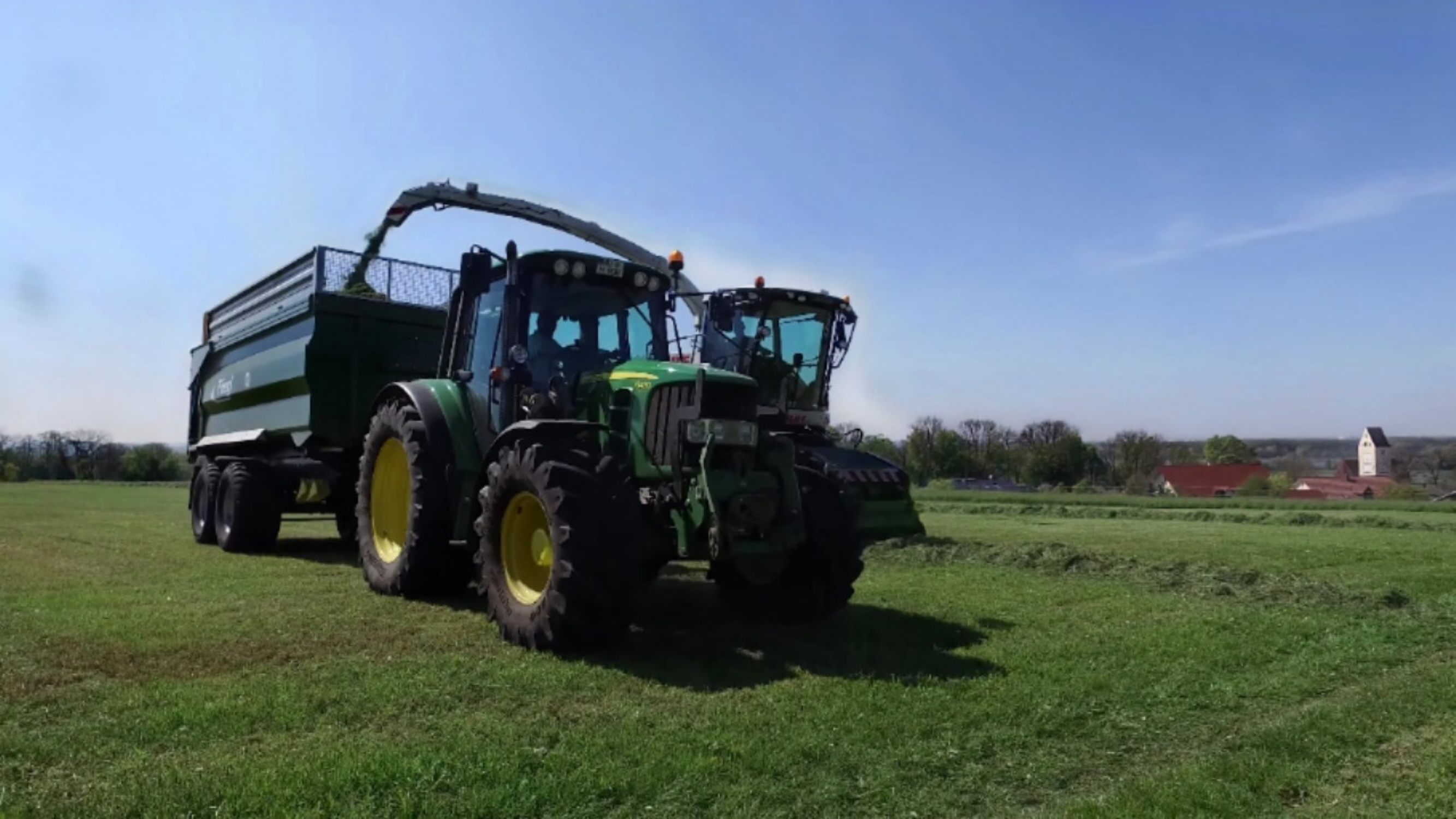} &
      \cimg{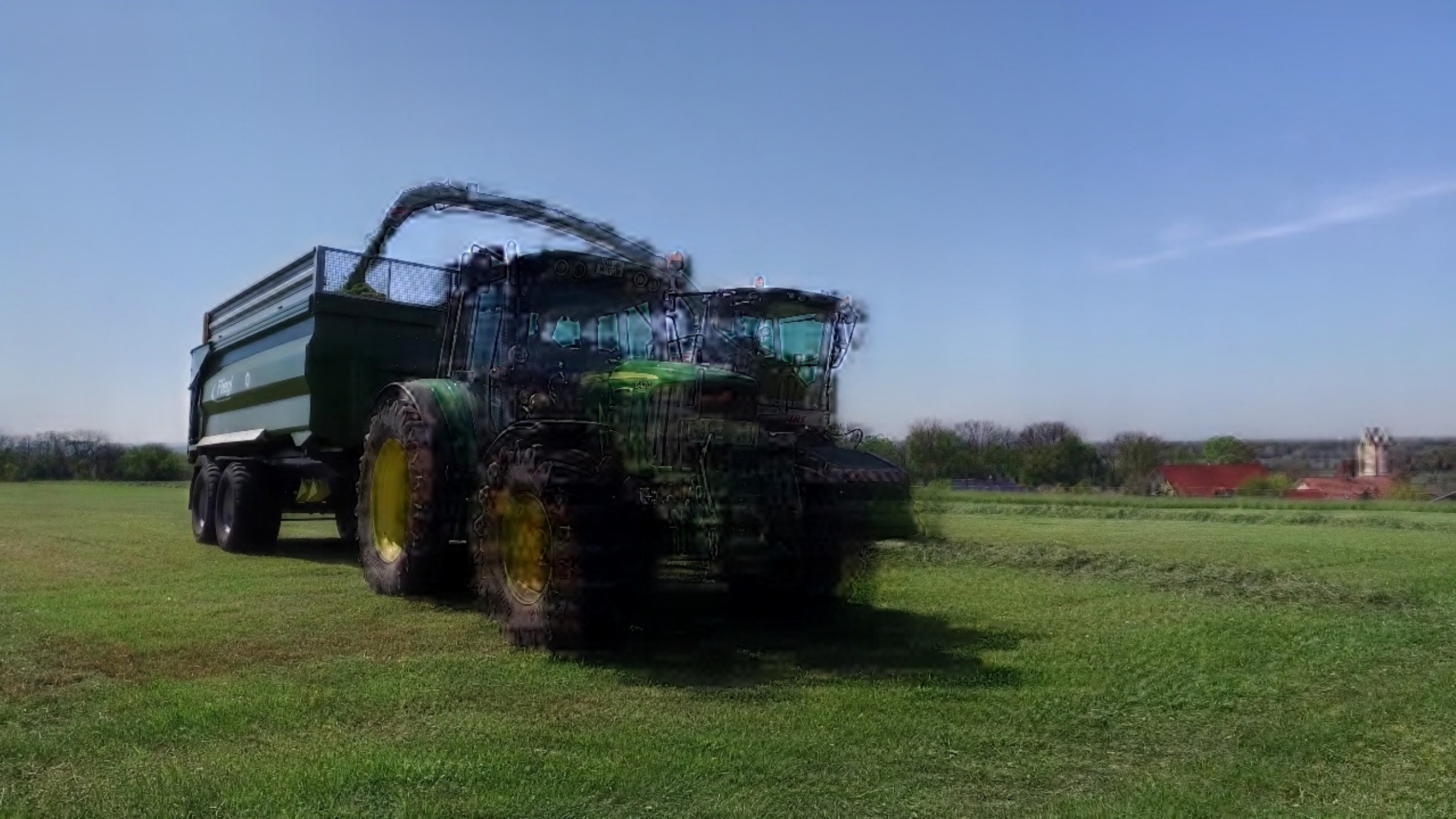} &
      \cimg{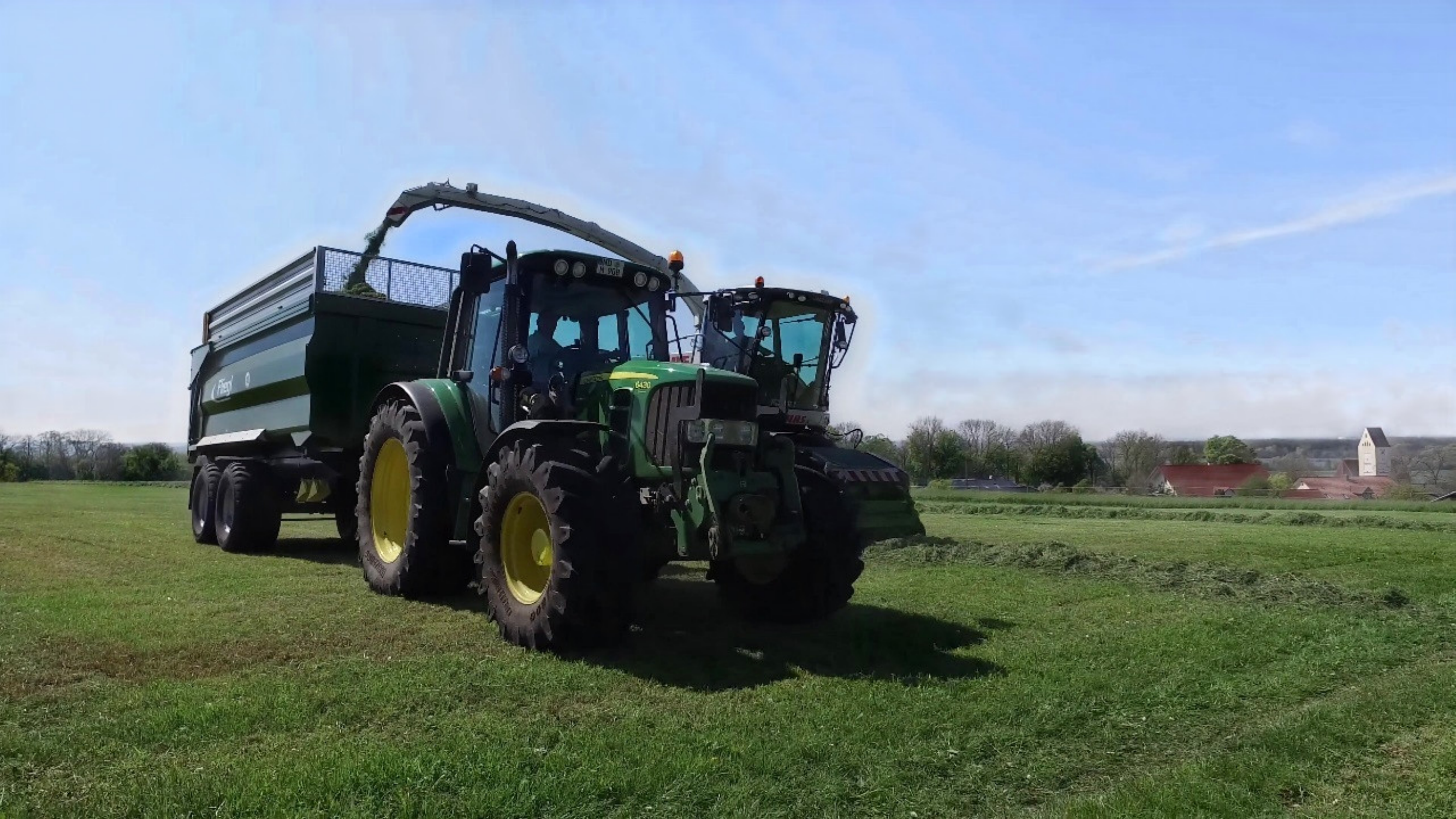} &
      \cimg{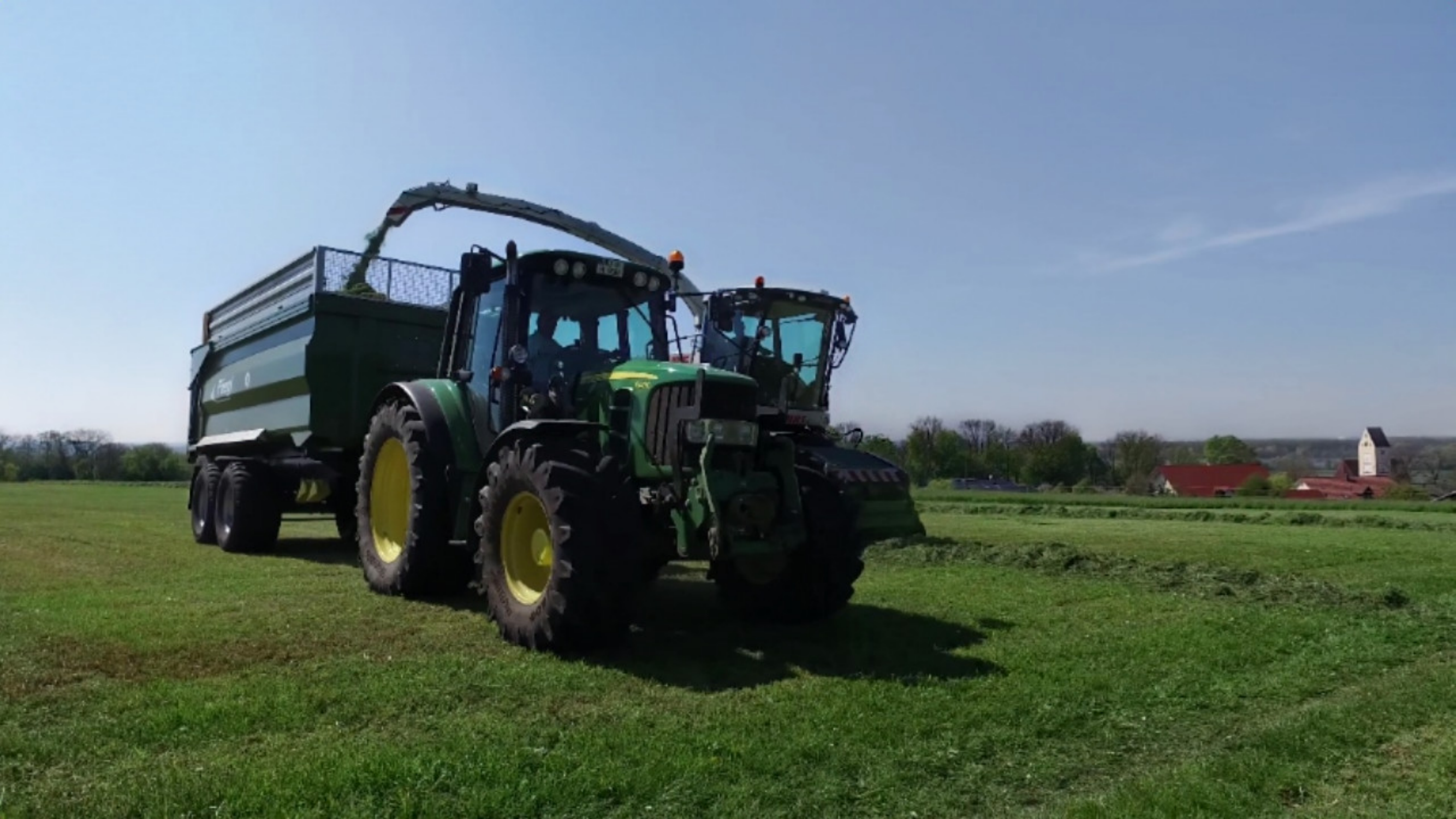} &
      \cimg{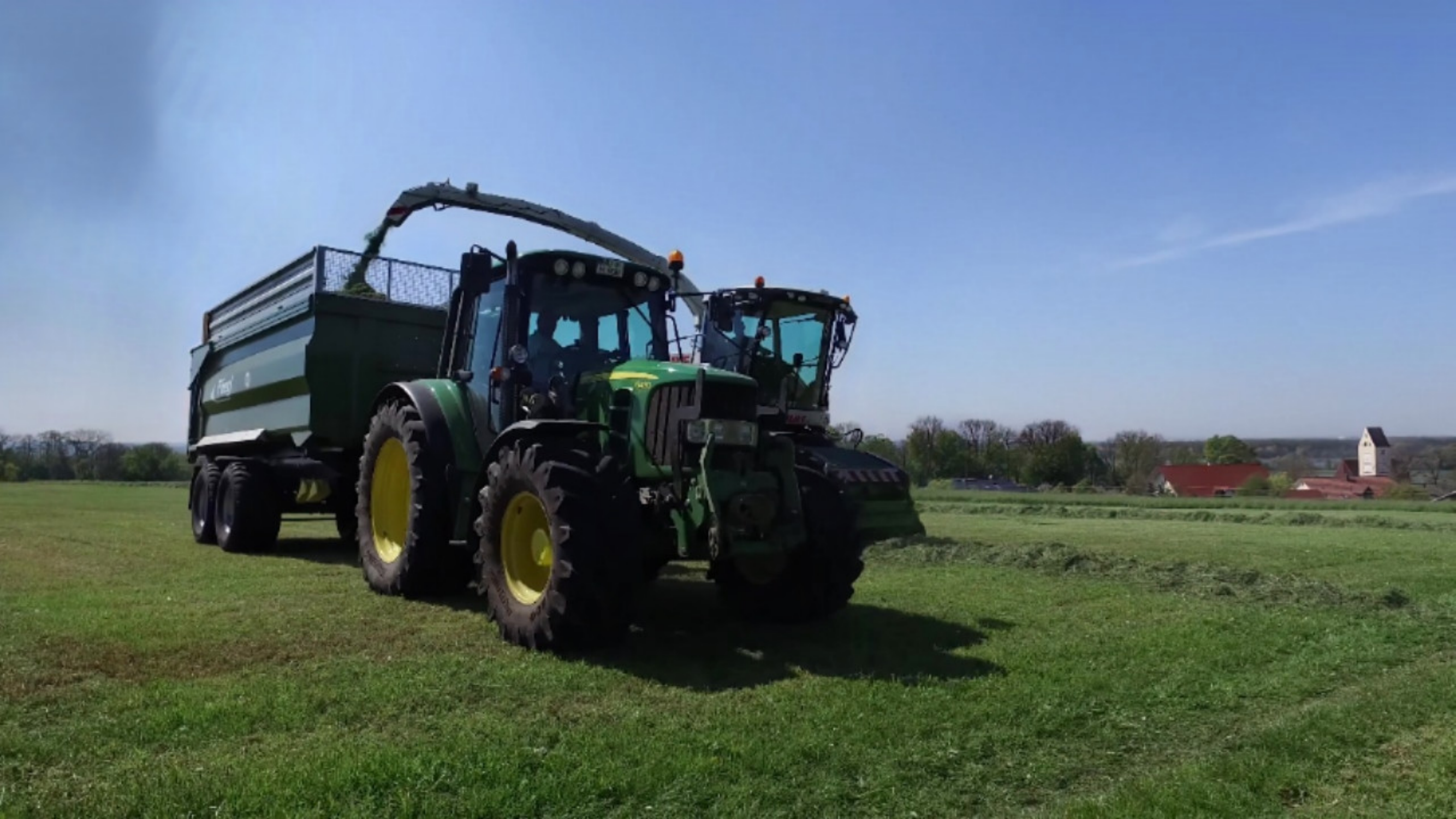} &
      \cimg{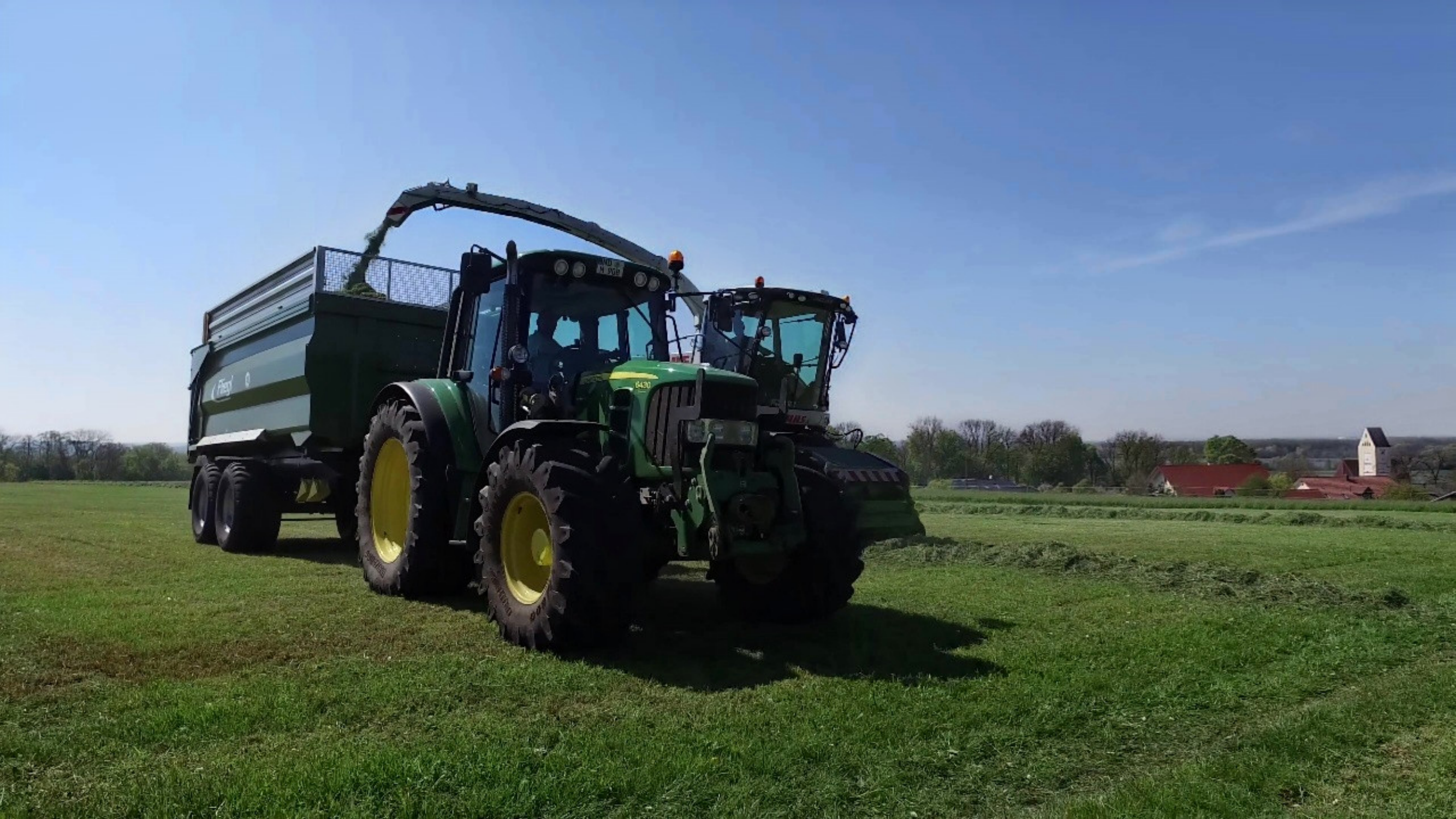} &
      \cimg{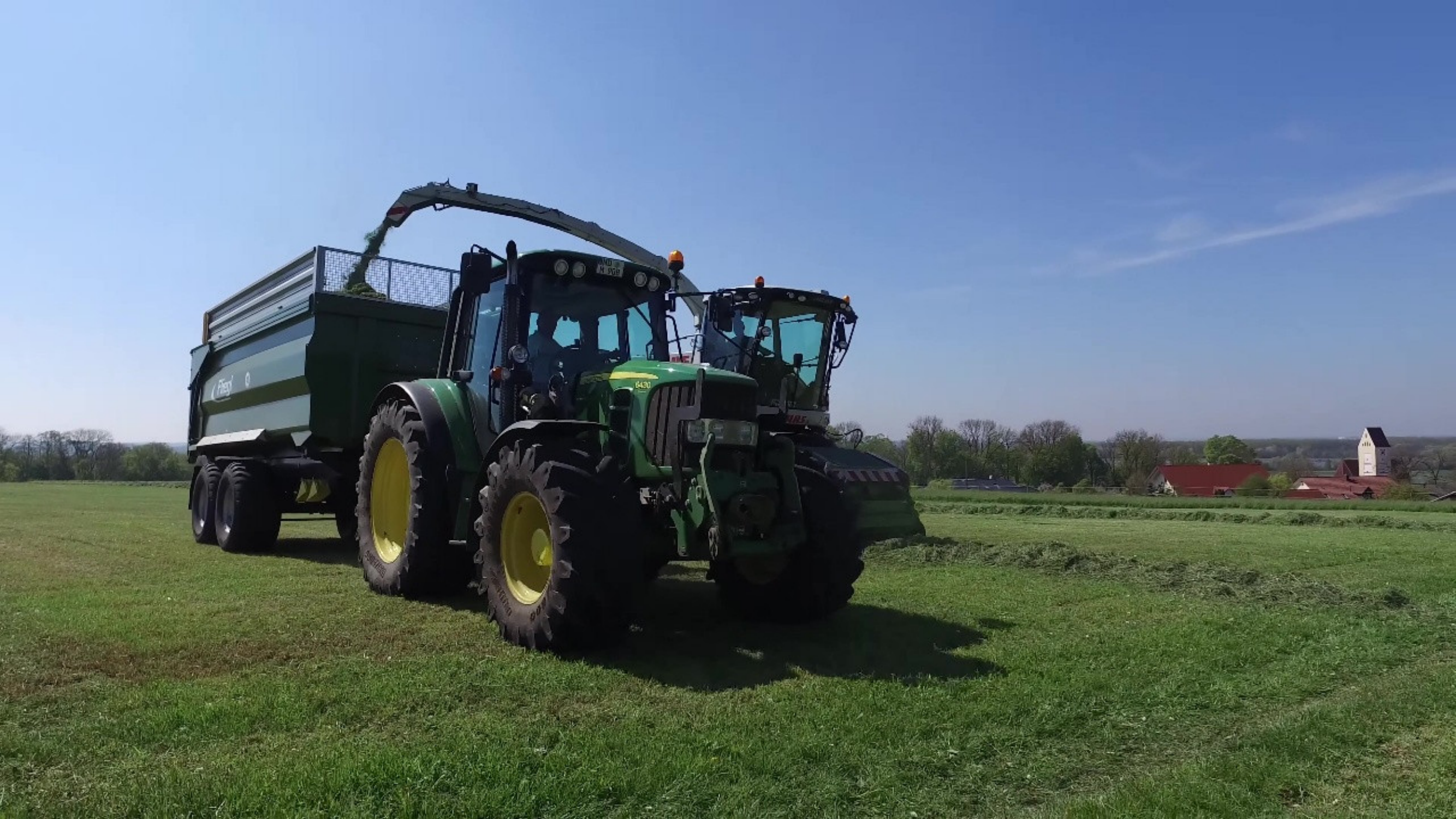}
    \\
  \end{tabular}
  \caption{%
    \textbf{Qualitative comparison.}
    Rows: (a)~UHV-4K, (b)~REVIDE, (c)~HazeWorld. Each pair shows two frames from the same video.%
  }
  \label{fig:visual_comparison}
  \vspace{-3mm}
\end{figure*}

\vspace{-4pt}
\subsection{Ablation Studies}
\label{sec:ablation}
\vspace{-4pt}

\begin{table*}[t]
\centering
\renewcommand{\arraystretch}{0.9}
\small
\caption{Ablation study on UHV-4K and REVIDE. (a)~Architectural ablation: one component removed at a time. (b)~Supervision ablation: one loss term removed or substituted.}
\label{tab:ablation}
\resizebox{\textwidth}{!}{%
\begin{tabular}{l|cccc|cccc}
\toprule
\multirow{2}{*}{Variant} & \multicolumn{4}{c|}{\textbf{UHV-4K} ($3840{\times}2160$)} & \multicolumn{4}{c}{\textbf{REVIDE} ($2708{\times}1800$ indoor)} \\
\cmidrule(lr){2-5} \cmidrule(lr){6-9}
 & PSNR\,$\uparrow$ & SSIM\,$\uparrow$ & LPIPS\,$\downarrow$ & $\Delta$PSNR & PSNR\,$\uparrow$ & SSIM\,$\uparrow$ & LPIPS\,$\downarrow$ & $\Delta$PSNR \\
\midrule
\rowcolor{catband} \multicolumn{9}{c}{\textit{(a) Architectural ablation}} \\
\midrule
\rowcolor{ourgreen} Full model              & \textbf{24.28} & \textbf{0.9437} & \textbf{0.0401} & --                                   & \textbf{21.43} & \textbf{0.8678} & \textbf{0.2397} & -- \\
w/o Chromatic Field                       & 22.02 & 0.9074 & 0.0792 & \textcolor{red!70!black}{$-2.26$}              & 20.43 & 0.8594 & 0.2639 & \textcolor{red!70!black}{$-1.00$} \\
w/o Temporal Field                        & 22.09 & 0.9111 & 0.0786 & \textcolor{red!70!black}{$-2.19$}              & 20.42 & 0.8579 & 0.2617 & \textcolor{red!70!black}{$-1.01$} \\
w/o HF-Refiner                             & 20.87 & 0.8119 & 0.2456 & \textcolor{red!70!black}{$-3.41$}              & 18.84 & 0.8147 & 0.4182 & \textcolor{red!70!black}{$-2.59$} \\
w/o Cayley map                             & 21.69 & 0.9035 & 0.0823 & \textcolor{red!70!black}{$-2.59$}              & 20.81 & 0.8623 & 0.2567 & \textcolor{red!70!black}{$-0.62$} \\
w/o Res-Adaptive Slicing                   & 21.01 & 0.8708 & 0.1264 & \textcolor{red!70!black}{$-3.27$}              & 20.61 & 0.8557 & 0.2695 & \textcolor{red!70!black}{$-0.82$} \\
\midrule
\rowcolor{catband} \multicolumn{9}{c}{\textit{(b) Supervision ablation}} \\
\midrule
\rowcolor{ourgreen} Full loss               & \textbf{24.28} & \textbf{0.9437} & \textbf{0.0401} & --                                   & \textbf{21.43} & \textbf{0.8678} & \textbf{0.2397} & -- \\
w/o $\mathcal{L}_{\mathrm{perc}}$           & 21.66 & 0.9075 & 0.0838 & \textcolor{red!70!black}{$-2.62$}              & 20.85 & 0.8596 & 0.2765 & \textcolor{red!70!black}{$-0.58$} \\
w/o $\mathcal{L}_{\mathrm{lie}}$            & 21.87 & 0.9117 & 0.0748 & \textcolor{red!70!black}{$-2.41$}              & 20.58 & 0.8588 & 0.2600 & \textcolor{red!70!black}{$-0.85$} \\
$\mathcal{L}_{\mathrm{pixel}}$ only         & 22.12 & 0.9119 & 0.0805 & \textcolor{red!70!black}{$-2.16$}              & 20.57 & 0.8574 & 0.2811 & \textcolor{red!70!black}{$-0.86$} \\
VGG-19 $\mathcal{L}_{\mathrm{perc}}$        & 22.73 & 0.9228 & 0.0660 & \textcolor{red!70!black}{$-1.55$}              & 20.78 & 0.8631 & 0.2674 & \textcolor{red!70!black}{$-0.65$} \\
\bottomrule
\end{tabular}%
}
\vspace{-4mm}
\end{table*}
\vspace{-0.0em}
\paragraph{Architectural ablation.}
Table~\ref{tab:ablation}(a) ablates each component on both UHV-4K and REVIDE. Two patterns stand out. The Chromatic and Temporal Fields contribute nearly equally on both datasets, validating the additive factorization in which each branch carries roughly half of the transform. The remaining components---HF-Refiner, resolution-adaptive slicing, and Cayley map---all show amplified drops on UHV-4K relative to REVIDE, confirming that the resolution-decoupled design becomes increasingly critical as the output resolution grows.

\vspace{-0.5em}
\paragraph{Supervision ablation.}
Table~\ref{tab:ablation}(b) reveals that $\mathcal{L}_{\mathrm{perc}}$ and $\mathcal{L}_{\mathrm{lie}}$ are complementary: removing either alone costs more than dropping both together, indicating each constrains a distinct failure mode. Replacing DINOv2 with VGG-19 at matched weight also degrades quality, confirming the benefit of self-supervised perceptual features.

\vspace{-4pt}
\subsection{Temporal Consistency}
\label{sec:analysis}
\vspace{-4pt}

On UHV-4K, LiBrA-Net keeps tOF below every image dehazer while attaining the highest PSNR (Table~\ref{tab:temporal}; Appendix~\ref{app:temporal}). The few video methods with lower tOF are substantially worse in PSNR, indicating steadiness from over-smoothing rather than faithful recovery. Ablations show that the Temporal Field stabilizes per-frame quality, while the Lie temporal term regularizes grid trajectories; together they improve consistency without an explicit flow-warped loss.

\vspace{-4pt}
\subsection{External Validity}
\label{sec:external_validity}
\vspace{-4pt}

\paragraph{Downstream perception.}
We feed each method's 4K output into frozen detection and segmentation models. LiBrA-Net ranks first on 8 of 10 no-reference metrics, with detection confidence approaching the clean-frame oracle and segmentation boundaries closest to the ground truth (Appendix~\ref{app:external}, Table~\ref{tab:downstream_nr}).

\vspace{-0.5em}
\paragraph{Real-world generalization.}
On eight hand-collected 4K hazy videos without ground truth, LiBrA-Net is the only video method that improves all three NR-IQA scores over raw input; competing methods degrade when atmospheric statistics deviate from training. The bilateral grid's low-frequency transform---governed by smooth atmospheric structure rather than scene-specific texture---facilitates transfer from synthetic to real outdoor haze; qualitative results are in Appendix~\ref{app:external}.

\vspace{-6pt}
\section{Why Does the Bilateral Grid Work at 4K?}
\label{sec:analysis_chapter}
\vspace{-4pt}

\begin{wrapfigure}{r}{0.56\linewidth}
  \vspace{-14pt}
  \centering
  \begin{subfigure}[b]{0.5\linewidth}
    \centering
    \includegraphics[width=\linewidth]{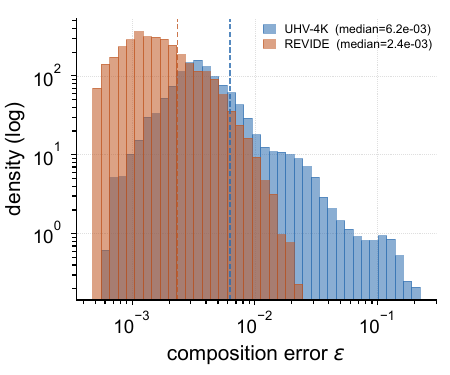}
    \caption{}
    \label{fig:comp_hist}
  \end{subfigure}\hfill
  \begin{subfigure}[b]{0.5\linewidth}
    \centering
    \includegraphics[width=\linewidth]{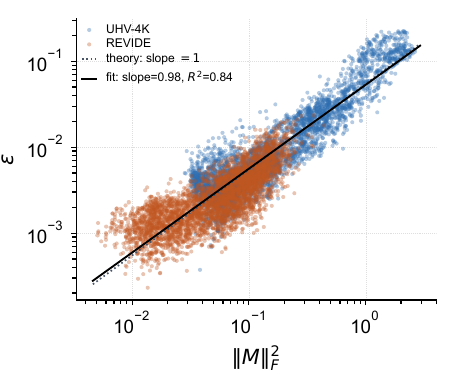}
    \caption{}
    \label{fig:comp_fit}
  \end{subfigure}
  \caption{\textbf{Lie-algebraic composition at the trained operating point.}
    (a)~Distribution of relative composition error $\epsilon$ on UHV-4K and REVIDE.
    (b)~$\epsilon$ vs.\ $\|M\|_F^2$ on log--log axes; dashed line shows slope ${\approx}1$.}
  \label{fig:grid_geometry}
  \vspace{-10pt}
\end{wrapfigure}

\label{sec:analysis_grid_geometry}%
We probe the trained grid to understand \emph{what} it represents beyond which components matter. Three findings emerge. First, the spatial--color grid exploits the full chromatic axis across all eight color bins rather than collapsing to a depth-only template; t-SNE visualization confirms that grid cells cluster by color bin, indicating a structured affine descriptor whose Chromatic Field alone accounts for ${\sim}$2.3\,dB (Appendix~Figure~\ref{fig:grid_anatomy_appendix}). Second, the Lie-algebraic composition error remains near $10^{-3}$ with a log--log slope of $0.98$ against $\|M\|_F^2$ (Figure~\ref{fig:grid_geometry}), validating Proposition~\ref{prop:lie_composition} at the trained operating point. Third, the grid is near-invariant under resolution change: Pearson correlation between grids predicted from downsampled and native-resolution inputs remains ${\geq}0.995$ across a ${\geq}10{\times}$ range (Appendix~Table~\ref{tab:resolution_invariance}), explaining why a fixed $256{\times}256$ encoder serves 360p to 4K at constant cost.

\vspace{-6pt}
\section{Conclusion, Limitations and Future Works}
\vspace{-4pt}

We have presented LiBrA-Net, a bilateral-grid framework that decouples spatiotemporal modeling from output resolution for real-time 4K video dehazing. The method predicts affine color transforms on compact sub-grids at fixed resolution and applies them at native 4K through trilinear slicing, achieving 25\,FPS with 6.12\,M parameters---the only video dehazer exceeding 23\,dB PSNR on UHV-4K at real-time 4K throughput. We also release UHV-4K, the first paired 4K video dehazing benchmark. The resolution-decoupled design principle is not specific to dehazing and may benefit other UHD video tasks governed by spatially smooth degradation fields. Our current temporal window covers roughly one second; modeling minute-scale coherence at 4K remains open. The scattering model assumes a single $\beta$ and $A_\infty$ per video, so mixed-weather conditions fall outside UHV-4K's scope, and paired real 4K hazy/clean data does not yet exist.


\bibliographystyle{plainnat}
\bibliography{main}


\appendix

\section{Proof of Proposition~\ref{prop:lie_composition}}
\label{app:proof_lie}

We restate the result for convenience and then give a self-contained proof.

\medskip
\noindent\textbf{Proposition~\ref{prop:lie_composition}} (Lie-algebraic fusion approximates group composition).
\textit{For any $M_\chi, M_\tau \in \mathfrak{gl}(3)$ with $\rho := \max(\|M_\chi\|_F, \|M_\tau\|_F)$,}
\begin{equation*}
  \operatorname{Cay}(M_\chi + M_\tau) \;=\;
  \operatorname{Cay}(M_\chi)\,\operatorname{Cay}(M_\tau)
  \;+\; O(\rho^{2}).
\end{equation*}

\begin{proof}
Recall the Cayley map $\operatorname{Cay}(M) = (I - M/2)^{-1}(I + M/2)$.

\paragraph{Step 1: Neumann expansion of the Cayley map.}
Whenever $\|M\|/2 < 1$ (i.e.\ $\|M\| < 2$, satisfied throughout training by the identity prior), the resolvent $(I - M/2)^{-1}$ admits the absolutely convergent Neumann series
\begin{equation}
  (I - M/2)^{-1}
  \;=\; \sum_{k=0}^{\infty} (M/2)^{k}
  \;=\; I + \tfrac{1}{2}M + \tfrac{1}{4}M^{2} + O(\|M\|^{3}).
  \label{eq:neumann}
\end{equation}
Multiplying by $(I + M/2)$ and collecting powers of $M$ gives
\begin{align}
  \operatorname{Cay}(M)
  &= \bigl(I + \tfrac{1}{2}M + \tfrac{1}{4}M^{2} + O(\|M\|^{3})\bigr)
     \bigl(I + \tfrac{1}{2}M\bigr) \notag\\
  &= I + \tfrac{1}{2}M + \tfrac{1}{2}M + \tfrac{1}{4}M^{2} + \tfrac{1}{4}M^{2}
     + O(\|M\|^{3}) \notag\\
  &= I + M + \tfrac{1}{2}M^{2} + O(\|M\|^{3}).
  \label{eq:cay_expand}
\end{align}

\paragraph{Step 2: Expand the left-hand side.}
Substituting $M_\chi + M_\tau$ into Eq.~\eqref{eq:cay_expand}:
\begin{align}
  \operatorname{Cay}(M_\chi {+} M_\tau)
  &= I + (M_\chi + M_\tau) + \tfrac{1}{2}(M_\chi + M_\tau)^{2} + O(\rho^{3}) \notag\\
  &= I + M_\chi + M_\tau
     + \tfrac{1}{2}\bigl(M_\chi^{2} + M_\chi M_\tau + M_\tau M_\chi + M_\tau^{2}\bigr)
     + O(\rho^{3}).
  \label{eq:lhs}
\end{align}

\paragraph{Step 3: Expand the right-hand side.}
Applying Eq.~\eqref{eq:cay_expand} to each factor and multiplying:
\begin{align}
  &\operatorname{Cay}(M_\chi)\,\operatorname{Cay}(M_\tau) \notag\\
  &\quad= \bigl(I + M_\chi + \tfrac{1}{2}M_\chi^{2} + O(\rho^{3})\bigr)
          \bigl(I + M_\tau + \tfrac{1}{2}M_\tau^{2} + O(\rho^{3})\bigr) \notag\\
  &\quad= I + M_\chi + M_\tau
     + \tfrac{1}{2}M_\chi^{2} + M_\chi M_\tau + \tfrac{1}{2}M_\tau^{2}
     + O(\rho^{3}).
  \label{eq:rhs}
\end{align}
In the last line, the $O(\rho^3)$ remainder absorbs the cross-terms $M_\chi \cdot \tfrac{1}{2}M_\tau^2$, $\tfrac{1}{2}M_\chi^2 \cdot M_\tau$, and $\tfrac{1}{2}M_\chi^2 \cdot \tfrac{1}{2}M_\tau^2$, all of which are cubic or higher in $\rho$.

\paragraph{Step 4: Compute the residual.}
Subtracting Eq.~\eqref{eq:rhs} from Eq.~\eqref{eq:lhs}:
\begin{align}
  &\operatorname{Cay}(M_\chi {+} M_\tau)
   - \operatorname{Cay}(M_\chi)\,\operatorname{Cay}(M_\tau) \notag\\
  &\quad= \tfrac{1}{2}\bigl(M_\chi M_\tau + M_\tau M_\chi\bigr)
          - M_\chi M_\tau + O(\rho^{3}) \notag\\
  &\quad= \tfrac{1}{2}\bigl(M_\tau M_\chi - M_\chi M_\tau\bigr) + O(\rho^{3}) \notag\\
  &\quad= -\,\tfrac{1}{2}\,[M_\chi,\, M_\tau] + O(\rho^{3}),
  \label{eq:residual}
\end{align}
where $[M_\chi, M_\tau] := M_\chi M_\tau - M_\tau M_\chi$ is the matrix commutator.

\paragraph{Step 5: Frobenius-norm bound.}
By the triangle inequality and submultiplicativity of $\|\cdot\|_F$,
\begin{equation}
  \|[M_\chi, M_\tau]\|_F
  \;\le\; \|M_\chi M_\tau\|_F + \|M_\tau M_\chi\|_F
  \;\le\; 2\,\|M_\chi\|_F\,\|M_\tau\|_F
  \;\le\; 2\rho^{2}.
  \label{eq:commutator_bound}
\end{equation}
Hence the leading residual satisfies $\bigl\|\tfrac{1}{2}[M_\chi, M_\tau]\bigr\|_F \le \rho^{2}$, which is $O(\rho^2)$ as claimed. When $M_\chi$ and $M_\tau$ commute (e.g.\ both diagonal), the commutator vanishes and the approximation is exact to second order.
\end{proof}

\noindent\textit{Remark.}
The $O(\rho^2)$ regime is maintained in practice by the identity prior $\lambda_{\mathrm{id}}\|M\|^2$ in $\mathcal{L}_{\mathrm{lie}}$, which keeps $\|M\|_F$ small throughout training. The log--log analysis in Figure~\ref{fig:grid_geometry}(b) confirms a fitted slope of ${\approx}\,0.98$, matching the predicted quadratic scaling of the composition error with $\|M\|_F^2$.

\section{UHV-4K Synthesis Pipeline}
\label{app:synthesis_pipeline}

UHV-4K is built with a two-pass pipeline from clean 4K source videos.

\paragraph{Pass~1 (analysis).}
Each video is sub-sampled at 5\,fps and processed at a reduced proxy resolution.
Depth Anything V2 (ViT-L)~\citep{yang2024depthanythingv2} produces per-frame monocular depth maps, and RAFT~\citep{teed2020raft} produces forward optical flow between consecutive frames.
Because the depth estimator outputs disparity (high values at near surfaces), we apply a polarity flip ($d = 1 - \hat{d}_{\mathrm{norm}}$) so that larger depth values correspond to farther objects---the convention the ASM requires for distant-heavy haze.

\paragraph{Pass~2 (synthesis).}
Depth maps are per-clip percentile-normalized, temporally smoothed via flow-guided blending across adjacent frames, and upsampled to native $3840{\times}2160$ with an edge-aware guided filter~\citep{he2013guided} that preserves object boundaries.
The ASM of Eq.~\eqref{eq:asm} then generates the hazy frames, with scattering coefficient $\beta$ and atmospheric light $A_\infty$ held fixed per video.
All five modalities (hazy, ground truth, depth, transmission, optical flow) are exported per frame.

\paragraph{Quality assurance.}
Five automated checks verify every video before inclusion: (i)~modality alignment across all five channels, (ii)~depth temporal stability between consecutive frames, (iii)~inter-frame haze consistency via adjacent-frame intensity difference, (iv)~atmospheric-light round-trip verification against the synthesized images, and (v)~pixel-range validation.
Videos failing any check are flagged for exclusion or regeneration.

\section{Dataset Comparison}
\label{app:dataset_comparison}

Table~\ref{tab:dataset_comparison} compares representative dehazing benchmarks across resolution, modality, and auxiliary annotations. UHV-4K is the only video dataset that pairs native 4K resolution with aligned depth, transmission, and optical-flow annotations.

\begin{table}[h]
  \centering
  \small
  \setlength{\tabcolsep}{3pt}
  \renewcommand{\arraystretch}{1}
  \caption{Comparison of representative dehazing benchmarks across resolution, modality, and auxiliary annotations.}
  \label{tab:dataset_comparison}
  \resizebox{\textwidth}{!}{%
  \begin{tabular}{@{}lccccccc@{}}
    \toprule
    Dataset & Type & Real/Syn & Annot. & Resolution & \#Vid. & \#Frames & Aux. Mod. \\
    \midrule

    I-HAZE~\citep{Ancuti2018IHaze}
      & image & real & paired & 2K--4K                                    & --    & 35          & $-$ \\
    O-HAZE~\citep{Ancuti2018OHaze}
      & image & real & paired & ${\leq}5436{\times}3612$                  & --    & 45          & $-$ \\

    4KID~\citep{4KDehazing}
      & image & syn. & paired & $3840{\times}2160$                        & --    & 10{,}000    & $-$ \\
    8KDehaze~\citep{DeHazeXL}
      & image & syn. & paired & $8192{\times}8192$                        & --    & 10{,}000    & $-$ \\

    NYU-Haze~\citep{Ren2019DeepVD,Li2017EndtoEndUV}
      & video & syn. & paired   & $640{\times}480$                        & --    & --          & depth \\
    REVIDE~\citep{CG-IDN_REVIDE_dataset}
      & video & real & paired   & $2708{\times}1800$                      & 48    & 1,982       & $-$ \\
    HazeWorld~\citep{MAP-Net_HazeWorld_dataset}
      & video & syn. & paired   & ${\leq}720$p                            & 5,084 & $\sim$326k  & trans. \\
    GoProHazy~\citep{DVD_GoProHazy_and_DrivingHazy_dataset}
      & video & real & unaligned & $1920{\times}1080$                     & 27    & 4,256       & $-$ \\
    DrivingHazy~\citep{DVD_GoProHazy_and_DrivingHazy_dataset}
      & video & real & no-ref    & $1920{\times}1080$                     & 20    & 1,807       & $-$ \\
    \midrule
    \rowcolor{ourgreen} \textbf{UHV-4K (Ours)}
      & video & syn. & paired & $3840{\times}2160$                        & 100   & 2,500       & depth, trans., flow \\
    \bottomrule
  \end{tabular}%
  }
\end{table}

\section{Perceptual Realism and Diversity Study}
\label{app:user_study}

Because UHV-4K is synthetic, we run a perceptual study to validate its realism, fidelity, and diversity against existing real-capture and synthetic benchmarks. Table~\ref{tab:user_study} reports mean scores with 95\,\% confidence intervals on a 5-point Likert scale ($N{=}20$ raters, $K{=}10$ frames per dataset). UHV-4K leads in visual fidelity and scene diversity while trailing real-capture datasets by less than one Likert step on haze realism---expected, since ASM synthesis cannot reproduce the photographic character of real fog. The 4K corpus therefore complements real-capture data rather than replacing it.

\begin{table}[h]
  \centering
  \small
  \setlength{\tabcolsep}{6.5pt}
  \renewcommand{\arraystretch}{1}
  \caption{\textbf{Perceptual realism and diversity study.} Mean $\pm$ 95\,\% CI on a 5-point Likert scale ($N{=}20$ raters, $K{=}10$ frames per dataset). R = haze realism, F = visual fidelity, D = scene diversity.}
  \label{tab:user_study}
  \begin{tabular}{@{}lcccc@{}}
    \toprule
    Dataset & Realism (R) & Fidelity (F) & Diversity (D) & Overall \\
    \midrule
    REVIDE~\citep{CG-IDN_REVIDE_dataset} & \textbf{4.28\,$\pm$\,0.08} & \underline{3.79\,$\pm$\,0.09} & 2.55\,$\pm$\,0.24 & 3.54 \\
    HazeWorld~\citep{MAP-Net_HazeWorld_dataset} & 3.14\,$\pm$\,0.08 & 3.28\,$\pm$\,0.09 & \underline{4.10\,$\pm$\,0.21} & 3.51 \\
    GoProHazy~\citep{DVD_GoProHazy_and_DrivingHazy_dataset} & \underline{4.22\,$\pm$\,0.08} & 3.51\,$\pm$\,0.09 & 3.00\,$\pm$\,0.26 & \underline{3.58} \\
    \rowcolor{ourgreen} \textbf{UHV-4K (Ours)} & 3.81\,$\pm$\,0.09 & \textbf{4.58\,$\pm$\,0.07} & \textbf{4.60\,$\pm$\,0.24} & \textbf{4.33} \\
    \bottomrule
  \end{tabular}
\end{table}

\section{Training Configuration}
\label{app:hyperparams}

Table~\ref{tab:hyperparams} lists all training hyperparameters. Loss weights are set once on UHV-4K and reused on REVIDE and HazeWorld without per-dataset tuning.

\begin{table}[h]
  \centering
  \caption{Training hyperparameters for all experiments.}
  \label{tab:hyperparams}
  \small
  \setlength{\tabcolsep}{8pt}
  \begin{tabular}{@{}ll@{}}
    \toprule
    Parameter & Value \\
    \midrule
    Optimizer & AdamW ($\text{lr}{=}1.5{\times}10^{-4}$, $\text{wd}{=}10^{-4}$) \\
    LR schedule & Cosine annealing, 5-epoch linear warmup \\
    Batch size & 16 ($2 \times 8$ GPUs) \\
    Crop size & $512 \times 512$ \\
    Input frames & $T{=}5$ consecutive \\
    Precision & FP16 mixed precision \\
    Epochs & 200 (REVIDE, UHV-4K); 100 (HazeWorld) \\
    Gradient clipping & max-norm 1.0 \\
    \midrule
    $\lambda_{\mathrm{pixel}}$ & 1.0 (Charbonnier, $\epsilon{=}10^{-3}$) \\
    $\lambda_{\mathrm{perc}}$ & 0.04 (DINOv2 ViT-S/14) \\
    $\lambda_{\mathrm{lie}}$ & 0.2 \\
    \quad $\lambda_{\mathrm{id}}$ (identity prior) & 0.01 \\
    \quad $\lambda_{\mathrm{sp}}$ (spatial smoothness) & 0.05 \\
    \quad $\lambda_{\mathrm{tm}}$ (temporal smoothness) & 0.10 \\
    \quad $\lambda_{\mathrm{g}}$ (guide consistency) & 0.02 \\
    \bottomrule
  \end{tabular}
\end{table}

\section{Temporal Consistency: Full Results}
\label{app:temporal}

Table~\ref{tab:temporal} provides the complete temporal consistency evaluation on UHV-4K, extending the summary in \S\ref{sec:analysis}. We report tOF (RAFT-warped inter-frame $\ell_1$ error) and $\sigma$-PSNR (within-video standard deviation of per-frame PSNR) alongside PSNR to jointly assess steadiness and fidelity.

\begin{table}[h]
  \centering
  \caption{\textbf{Temporal consistency on UHV-4K.} tOF = RAFT-warped inter-frame error; $\sigma$-PSNR = within-video standard deviation of per-frame PSNR. Lower is steadier; ablations excluded from ranking.}
  \label{tab:temporal}
  \small
  \setlength{\tabcolsep}{6pt}
  \renewcommand{\arraystretch}{1}
  \begin{tabular}{lccc}
    \toprule
    \multirow{2}{*}{Method} & \multicolumn{2}{c}{Temporal consistency} & \multicolumn{1}{c}{Restoration quality} \\
    \cmidrule(lr){2-3} \cmidrule(l){4-4}
    & tOF\,$\downarrow$ & $\sigma$-PSNR\,$\downarrow$ & PSNR\,$\uparrow$ \\
    \midrule
    \tablegroup{4}{(a) Image dehazing methods}
    \midrule
    MSBDN-DFF                & 0.005139 & 0.70 & 22.51 \\
    DehazeFormer             & 0.005299 & \underline{0.51} & \underline{23.74} \\
    DehazeXL                 & 0.005403 & 0.82 & 22.63 \\
    UHDDIP                   & 0.005630 & 0.58 & 21.90 \\
    UHDformer                & 0.006129 & 0.57 & 23.16 \\
    \midrule
    \tablegroup{4}{(b) Video dehazing methods}
    \midrule
    DCL                      & \textbf{0.004154} & 0.95 & 20.81 \\
    ViWS-Net                 & \underline{0.004161} & 0.55 & 22.06 \\
    MAP-Net                  & 0.004405 & 0.54 & 18.45 \\
    CG-IDN                   & 0.004547 & \textbf{0.44} & 20.13 \\
    DVD                      & 0.004551 & \underline{0.51} & 21.02 \\
    \midrule
    \rowcolor{ourgreen} \textbf{Ours}  & 0.004769 & 0.66 & \textbf{24.28}\,\tabgain{+0.54} \\
    \midrule
    \tablegroup{4}{(c) Targeted ablations of LiBrA-Net}
    \midrule
    w/o Temporal Field                   & 0.004800\,\muted{\scriptsize(+0.6\%)} & 0.99\,\textcolor{red!70!black}{\scriptsize(+50\%)} & 22.09\,\textcolor{red!70!black}{\scriptsize(-2.19)} \\
    w/o $\mathcal{L}_{\mathrm{lie}}$ temporal & 0.004396\,\muted{\scriptsize(-7.8\%)} & 0.95\,\textcolor{red!70!black}{\scriptsize(+44\%)} & 22.31\,\textcolor{red!70!black}{\scriptsize(-1.97)} \\
    \bottomrule
  \end{tabular}
\end{table}

\paragraph{Reading the table.}
Because tOF rewards any form of frame-to-frame smoothness---including spatial blurring that destroys detail---it must be read jointly with PSNR. The two lowest-tOF video methods (DCL at 0.004154, ViWS-Net at 0.004161) trail LiBrA-Net in PSNR by 3.47\,dB and 2.22\,dB respectively, suggesting that their low tOF comes at the cost of fine-detail loss rather than faithful coherence. LiBrA-Net achieves the highest PSNR among all sixteen compared methods while keeping tOF below every image dehazer---the combination of high per-frame fidelity with stable inter-frame behavior.

\paragraph{Ablation analysis.}
The two targeted ablations in part~(c) isolate complementary stabilization mechanisms:
\begin{itemize}
  \item \textbf{w/o Temporal Field}: tOF barely changes (+0.6\%) but $\sigma$-PSNR inflates by 50\% and PSNR drops 2.19\,dB. The temporal branch drives per-frame quality stability through self-attention across the $T{=}5$ frame window, not pixel-level warping.
  \item \textbf{w/o $\mathcal{L}_{\mathrm{lie}}$ temporal}: tOF actually \emph{decreases} ($-7.8\%$) yet $\sigma$-PSNR still inflates by 44\%. Without the temporal smoothness penalty on the grid trajectory, the network over-fits each frame independently; the lower tOF reflects accidental inter-frame similarity rather than structured temporal modeling.
\end{itemize}
Together, the Temporal Field stabilizes per-frame quality and the Lie temporal term smooths grid coefficients across frames; their joint action decouples consistency from over-smoothing.

\paragraph{Visual confirmation.}
Figure~\ref{fig:temporal_heatmap} shows inter-frame difference heatmaps for a representative UHV-4K test scene. Single-image methods (UHDformer, DehazeFormer) produce frame-to-frame halos around the moving pedestrian, visible as bright patches in the difference map. LiBrA-Net's difference map stays uniformly dark, consistent with the grid's low-frequency, smoothly varying affine field that avoids introducing high-frequency temporal flicker.

\begin{figure}[h]
  \centering
  \setlength{\zcellW}{0.19\linewidth}%
  \setlength{\tabcolsep}{1pt}%
  \renewcommand{\arraystretch}{0}%
  \begin{minipage}[c]{0.88\linewidth}
    \centering
    \begin{tabular}{@{}c@{\hspace{2pt}}cccc@{}}
      & \scriptsize\textbf{Frame $t{-}1$} & \scriptsize\textbf{Frame $t$}
      & \scriptsize\textbf{Frame $t{+}1$}
      & \scriptsize\textbf{$|\Delta(t,t{-}1)|$} \\[1pt]
      \rotatebox{90}{\scriptsize\textbf{Hazy}} &
        \includegraphics[width=\zcellW]{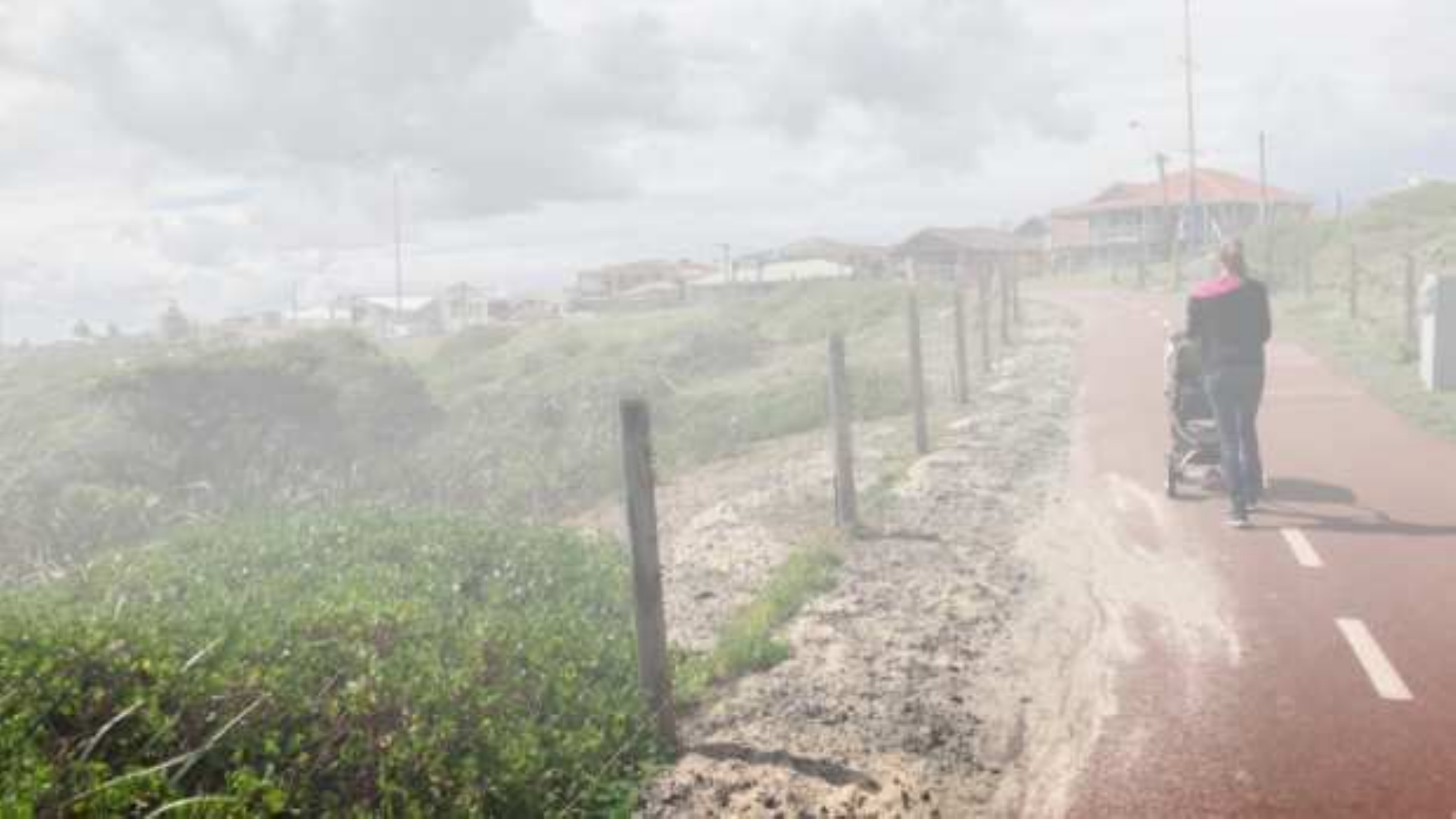} &
        \includegraphics[width=\zcellW]{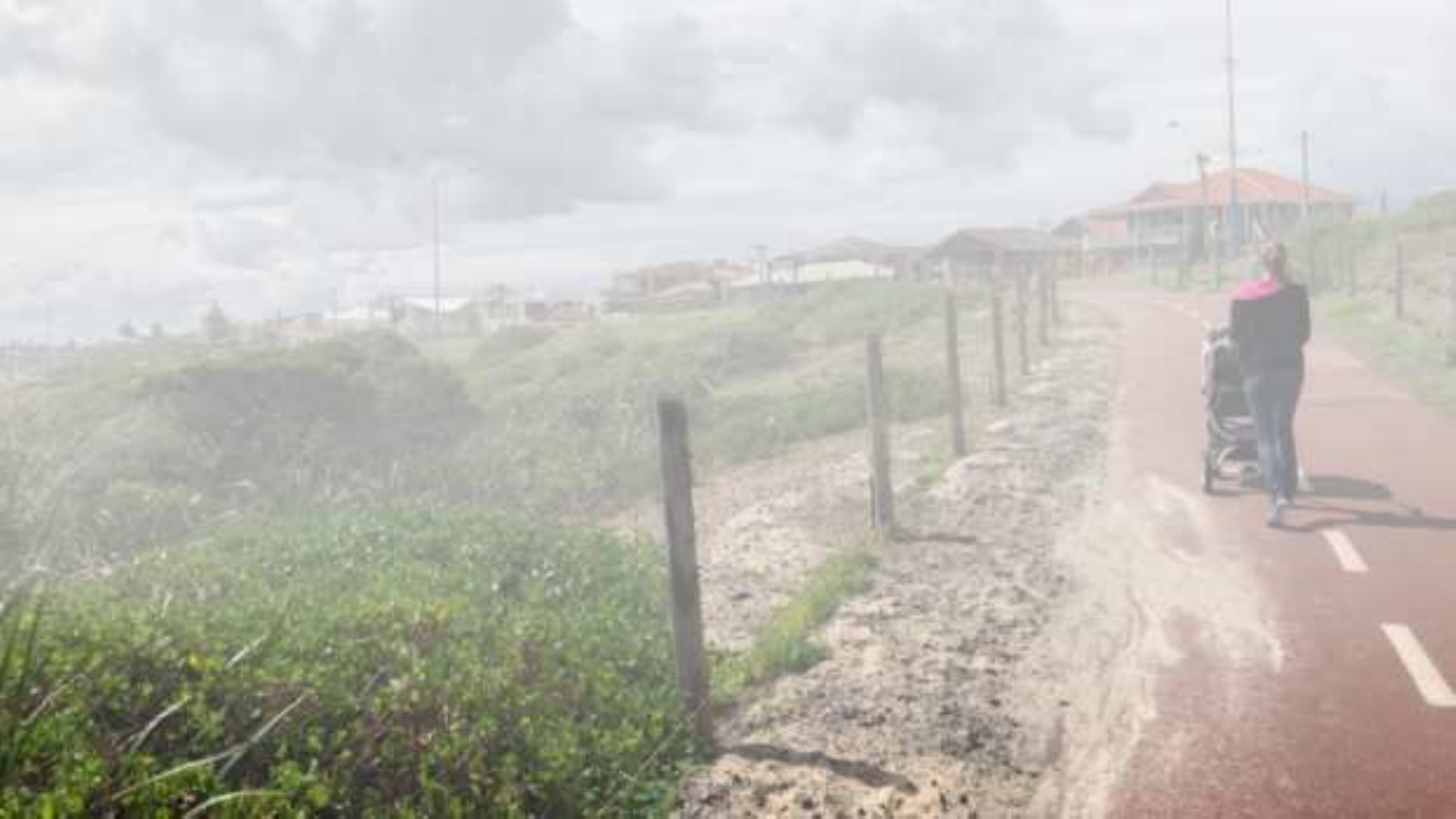} &
        \includegraphics[width=\zcellW]{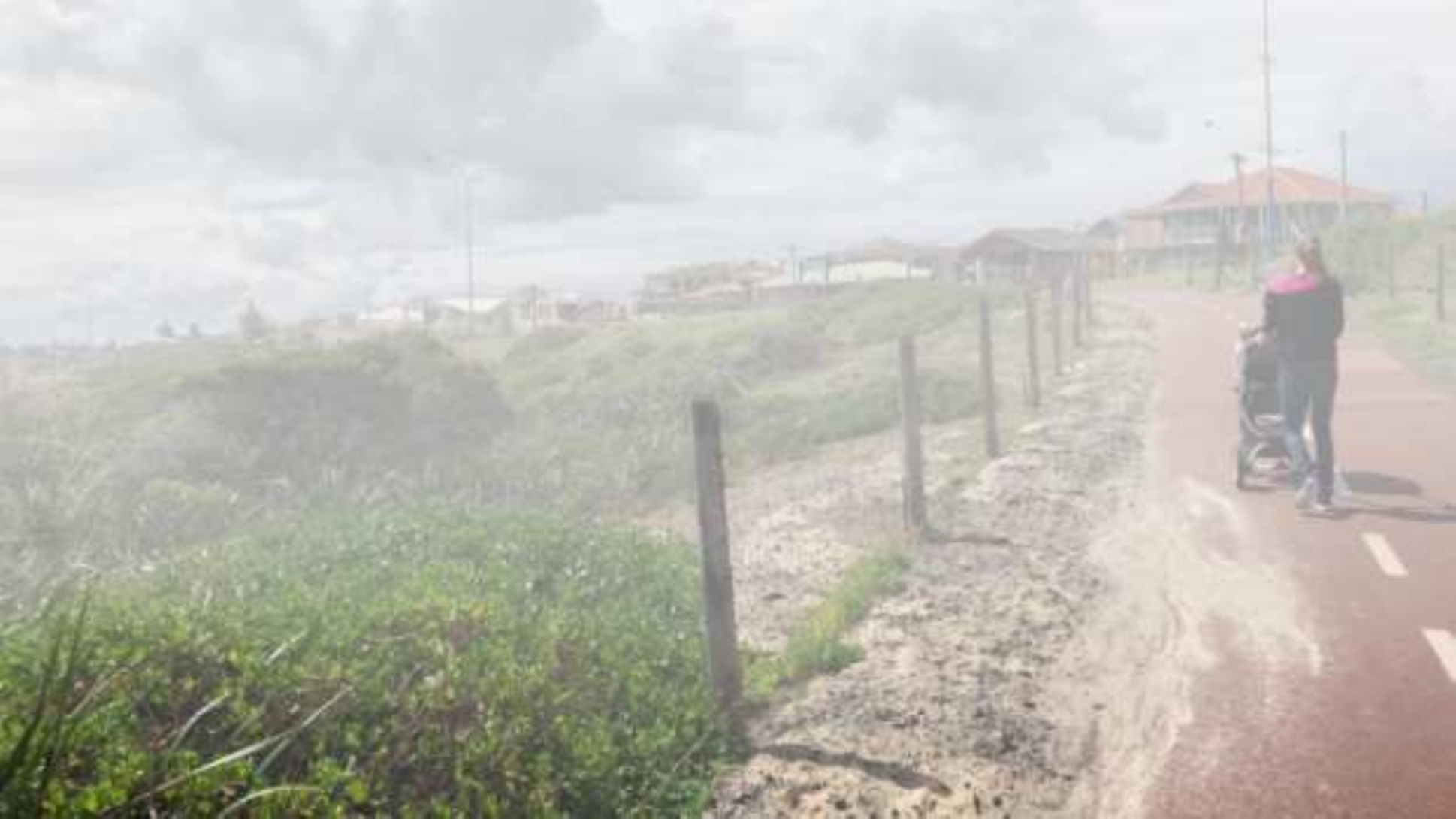} &
        \diffcell{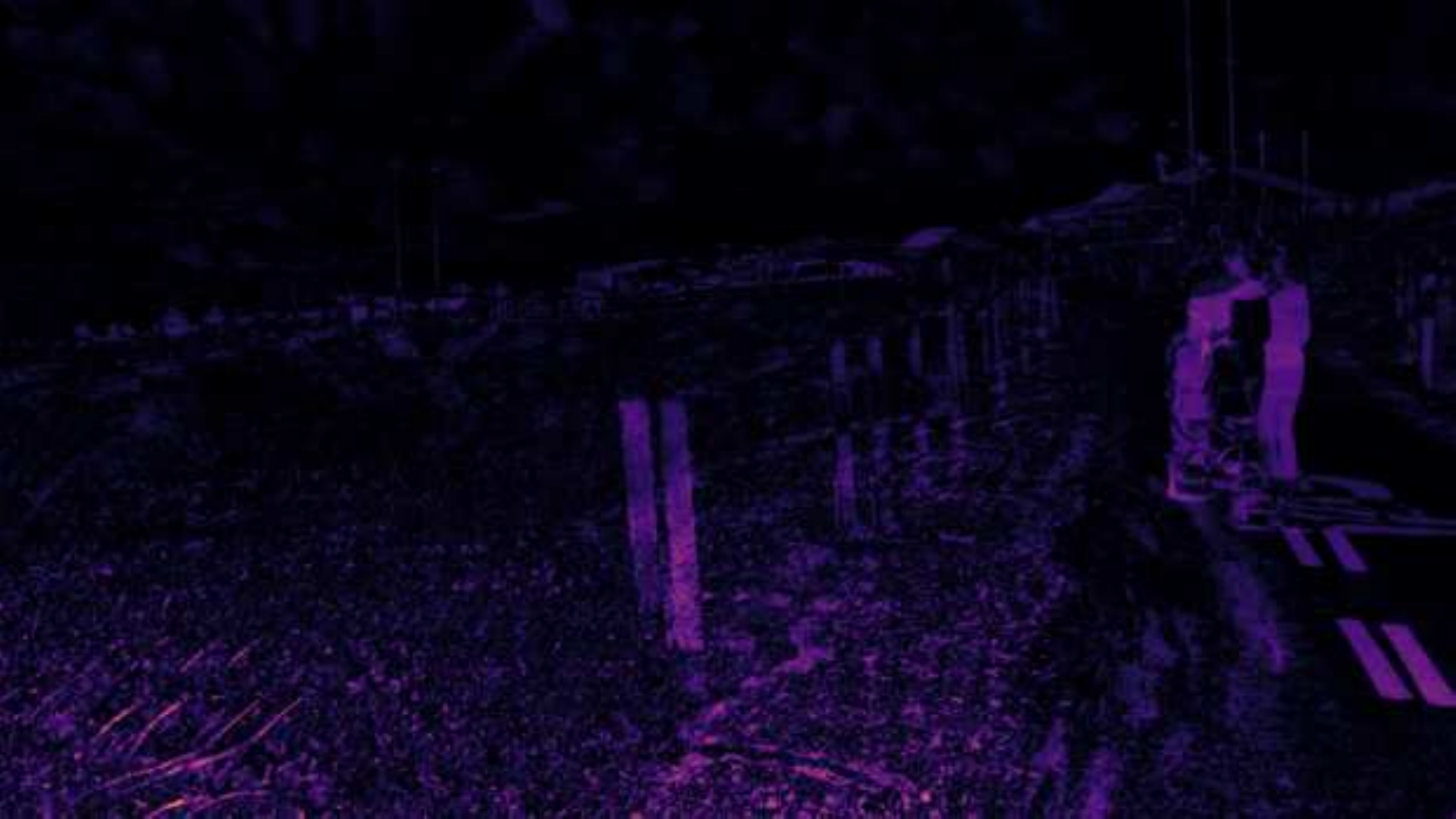}
      \\[0.5pt]
      \rotatebox{90}{\scriptsize\textbf{UHDformer}} &
        \includegraphics[width=\zcellW]{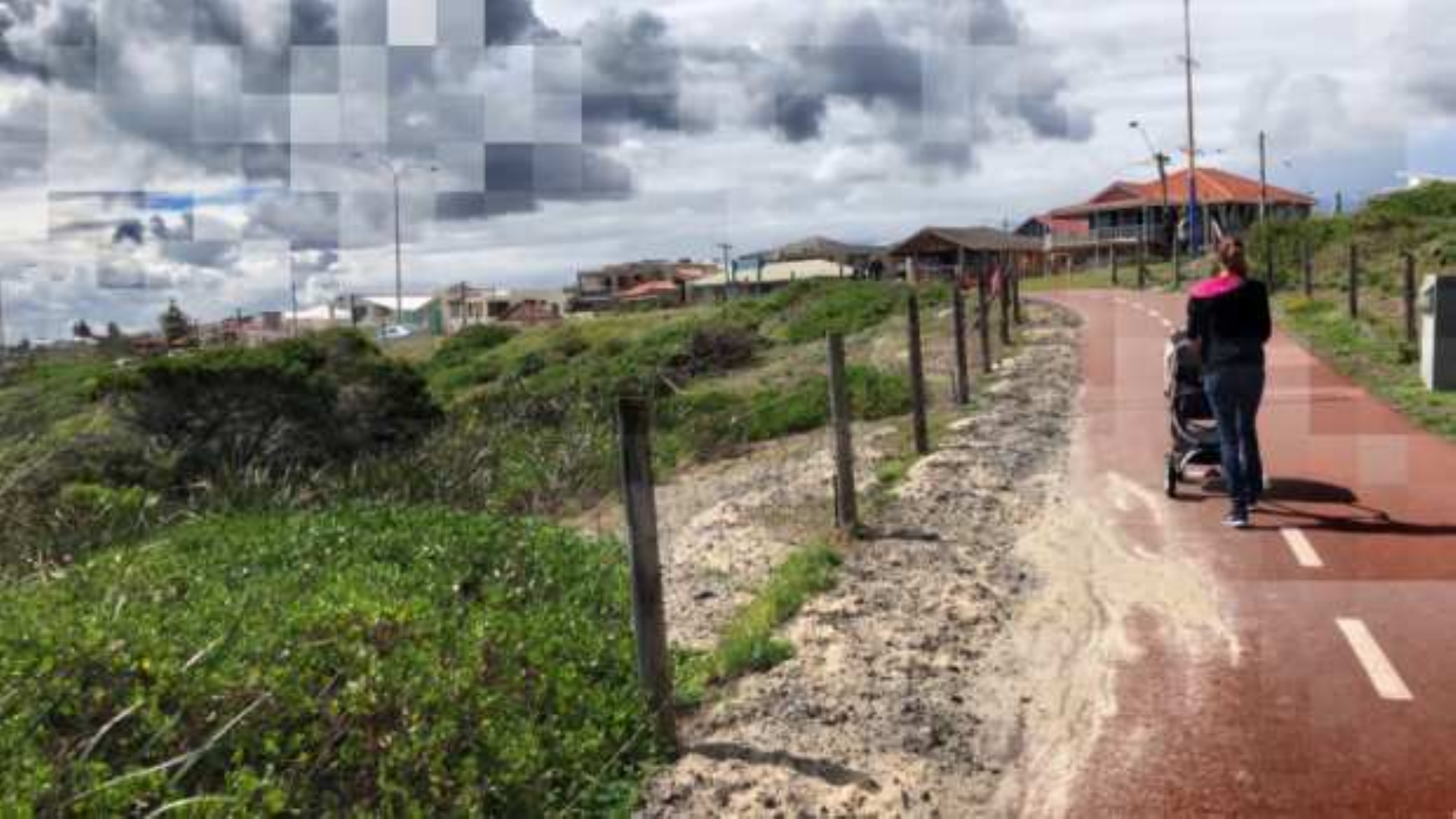} &
        \includegraphics[width=\zcellW]{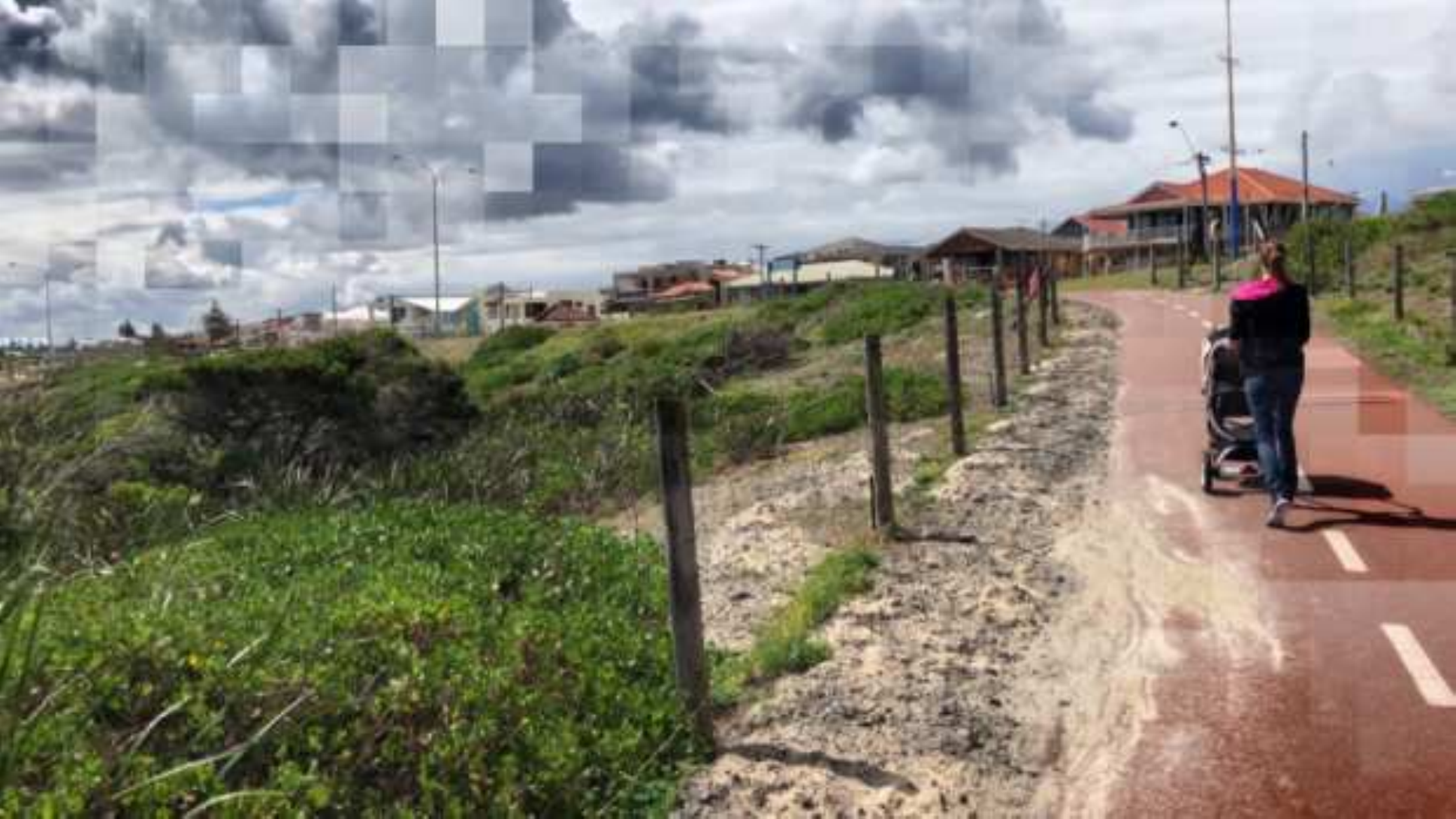} &
        \includegraphics[width=\zcellW]{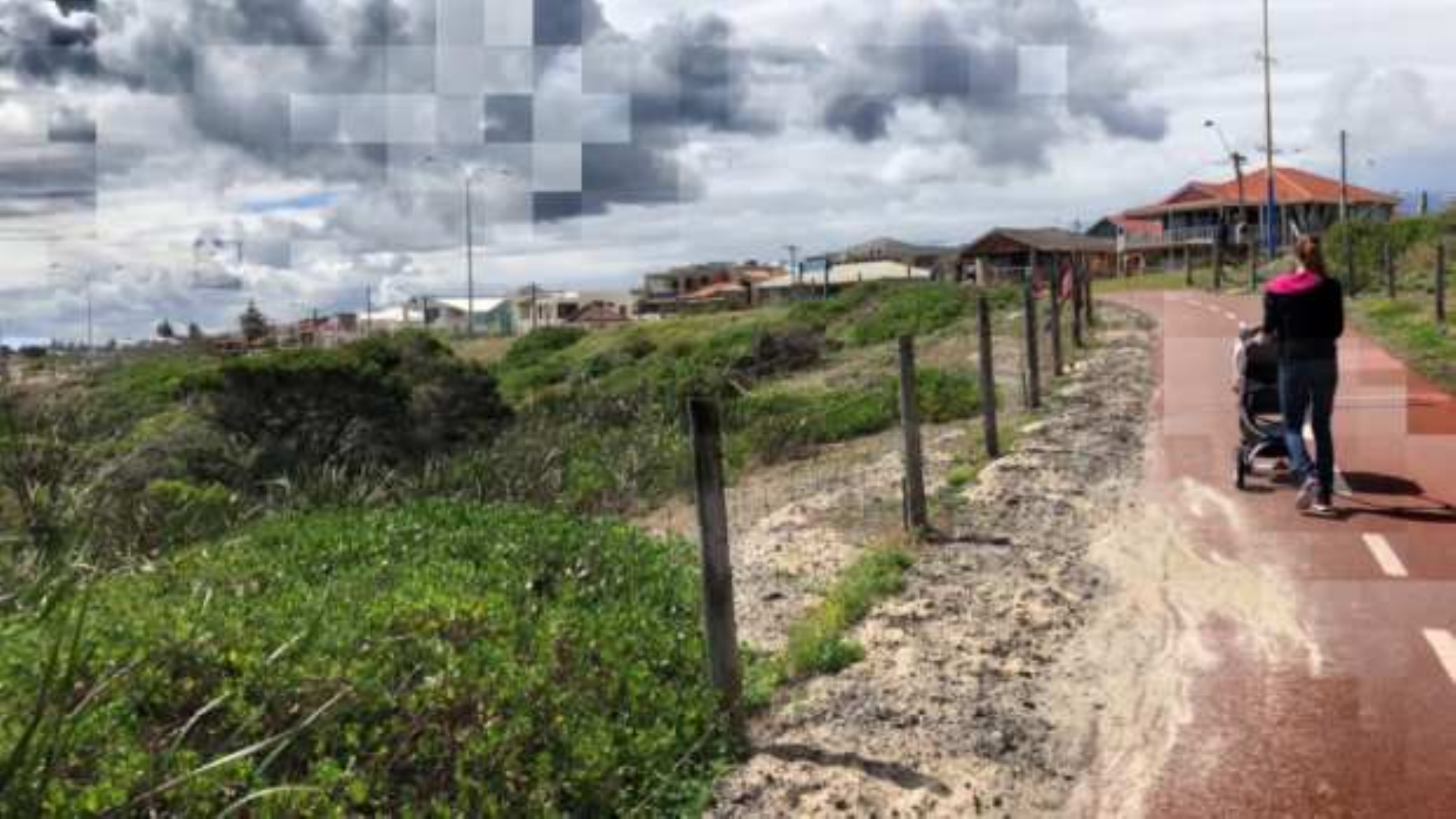} &
        \diffcell{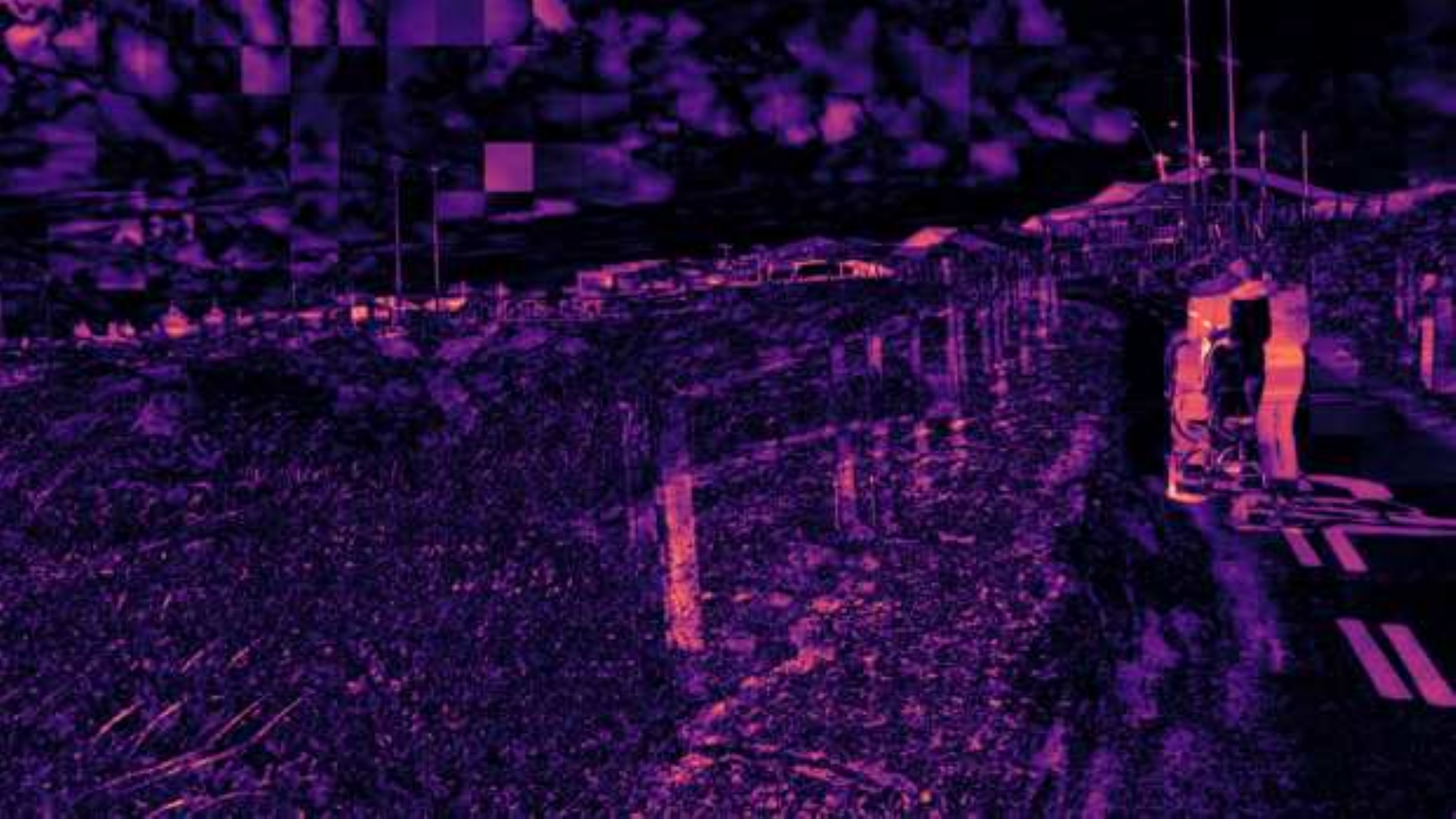}
      \\[0.5pt]
      \rotatebox{90}{\scriptsize\textbf{DehazeFormer}} &
        \includegraphics[width=\zcellW]{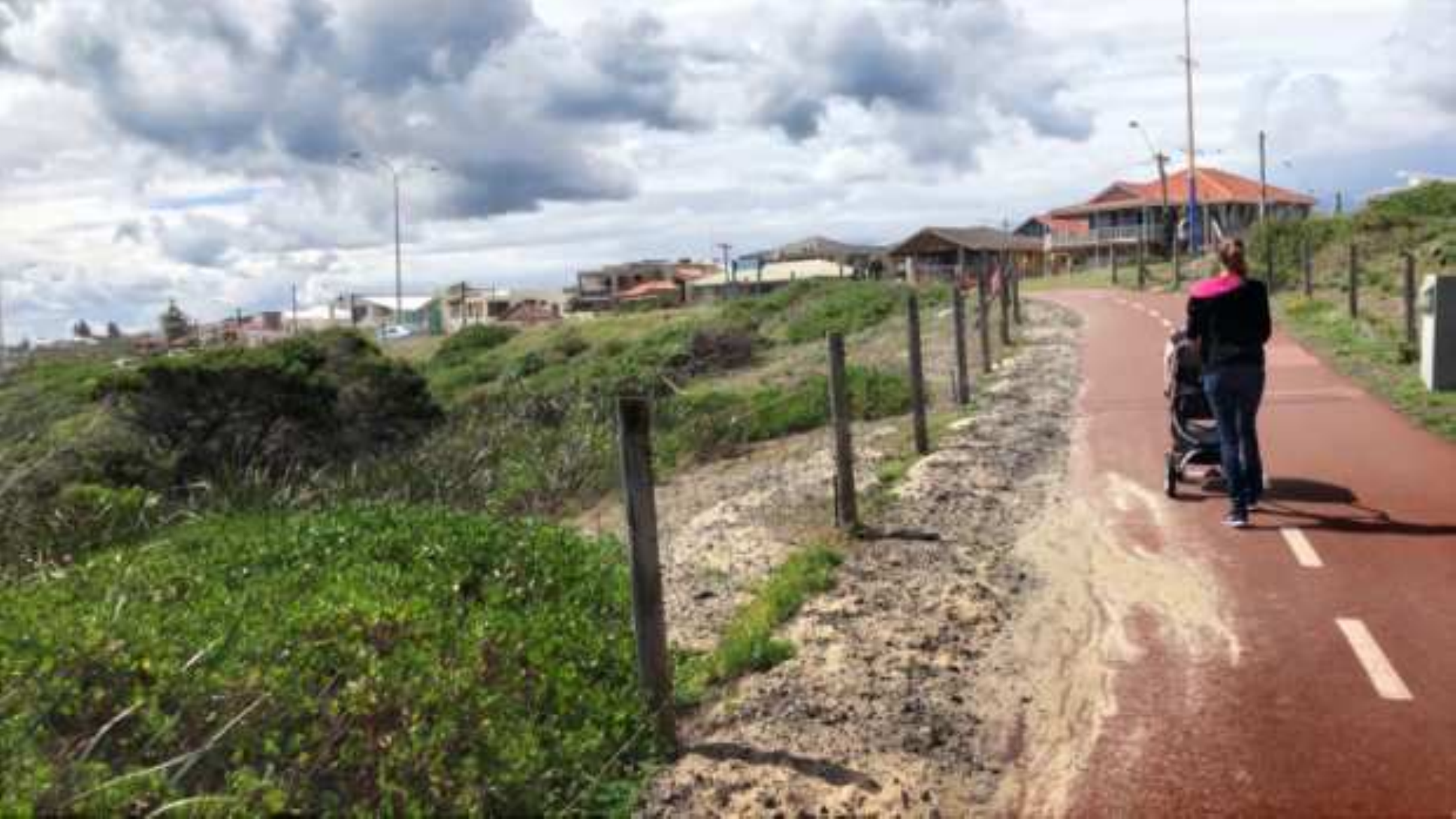} &
        \includegraphics[width=\zcellW]{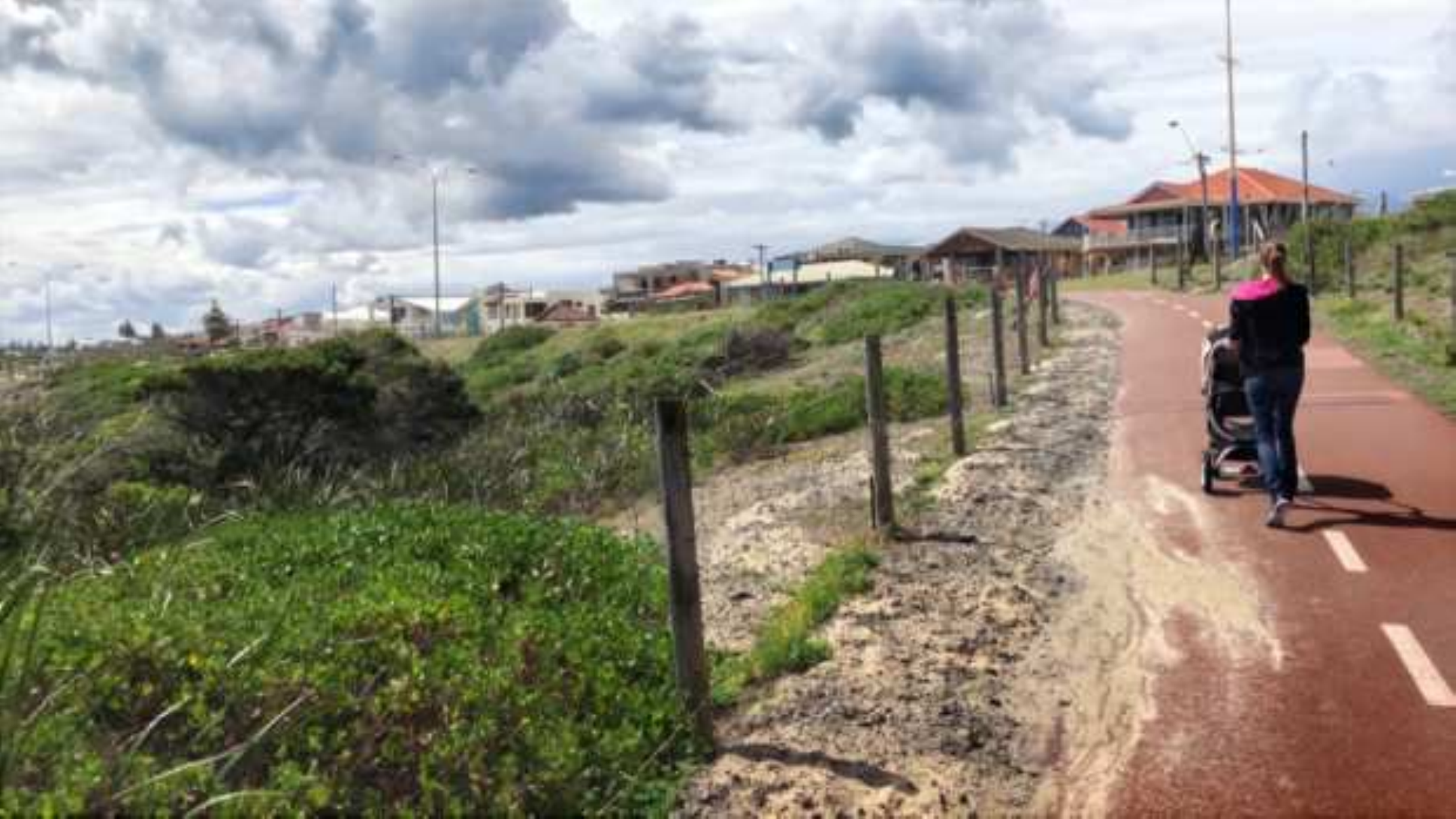} &
        \includegraphics[width=\zcellW]{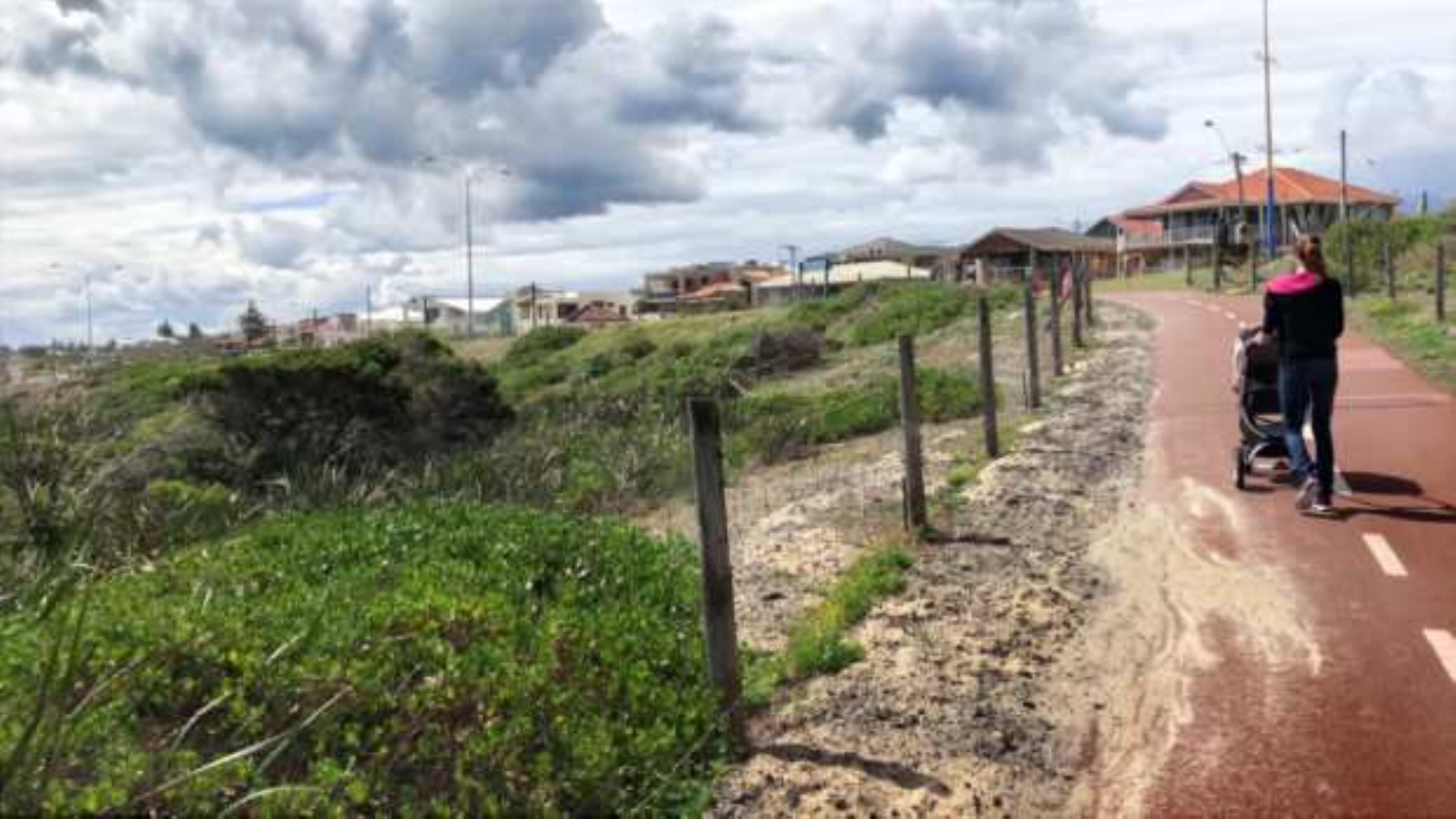} &
        \diffcell{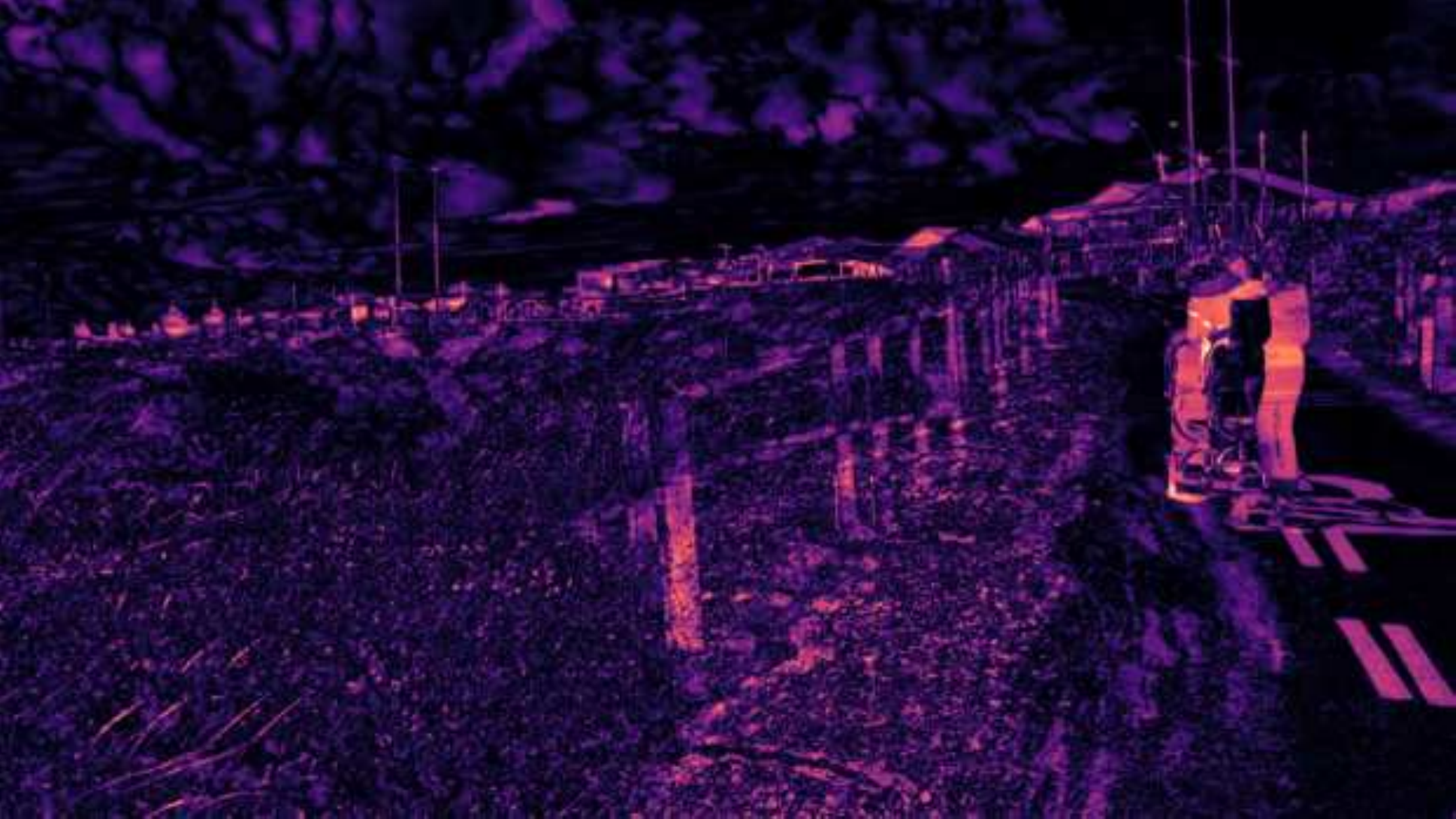}
      \\[0.5pt]
      \rotatebox{90}{\scriptsize\textbf{Ours}} &
        \includegraphics[width=\zcellW]{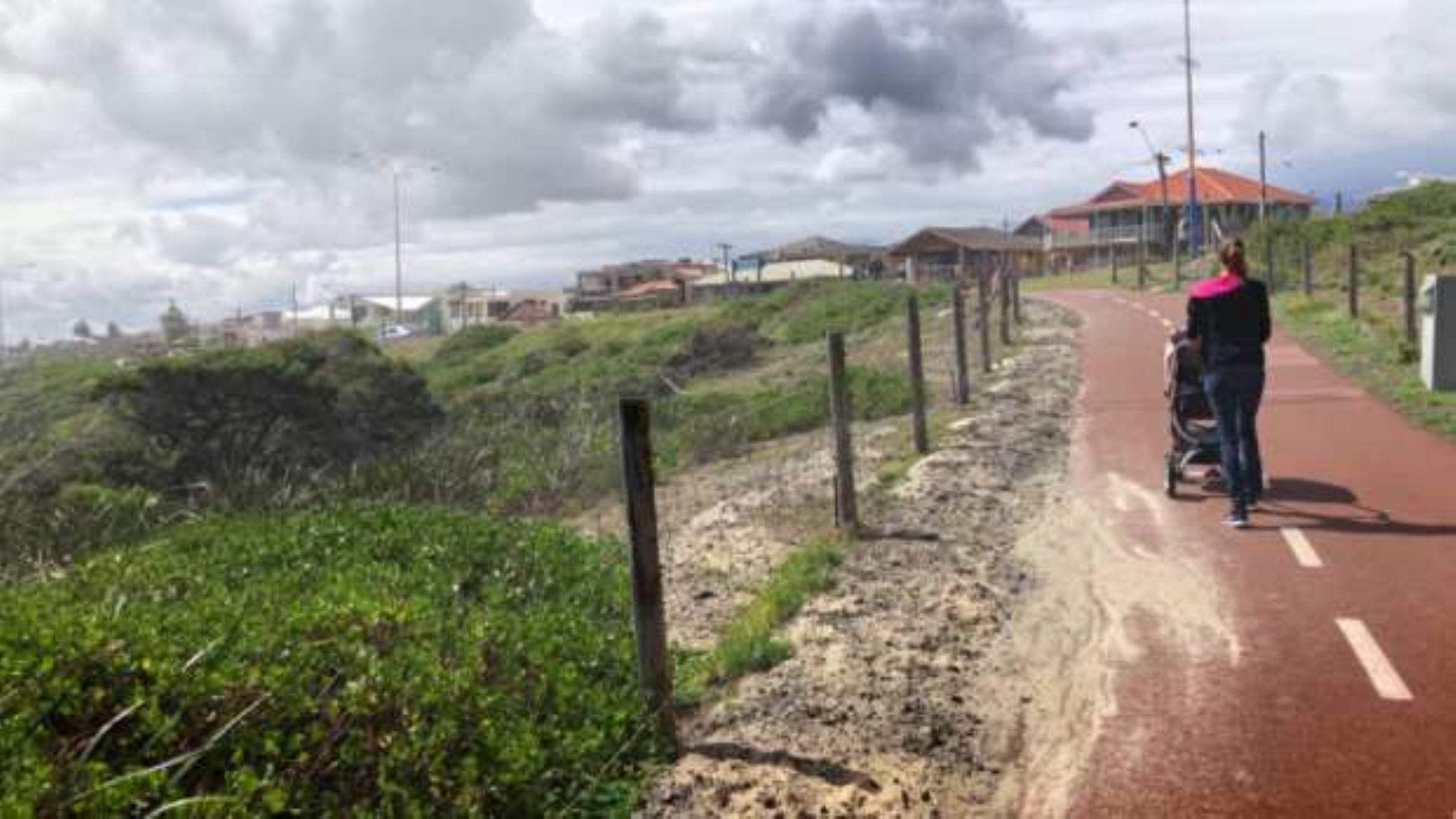} &
        \includegraphics[width=\zcellW]{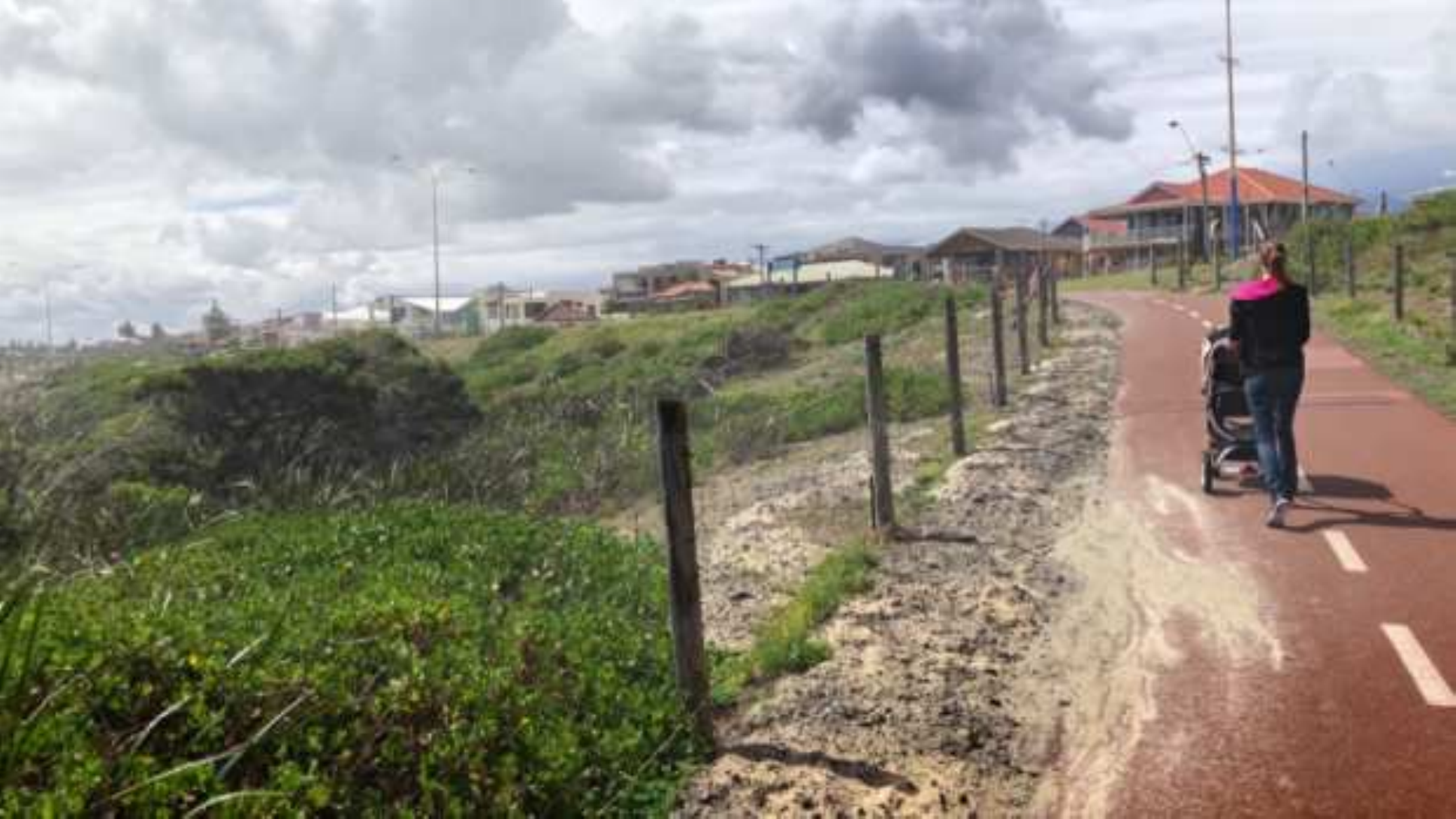} &
        \includegraphics[width=\zcellW]{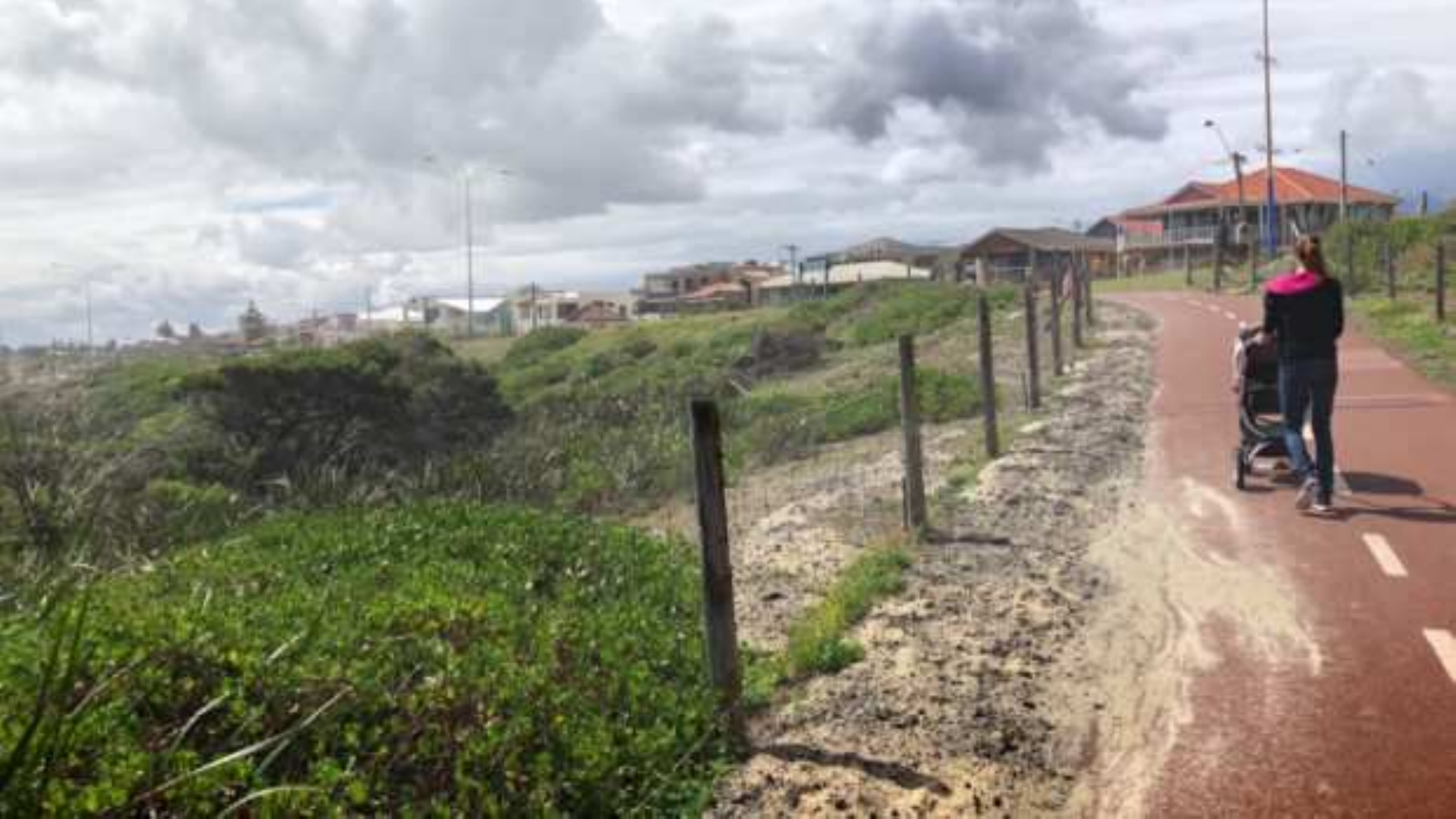} &
        \diffcell{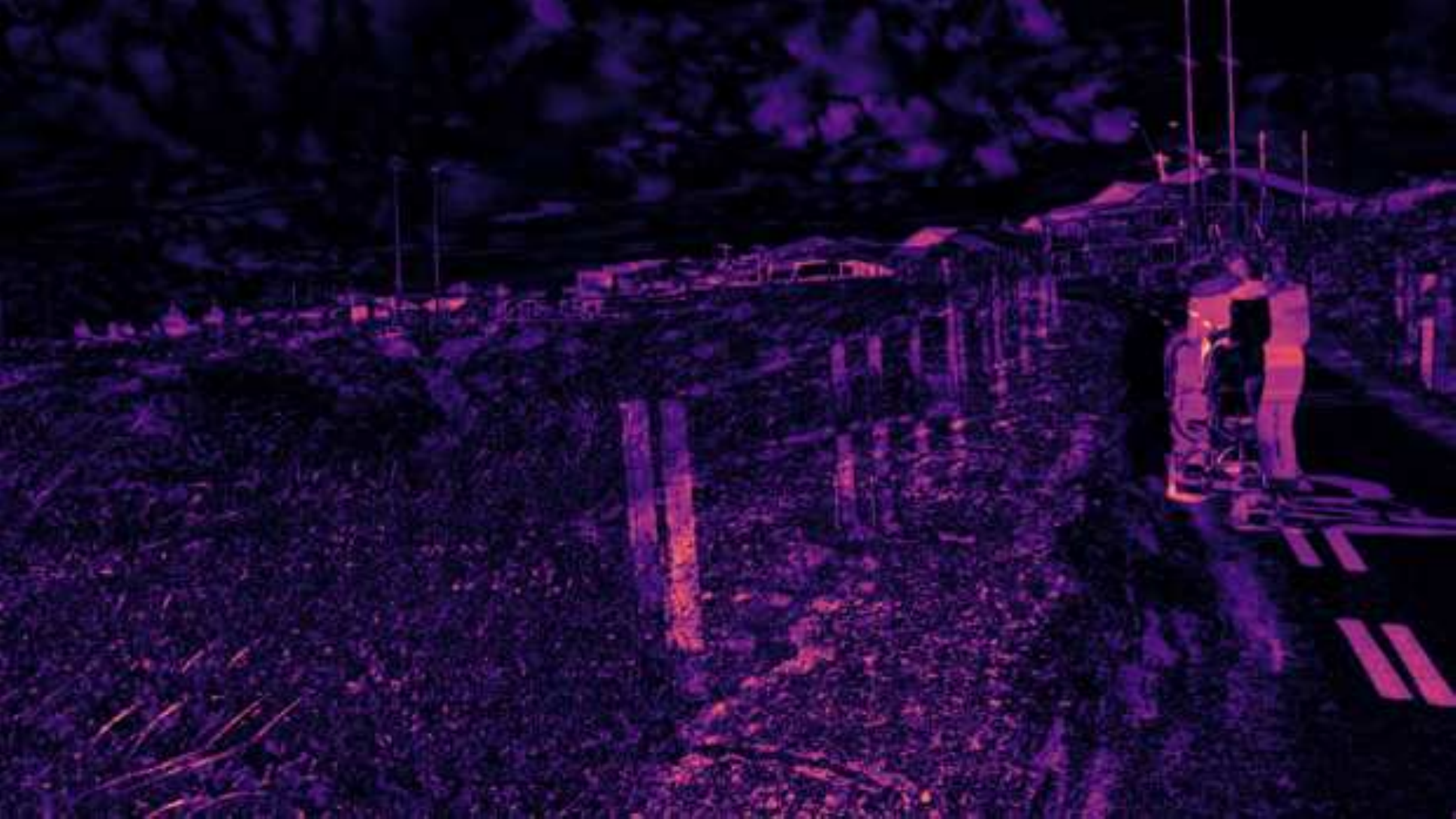}
      \\
    \end{tabular}
  \end{minipage}\hspace{-2mm}%
  \begin{minipage}[c]{0.085\linewidth}
    \centering
    \cbarpanel
  \end{minipage}
  \caption{\textbf{Inter-frame difference heatmaps on a UHV-4K test scene.}
    Columns: three consecutive frames and the per-pixel absolute difference $|\hat{J}_t{-}\hat{J}_{t-1}|$ (shared magma colorbar, $0$--$1$).
    Amber boxes mark two fixed regions of interest. LiBrA-Net's difference map stays uniformly dark, while single-image methods exhibit halos around moving objects.}
  \label{fig:temporal_heatmap}
\end{figure}

\section{External Validity: Extended Results}
\label{app:external}

This section provides the visualization and extended analysis for the downstream perception and real-world generalization experiments summarized in \S\ref{sec:external_validity}.

\subsection{Downstream Perception}
\label{app:downstream_vis}

We feed each method's 4K dehazed output into frozen YOLOv8-X~\citep{jocher2023yolov8} (COCO-pretrained) and SegFormer-B5~\citep{xie2021segformer} (ADE20K-pretrained) at default settings so that any performance gap comes solely from the upstream dehazer. Table~\ref{tab:downstream_nr} reports ten no-reference metrics spanning image quality, detection, and segmentation. LiBrA-Net ranks first on 8 of the 10 indicators; detection confidence matches or slightly exceeds the GT oracle, indicating that frozen detectors treat our output as part of the clean-image distribution. The remaining video baselines either degrade detection counts (CG-IDN, DVD) or lose segmentation detail (MAP-Net).

\begin{table}[h]
  \centering
  \small
  \caption{\textbf{Downstream no-reference evaluation on UHV-4K}. Each method's dehazed output is evaluated by frozen YOLOv8-X and SegFormer-B5. \textit{GT} is a frozen-detector oracle excluded from ranking.}
  \label{tab:downstream_nr}
  \setlength{\tabcolsep}{3pt}
  \renewcommand{\arraystretch}{1}
  \resizebox{\textwidth}{!}{%
  \begin{tabular}{l|ccc|cccc|ccc}
    \toprule
    \multirow{2}{*}{Method} & \multicolumn{3}{c|}{Image quality} & \multicolumn{4}{c|}{YOLOv8-X detection} & \multicolumn{3}{c}{SegFormer-B5 segmentation} \\
    \cmidrule(lr){2-4} \cmidrule(lr){5-8} \cmidrule(lr){9-11}
    & NIQE$\downarrow$ & BRISQUE$\downarrow$ & MUSIQ$\uparrow$ & Det$\uparrow$ & Conf$\uparrow$ & HighConf$\uparrow$ & SmallObj$\uparrow$ & BdryE$\uparrow$ & ClassH$\uparrow$ & NumComp$\uparrow$ \\
    \midrule
    Hazy    & 5.74  & 71.74 & 33.42 & 16.4 & 0.554 & 0.520 & \textbf{1.3} & 0.0074 & 0.261 & 58.7 \\
    CG-IDN & 10.36  & 83.08 & 23.83 & 14.0 & 0.542 & 0.510 & 0.9 & 0.0078 & 0.266 & 64.0 \\
    MAP-Net & 5.58  & 58.50 & 31.92 & 14.3 & 0.545 & 0.502 & 0.6 & 0.0068 & 0.257 & 52.3 \\
    DVD     & 10.43 & 82.69 & 24.97 & 14.7 & 0.541 & 0.493 & 0.9 & 0.0079 & \textbf{0.268} & 64.5 \\
    \rowcolor{ourgreen} \textbf{Ours} \tabgain{8/10 best} & \textbf{5.04} & \textbf{50.03} & \textbf{37.45} & \textbf{17.0} & \textbf{0.558} & \textbf{0.523} & \underline{1.2} & \textbf{0.0079} & \underline{0.266} & \textbf{67.6} \\
    \midrule
    \muted{\textit{GT (oracle)}} & \muted{\textit{5.14}} & \muted{\textit{55.56}} & \muted{\textit{38.23}} & \muted{\textit{17.2}} & \muted{\textit{0.557}} & \muted{\textit{0.518}} & \muted{\textit{1.2}} & \muted{\textit{0.0080}} & \muted{\textit{0.265}} & \muted{\textit{69.0}} \\
    \bottomrule
  \end{tabular}%
  }
\end{table}

Figure~\ref{fig:downstream} visualizes a representative scene. LiBrA-Net recovers distant vehicles and low-contrast pedestrians that YOLOv8 misses on the hazy input. Segmentation boundaries track ground-truth masks more tightly than MAP-Net or DVD, particularly for thin structures whose edges are lost to haze-induced contrast collapse.

\begin{figure*}[t]
  \centering
  \setlength{\zcellW}{0.188\linewidth}%
  \setlength{\tabcolsep}{1pt}%
  \renewcommand{\arraystretch}{0}%
  \begin{tabular}{@{}c@{\hspace{3pt}}ccccc@{}}
    &
    \scriptsize Hazy &
    \scriptsize MAP-Net &
    \scriptsize DVD &
    \scriptsize\textbf{Ours} &
    \scriptsize\textit{GT}
    \\[1.5pt]
    \rotatebox{90}{\scriptsize\textbf{(a) Det.}} &
      \zcell{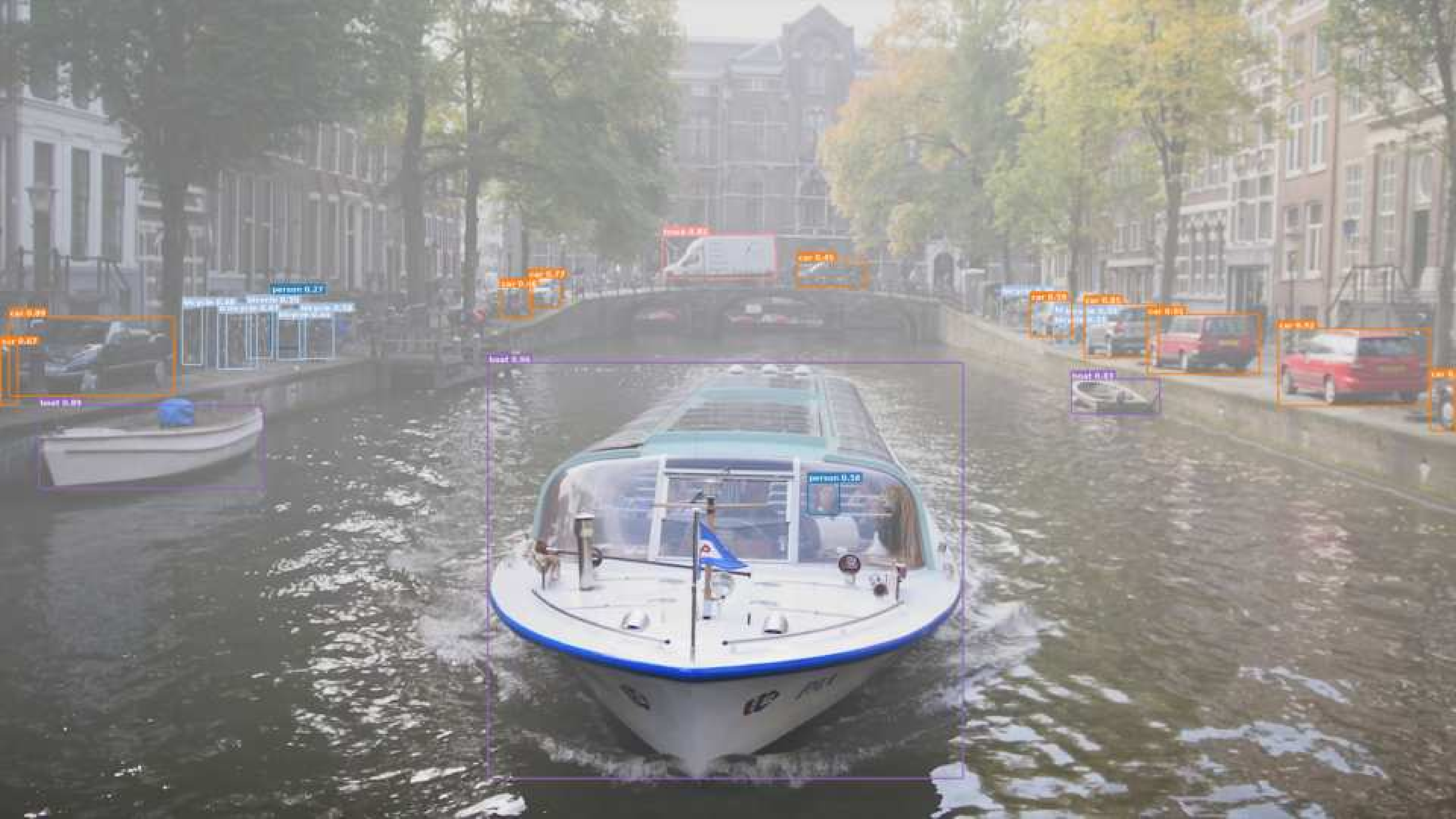}%
            {1152 1188 1152 432}{0.3}{0.55}{0.7}{0.8} &
      \zcell{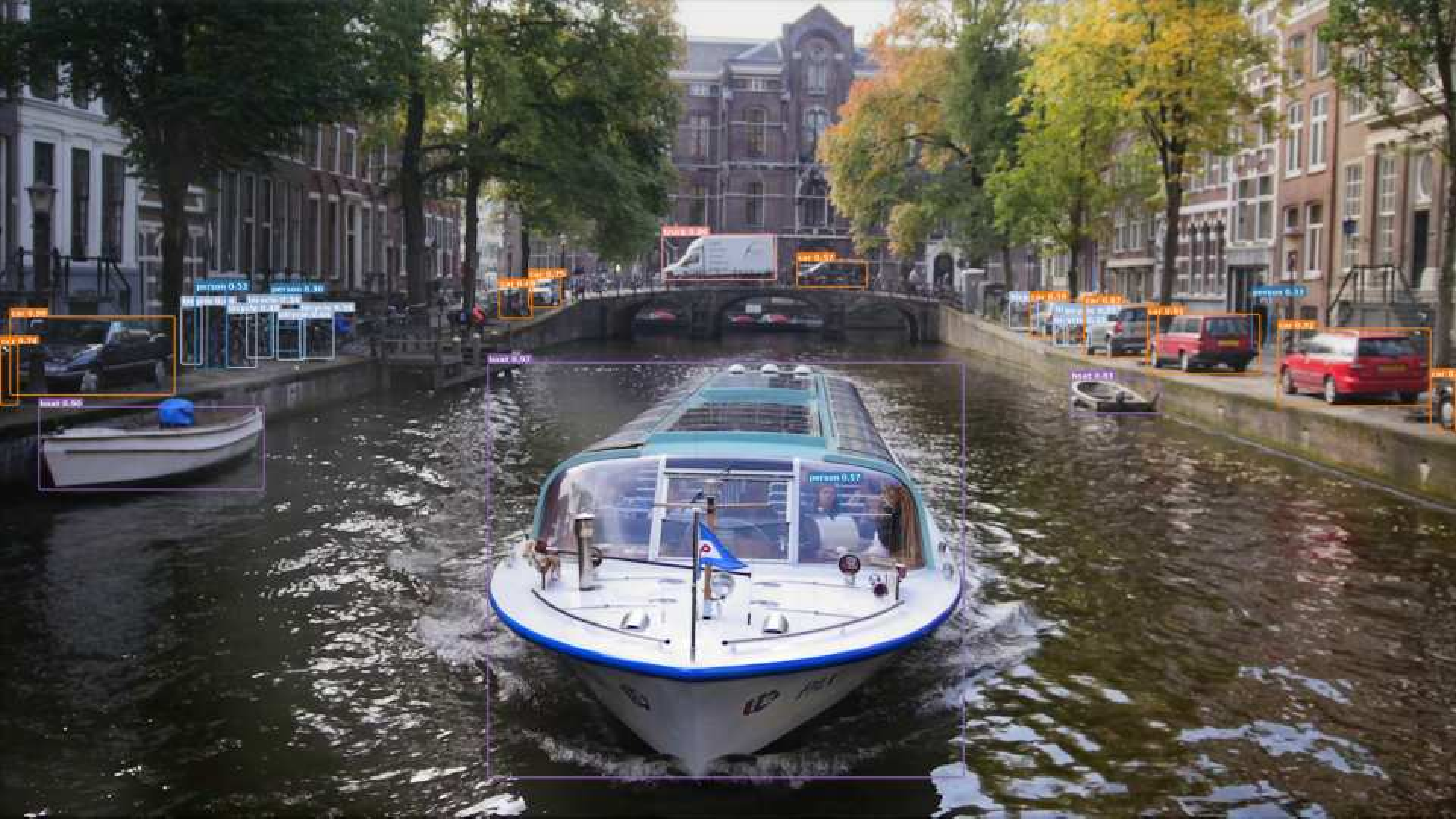}%
            {1152 1188 1152 432}{0.3}{0.55}{0.7}{0.8} &
      \zcell{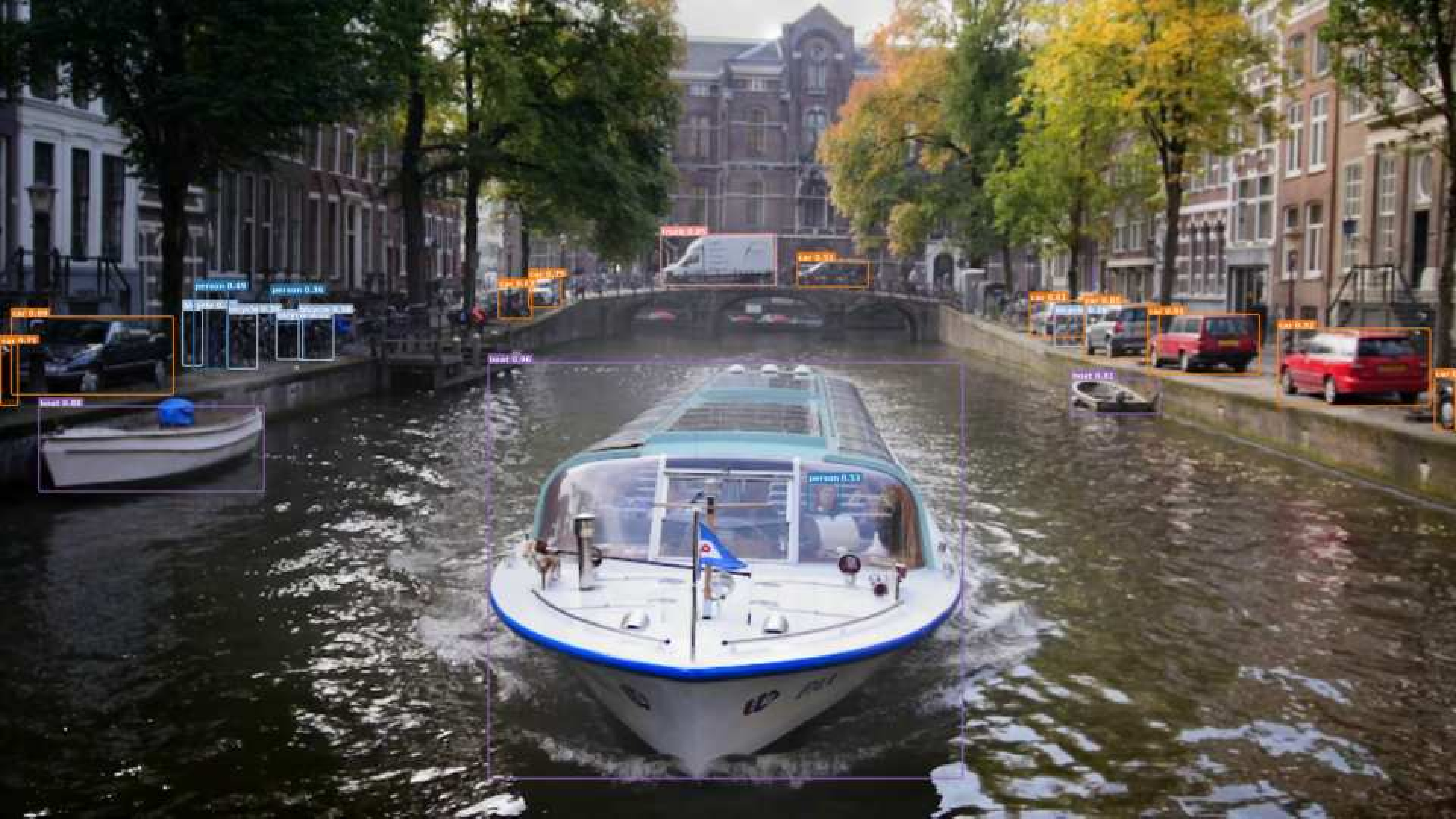}%
            {1152 1188 1152 432}{0.3}{0.55}{0.7}{0.8} &
      \zcell{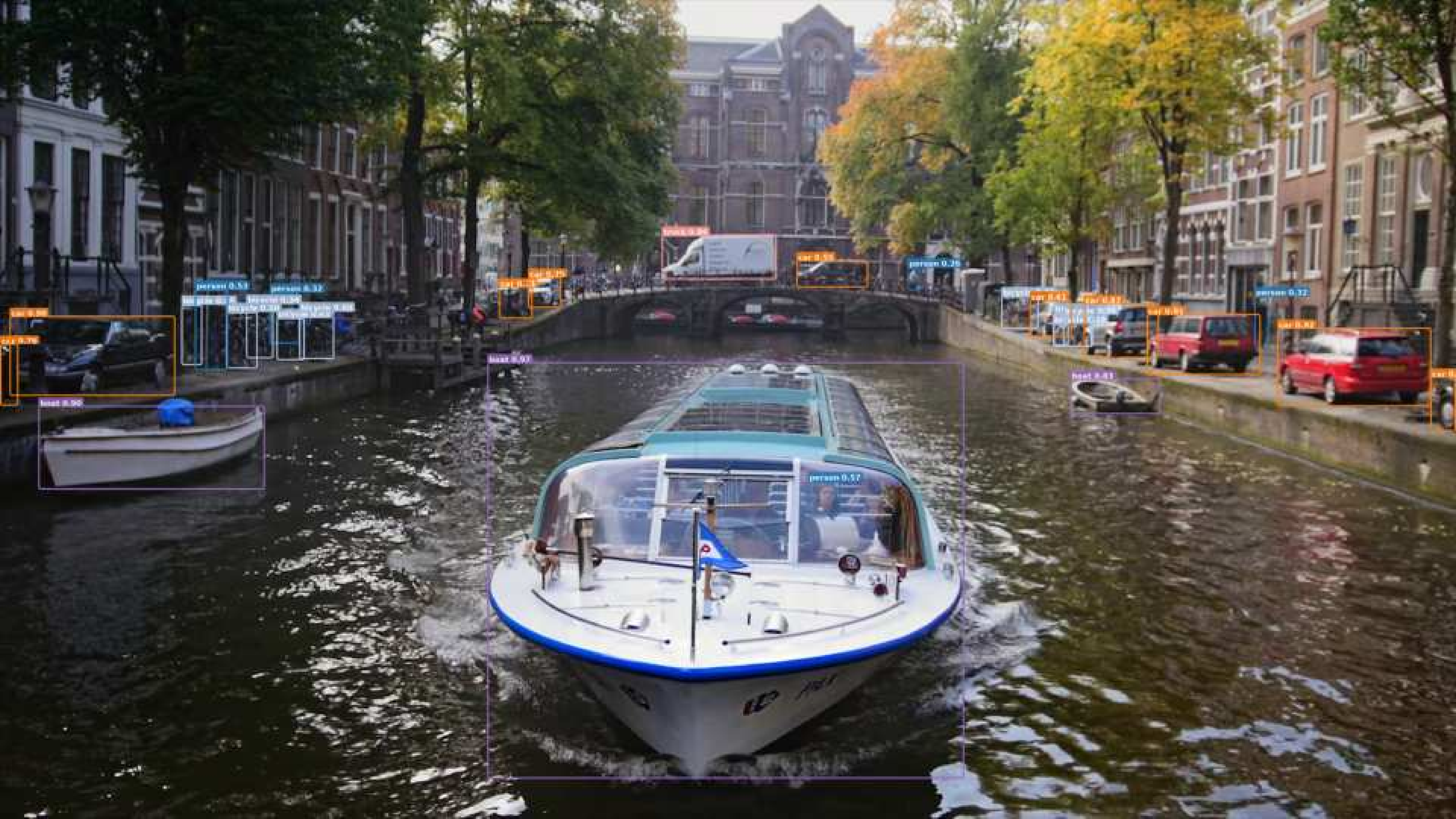}%
            {1152 1188 1152 432}{0.3}{0.55}{0.7}{0.8} &
      \zcell{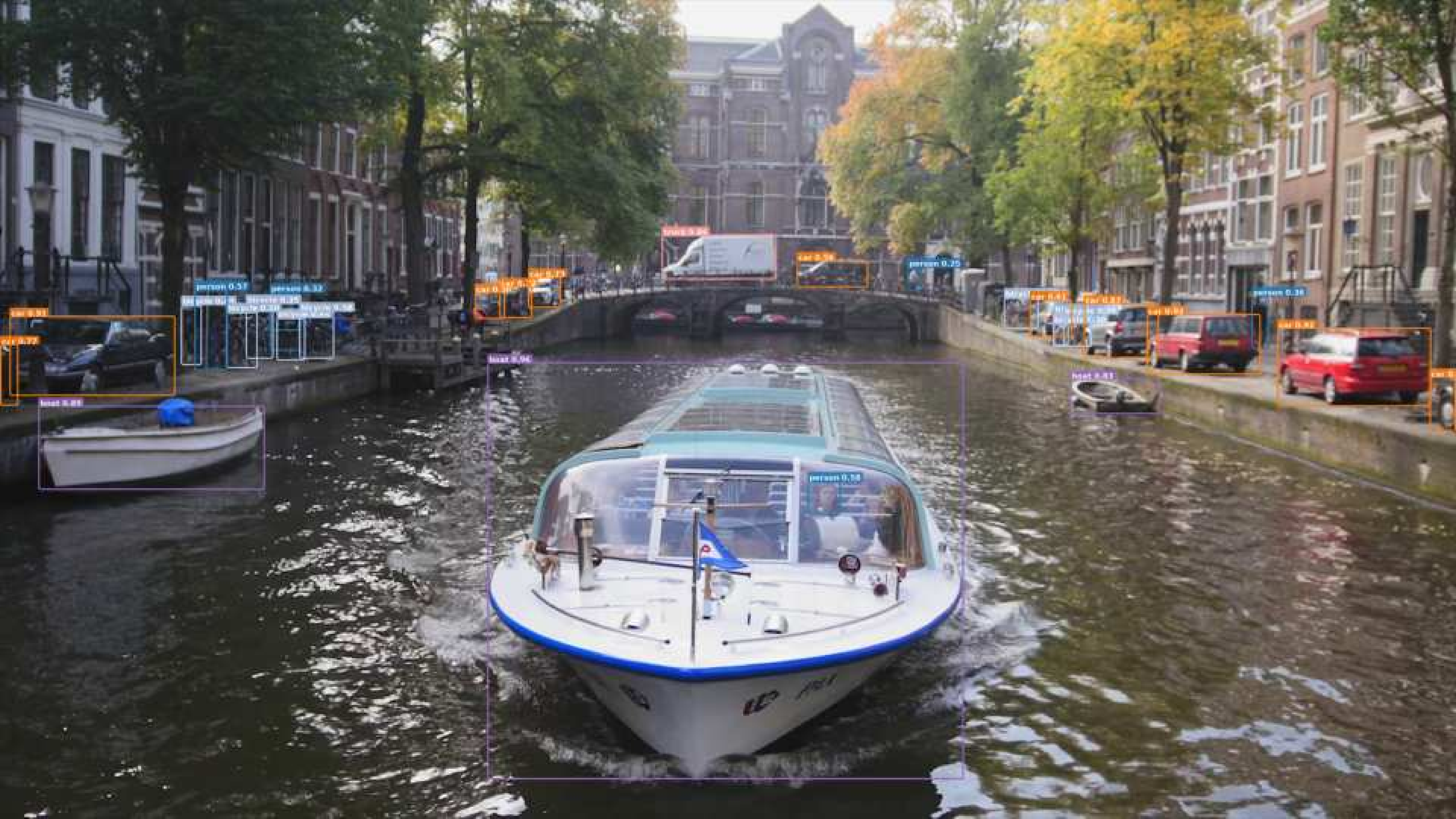}%
            {1152 1188 1152 432}{0.3}{0.55}{0.7}{0.8}
    \\[0.5pt]
    \rotatebox{90}{\scriptsize\textbf{(b) Seg.}} &
      \zcell{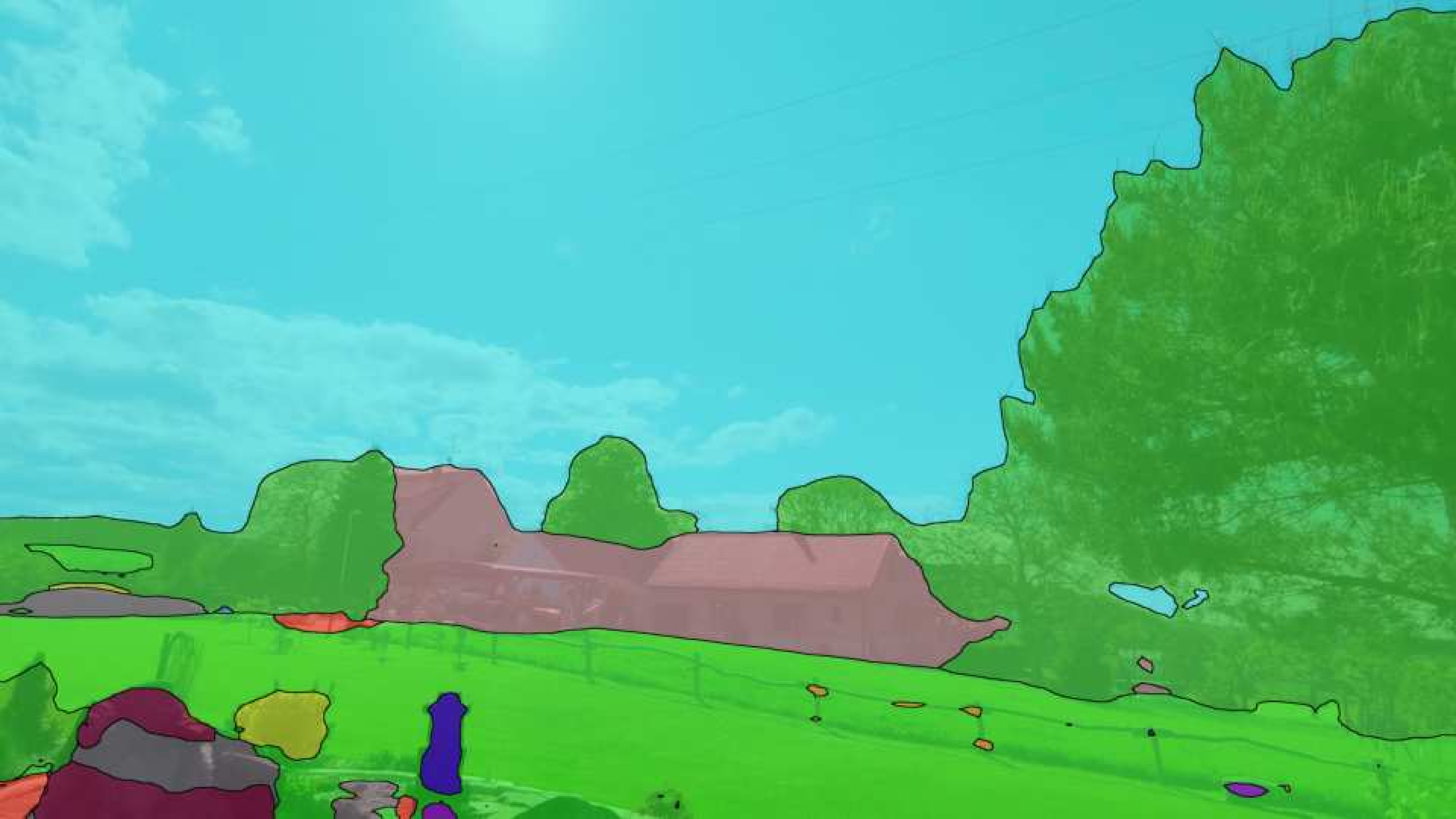}%
            {0 0 2304 1188}{0.0}{0.0}{0.4}{0.5} &
      \zcell{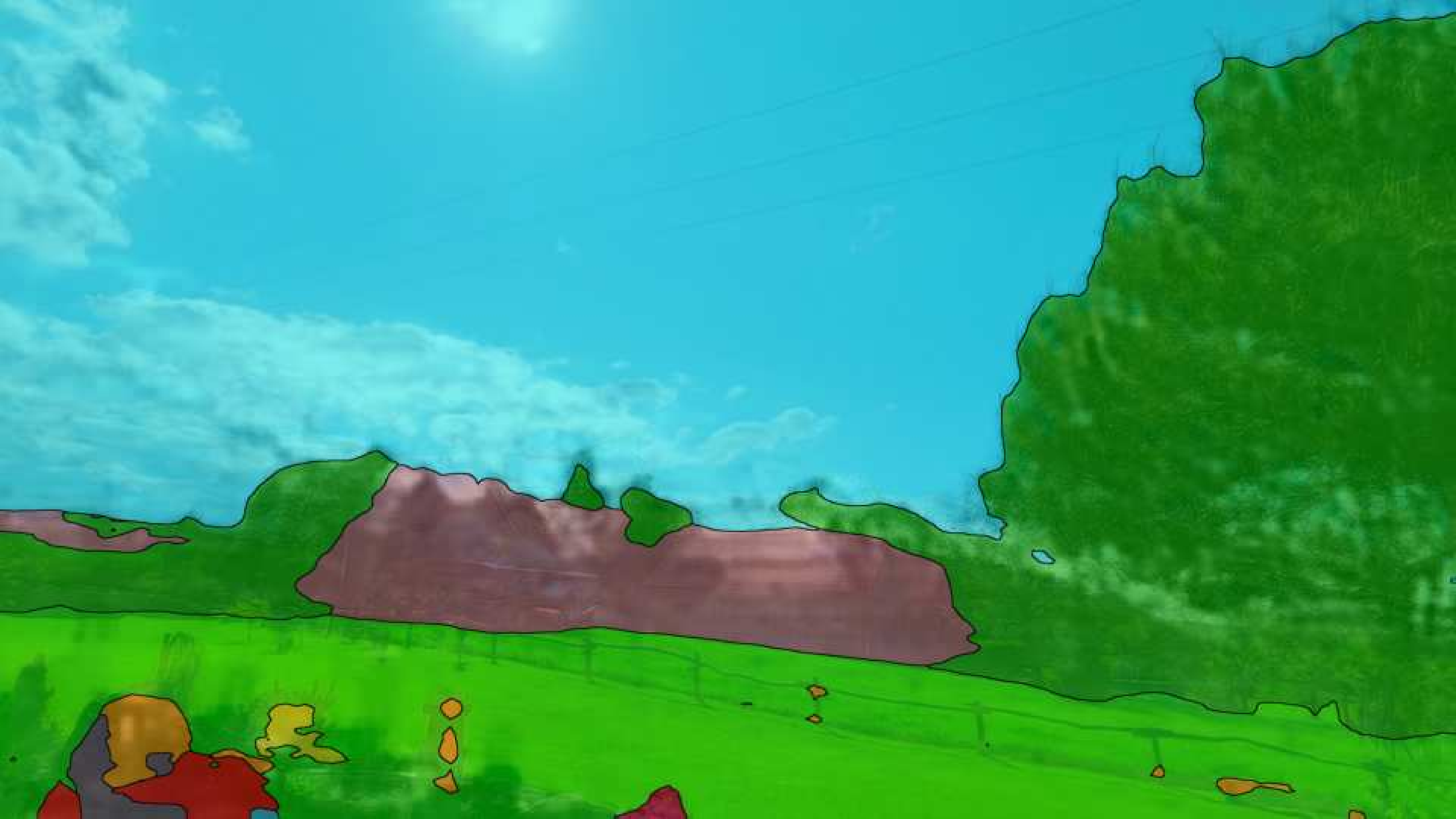}%
            {0 0 2304 1188}{0.0}{0.0}{0.4}{0.5} &
      \zcell{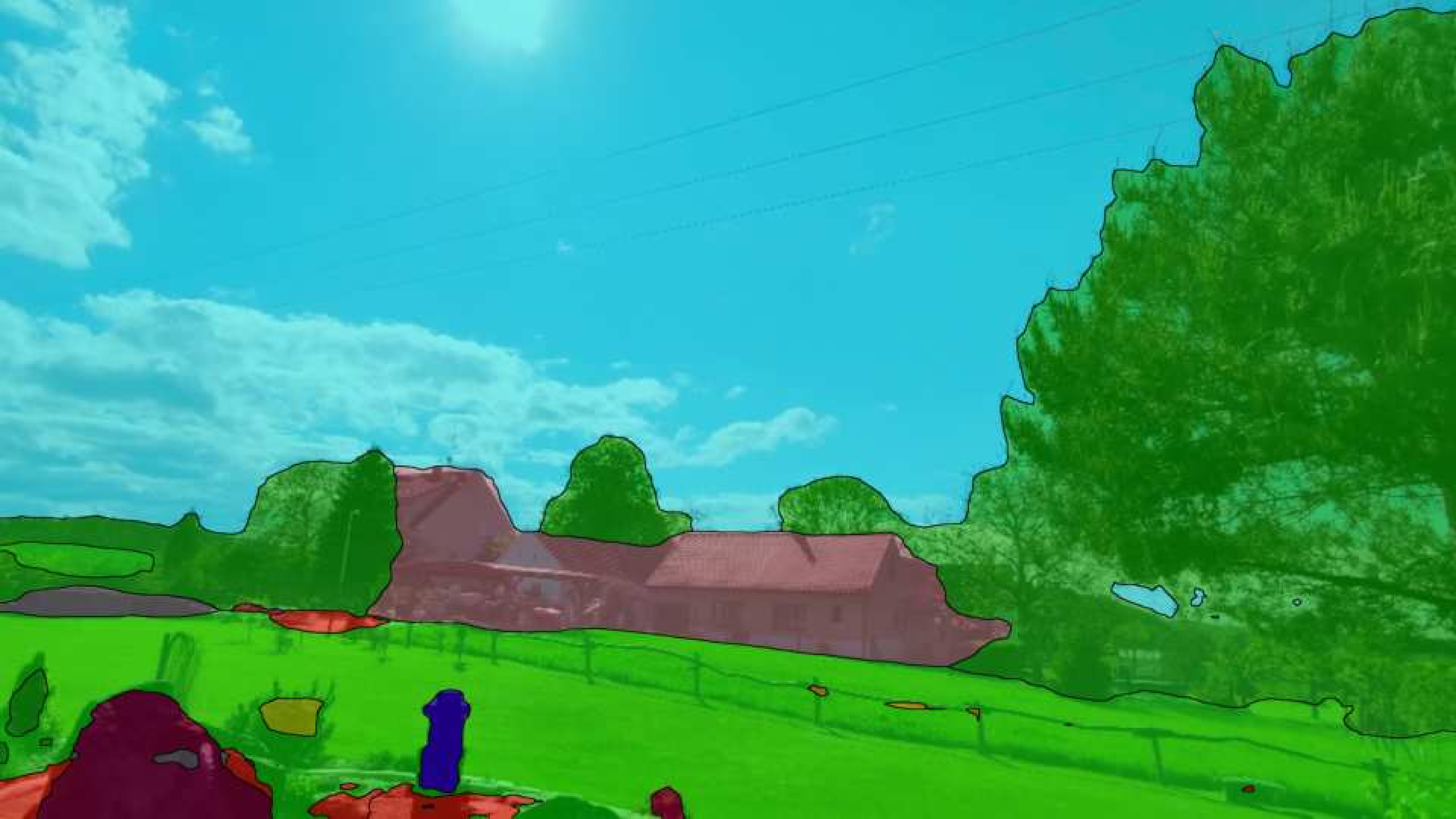}%
            {0 0 2304 1188}{0.0}{0.0}{0.4}{0.5} &
      \zcell{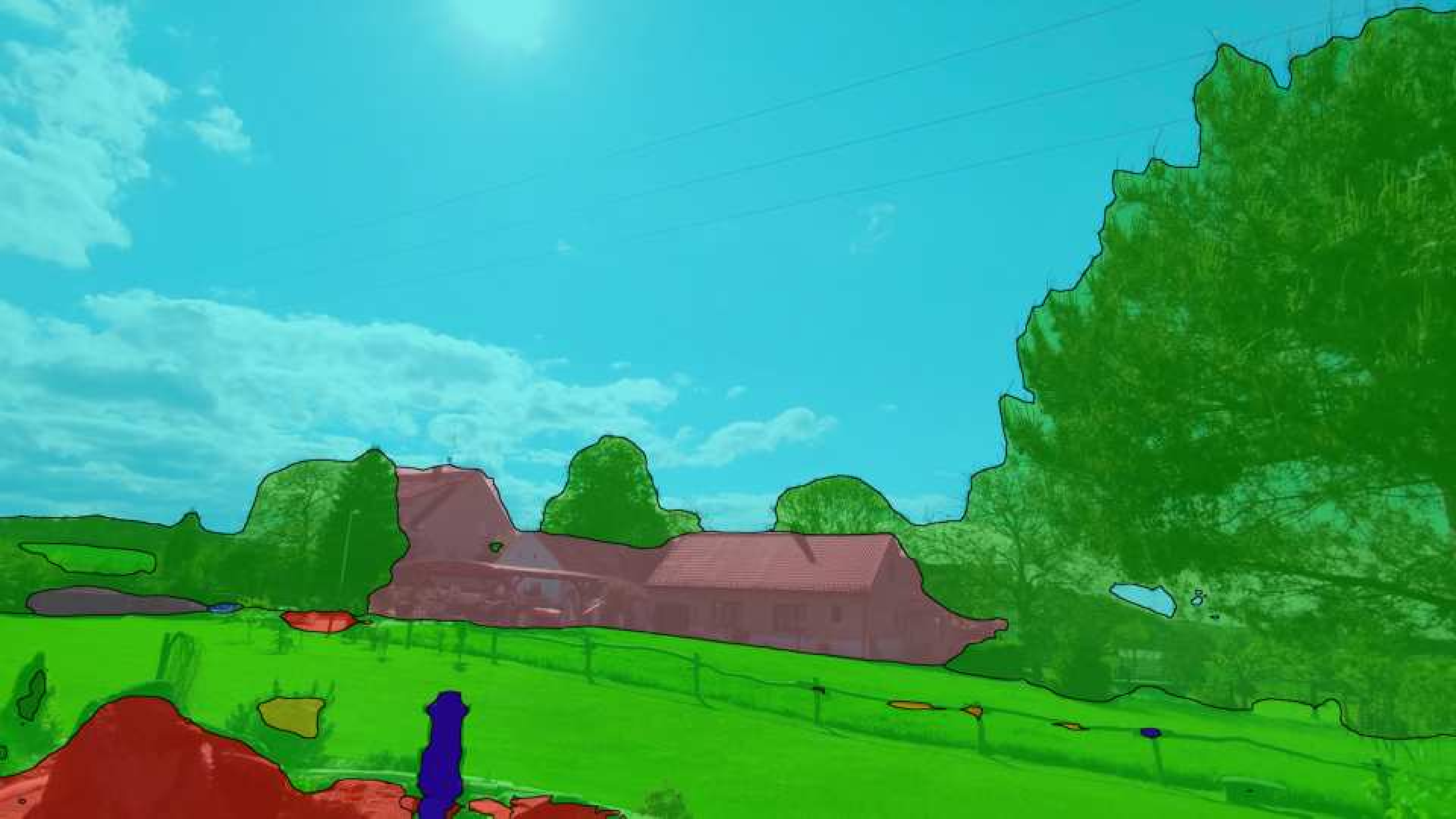}%
            {0 0 2304 1188}{0.0}{0.0}{0.4}{0.5} &
      \zcell{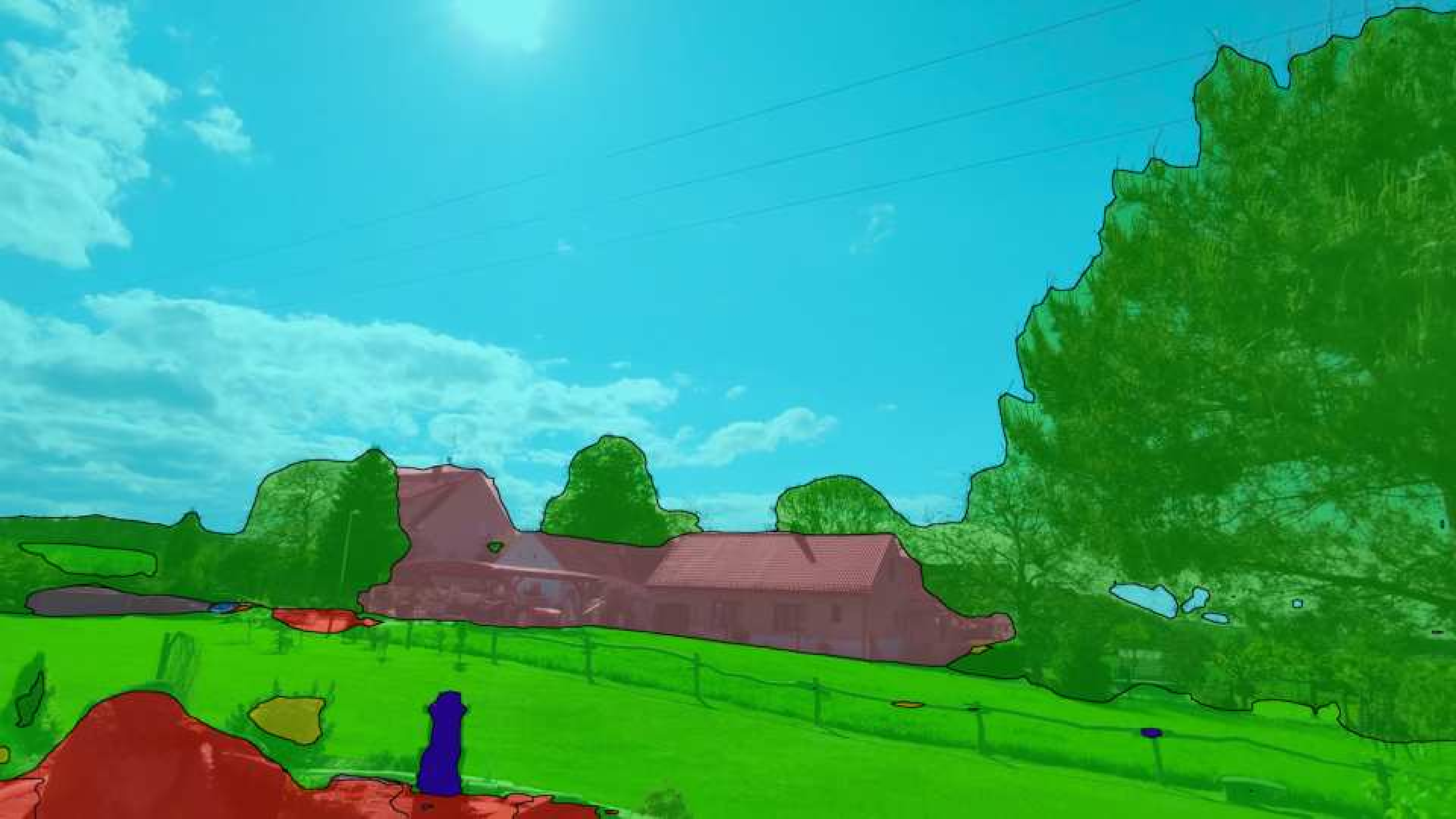}%
            {0 0 2304 1188}{0.0}{0.0}{0.4}{0.45}
    \\
  \end{tabular}
  \caption{%
    \textbf{Downstream perception on a UHV-4K test scene.}
    \textbf{(a)}~Object detection (YOLOv8-X, COCO-pretrained).
    \textbf{(b)}~Semantic segmentation (SegFormer-B5, ADE20K-pretrained).
    Both models run at default settings on each method's dehazed output; any performance gap comes solely from the upstream dehazer.%
  }
  \label{fig:downstream}
\end{figure*}

\subsection{Real-World No-Reference Quality and Qualitative Results}
\label{app:realworld_qual}

Figure~\ref{fig:realworld_nriqa} summarizes the no-reference image quality assessment across all five video methods on eight real-world 4K hazy videos. LiBrA-Net is the only method that improves MUSIQ, BRISQUE, and NIQE simultaneously over the hazy input. MAP-Net moves marginally; CG-IDN, DVD, and ViWS-Net all degrade NIQE and BRISQUE, penalized for halos, ringing, and chromatic overshoot that their full-resolution temporal decoders inject when the atmospheric statistics drift from the training distribution.

\begin{figure}[t]
  \centering
  \begin{subfigure}[t]{0.3\linewidth}
    \includegraphics[width=\linewidth]{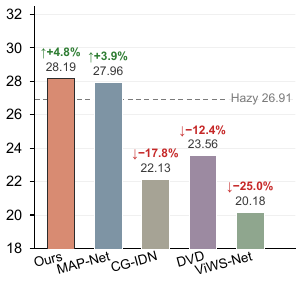}
    \caption{MUSIQ}
    \label{fig:realworld_musiq}
  \end{subfigure}\hfill
  \begin{subfigure}[t]{0.3\linewidth}
    \includegraphics[width=\linewidth]{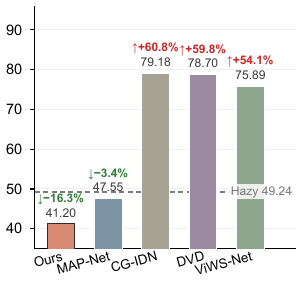}
    \caption{BRISQUE}
    \label{fig:realworld_brisque}
  \end{subfigure}\hfill
  \begin{subfigure}[t]{0.3\linewidth}
    \includegraphics[width=\linewidth]{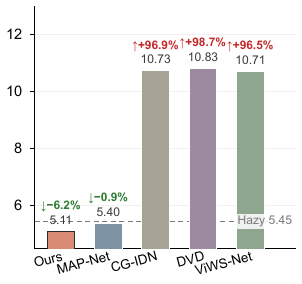}
    \caption{NIQE}
    \label{fig:realworld_niqe}
  \end{subfigure}
  \caption{\textbf{No-reference image quality on real-world 4K hazy videos.}
    Bars show each video method's score relative to the \emph{Hazy} input (dashed line). LiBrA-Net is the only method that improves all three metrics.}
  \label{fig:realworld_nriqa}
\end{figure}

Figure~\ref{fig:realworld_qualitative} presents frame-by-frame comparisons on two real-world 4K hazy videos (no ground truth available). The first scene (lake and skyline) tests fine-structure preservation: only LiBrA-Net keeps the thin members of the lattice transmission tower intact, while CG-IDN, DVD, and ViWS-Net wash them into the foliage or surround them with halo bands. The second scene (Venice street) tests color fidelity: CG-IDN, DVD, and ViWS-Net introduce saturated color casts on the awning and signage, MAP-Net leaves residual haze on the right-hand fa\c{c}ade, and LiBrA-Net recovers the strongest contrast without color shift. Across both scenes, the temporal stability of LiBrA-Net's output is visually apparent from the consistent appearance across eight consecutive frames.

\begin{figure}[t]
  \centering
  \setlength{\zcellW}{0.108\linewidth}%
  \setlength{\tabcolsep}{0.6pt}%
  \renewcommand{\arraystretch}{0.35}%
  \fboxsep=0pt%
  \begin{tabular}{@{}c@{\hspace{1pt}}cccccccc@{}}
     & \scriptsize Frame $t$ & \scriptsize Frame $t{+}1$ & \scriptsize Frame $t{+}2$ & \scriptsize Frame $t{+}3$ & \scriptsize Frame $t{+}4$ & \scriptsize Frame $t{+}5$ & \scriptsize Frame $t{+}6$ & \scriptsize Frame $t{+}7$ \\[0.5pt]
    \rotatebox{90}{\scriptsize\textbf{Video\,1}} &
      \fullroi{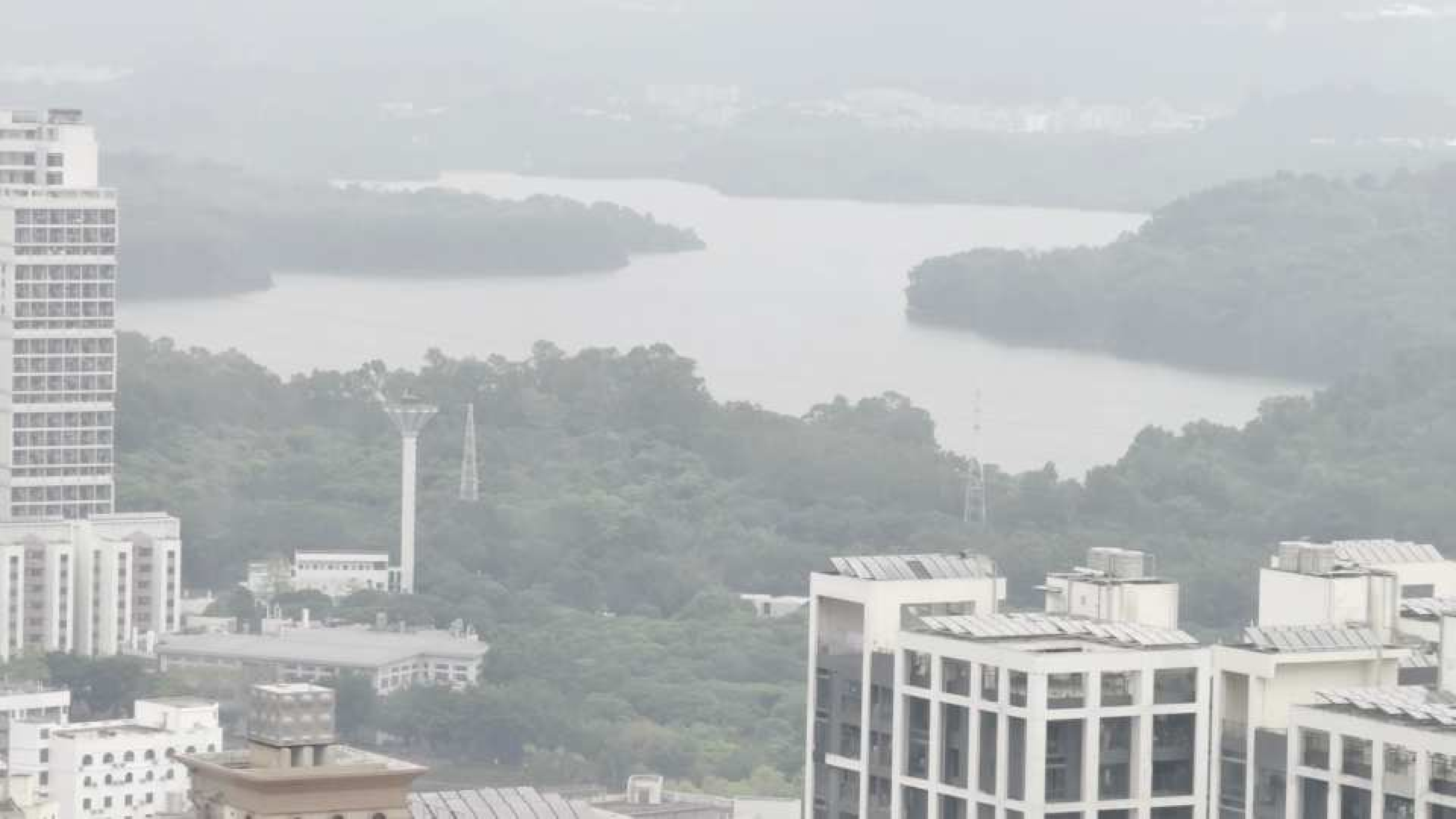}{0.02}{0.05}{0.36}{0.43} &
      \fullroi{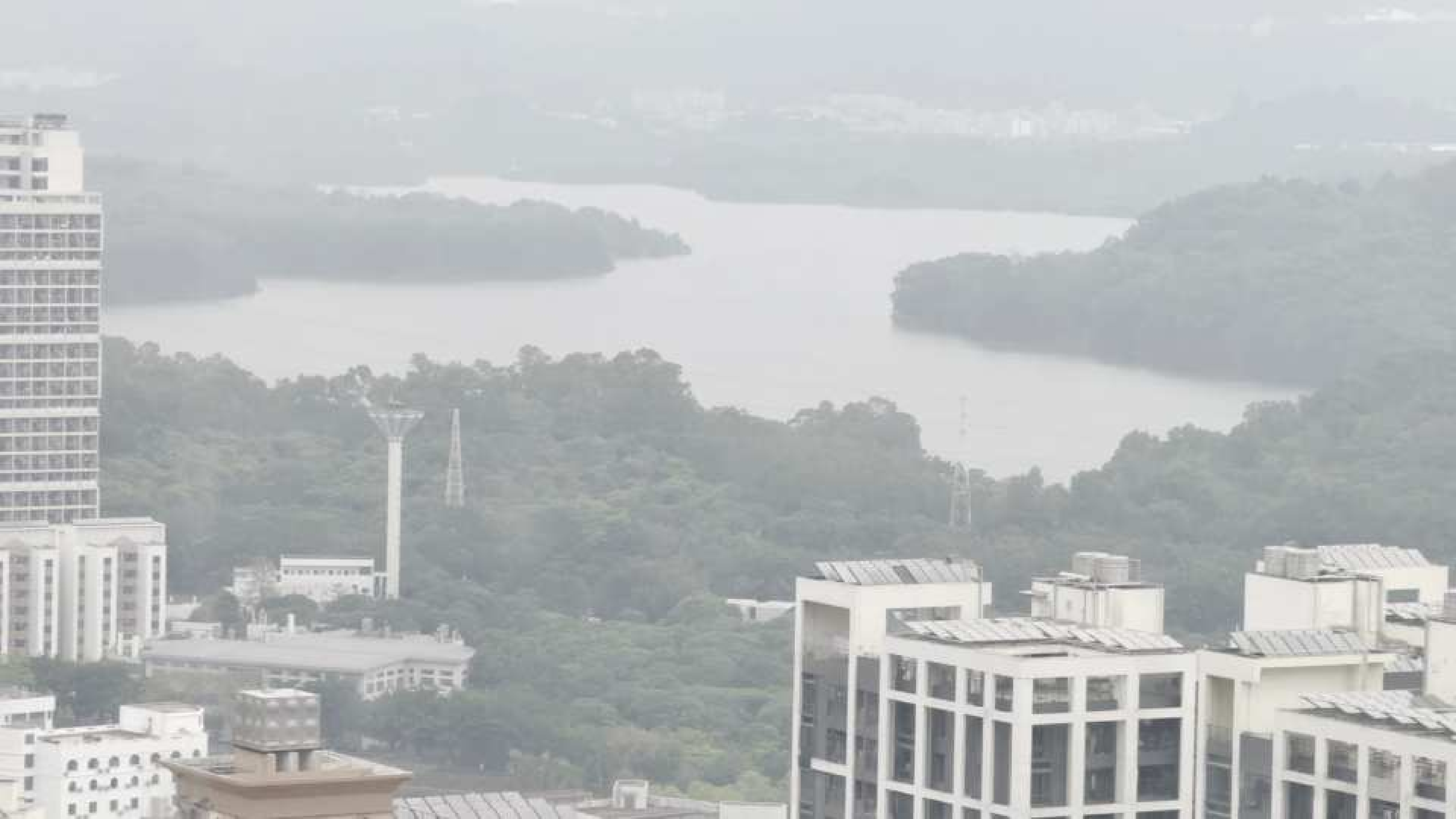}{0.02}{0.05}{0.36}{0.43} &
      \fullroi{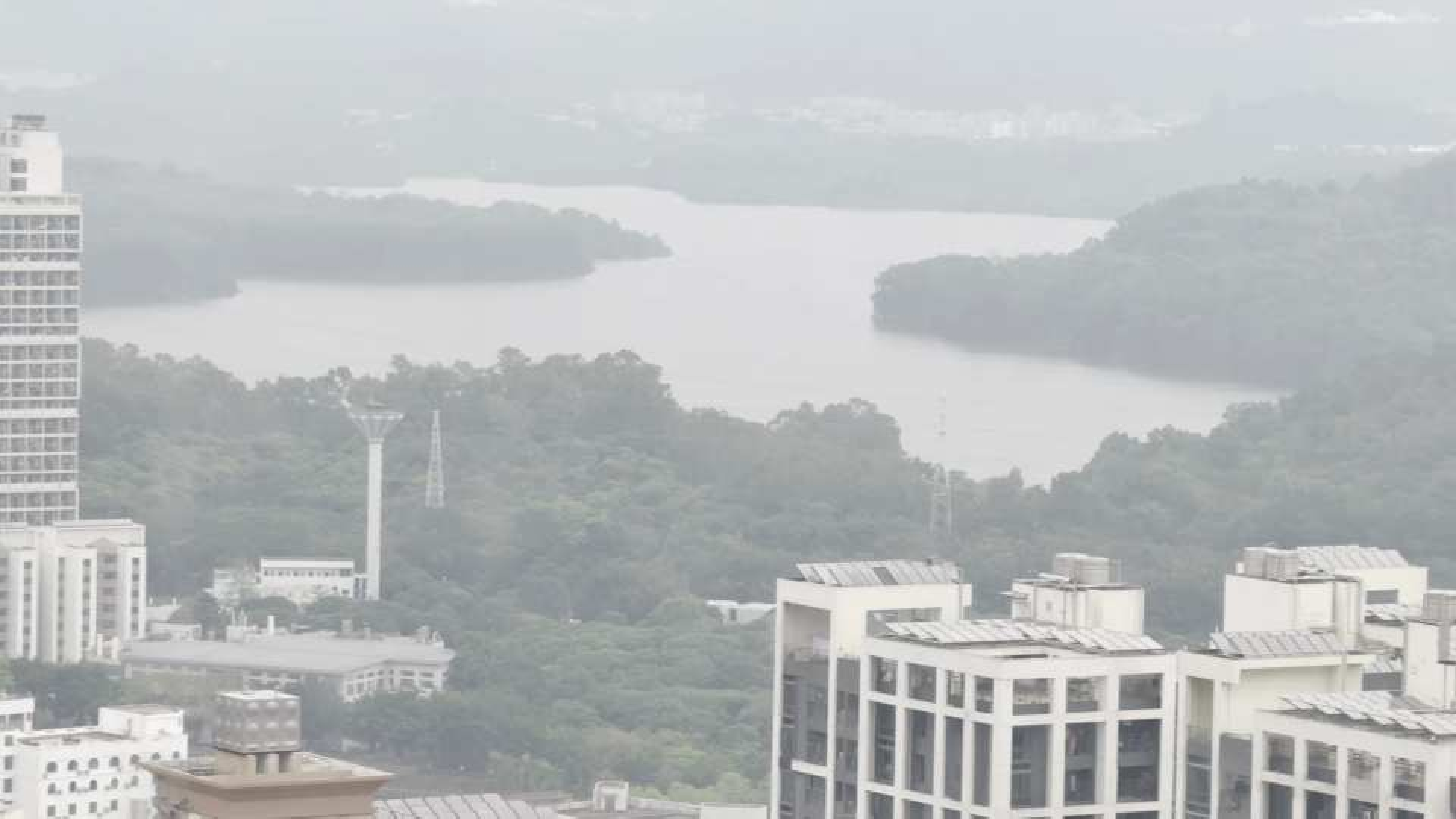}{0.02}{0.05}{0.36}{0.43} &
      \fullroi{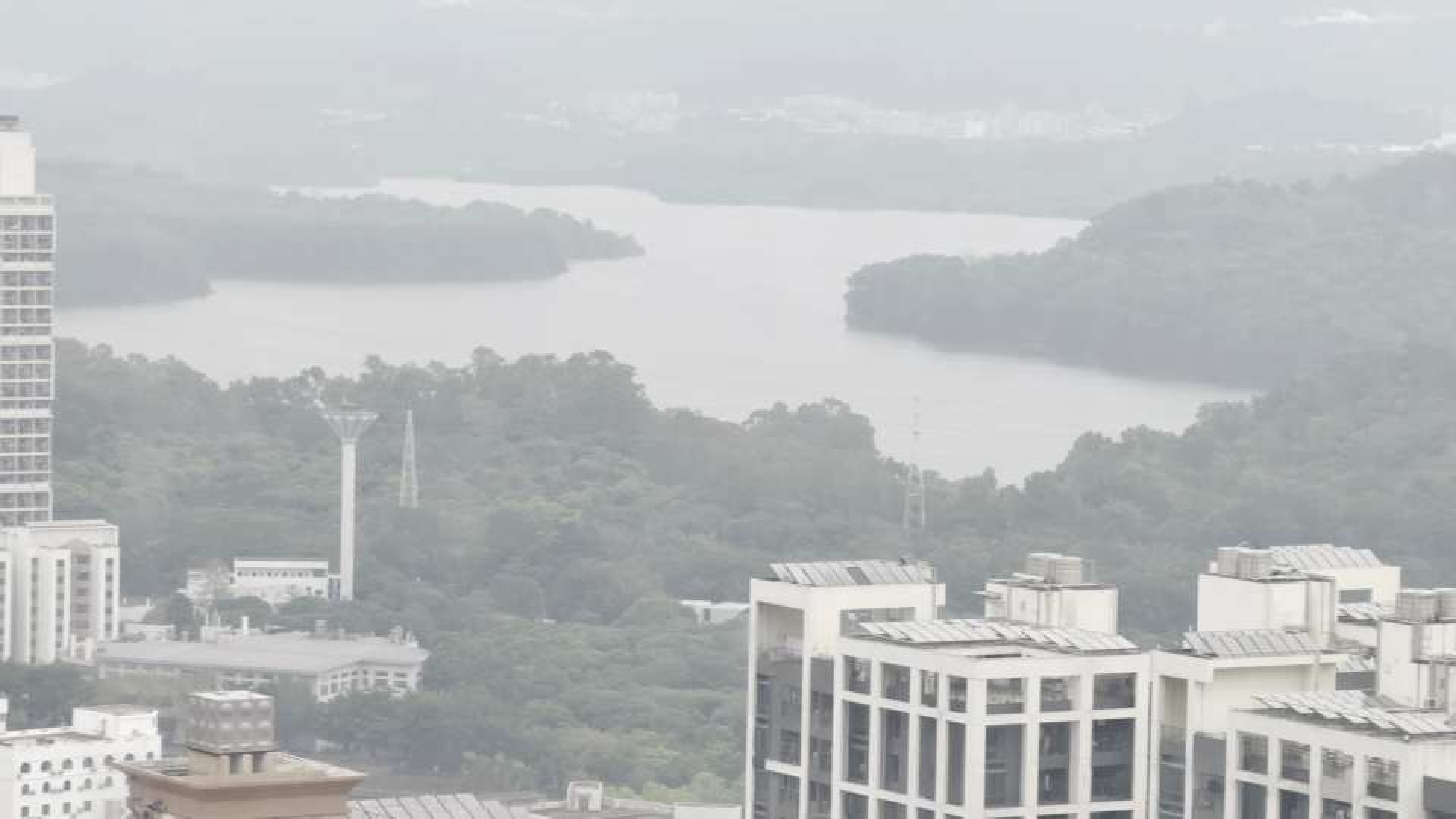}{0.02}{0.05}{0.36}{0.43} &
      \fullroi{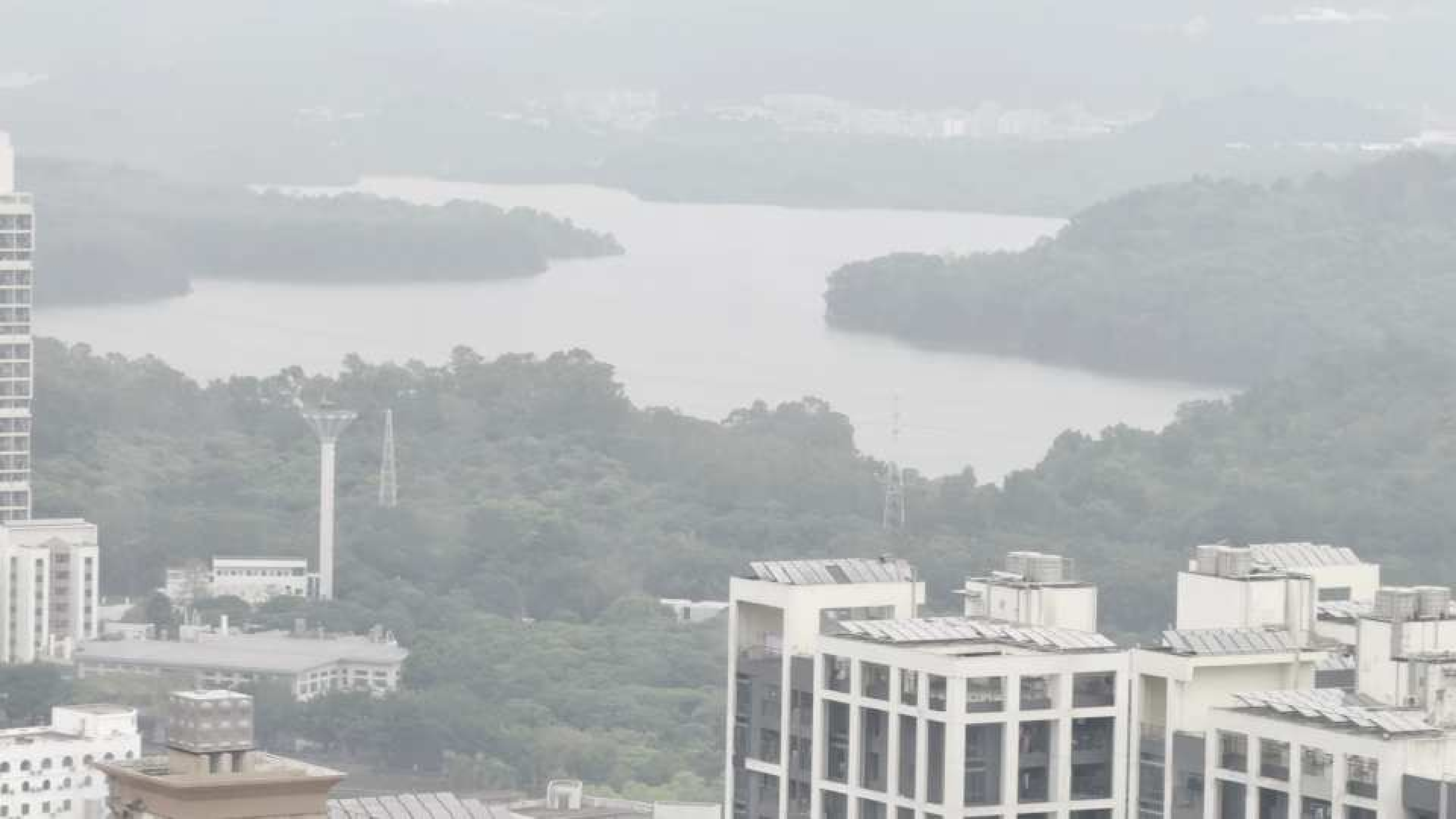}{0.02}{0.05}{0.36}{0.43} &
      \fullroi{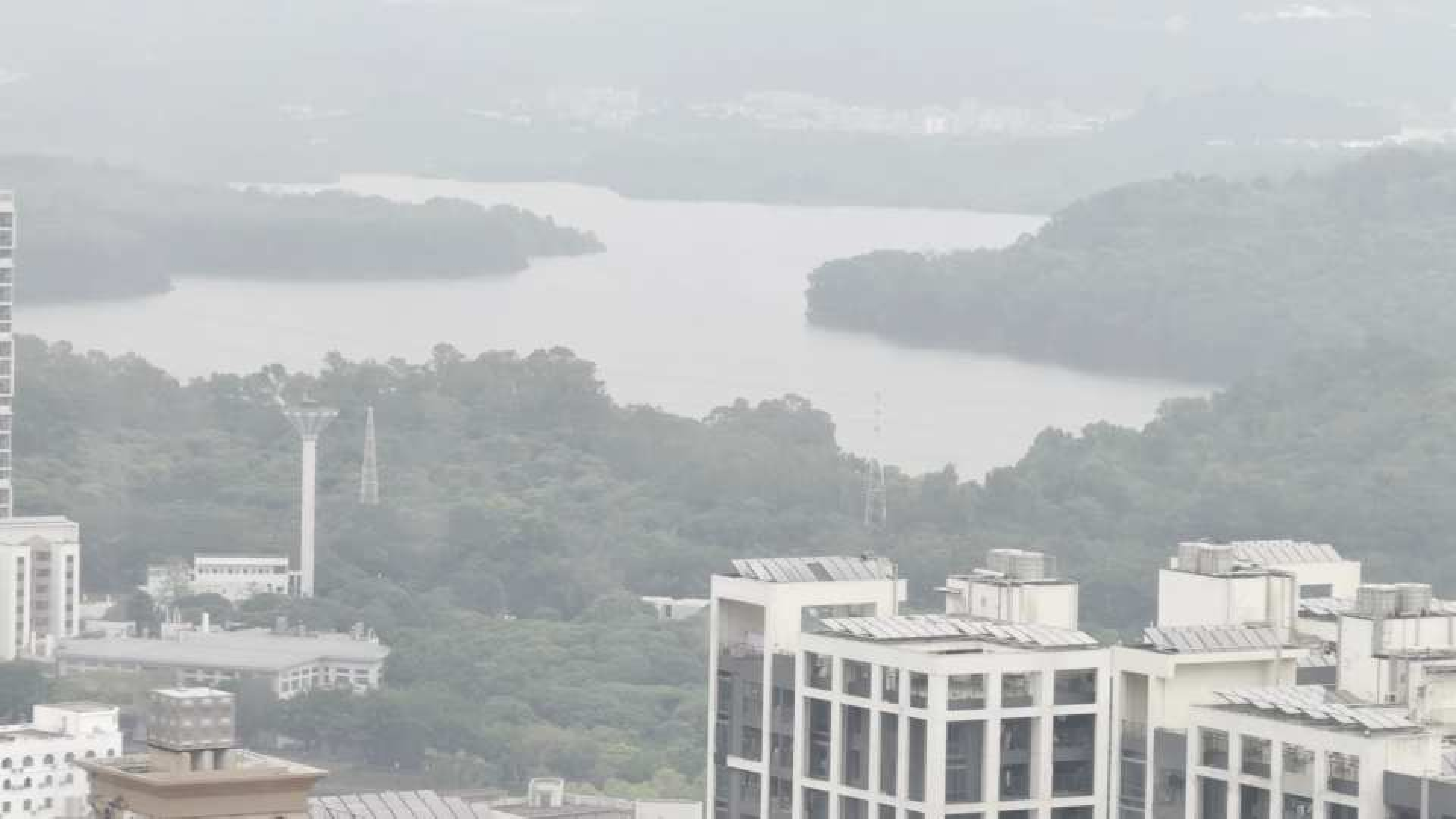}{0.02}{0.05}{0.36}{0.43} &
      \fullroi{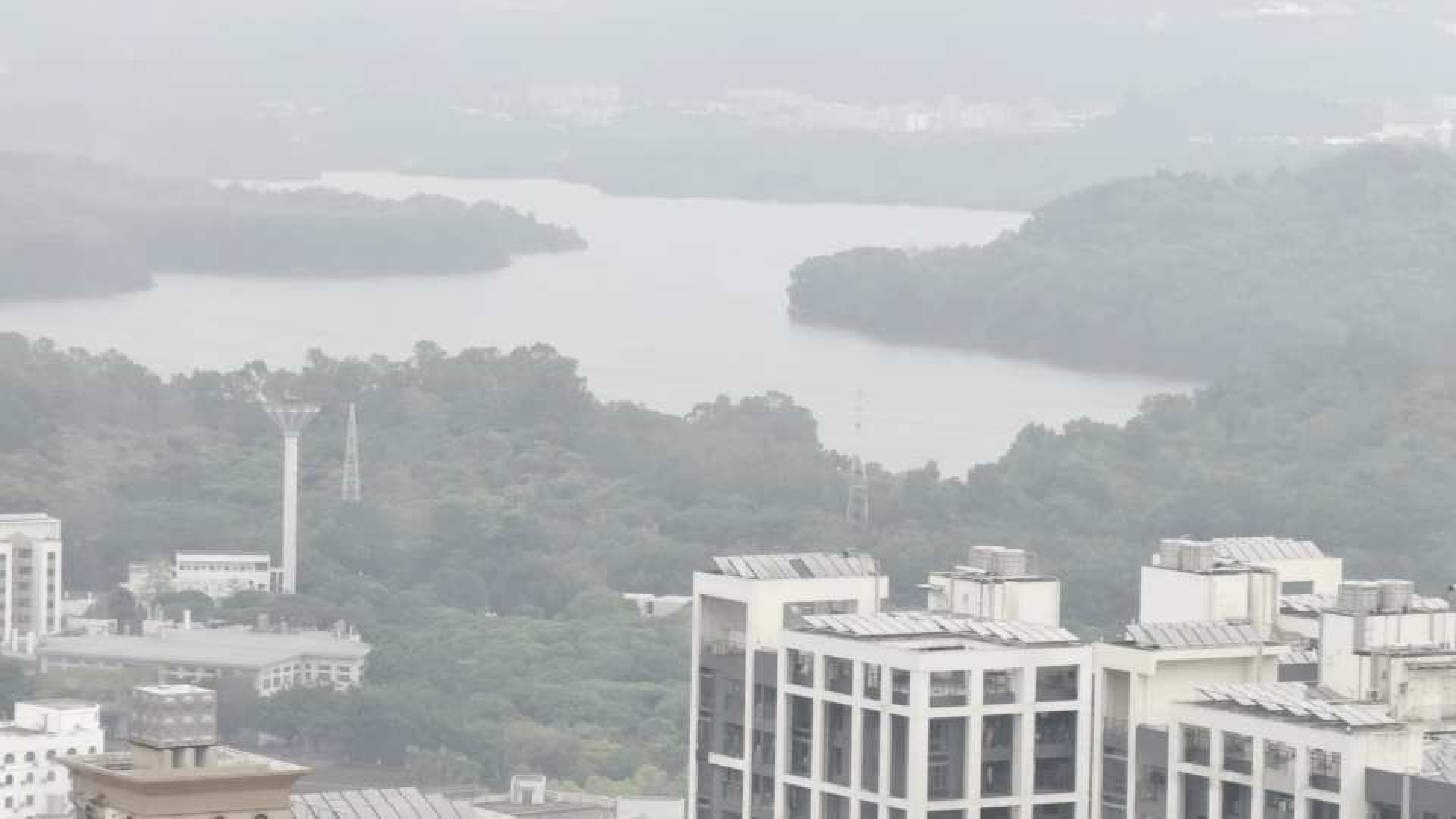}{0.02}{0.05}{0.36}{0.43} &
      \fullroi{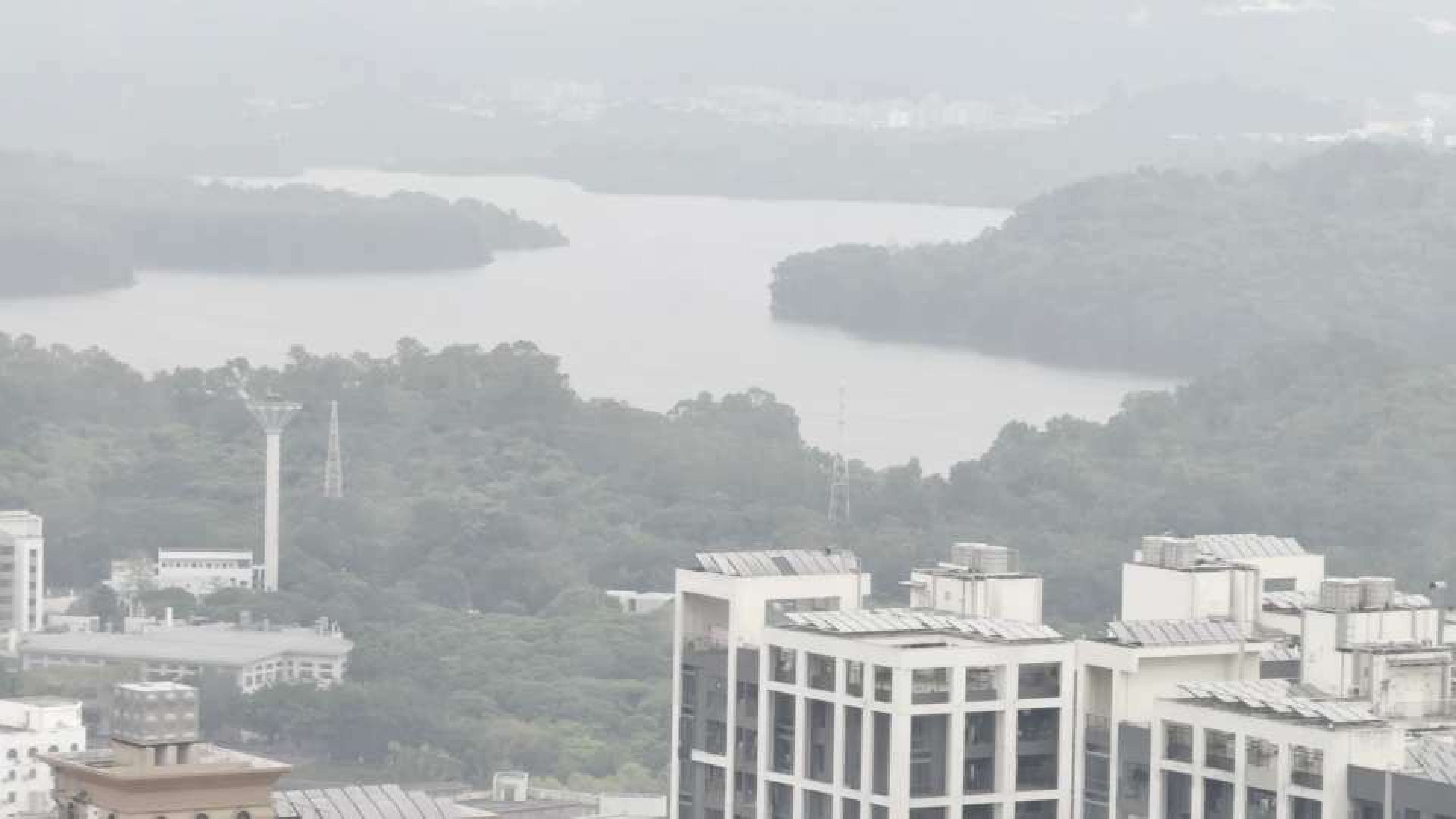}{0.02}{0.05}{0.36}{0.43} \\[0pt]
    \rotatebox{0}{\scriptsize\textbf{(a)}} &
      \zoomcrop{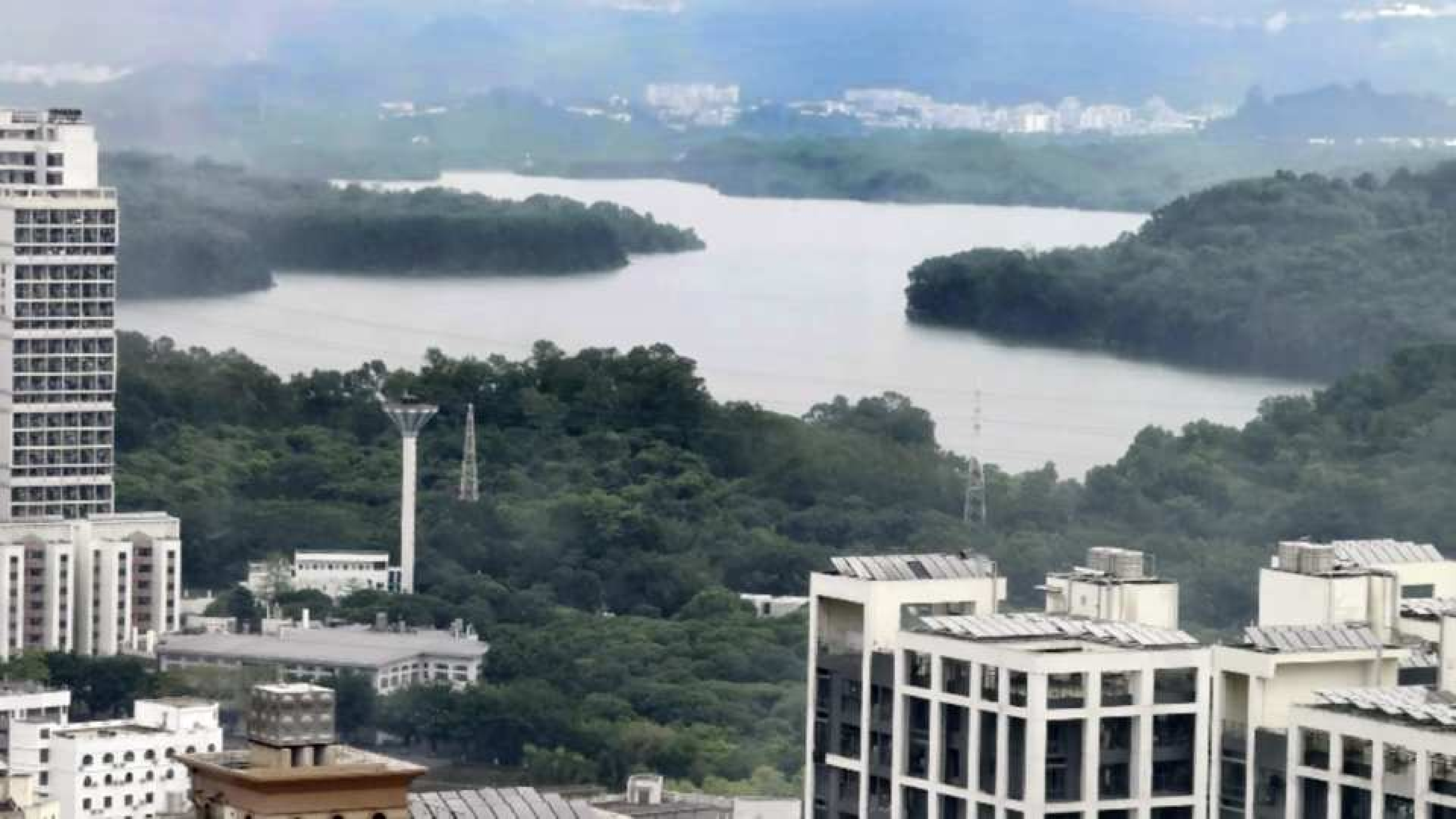}{77 108 2458 1231} &
      \zoomcrop{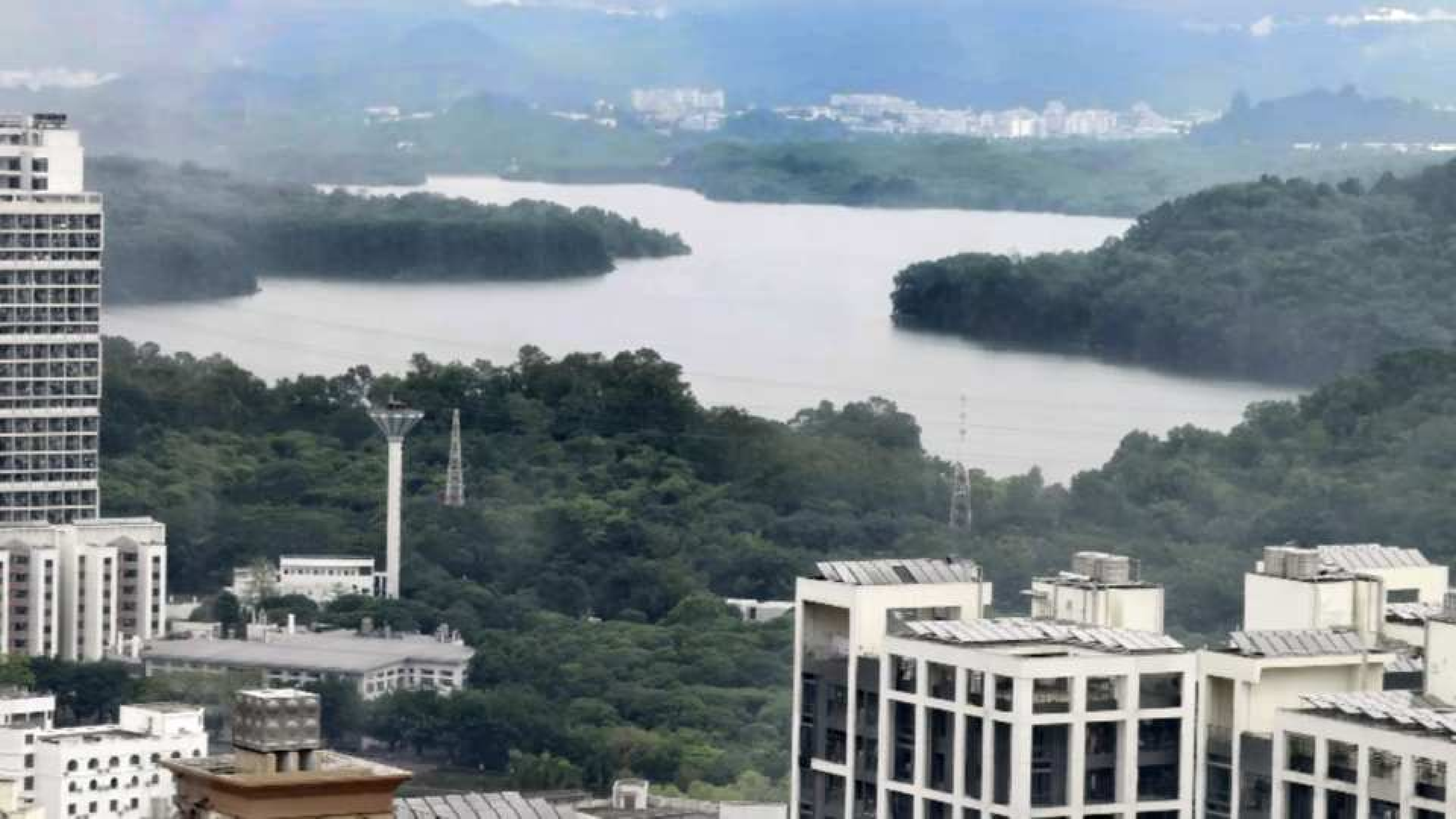}{77 108 2458 1231} &
      \zoomcrop{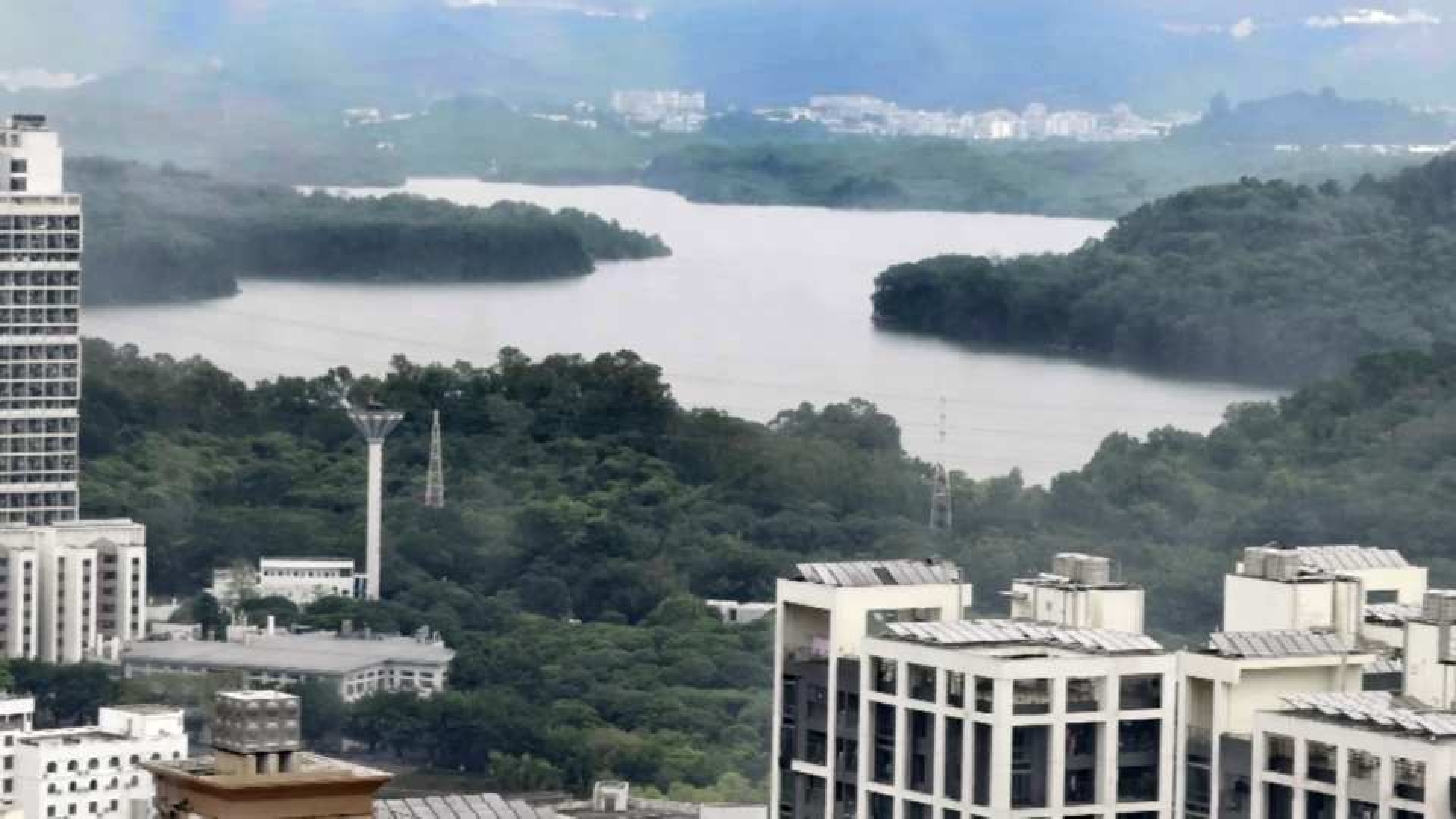}{77 108 2458 1231} &
      \zoomcrop{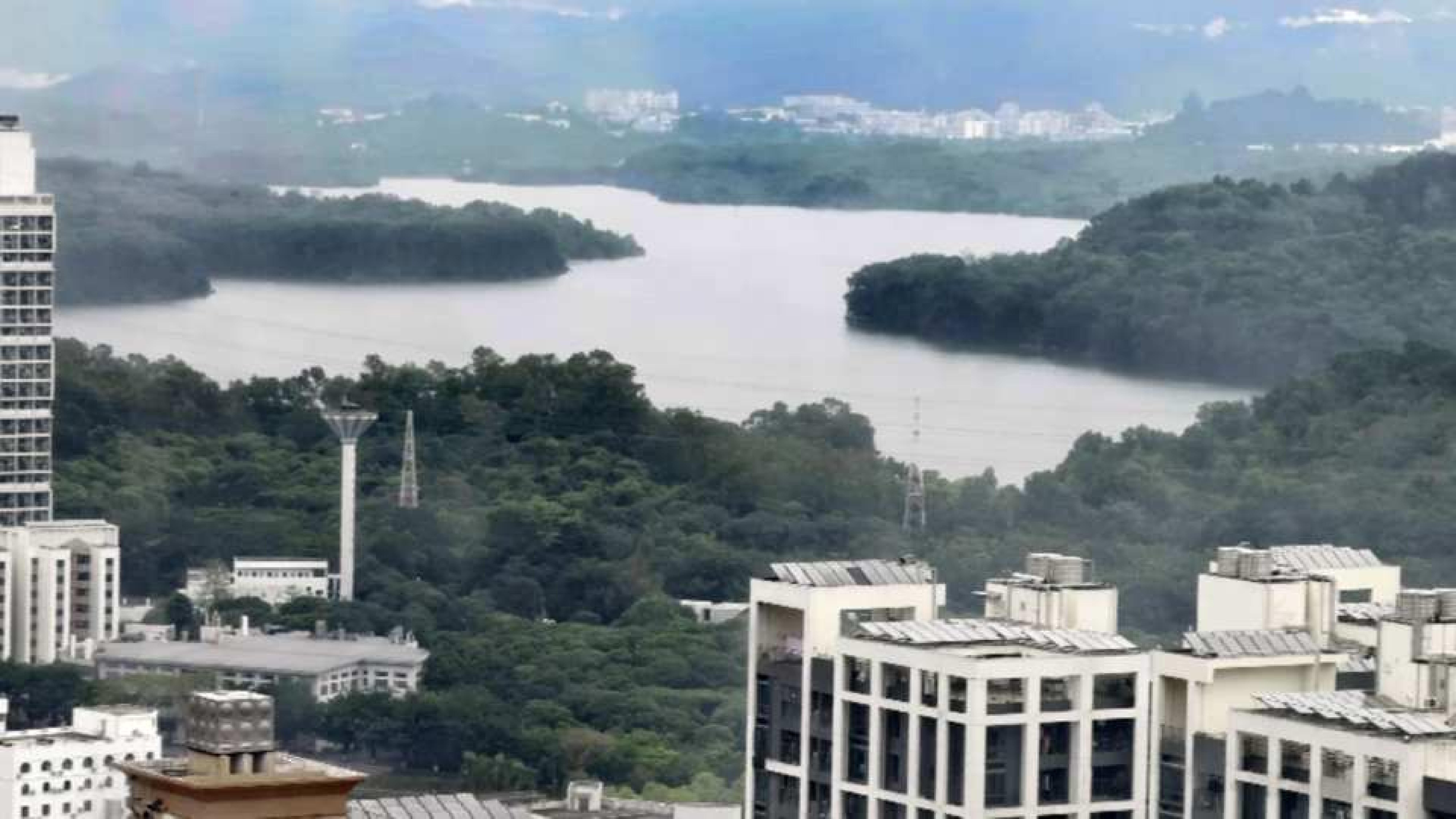}{77 108 2458 1231} &
      \zoomcrop{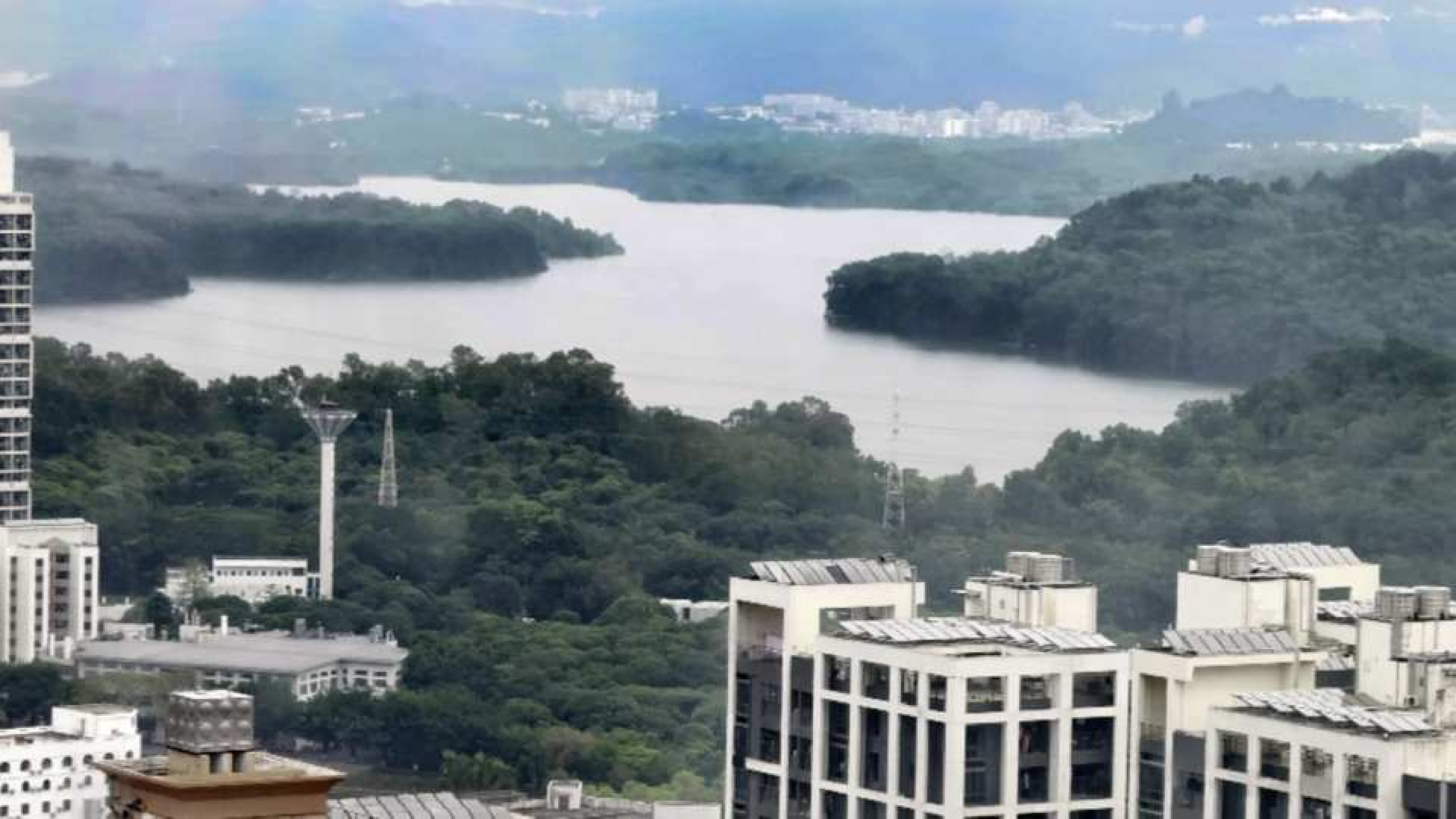}{77 108 2458 1231} &
      \zoomcrop{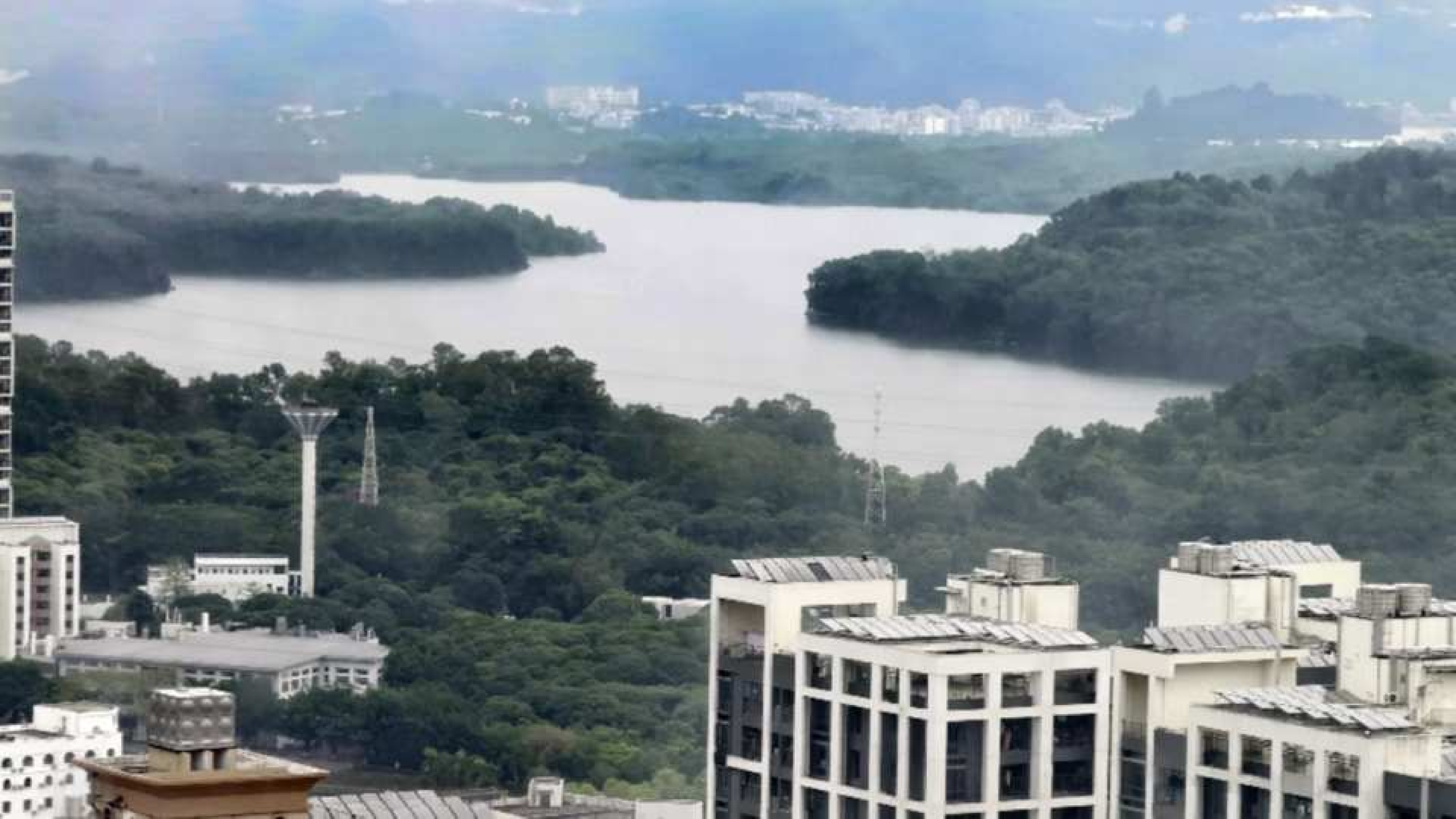}{77 108 2458 1231} &
      \zoomcrop{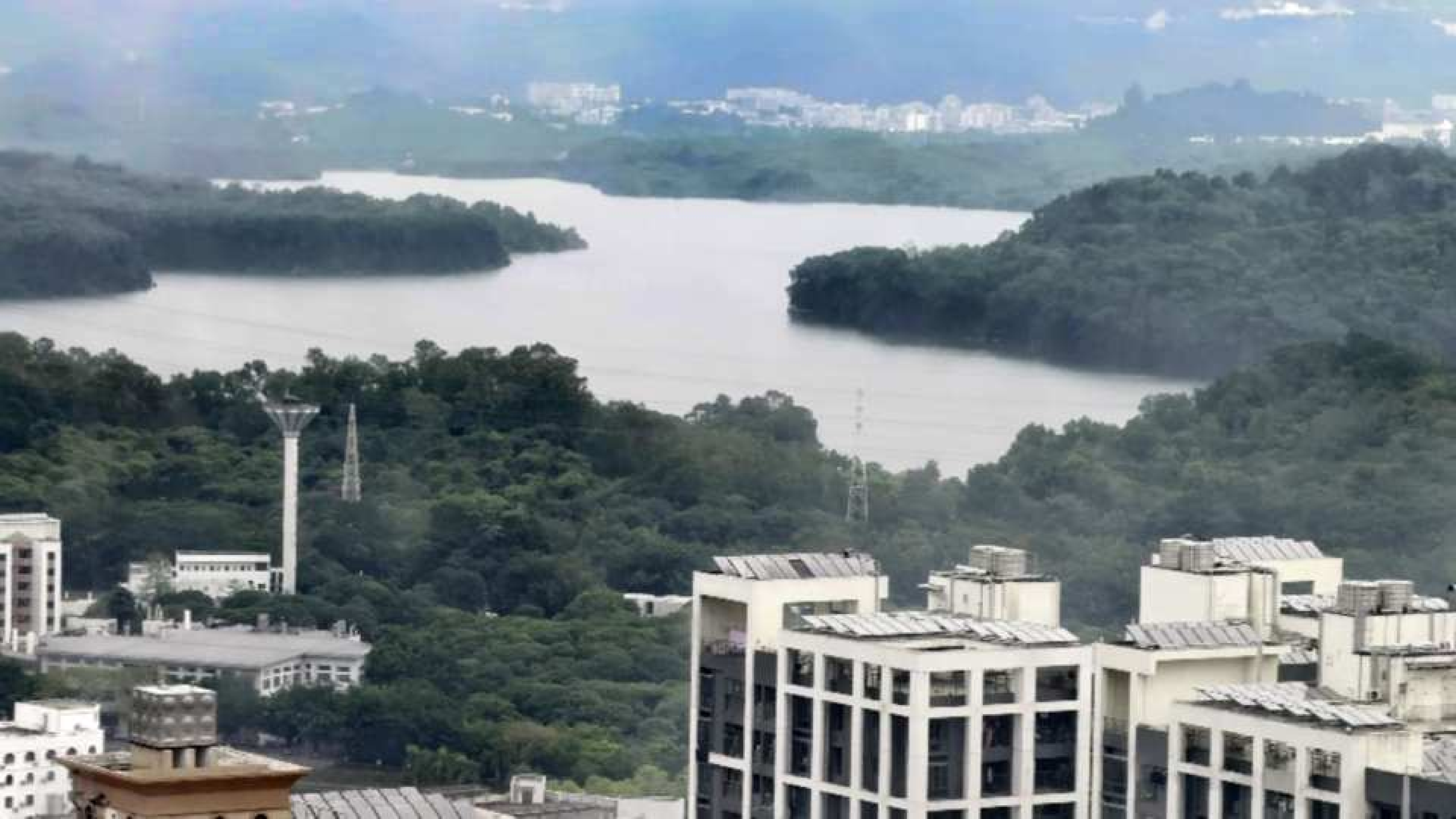}{77 108 2458 1231} &
      \zoomcrop{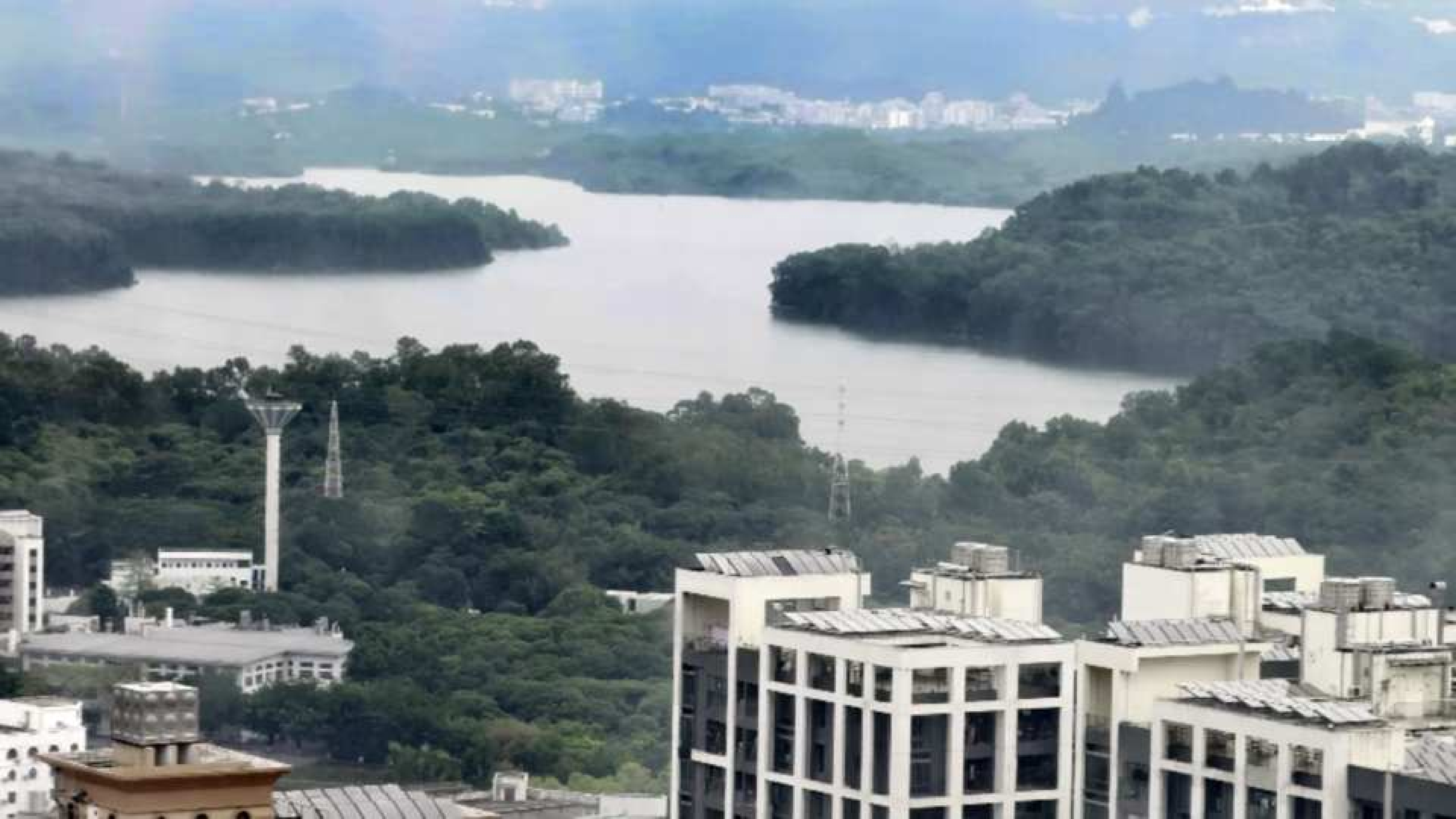}{77 108 2458 1231} \\[0pt]
    \rotatebox{0}{\scriptsize\textbf{(b)}} &
      \zoomcrop{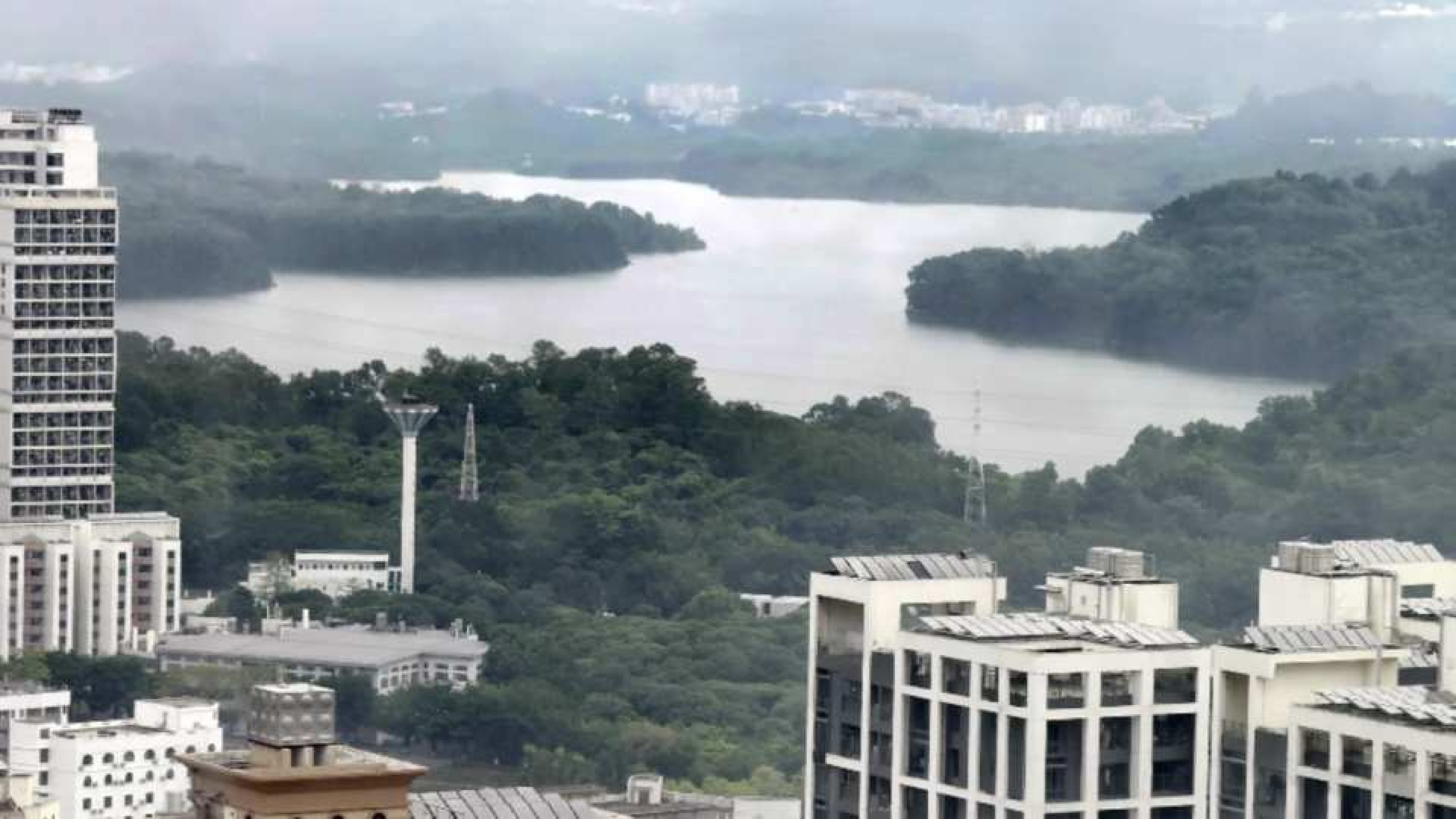}{77 108 2458 1231} &
      \zoomcrop{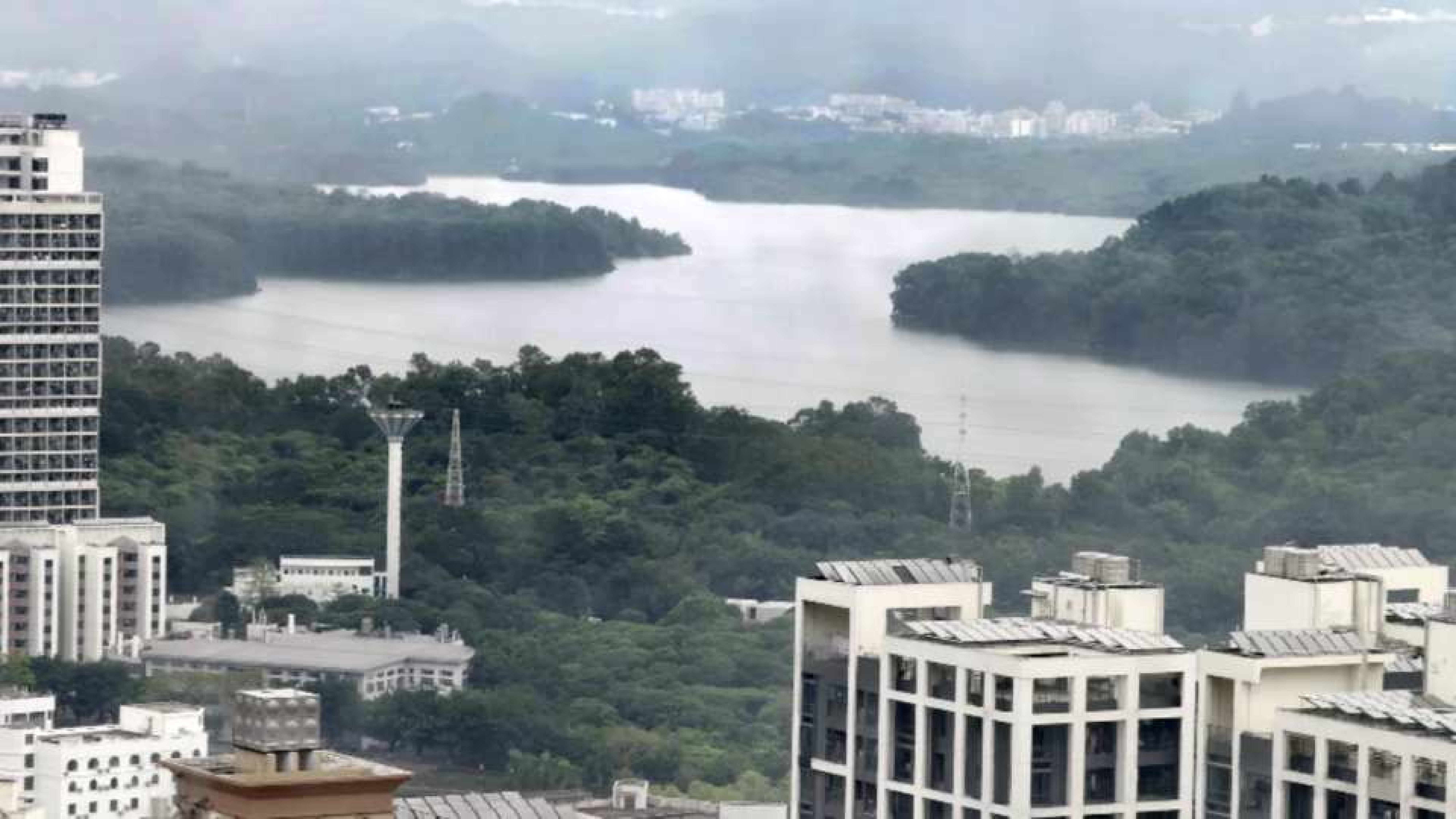}{77 108 2458 1231} &
      \zoomcrop{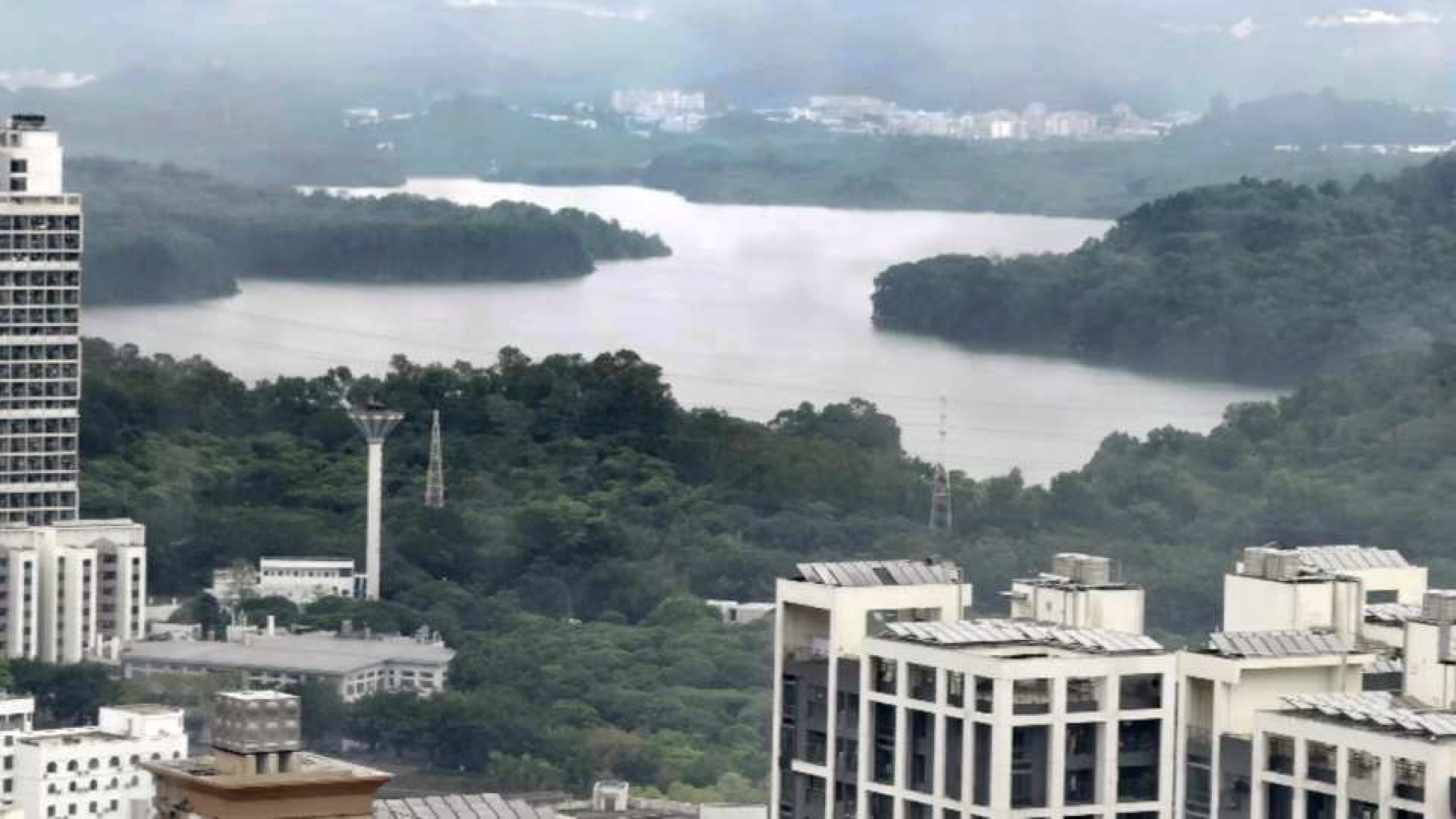}{77 108 2458 1231} &
      \zoomcrop{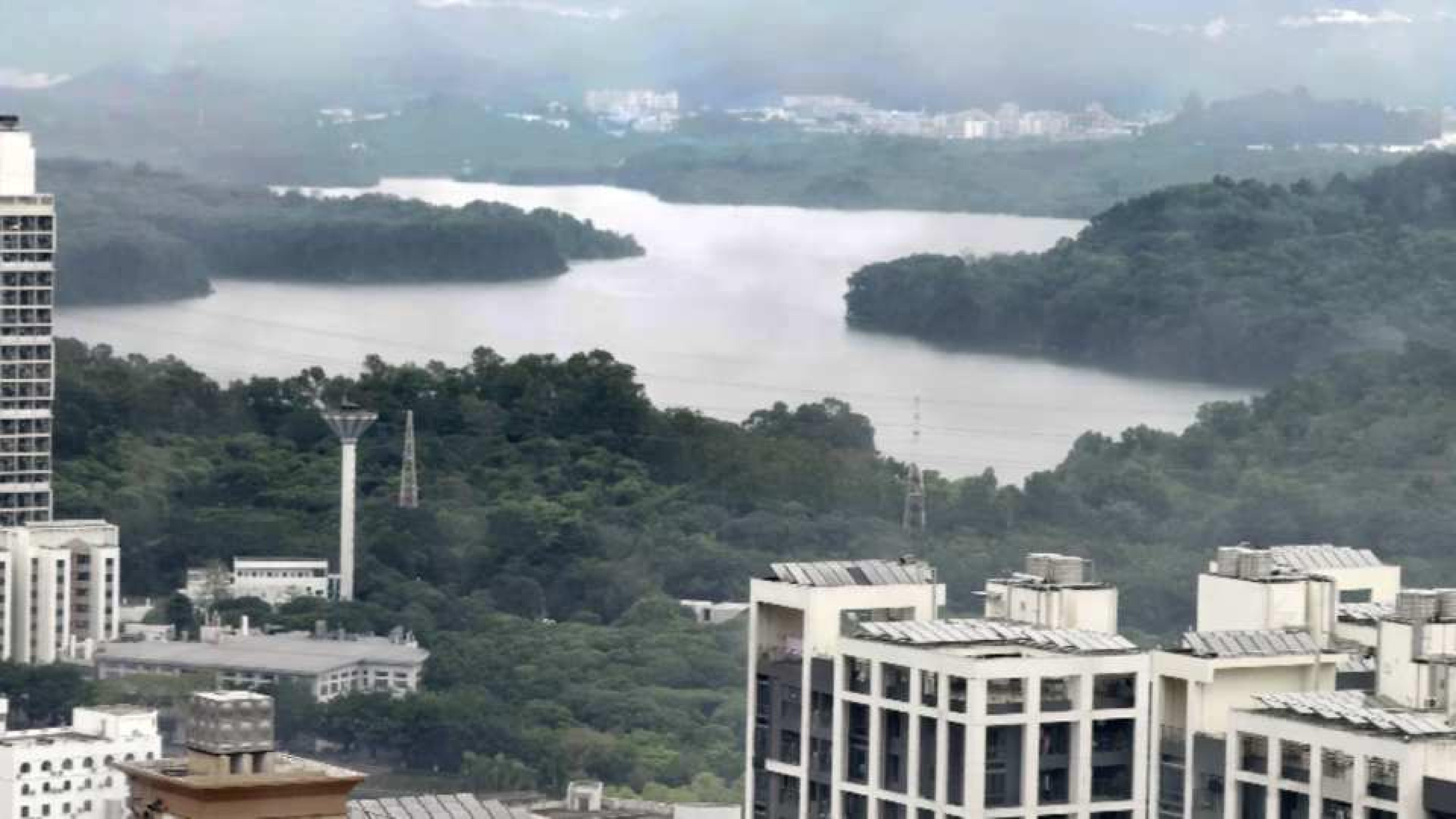}{77 108 2458 1231} &
      \zoomcrop{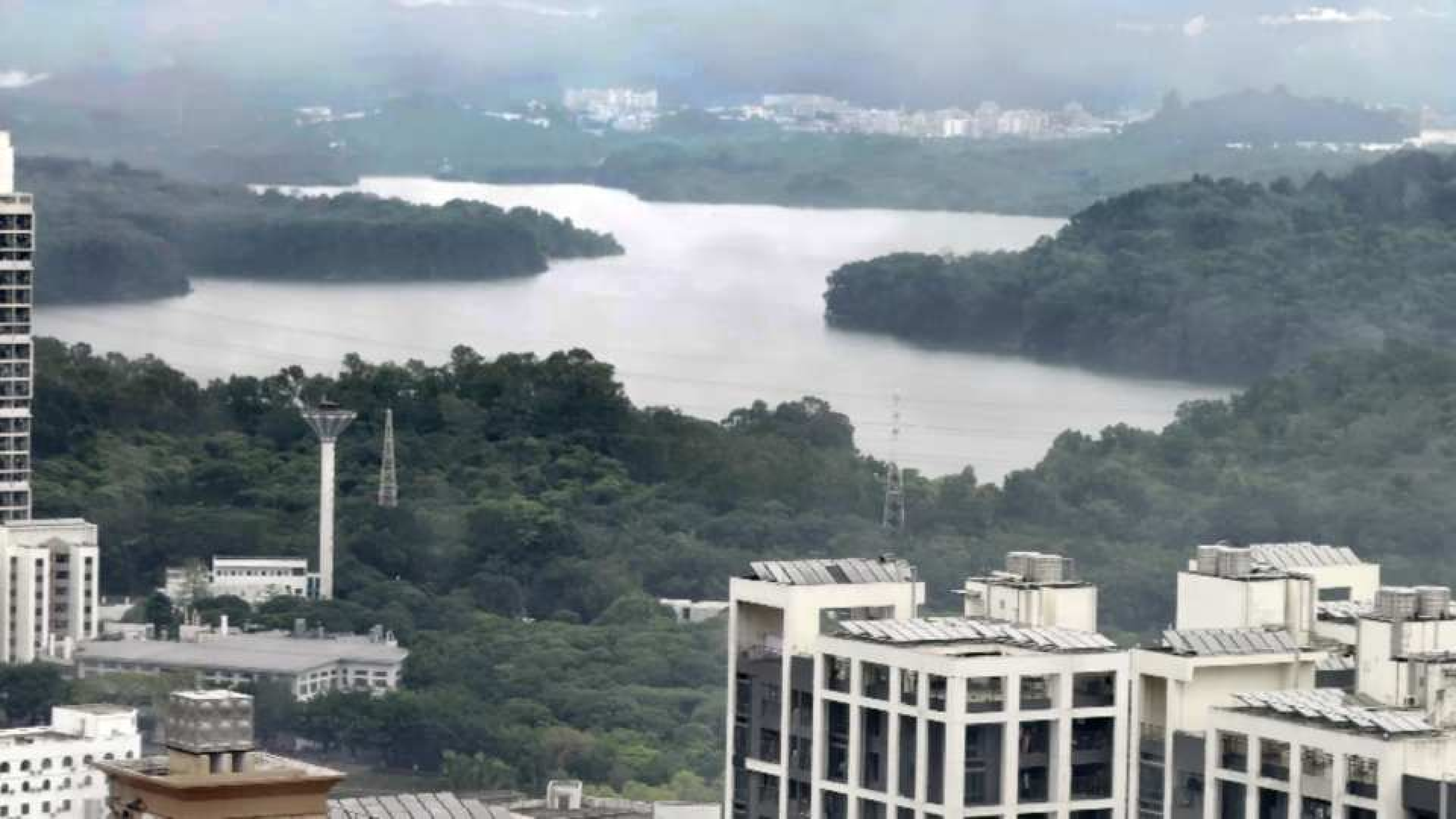}{77 108 2458 1231} &
      \zoomcrop{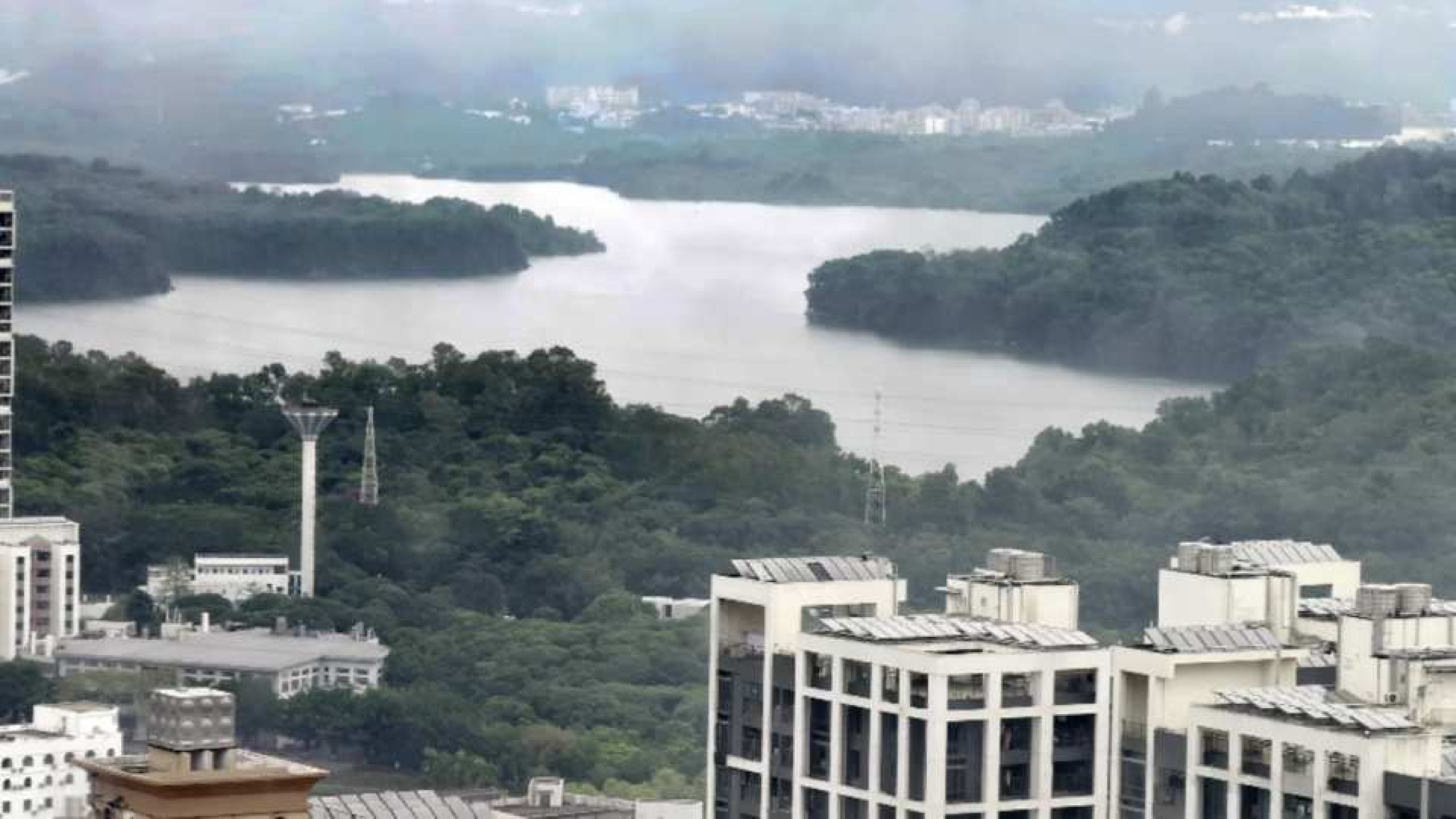}{77 108 2458 1231} &
      \zoomcrop{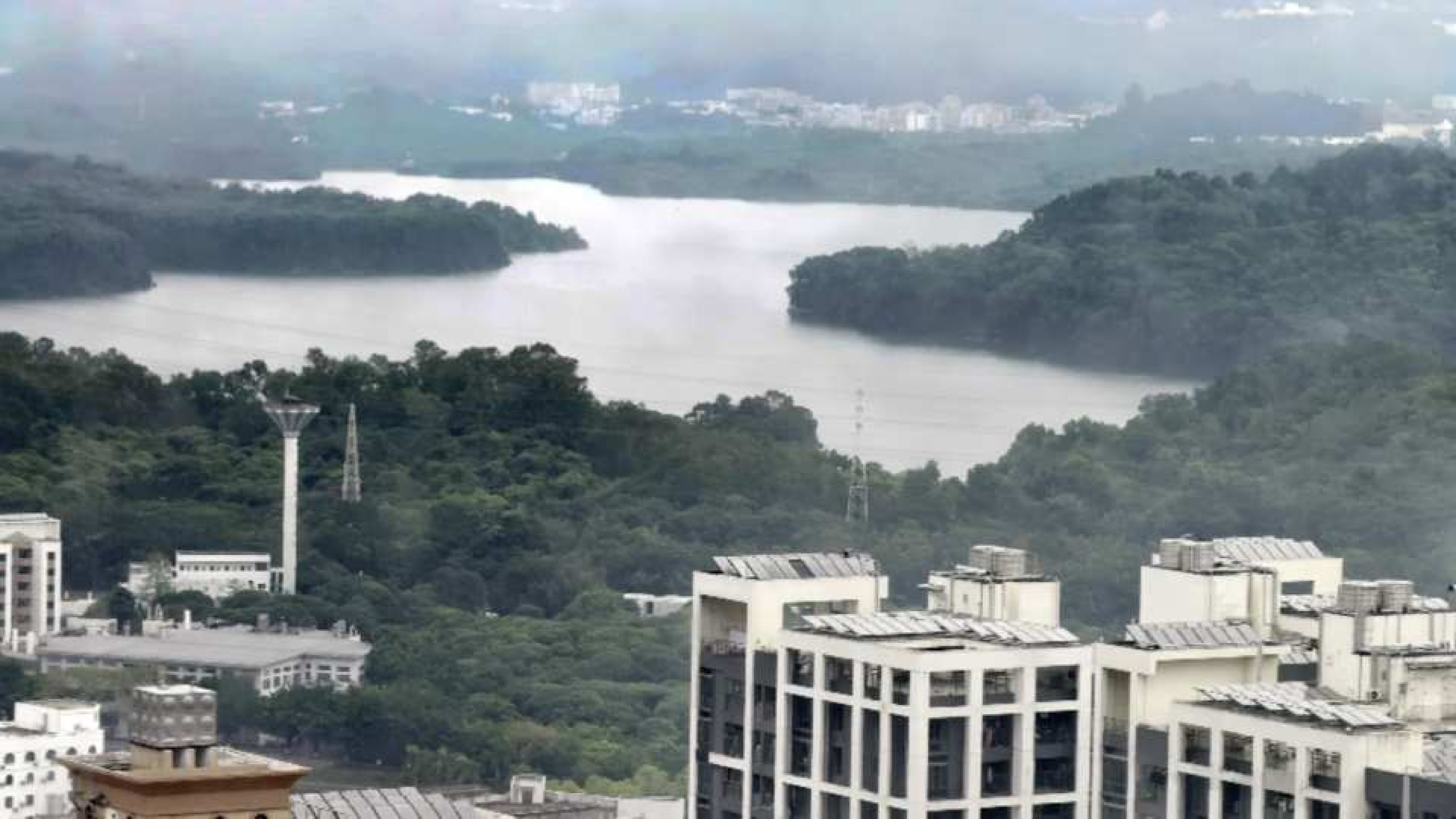}{77 108 2458 1231} &
      \zoomcrop{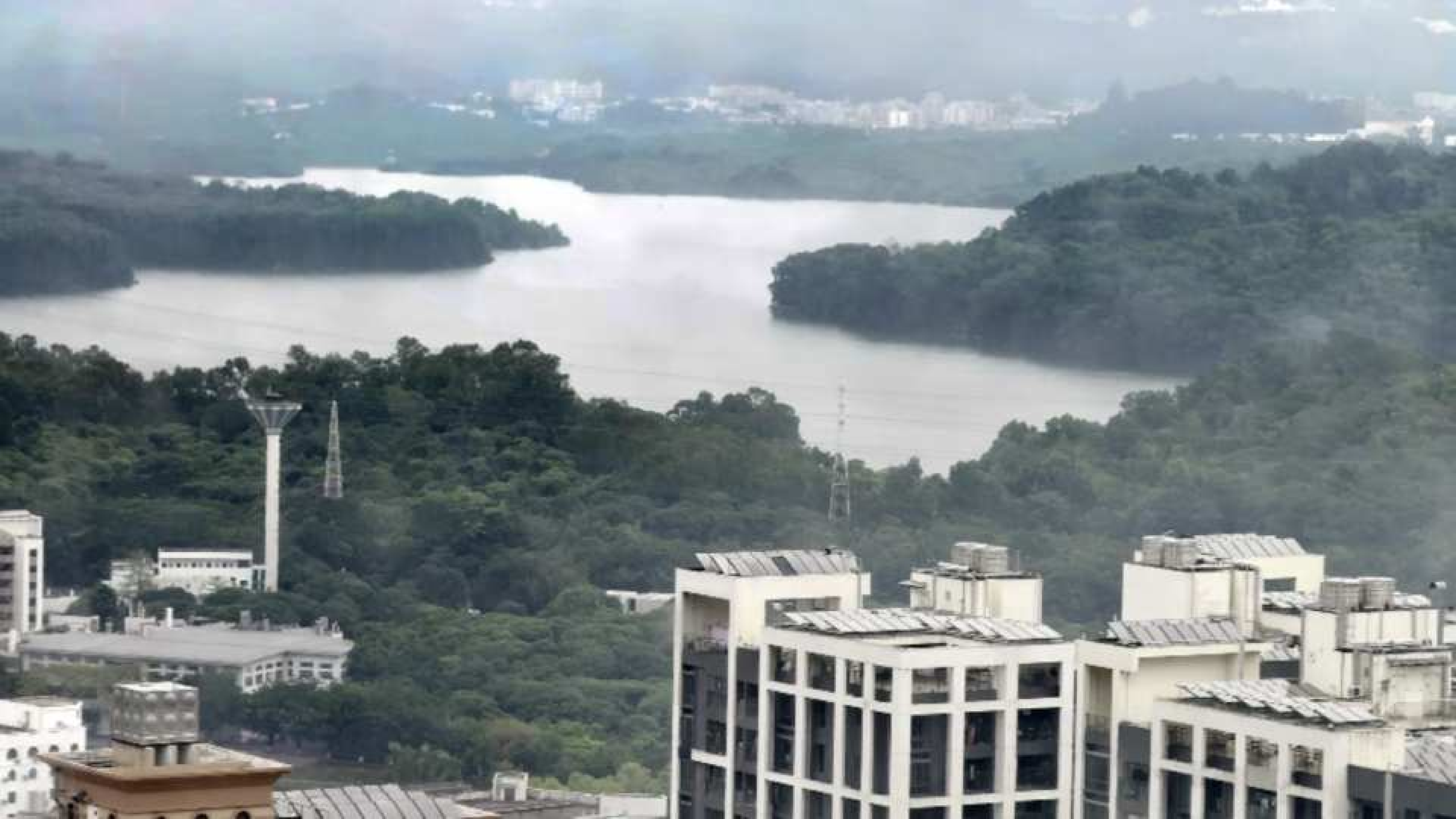}{77 108 2458 1231} \\[0pt]
    \rotatebox{0}{\scriptsize\textbf{(c)}} &
      \zoomcrop{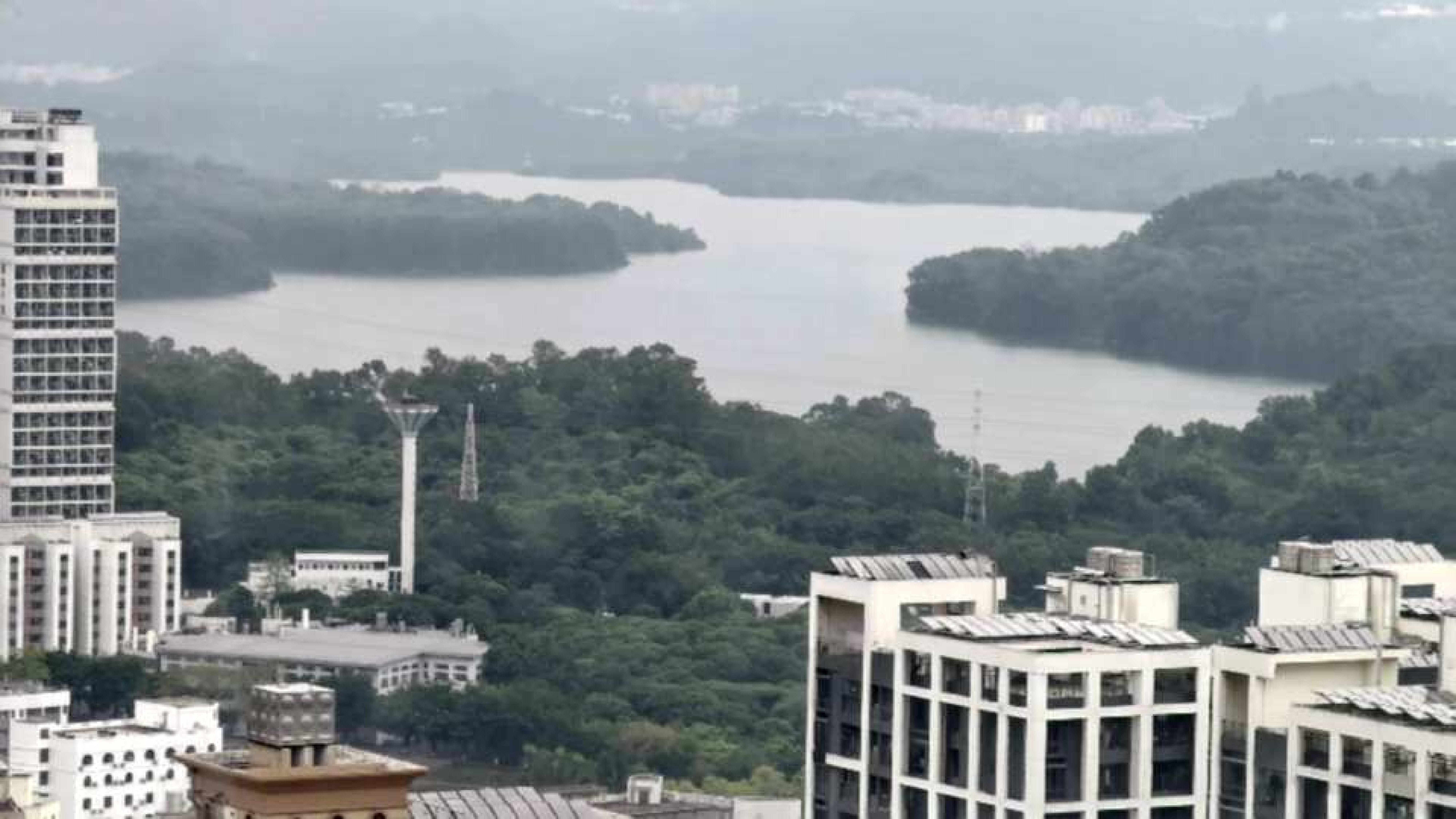}{77 108 2458 1231} &
      \zoomcrop{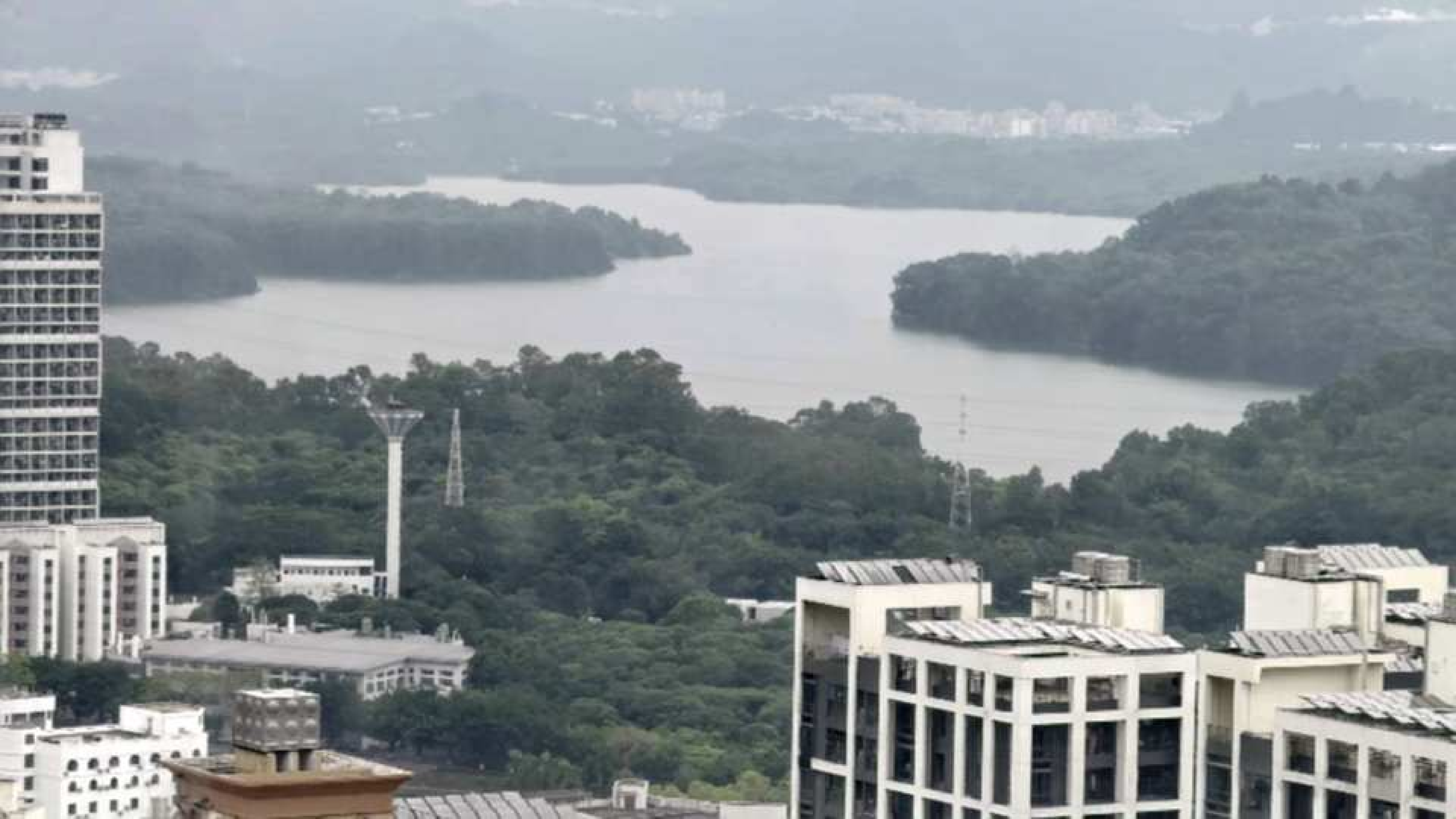}{77 108 2458 1231} &
      \zoomcrop{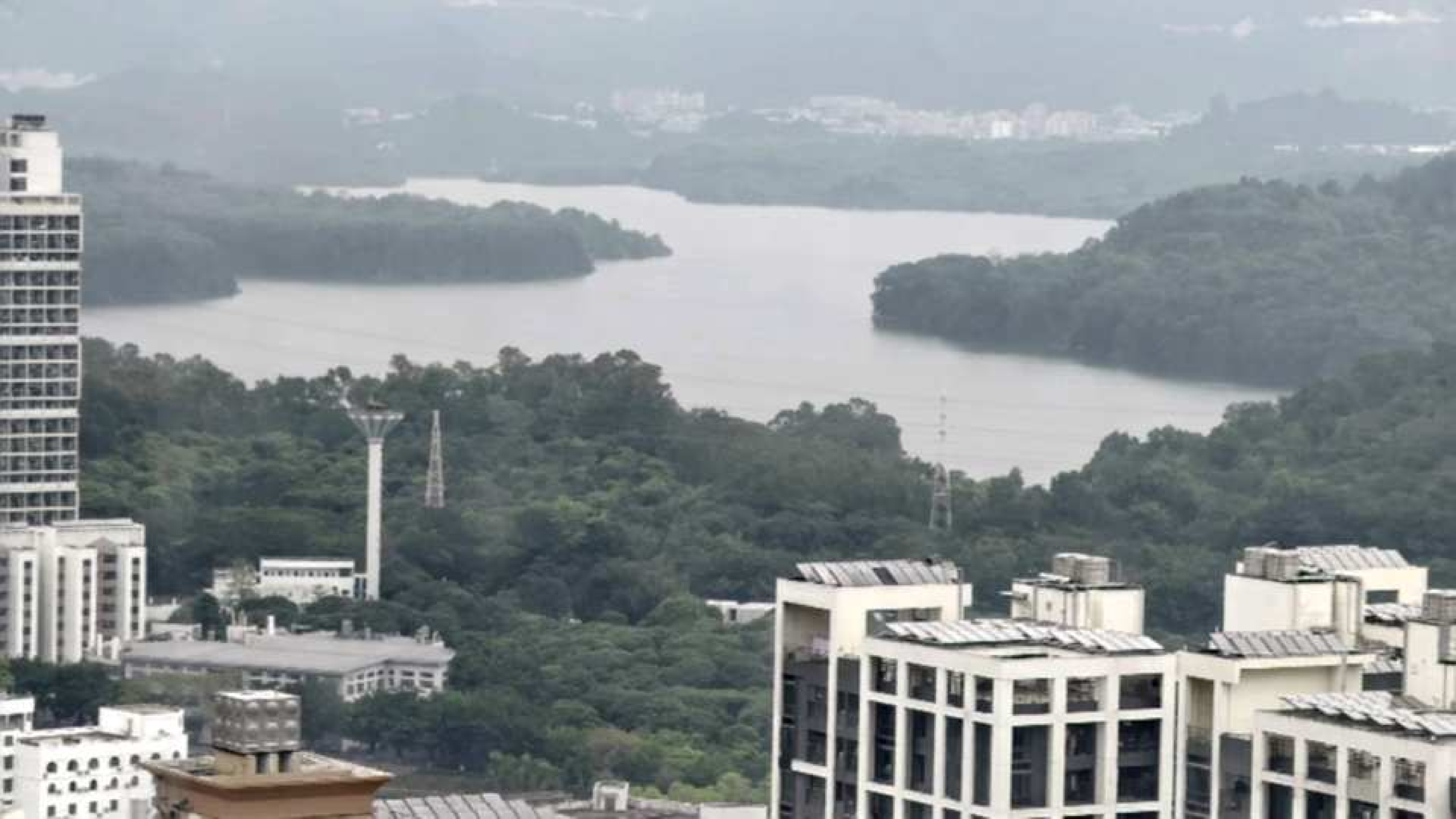}{77 108 2458 1231} &
      \zoomcrop{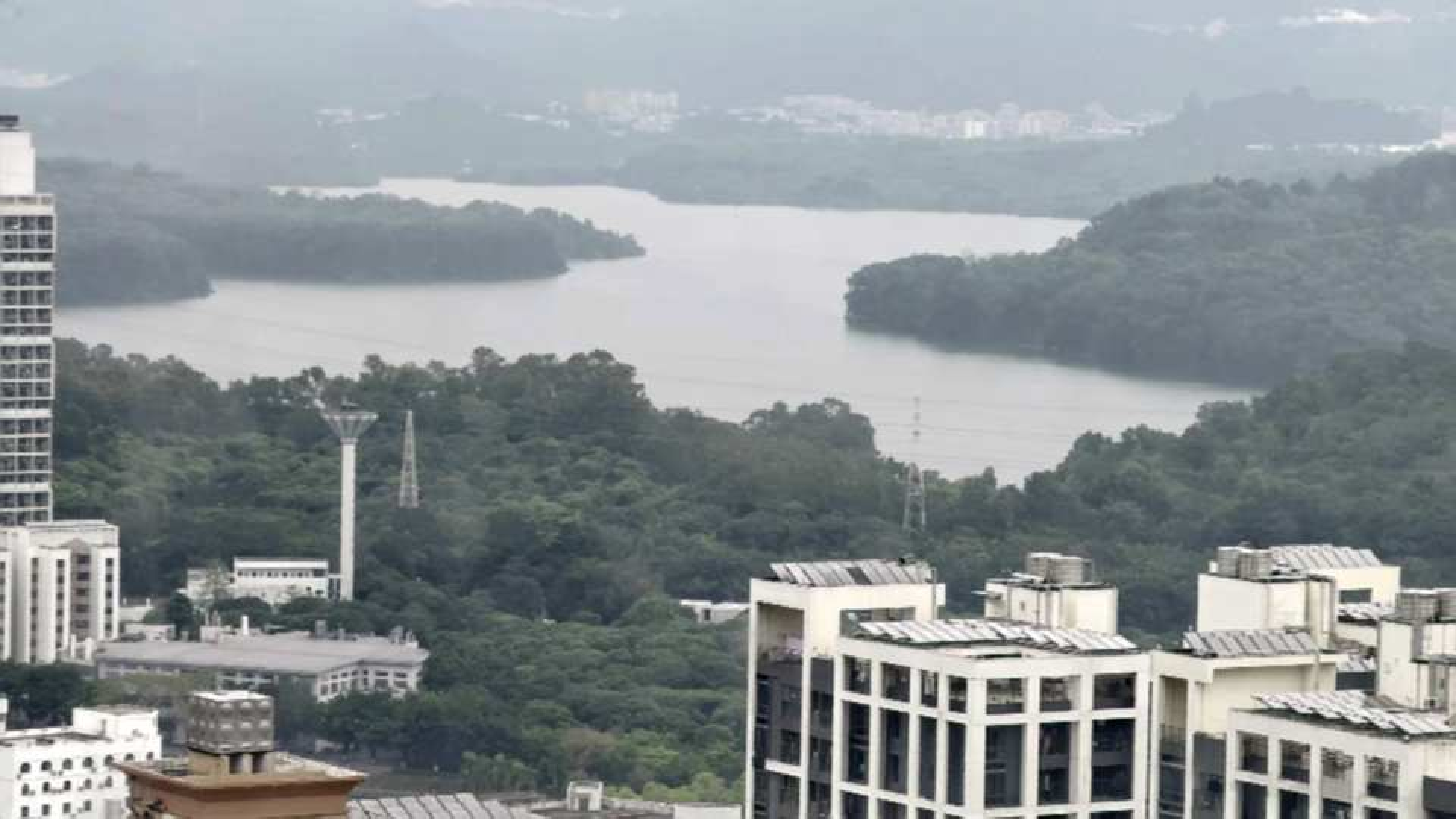}{77 108 2458 1231} &
      \zoomcrop{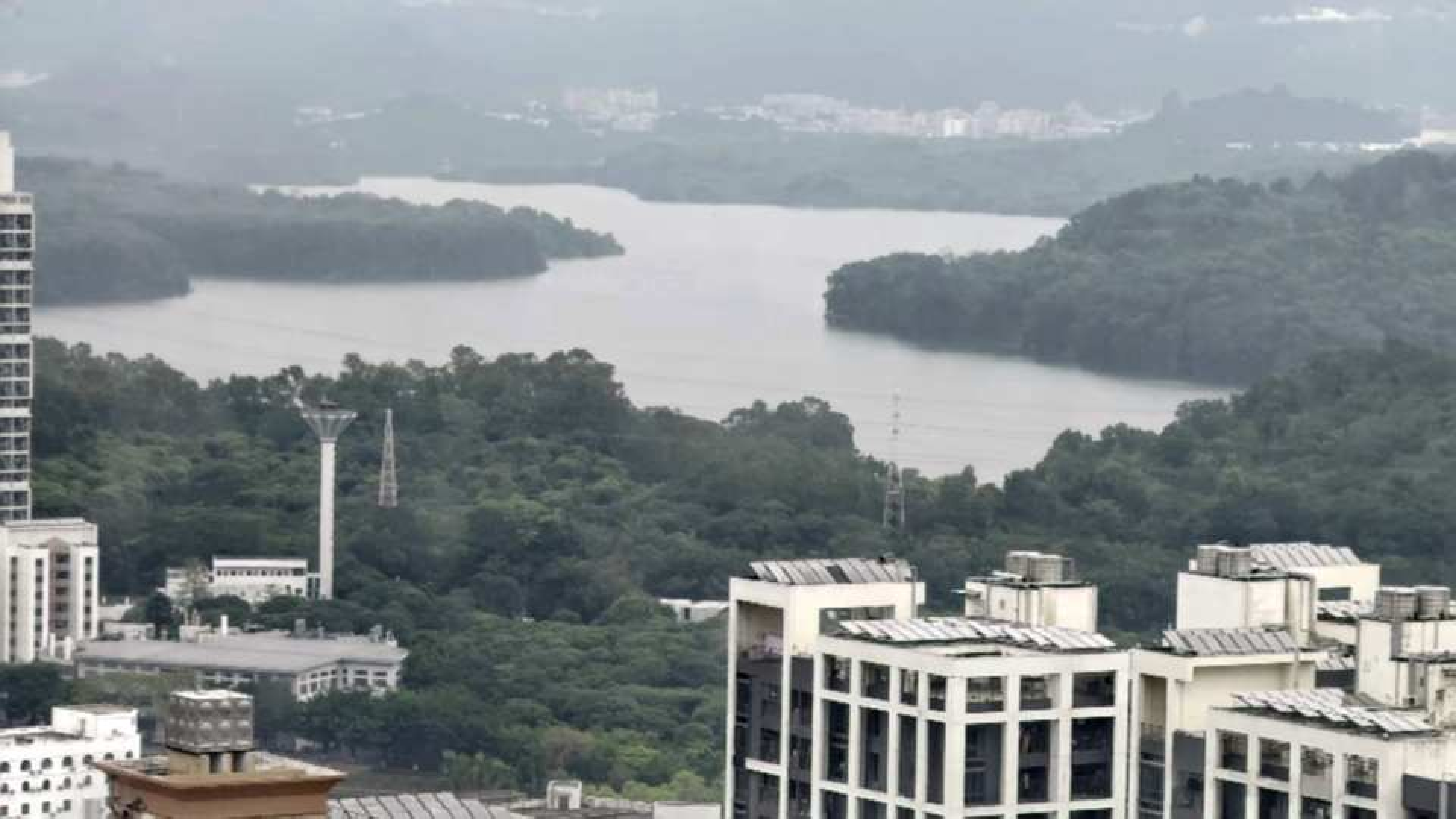}{77 108 2458 1231} &
      \zoomcrop{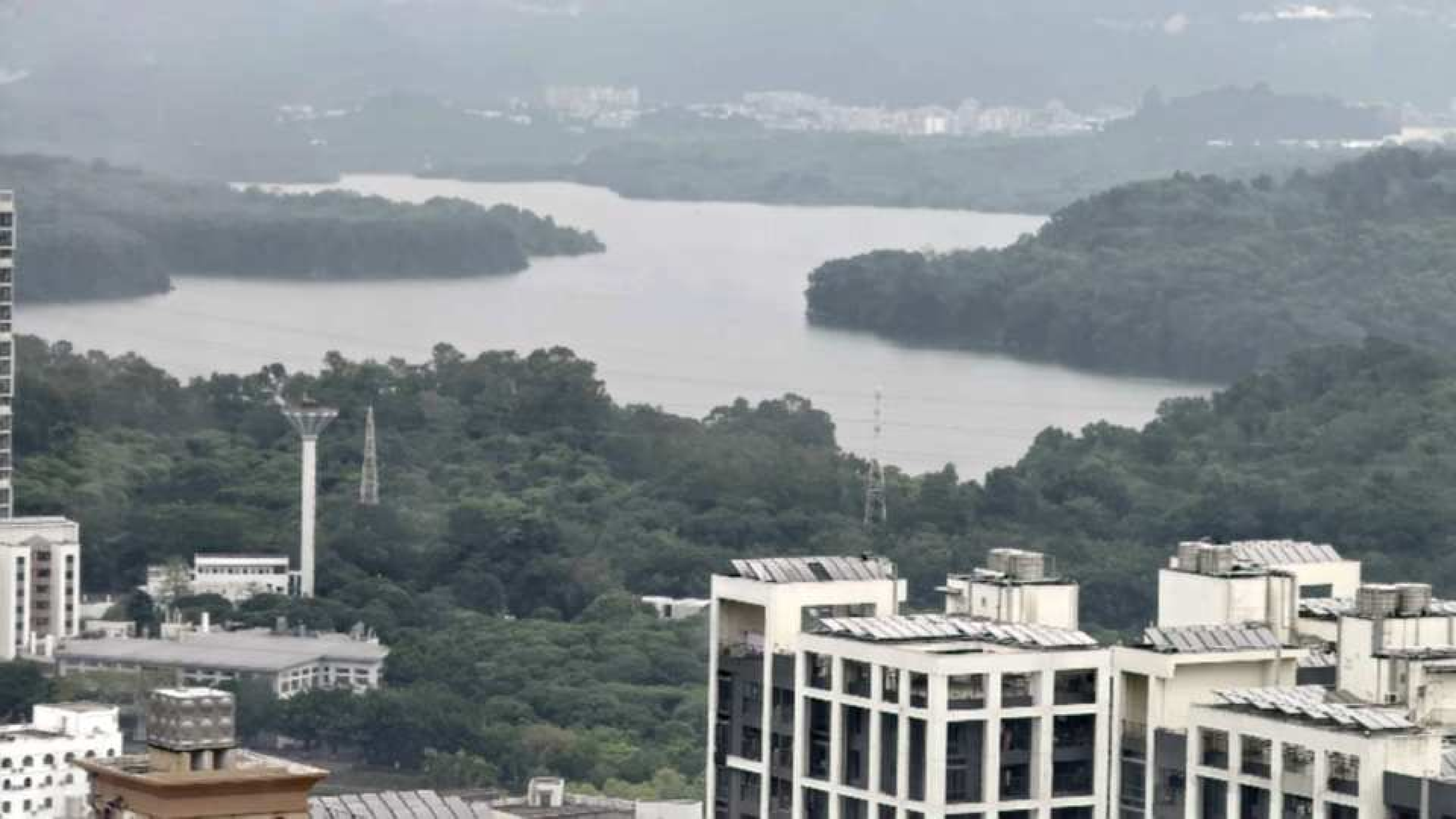}{77 108 2458 1231} &
      \zoomcrop{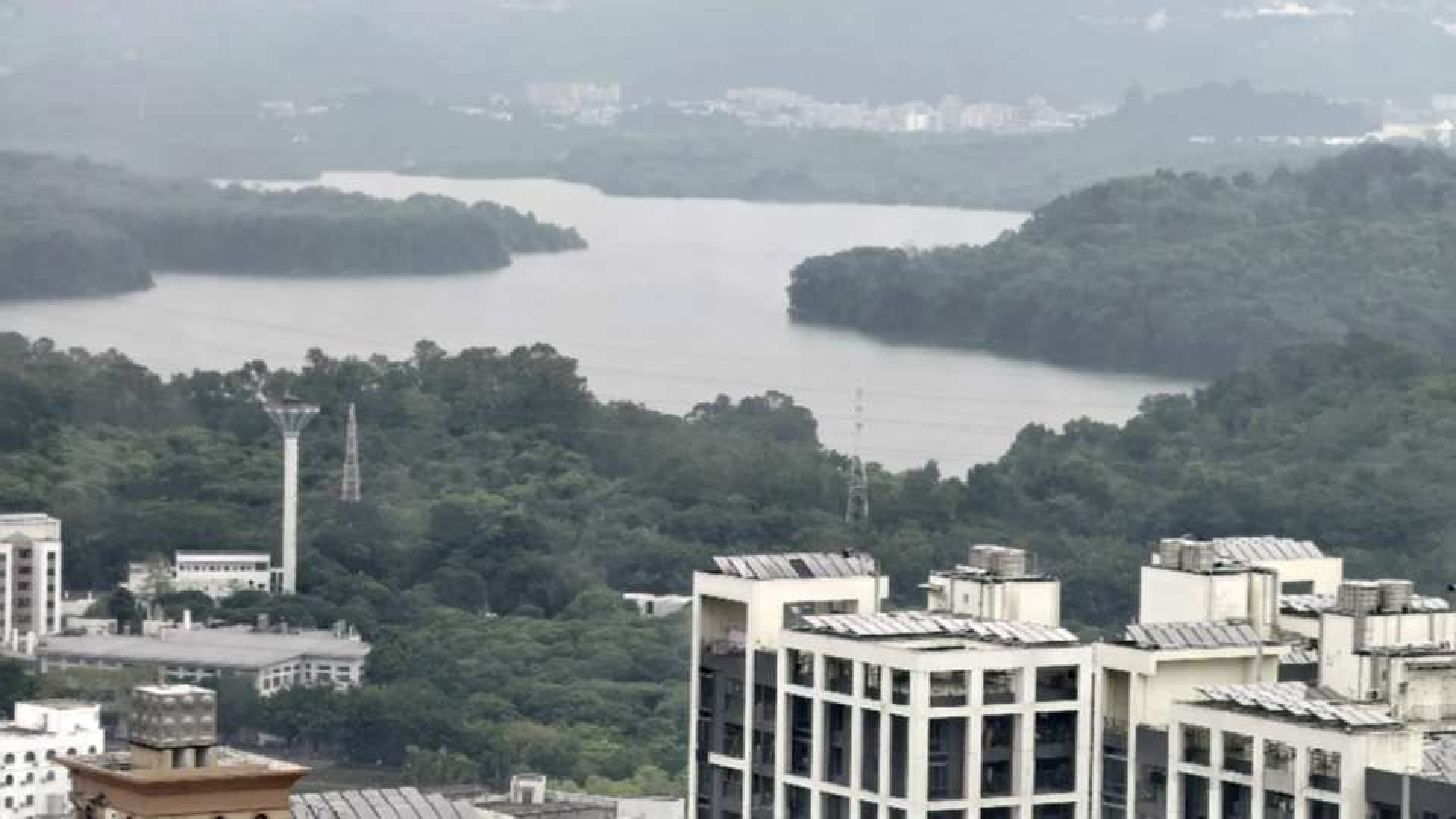}{77 108 2458 1231} &
      \zoomcrop{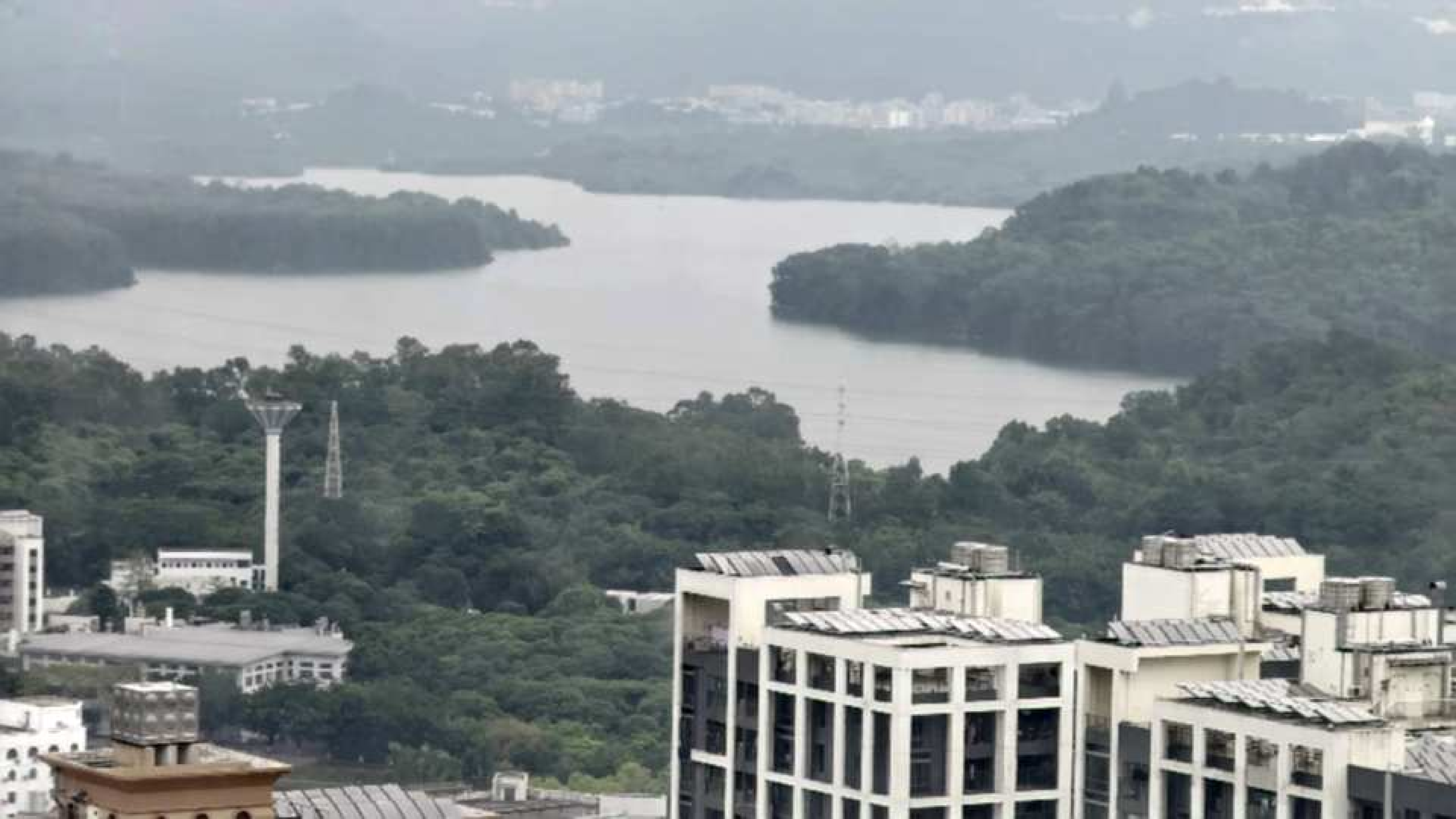}{77 108 2458 1231} \\[0pt]
    \rotatebox{0}{\scriptsize\textbf{(d)}} &
      \zoomcrop{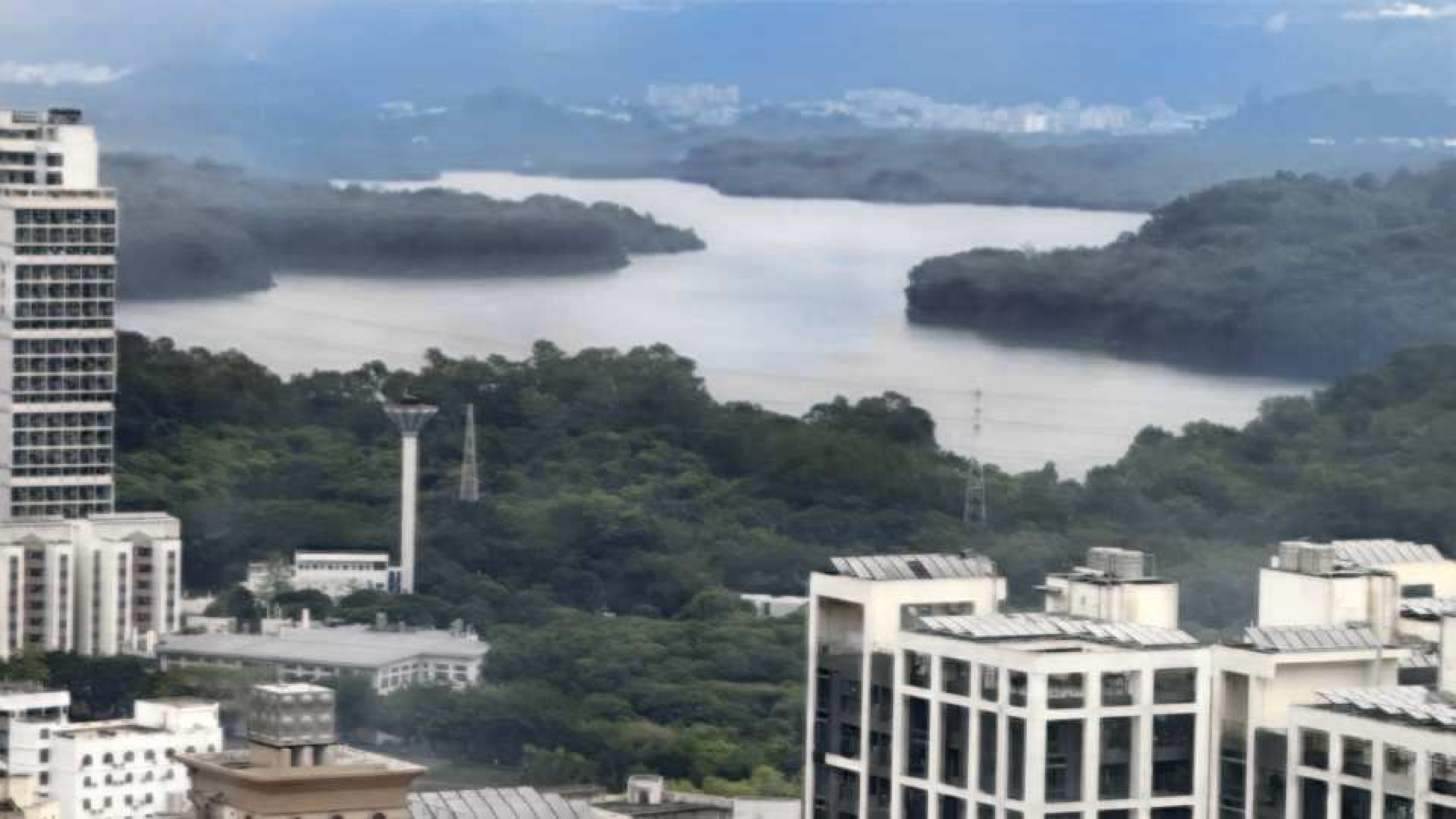}{77 108 2458 1231} &
      \zoomcrop{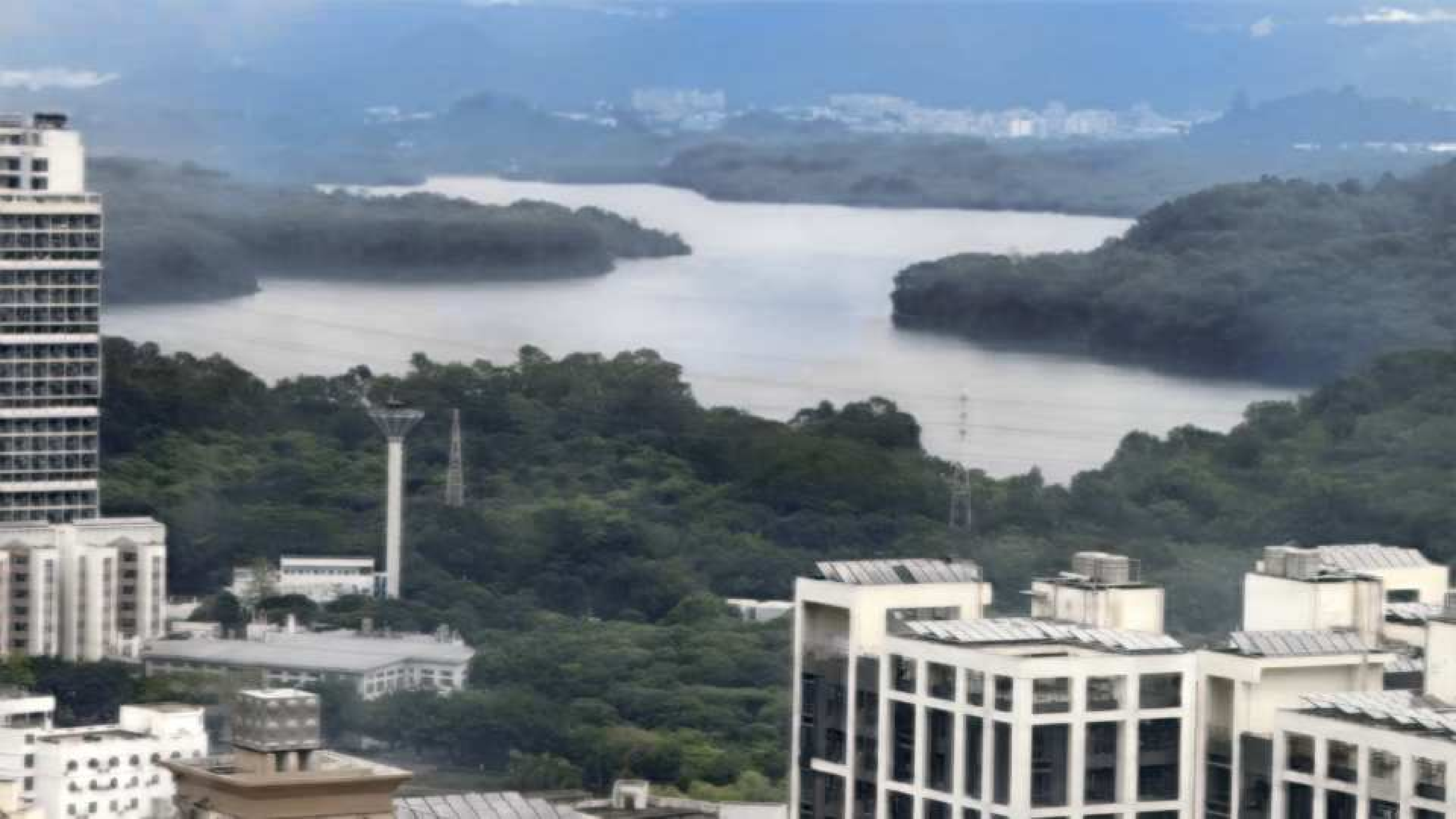}{77 108 2458 1231} &
      \zoomcrop{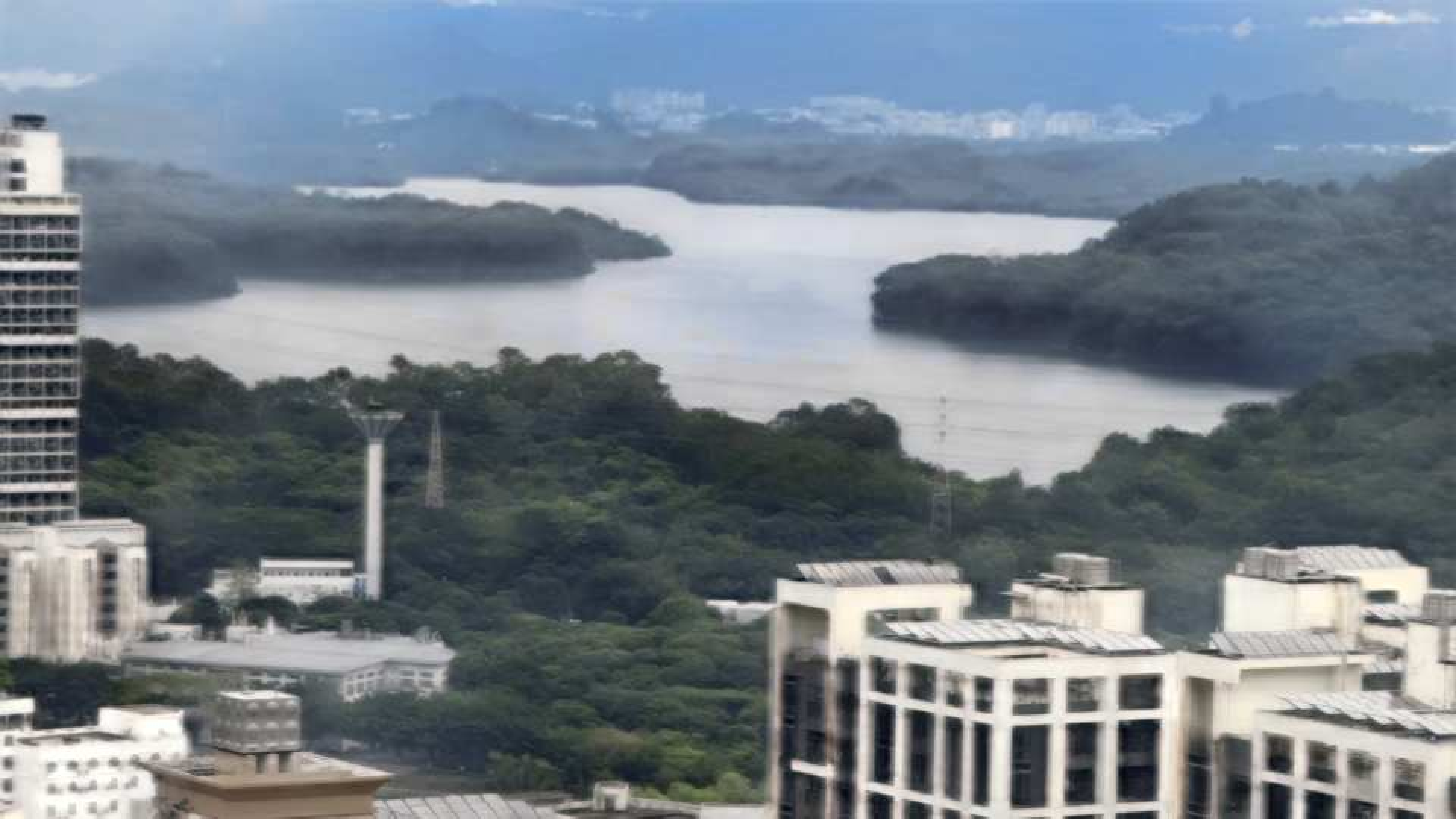}{77 108 2458 1231} &
      \zoomcrop{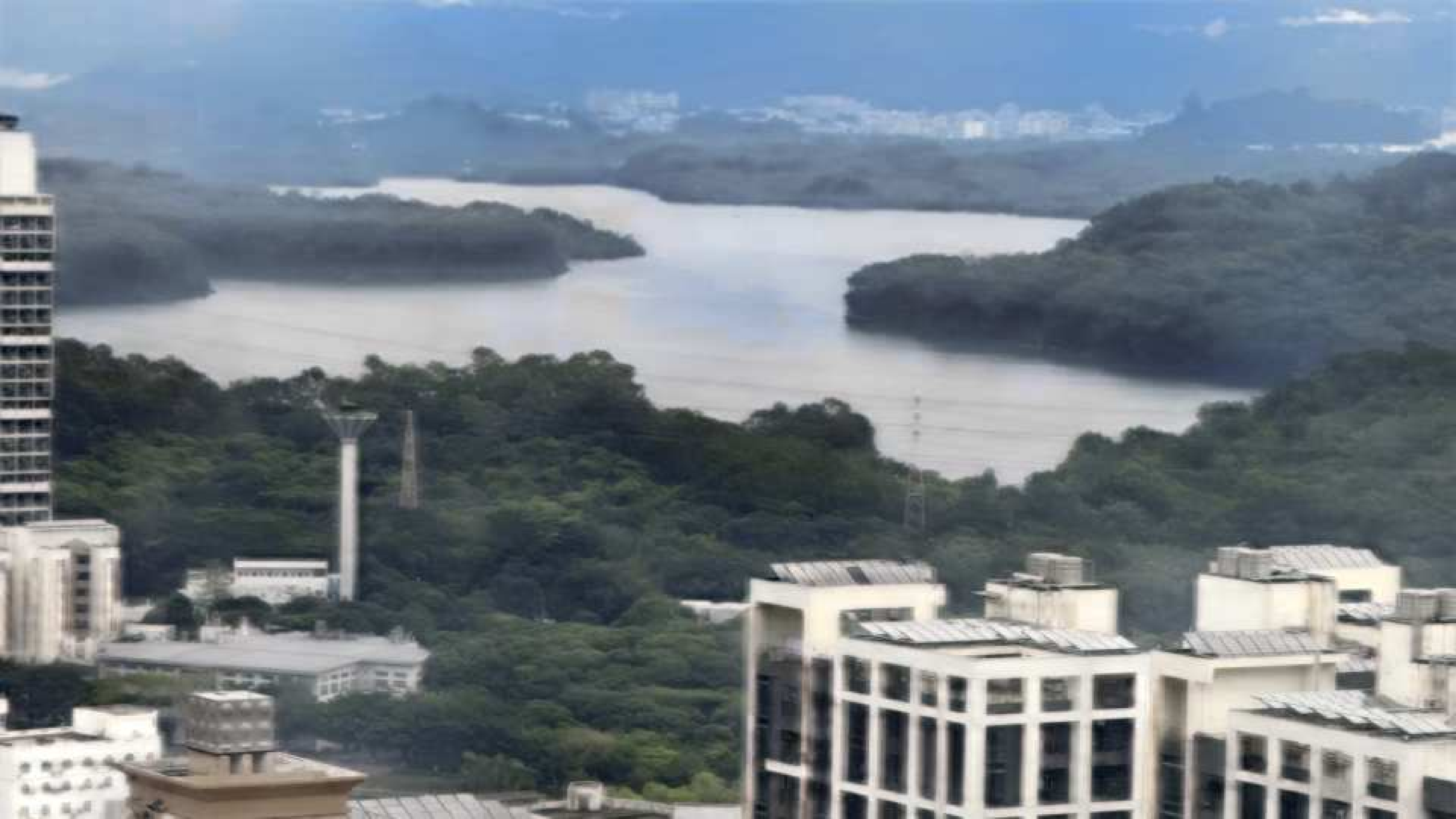}{77 108 2458 1231} &
      \zoomcrop{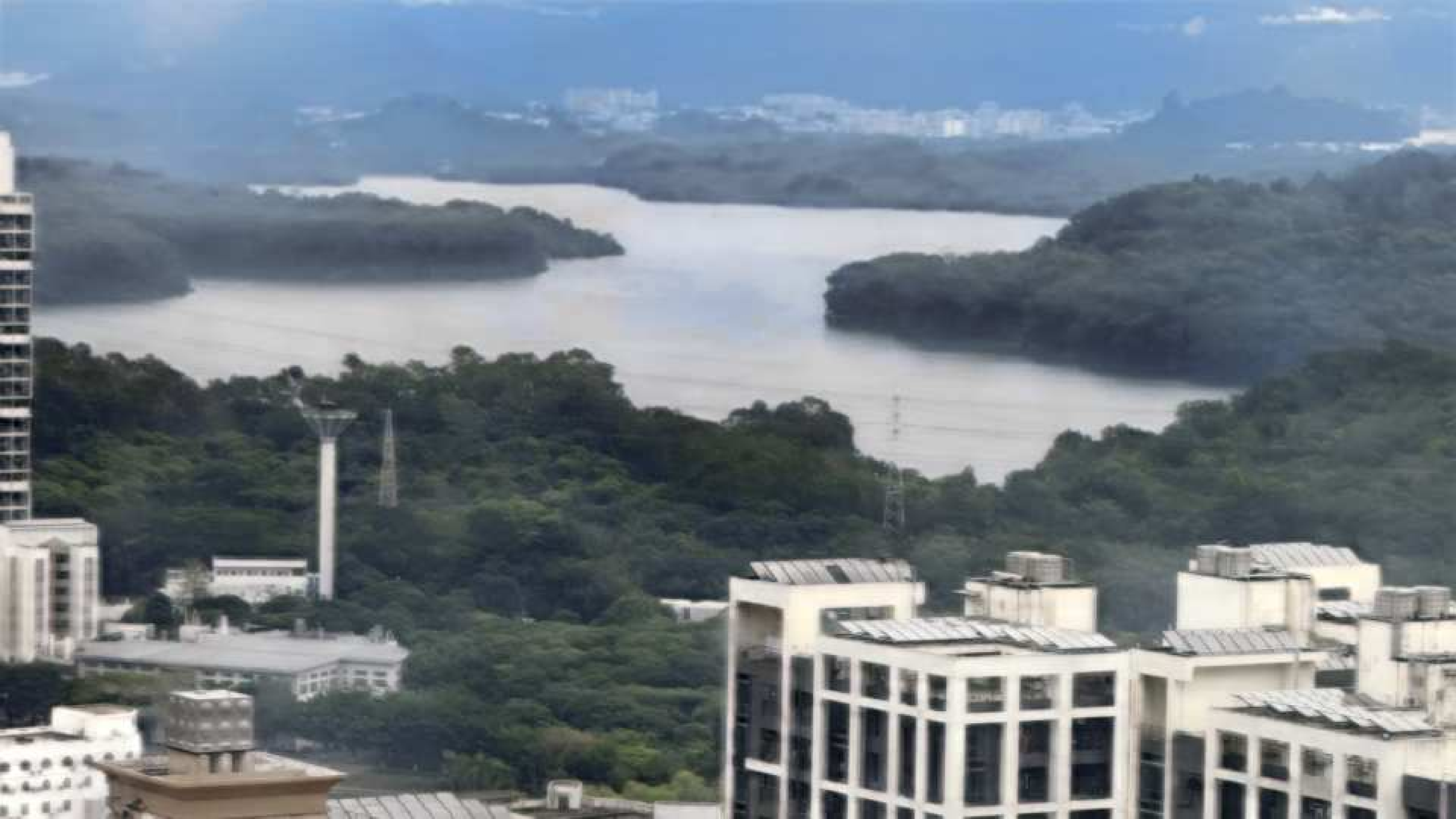}{77 108 2458 1231} &
      \zoomcrop{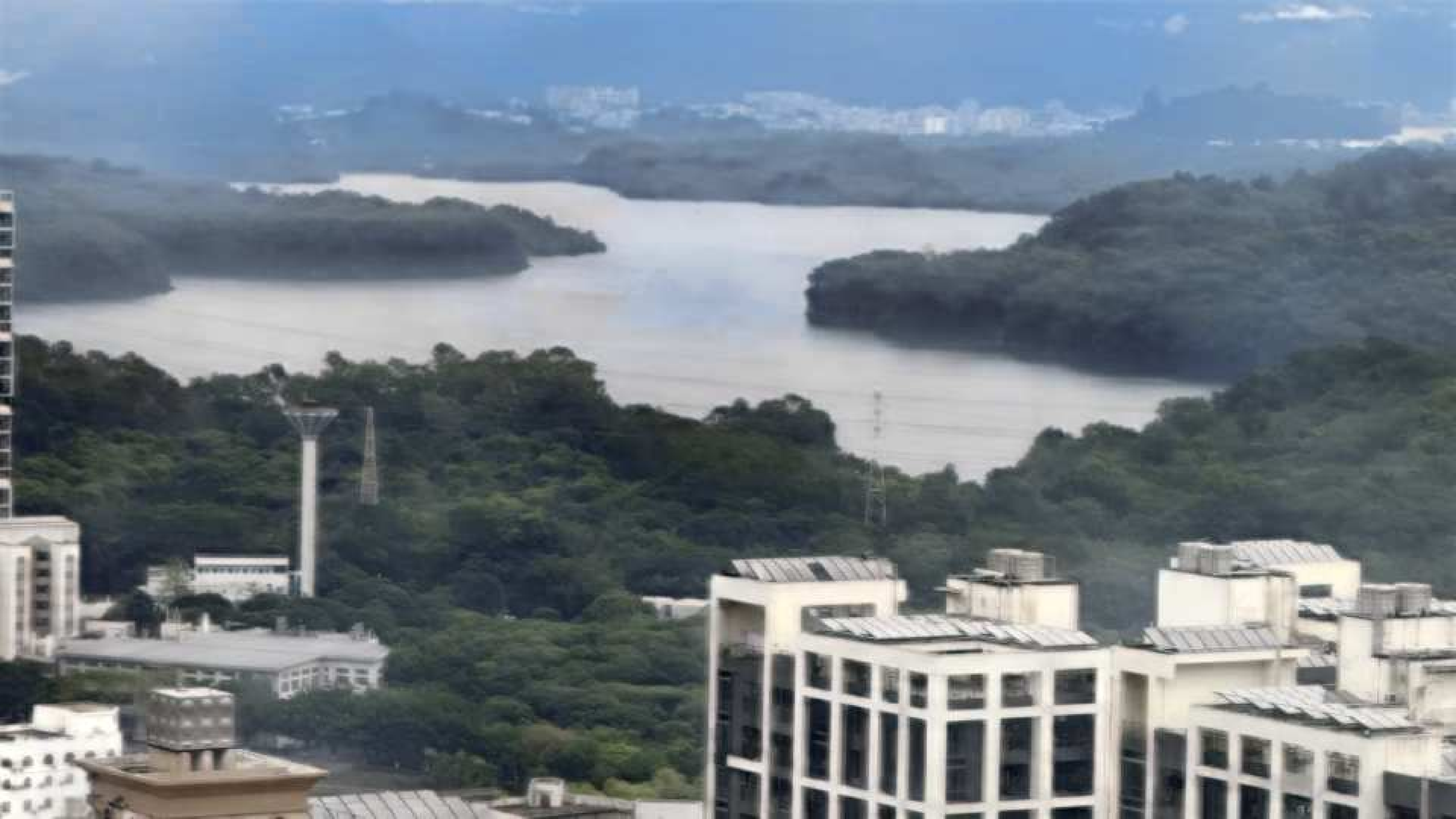}{77 108 2458 1231} &
      \zoomcrop{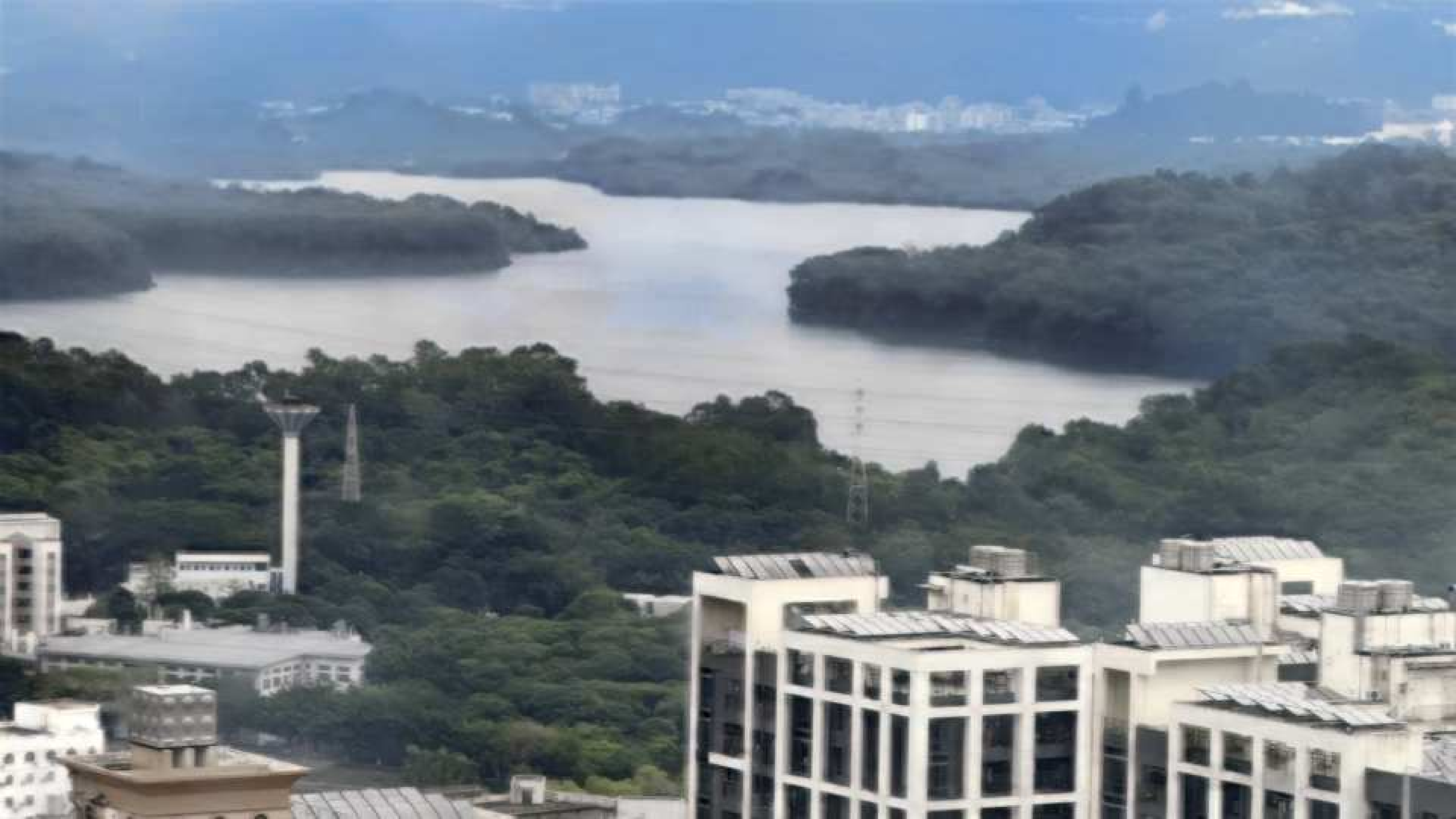}{77 108 2458 1231} &
      \zoomcrop{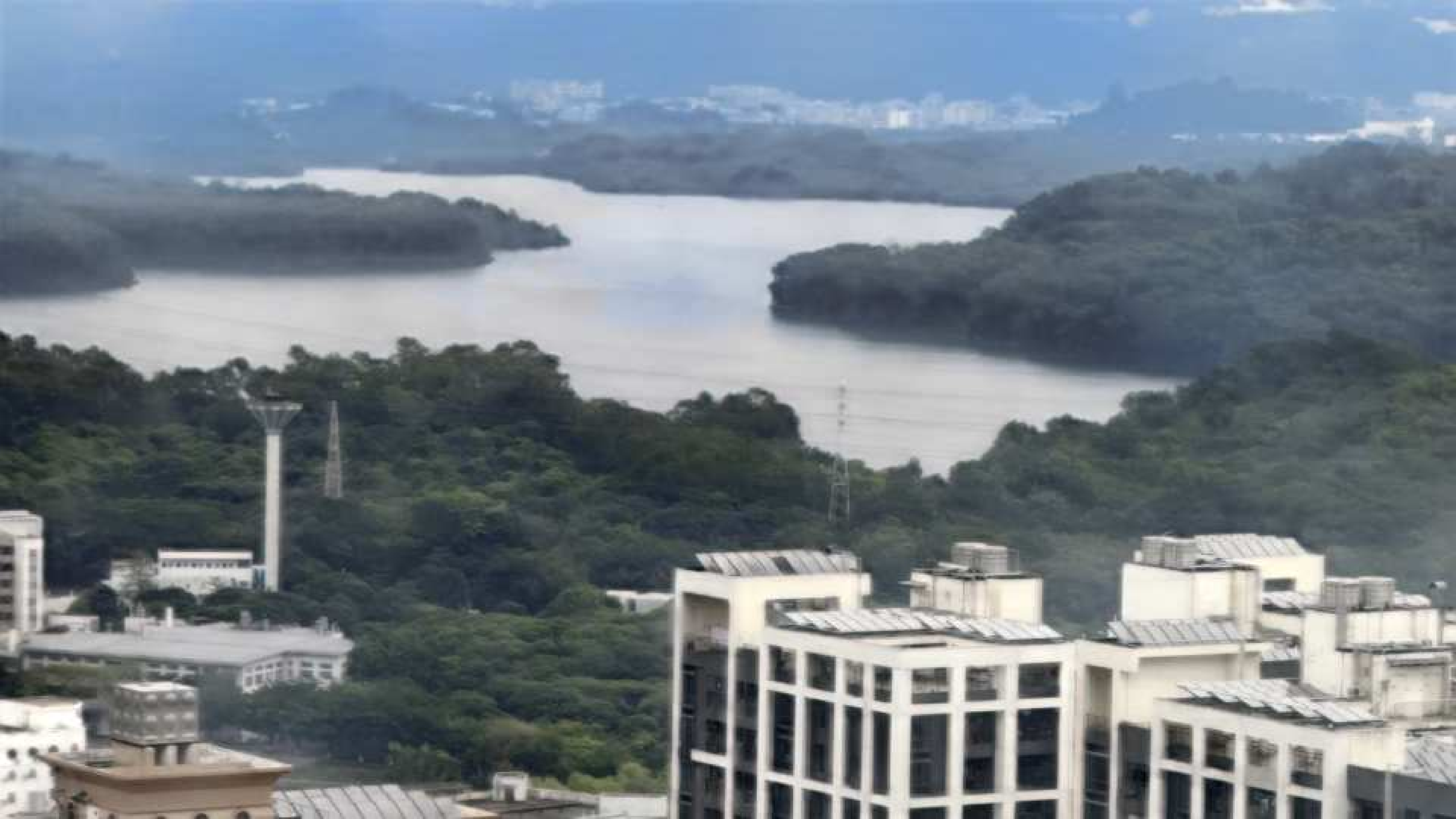}{77 108 2458 1231} \\[0pt]
    \rotatebox{90}{\scriptsize\textbf{(e)}} &
      \colorbox{ourgreen}{\zoomcrop{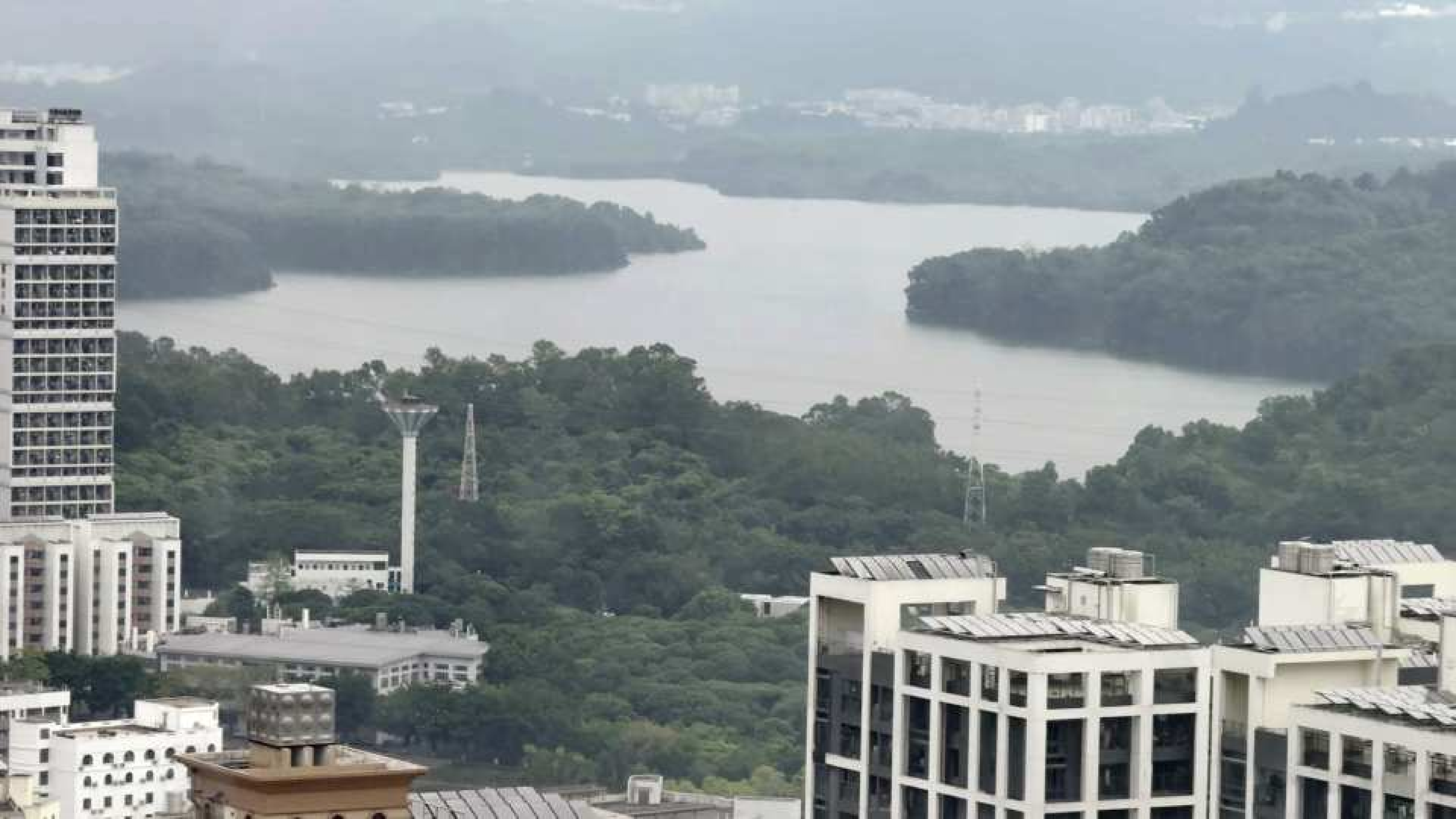}{77 108 2458 1231}} &
      \colorbox{ourgreen}{\zoomcrop{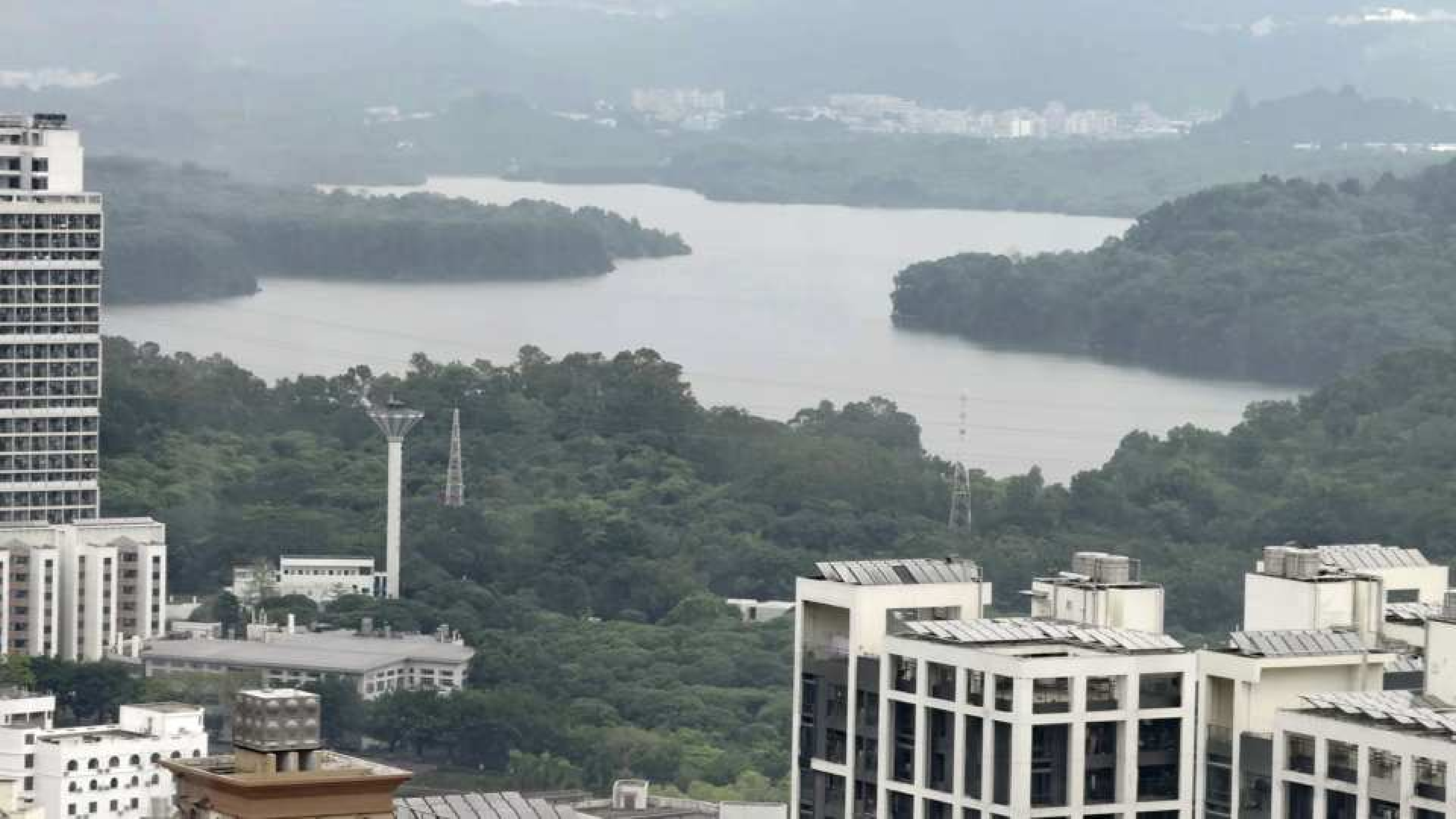}{77 108 2458 1231}} &
      \colorbox{ourgreen}{\zoomcrop{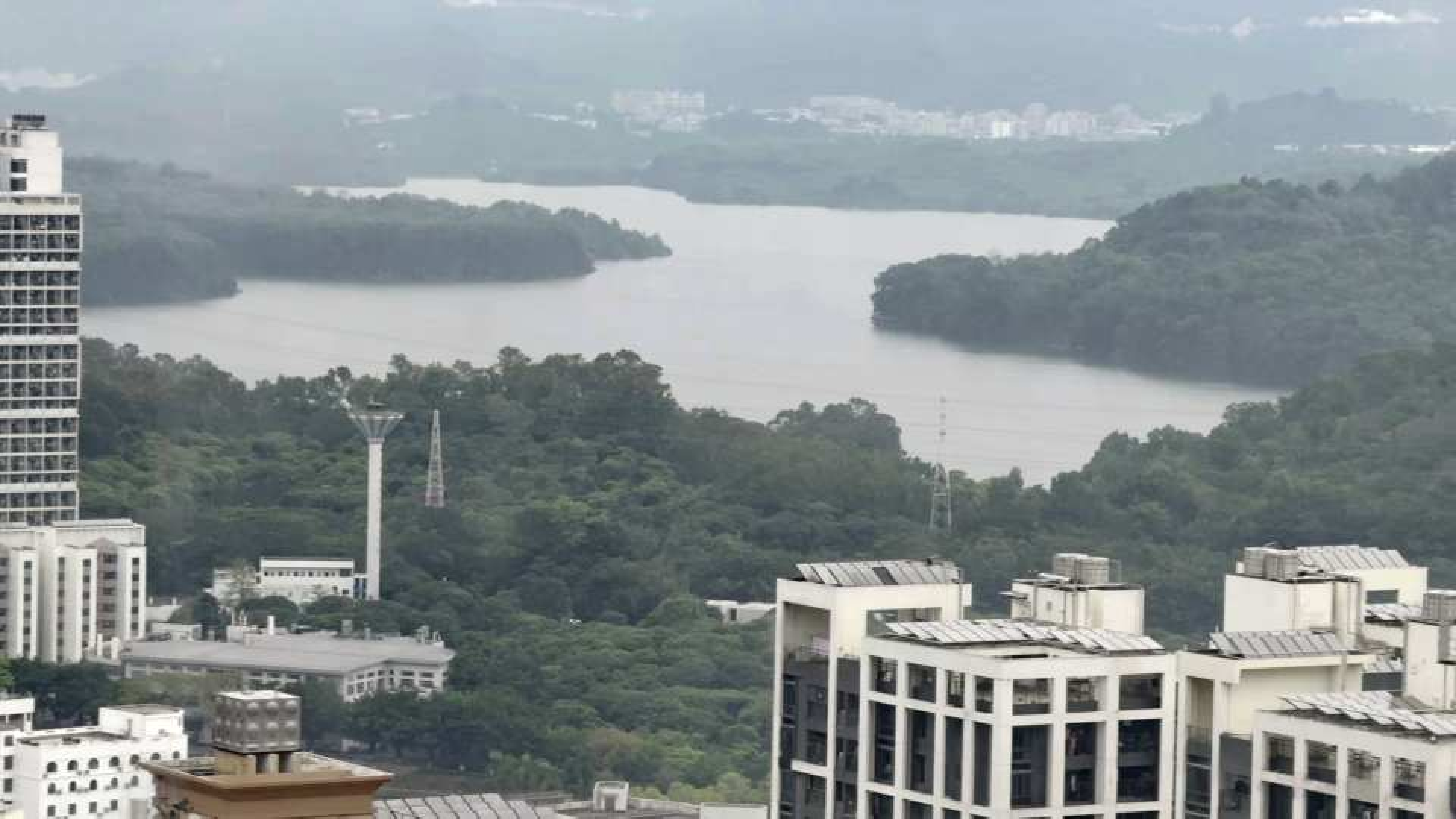}{77 108 2458 1231}} &
      \colorbox{ourgreen}{\zoomcrop{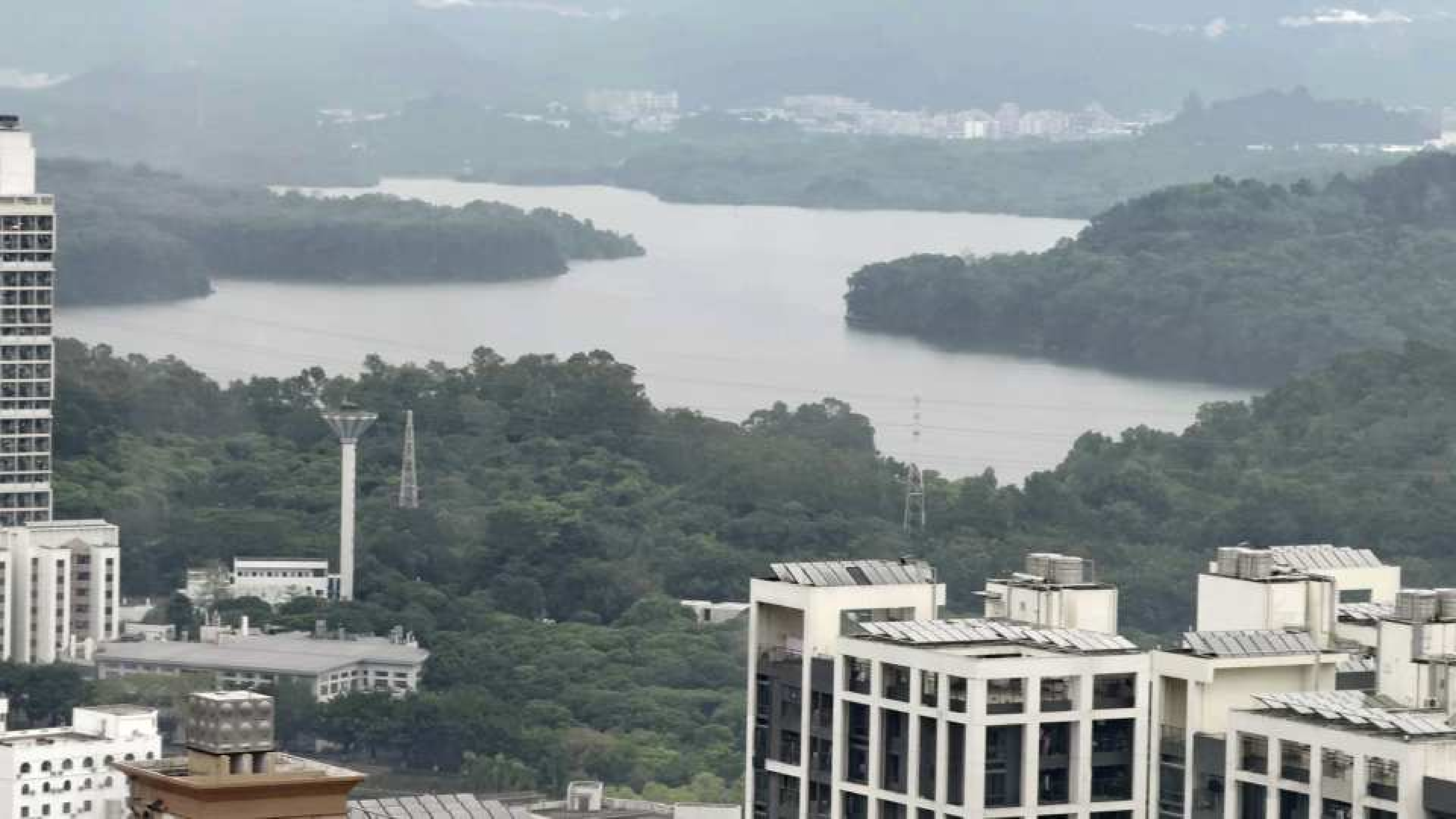}{77 108 2458 1231}} &
      \colorbox{ourgreen}{\zoomcrop{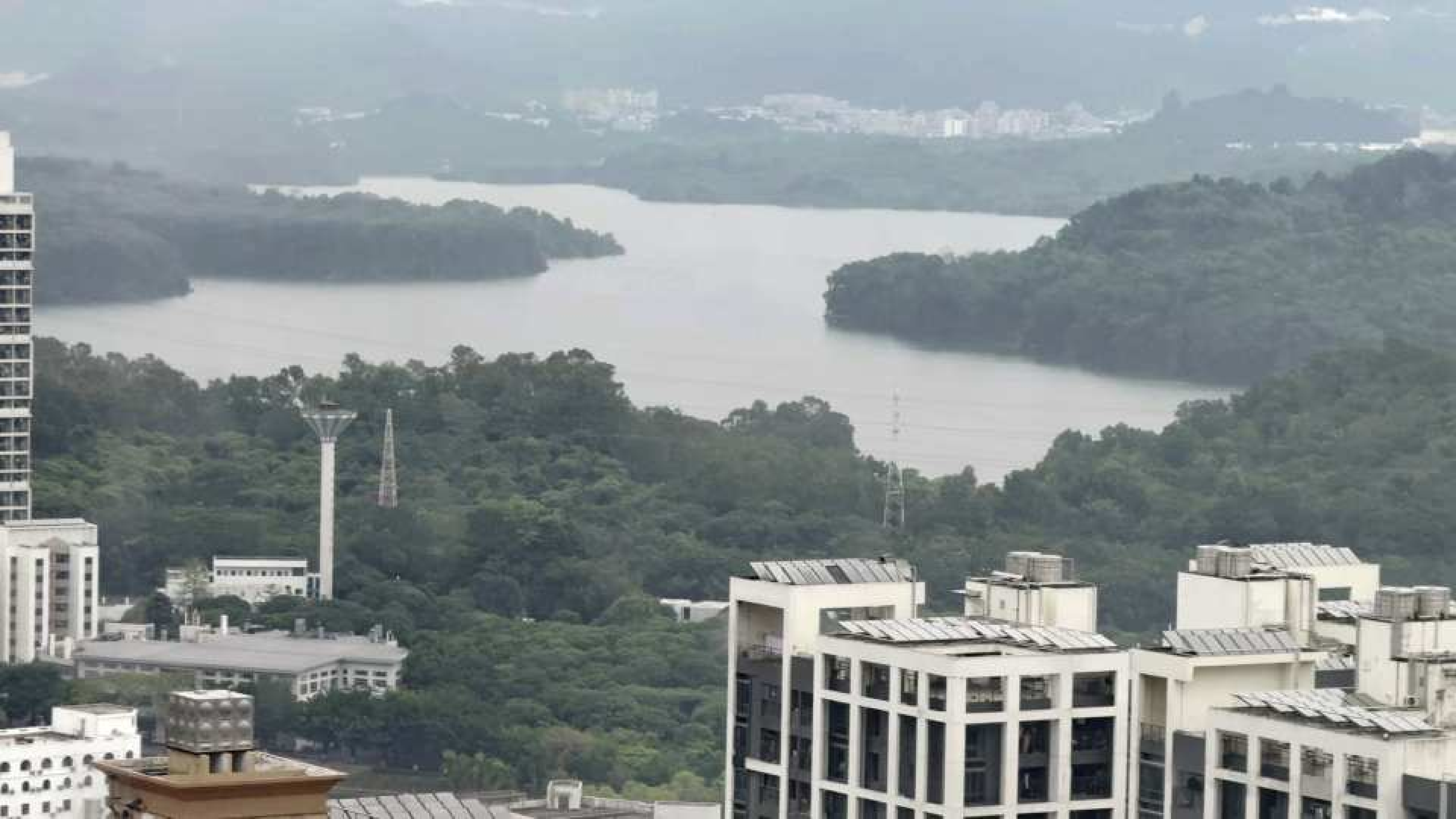}{77 108 2458 1231}} &
      \colorbox{ourgreen}{\zoomcrop{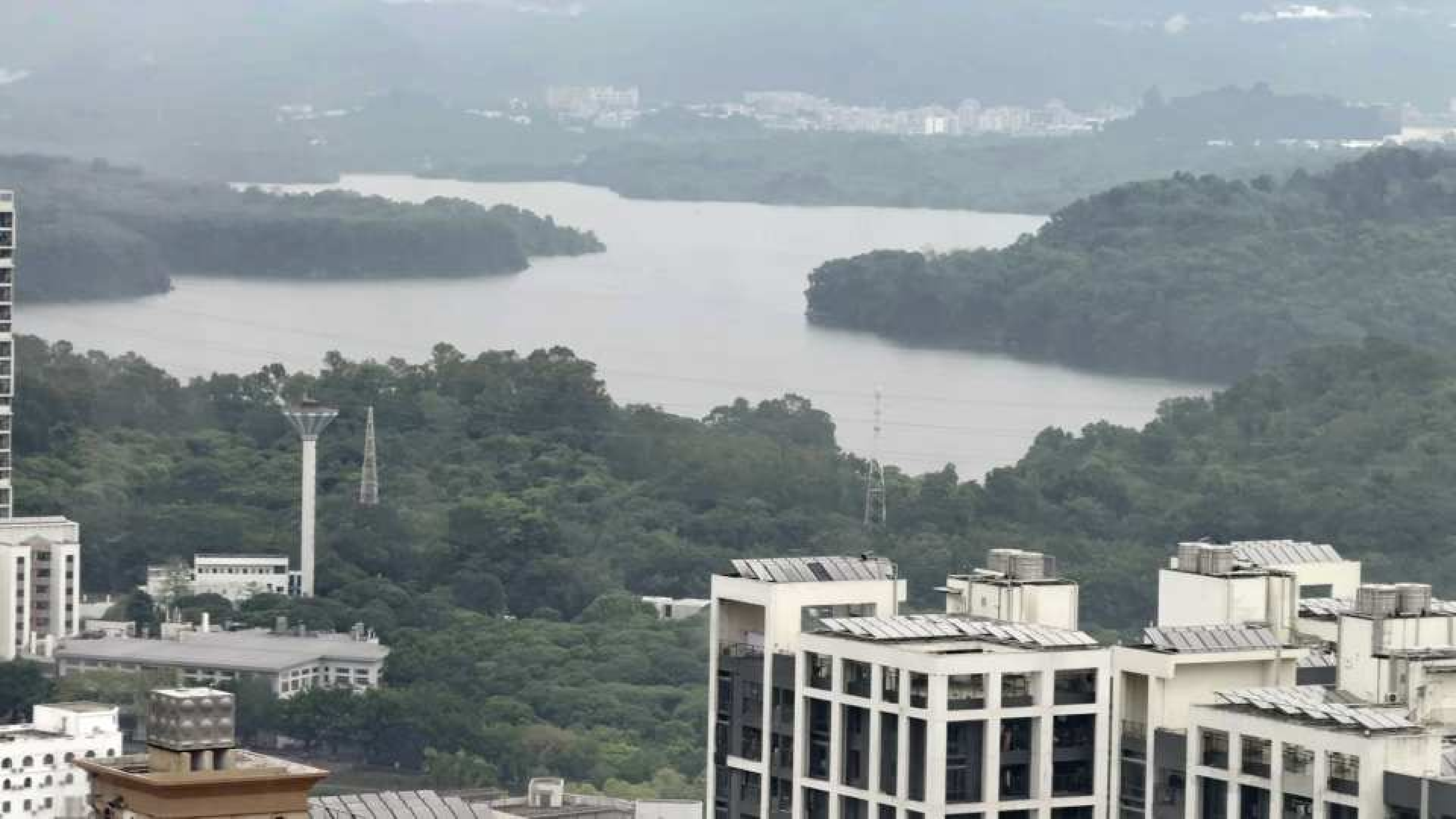}{77 108 2458 1231}} &
      \colorbox{ourgreen}{\zoomcrop{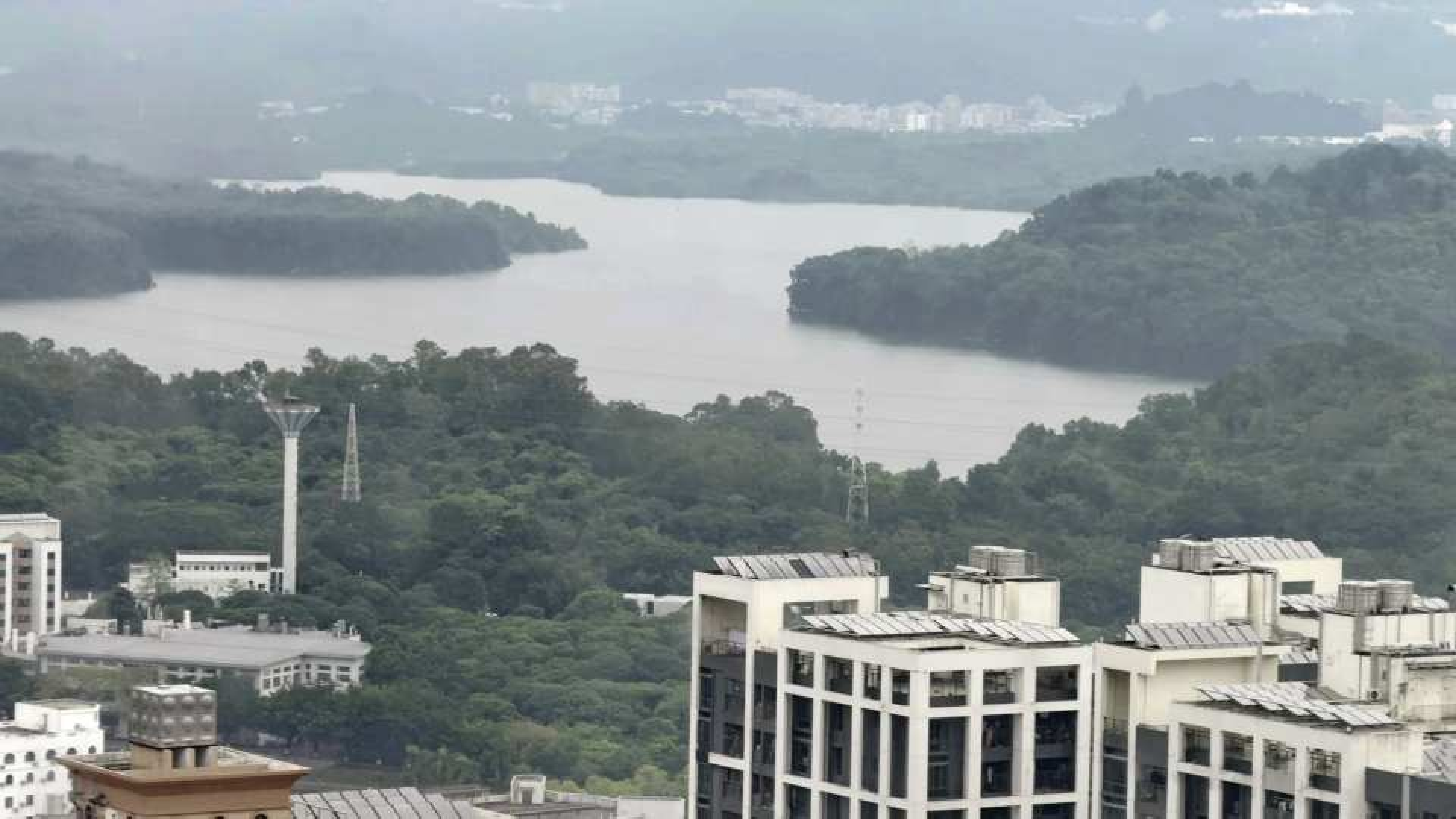}{77 108 2458 1231}} &
      \colorbox{ourgreen}{\zoomcrop{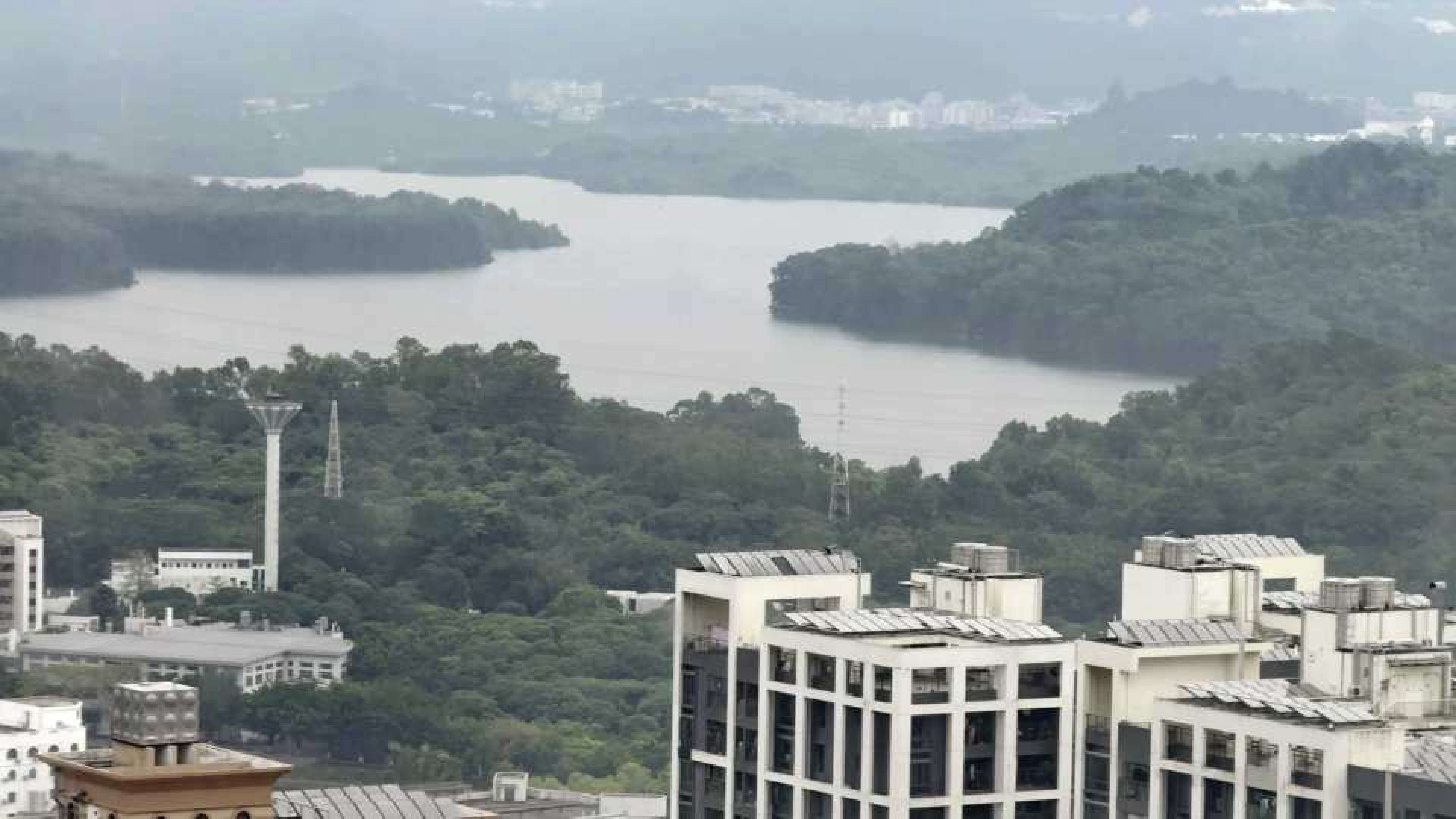}{77 108 2458 1231}} \\[0pt]
    \rotatebox{90}{\scriptsize\textbf{Video\,2}} &
      \fullroi{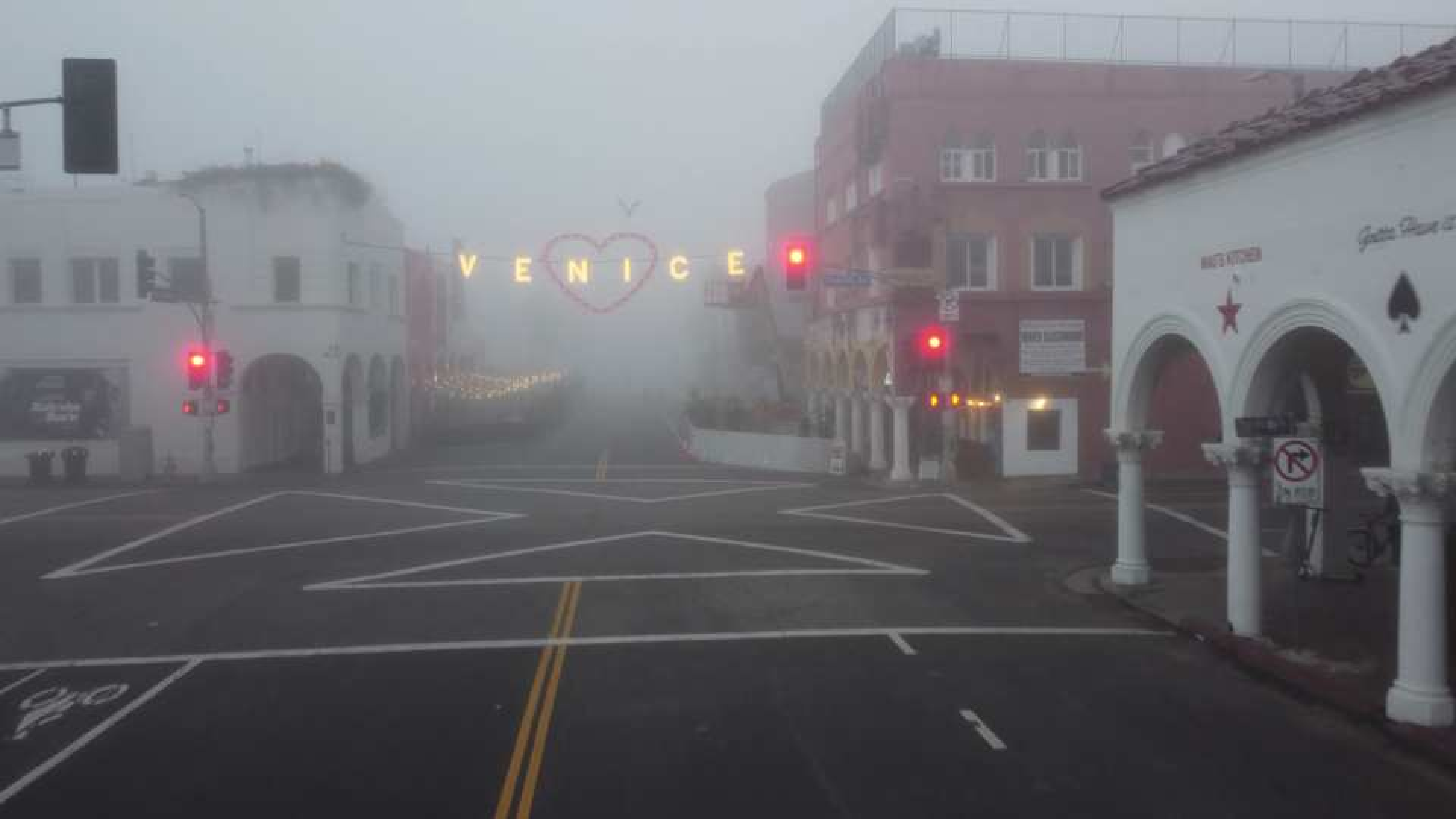}{0.54}{0.40}{0.83}{0.704} &
      \fullroi{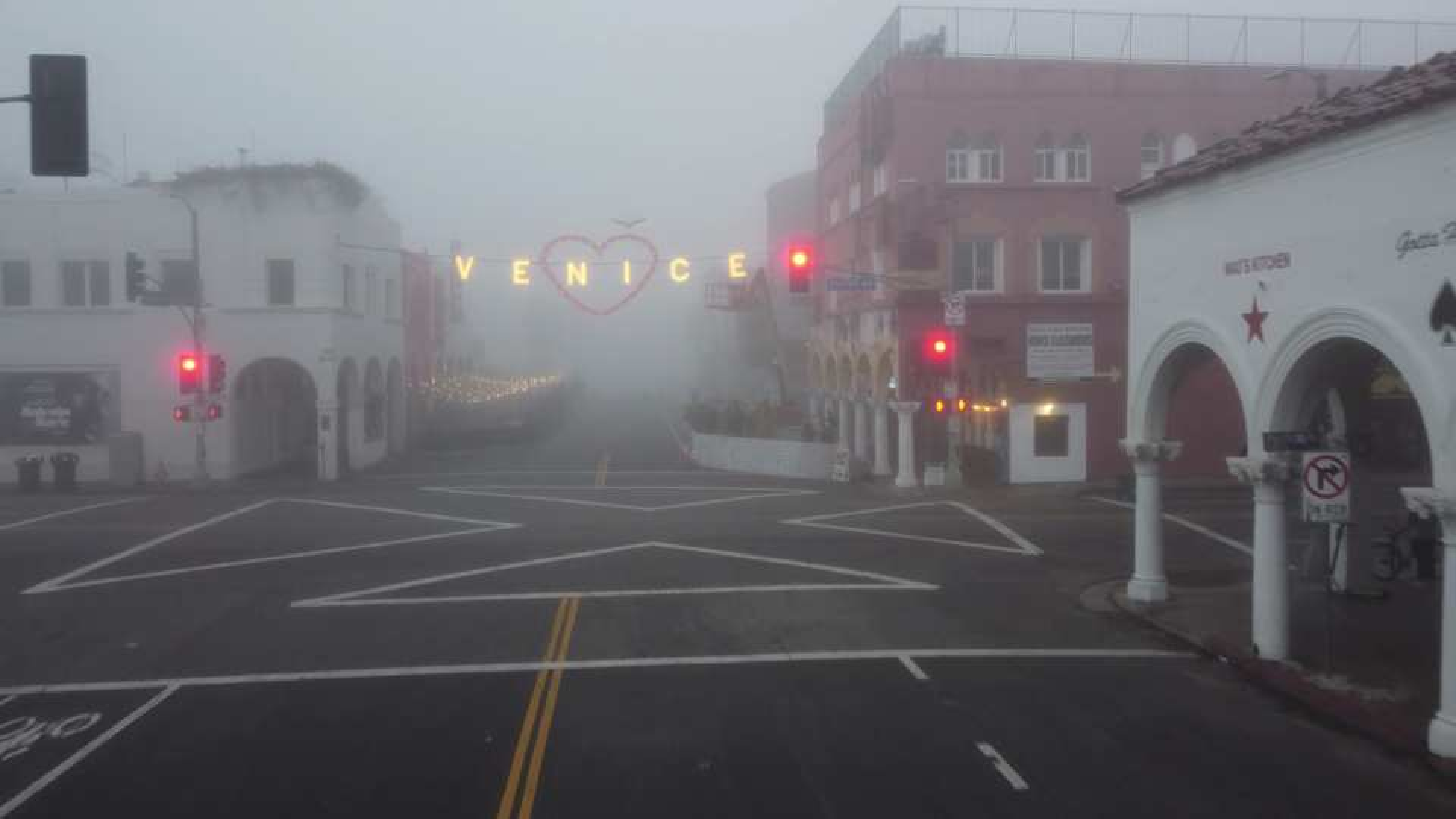}{0.54}{0.40}{0.83}{0.704} &
      \fullroi{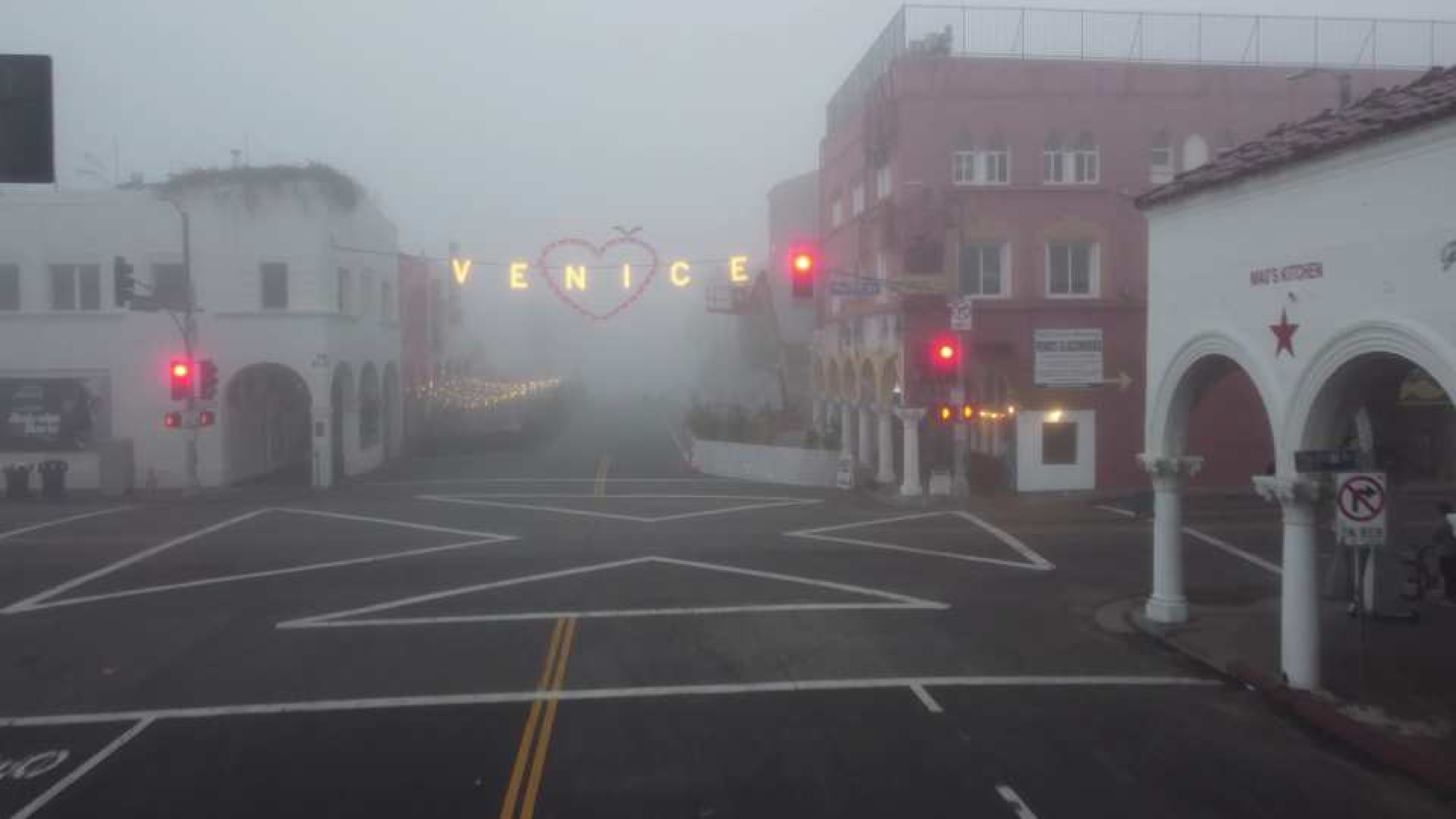}{0.54}{0.40}{0.83}{0.704} &
      \fullroi{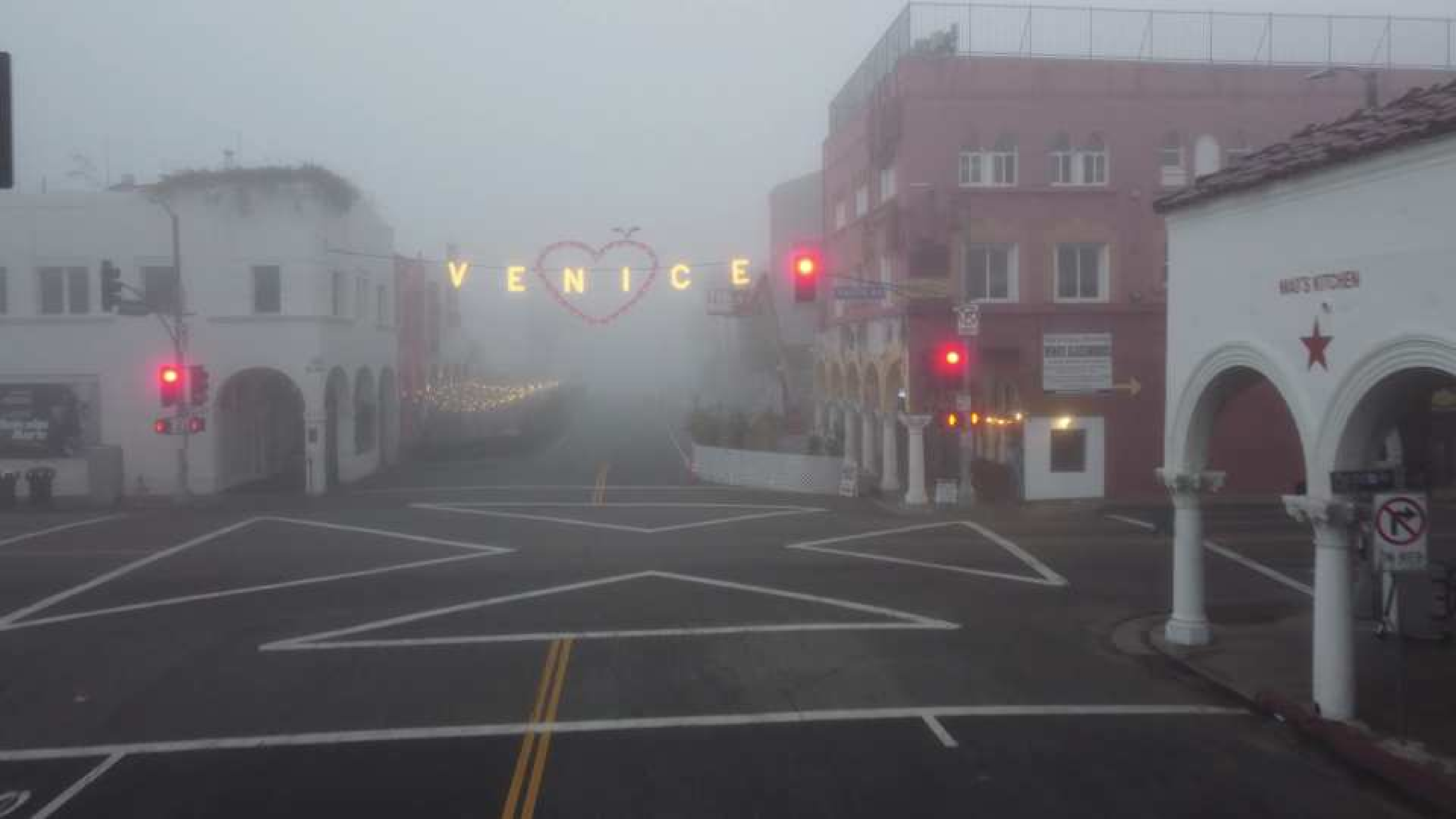}{0.54}{0.40}{0.83}{0.704} &
      \fullroi{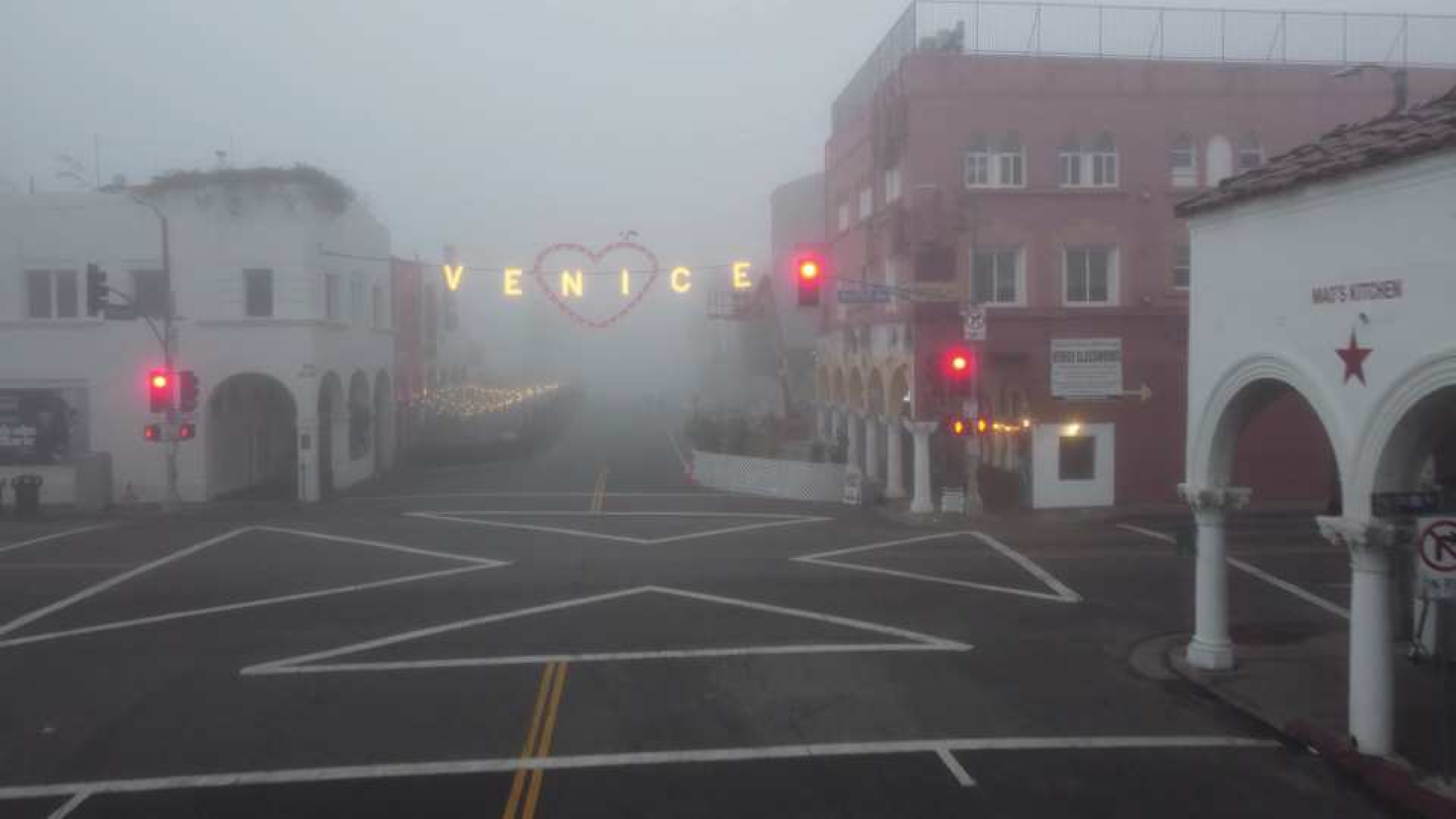}{0.54}{0.40}{0.83}{0.704} &
      \fullroi{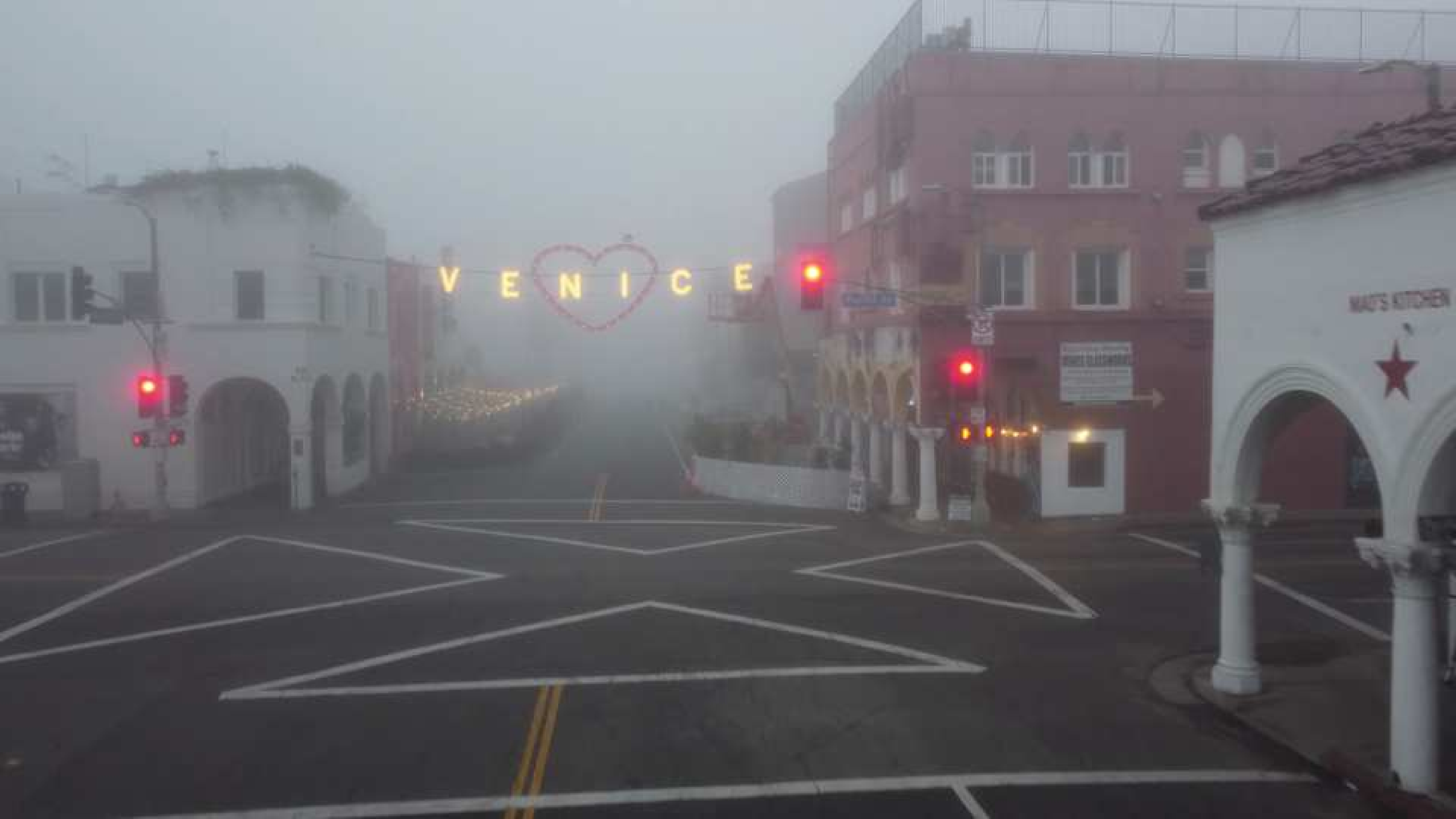}{0.54}{0.40}{0.83}{0.704} &
      \fullroi{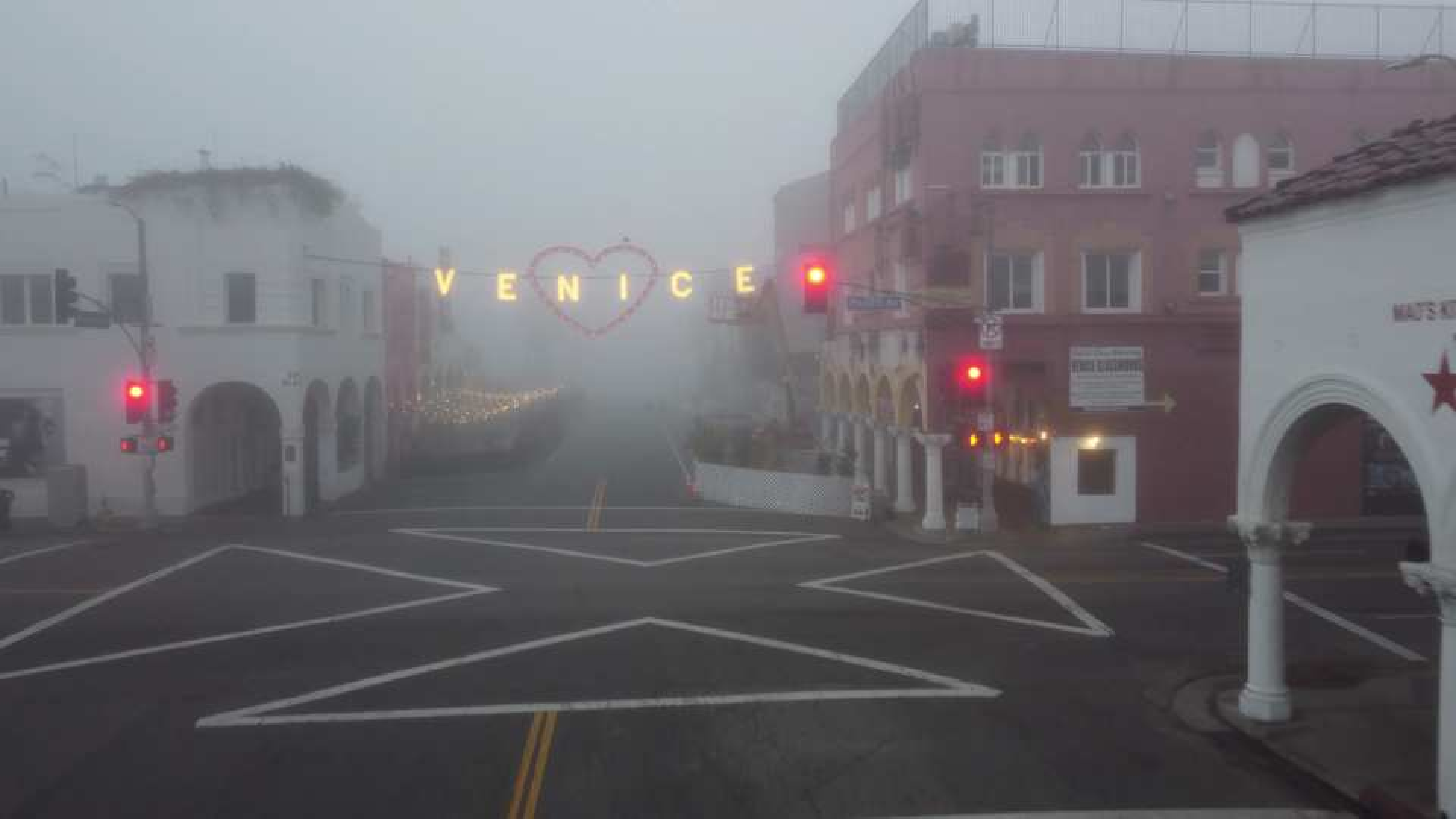}{0.54}{0.40}{0.83}{0.704} &
      \fullroi{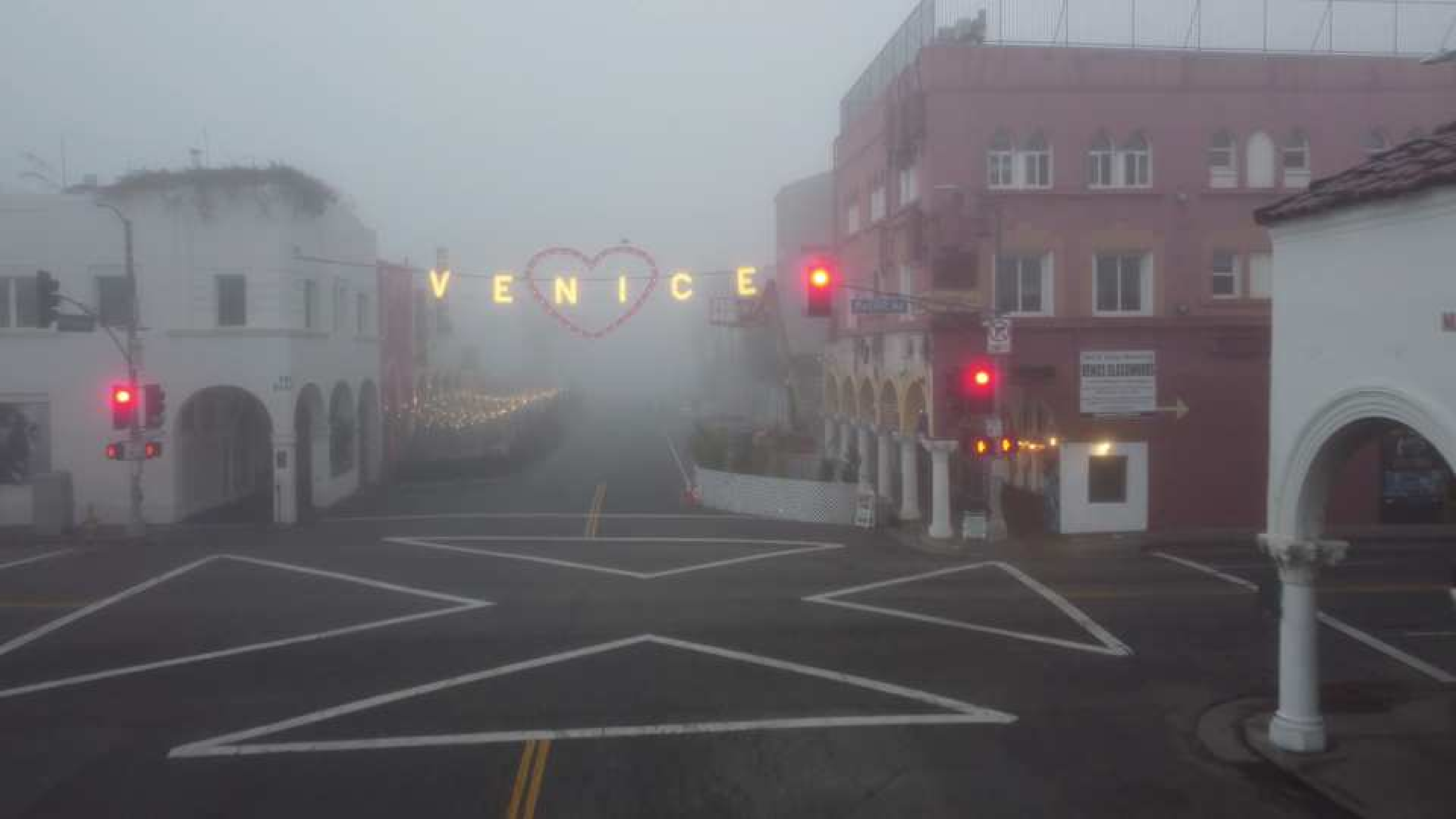}{0.54}{0.40}{0.83}{0.704} \\[0pt]
    \rotatebox{0}{\scriptsize\textbf{(a)}} &
      \zoomcrop{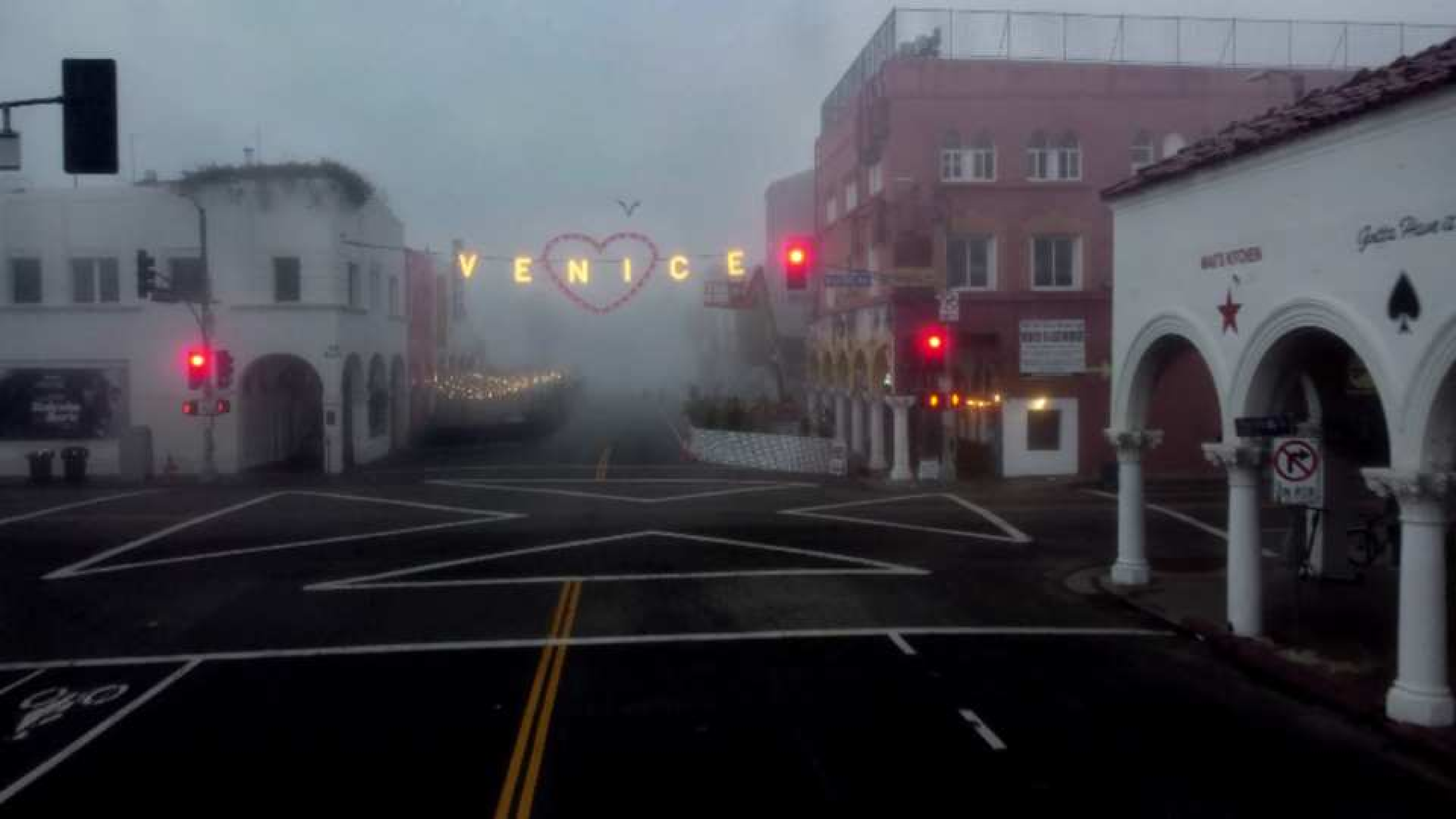}{2074 864 653 639} &
      \zoomcrop{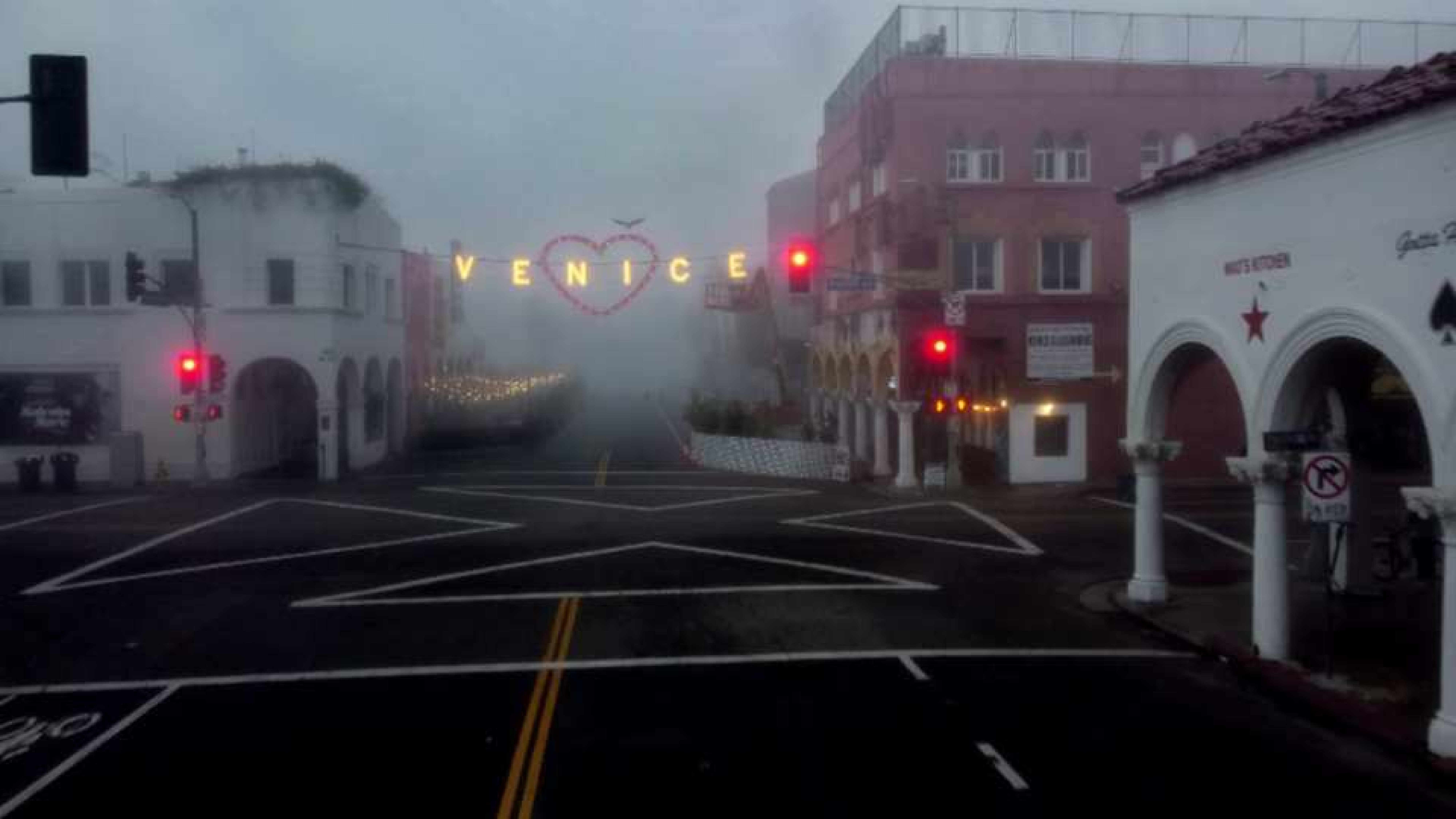}{2074 864 653 639} &
      \zoomcrop{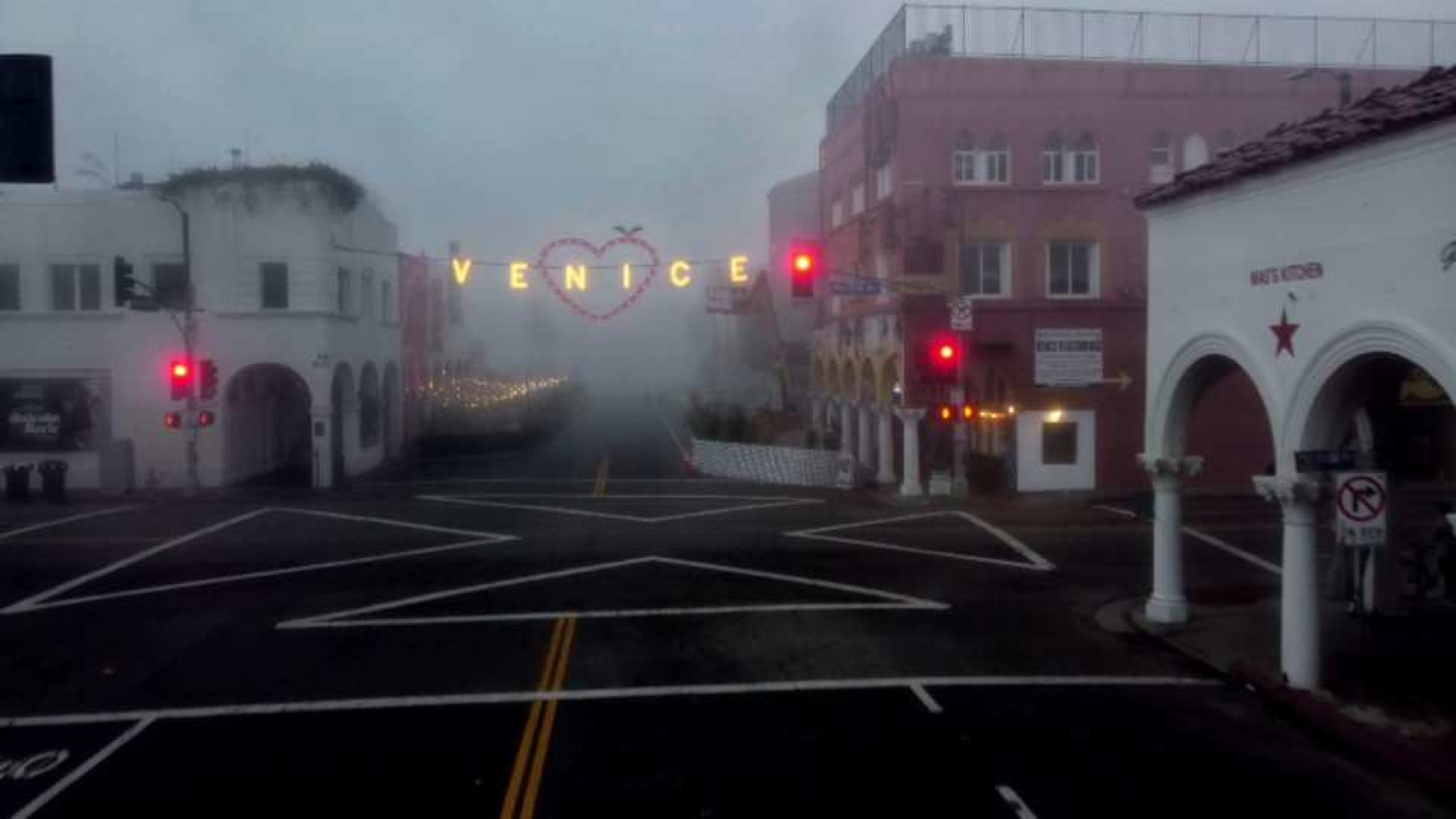}{2074 864 653 639} &
      \zoomcrop{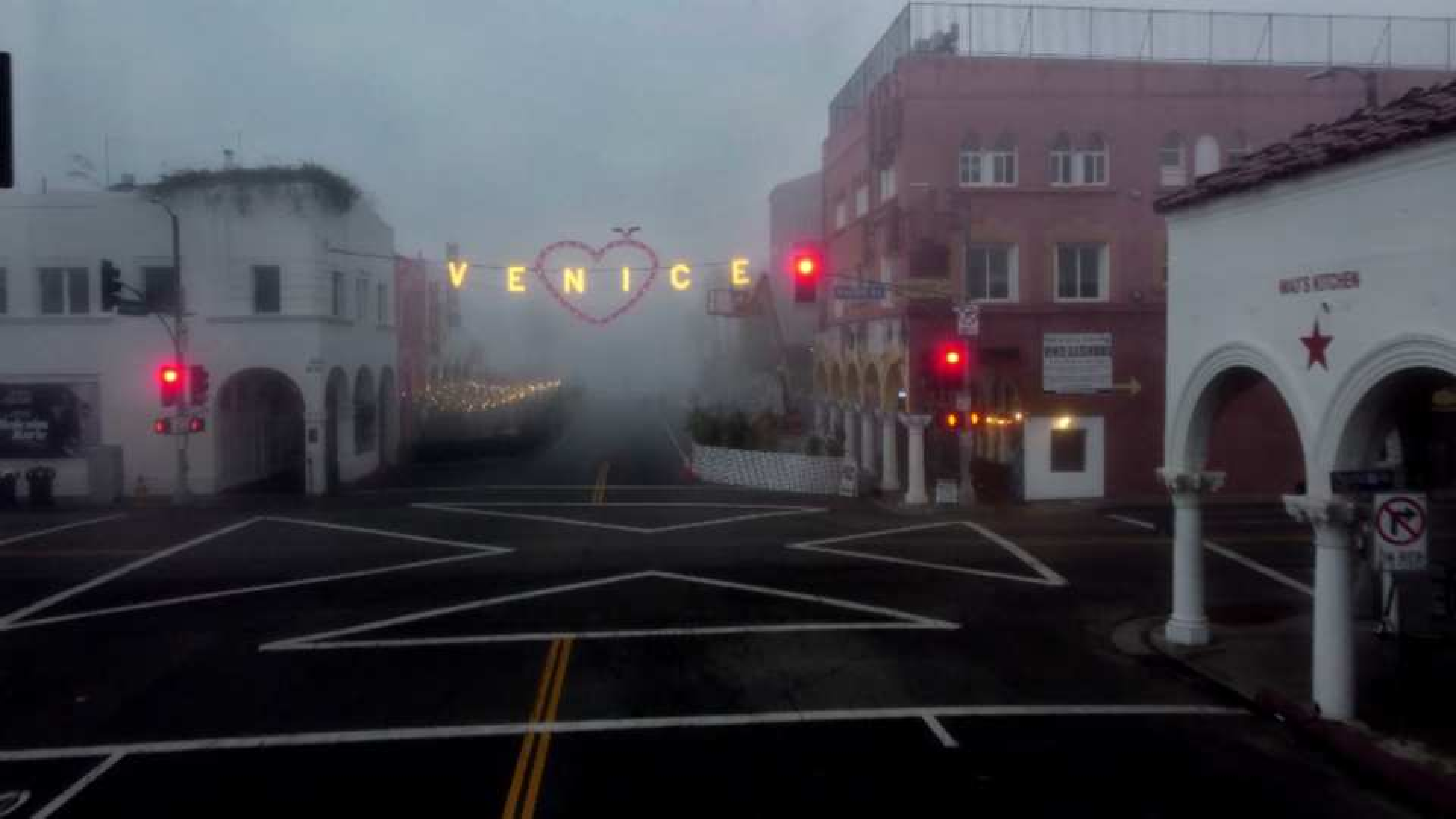}{2074 864 653 639} &
      \zoomcrop{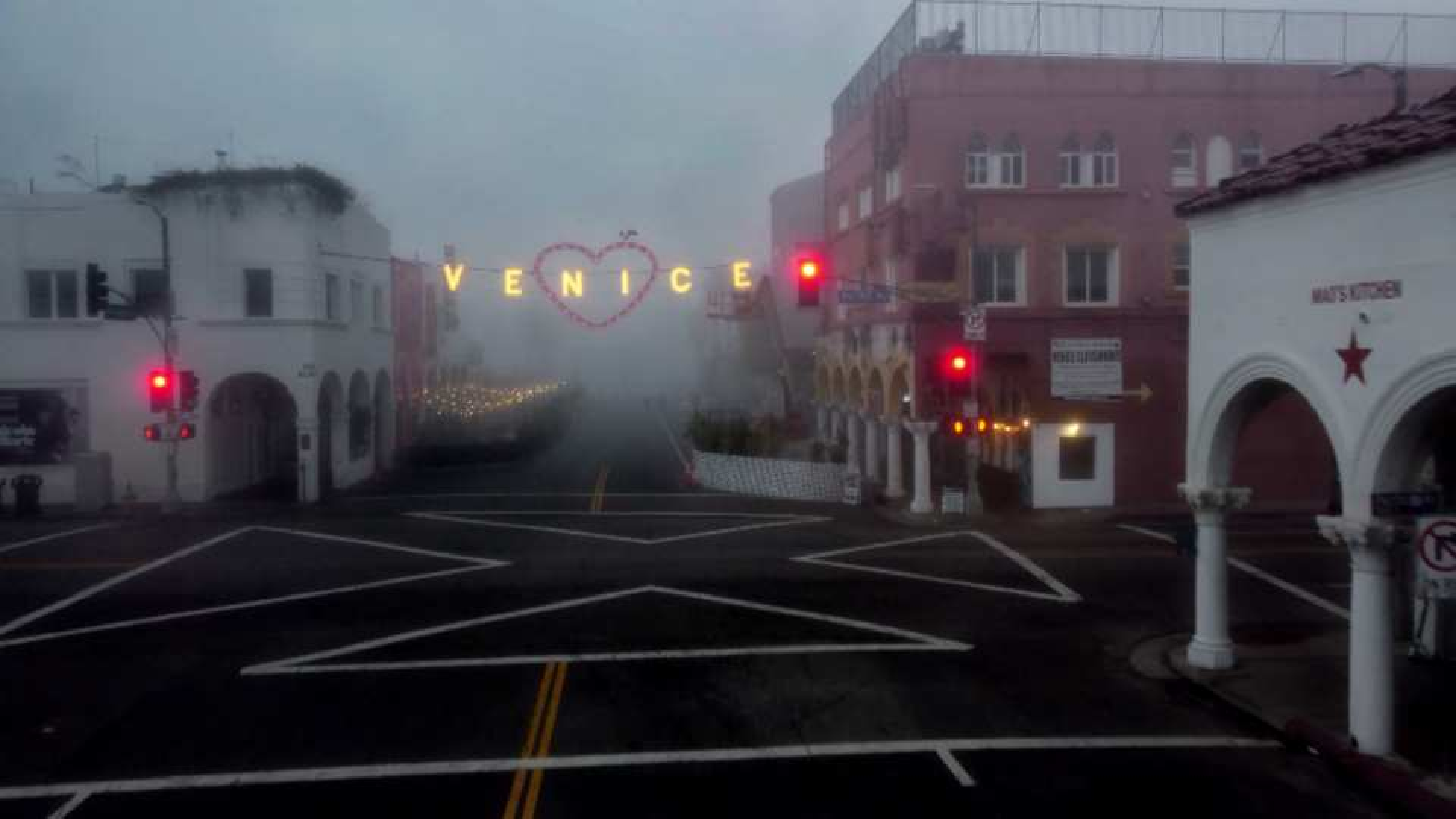}{2074 864 653 639} &
      \zoomcrop{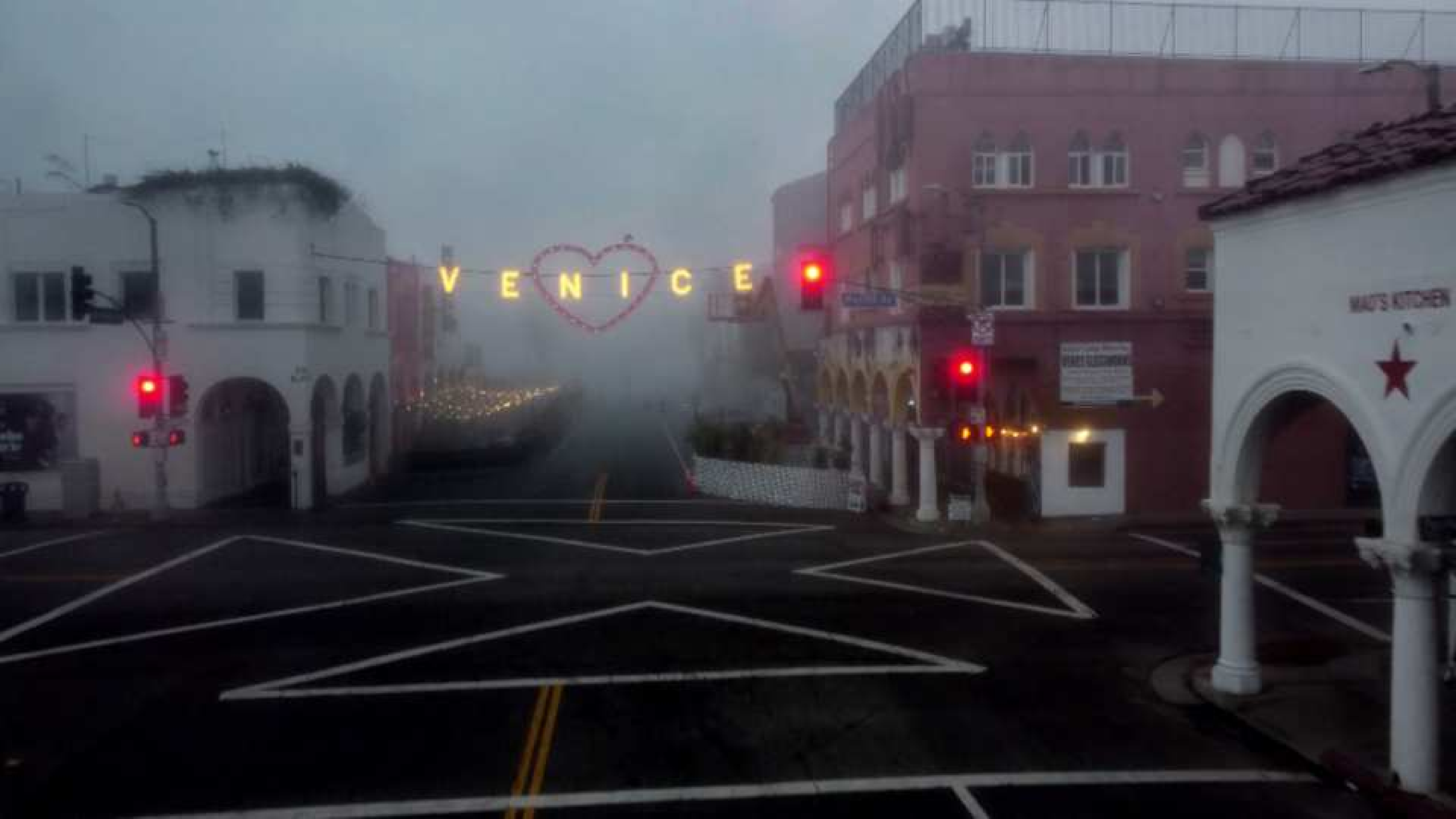}{2074 864 653 639} &
      \zoomcrop{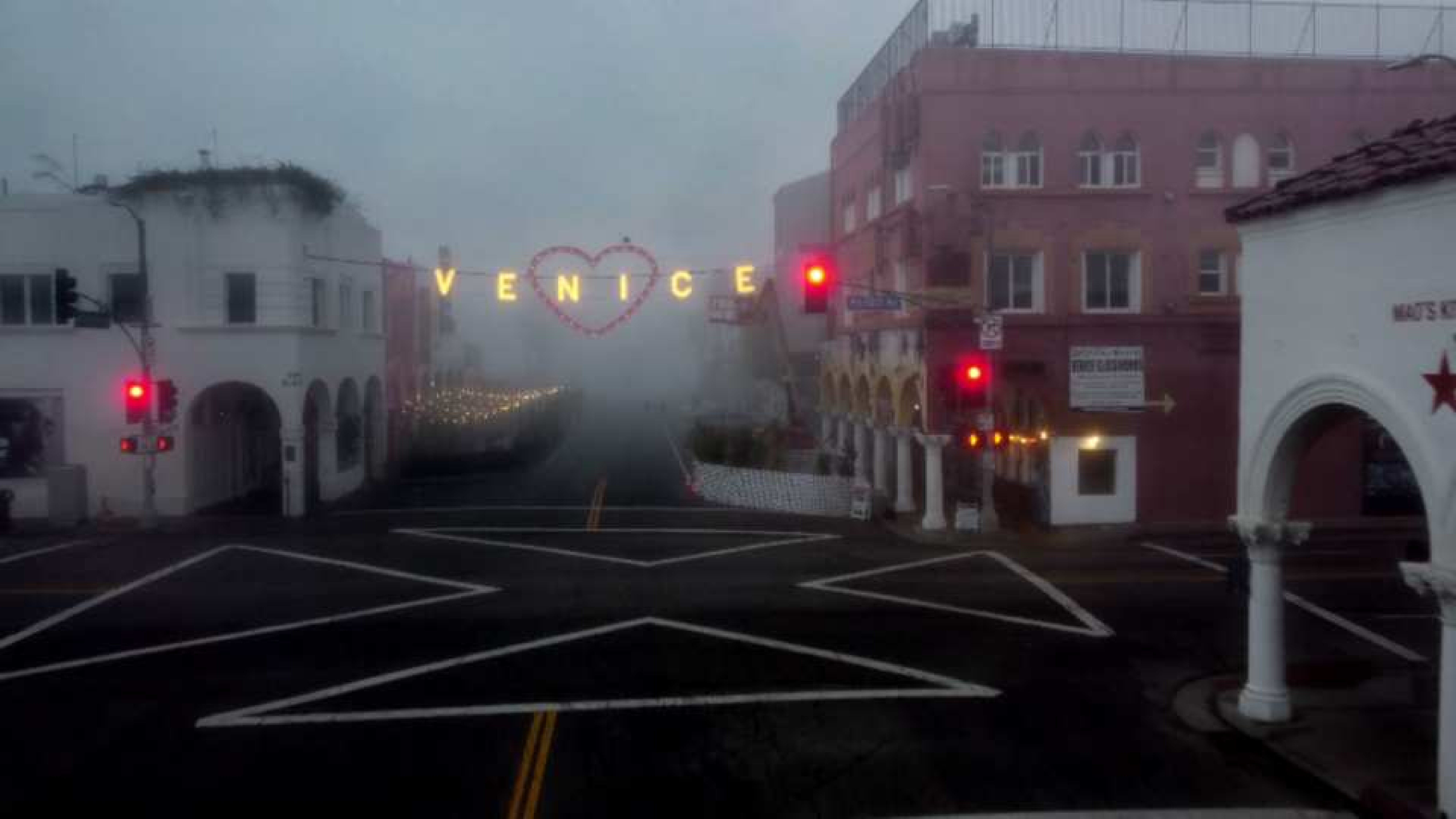}{2074 864 653 639} &
      \zoomcrop{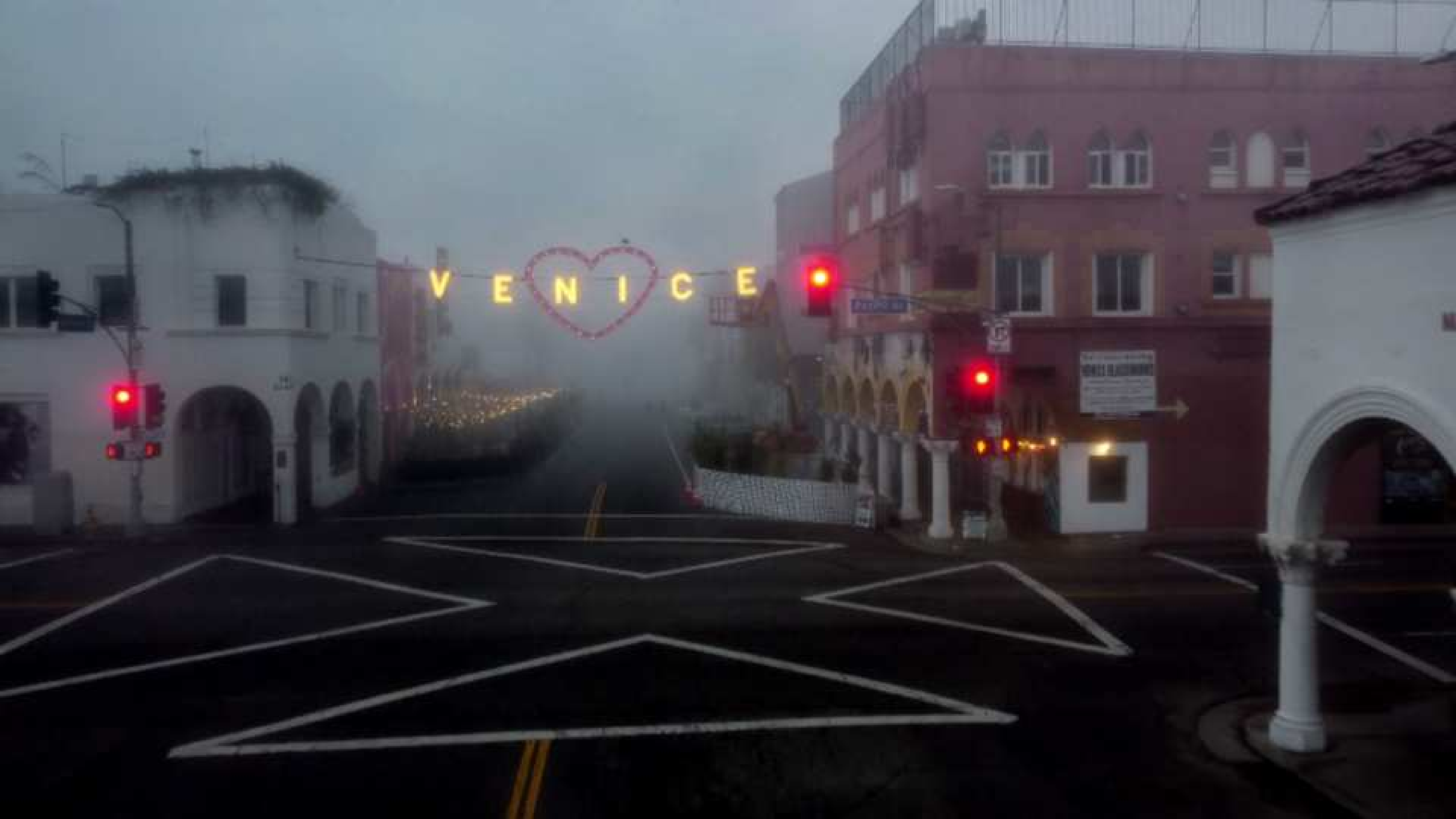}{2074 864 653 639} \\[0pt]
    \rotatebox{0}{\scriptsize\textbf{(b)}} &
      \zoomcrop{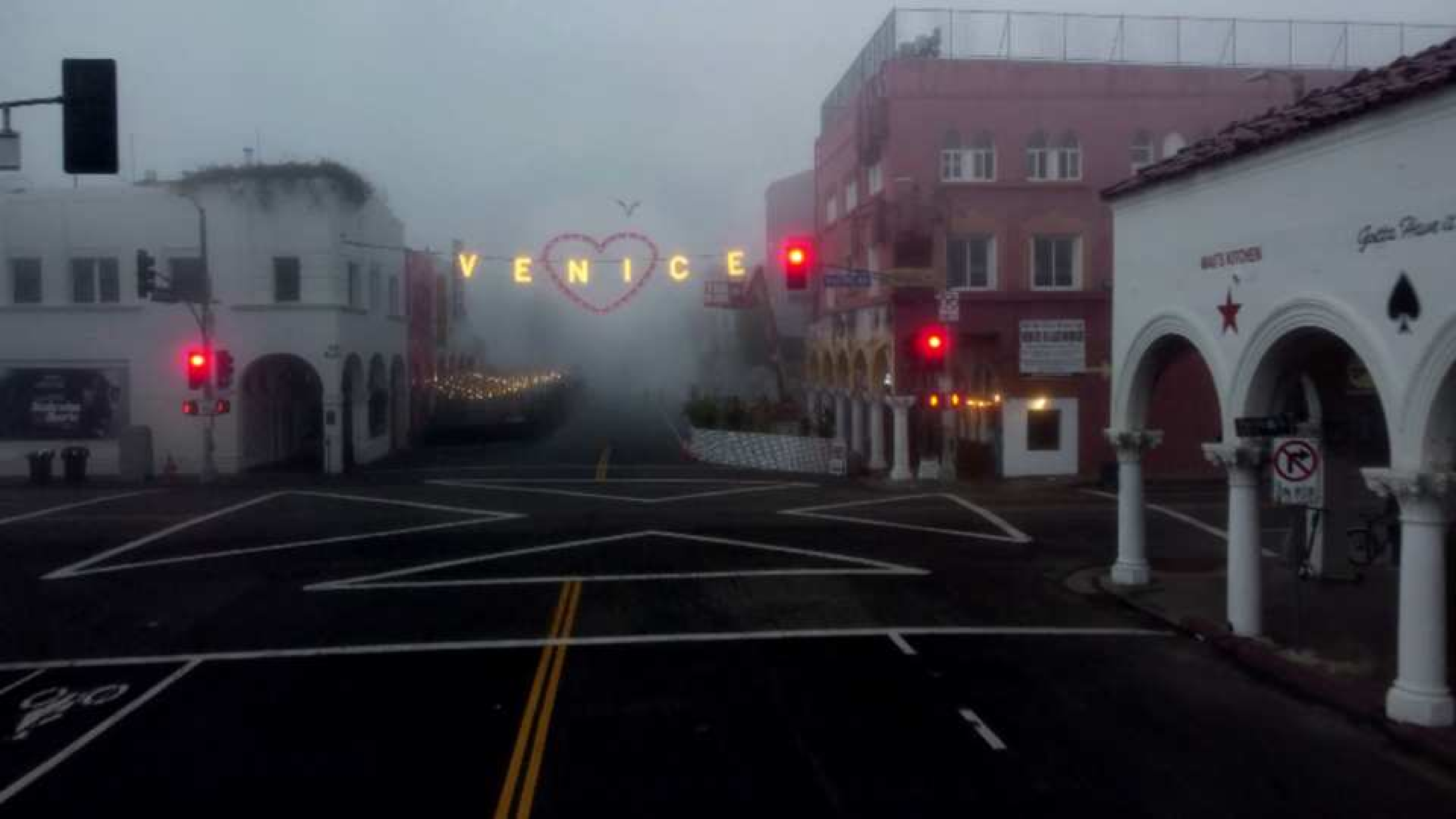}{2074 864 653 639} &
      \zoomcrop{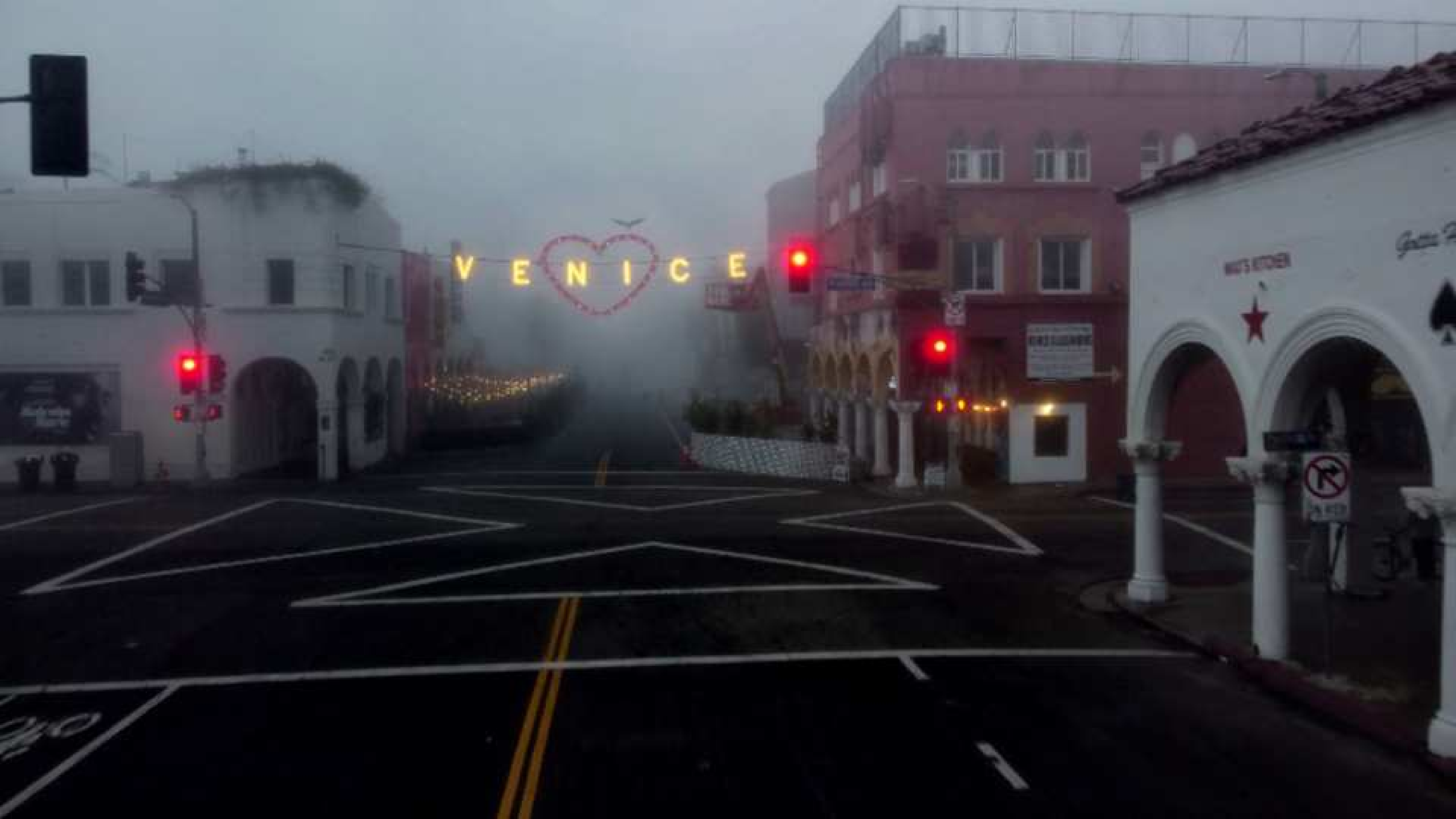}{2074 864 653 639} &
      \zoomcrop{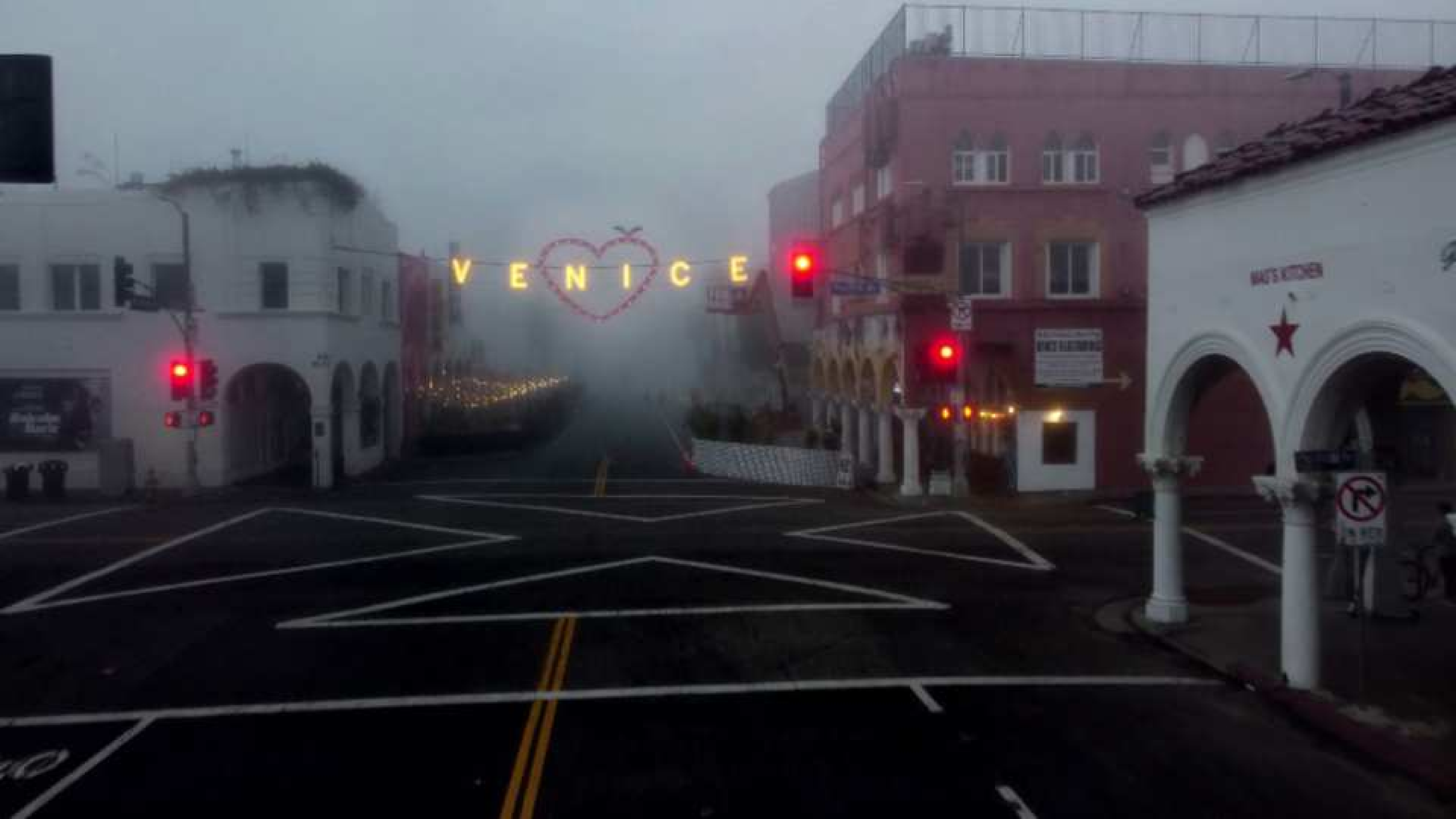}{2074 864 653 639} &
      \zoomcrop{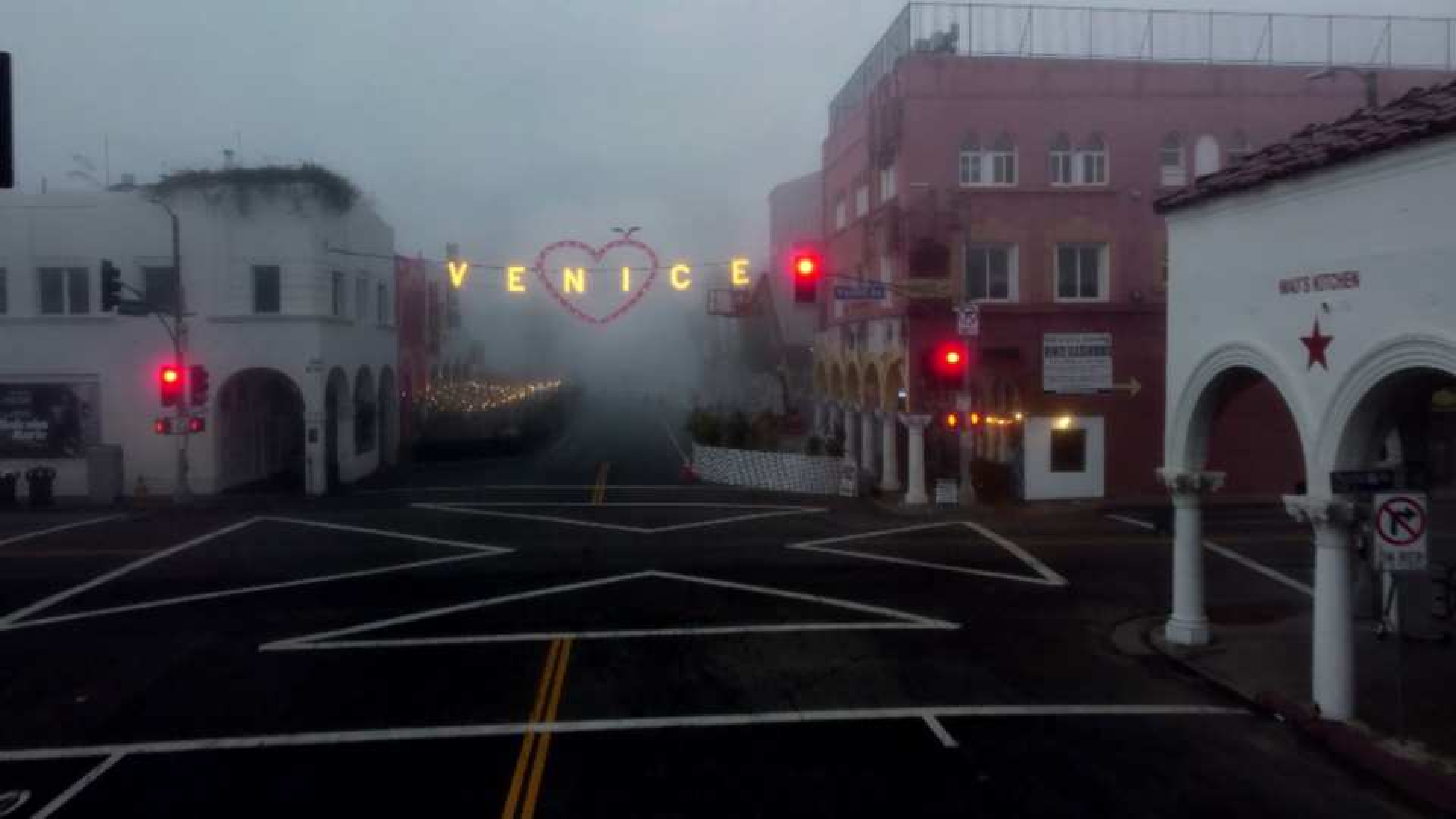}{2074 864 653 639} &
      \zoomcrop{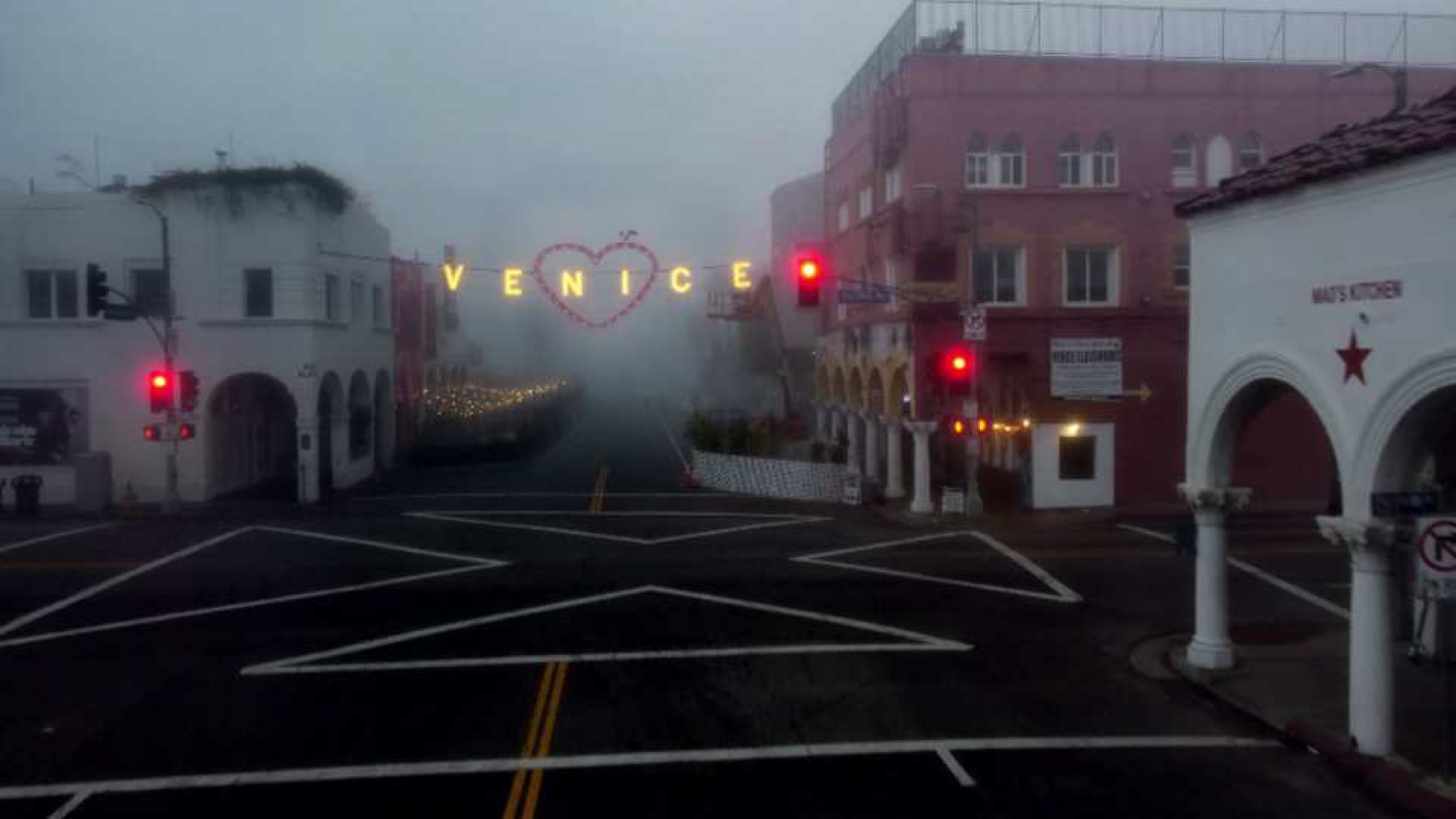}{2074 864 653 639} &
      \zoomcrop{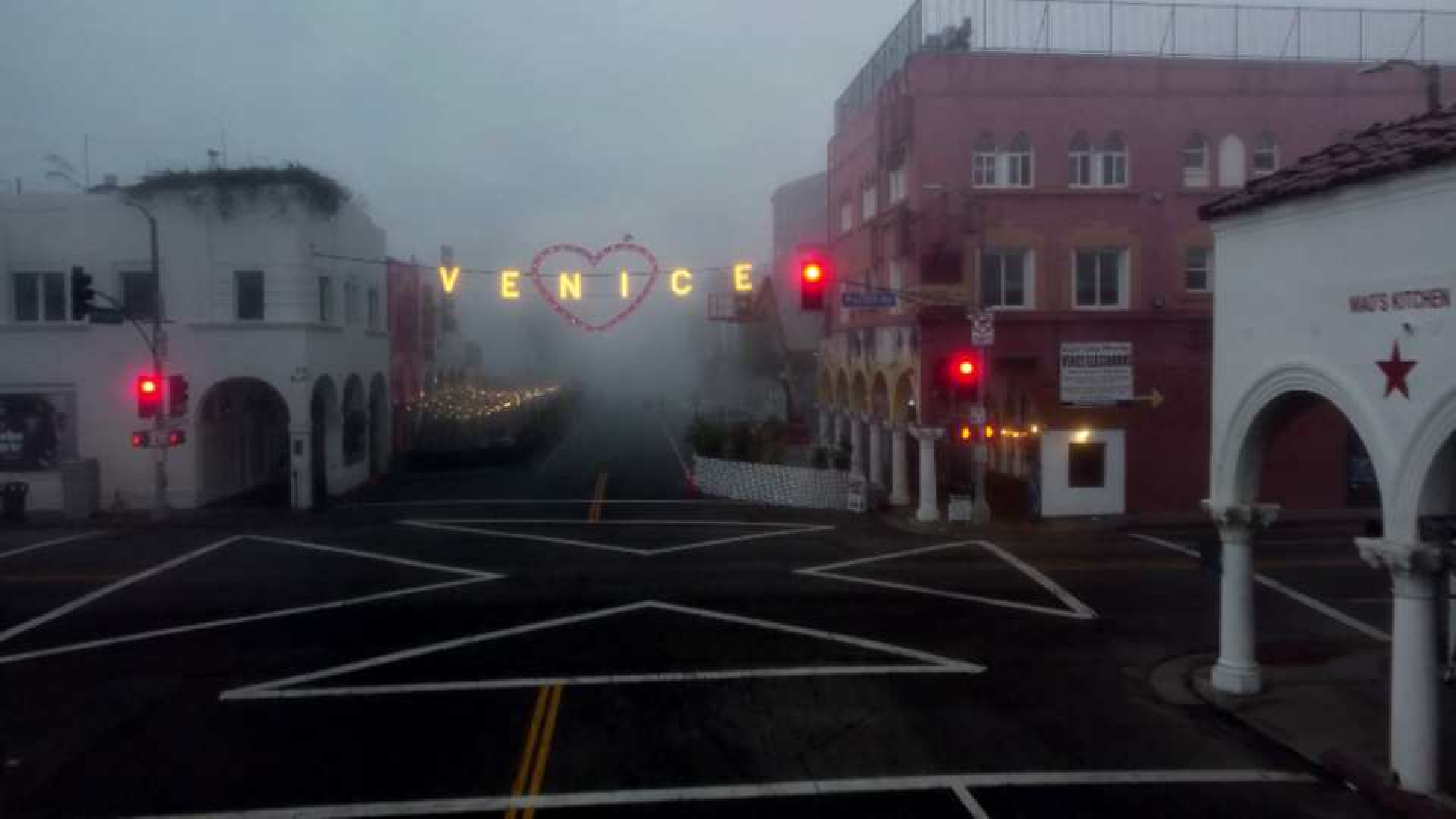}{2074 864 653 639} &
      \zoomcrop{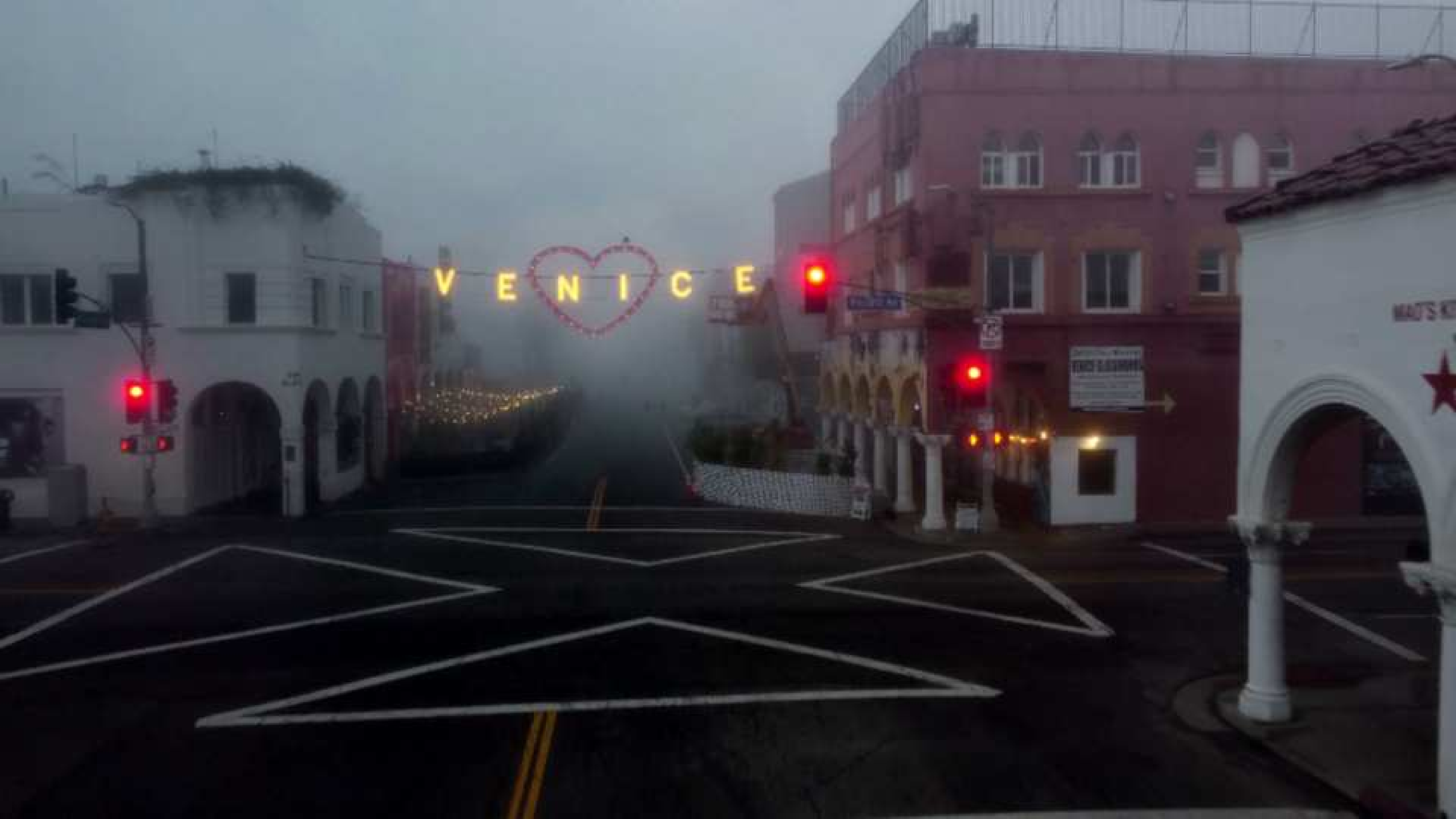}{2074 864 653 639} &
      \zoomcrop{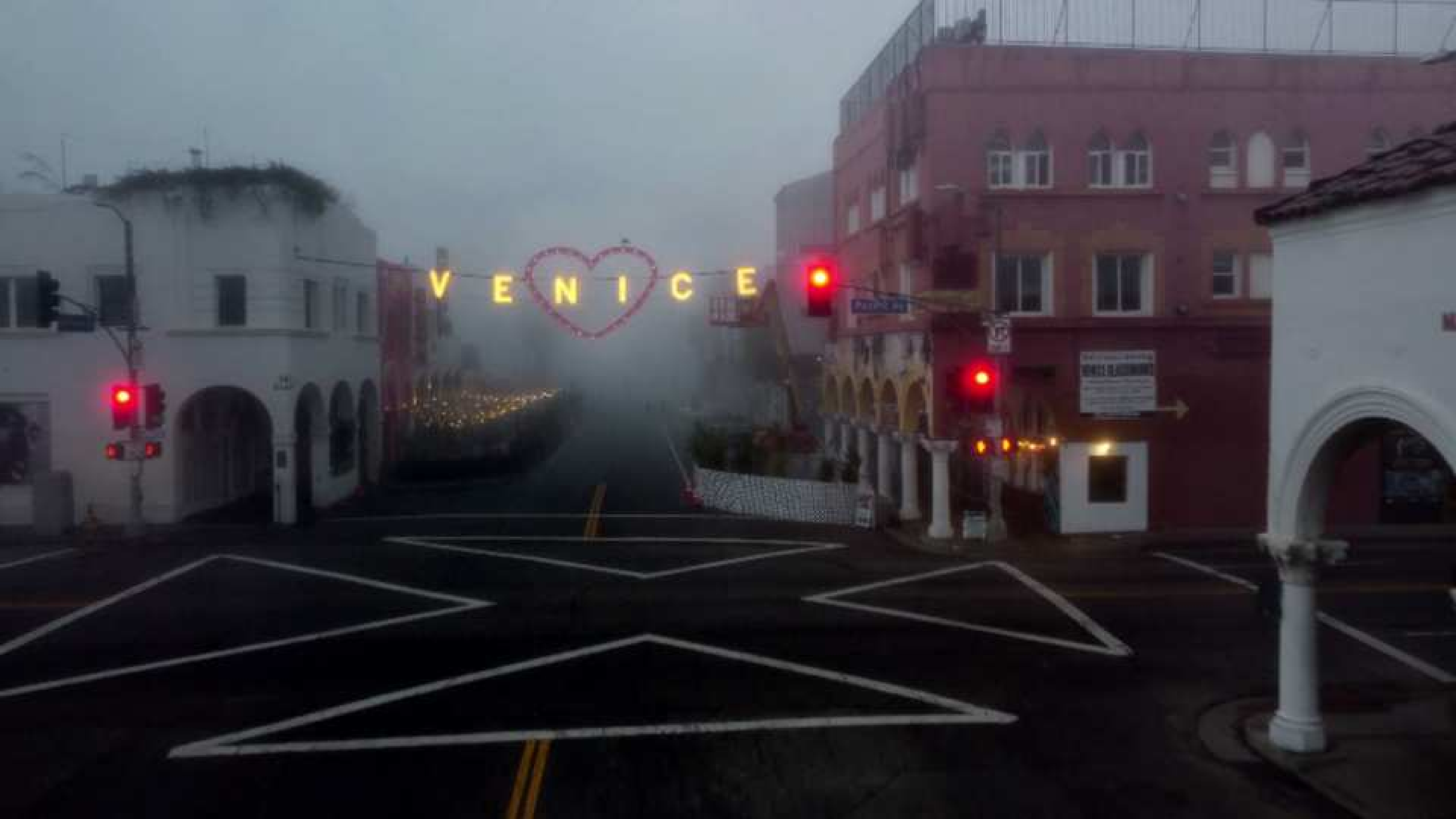}{2074 864 653 639} \\[0pt]
    \rotatebox{0}{\scriptsize\textbf{(c)}} &
      \zoomcrop{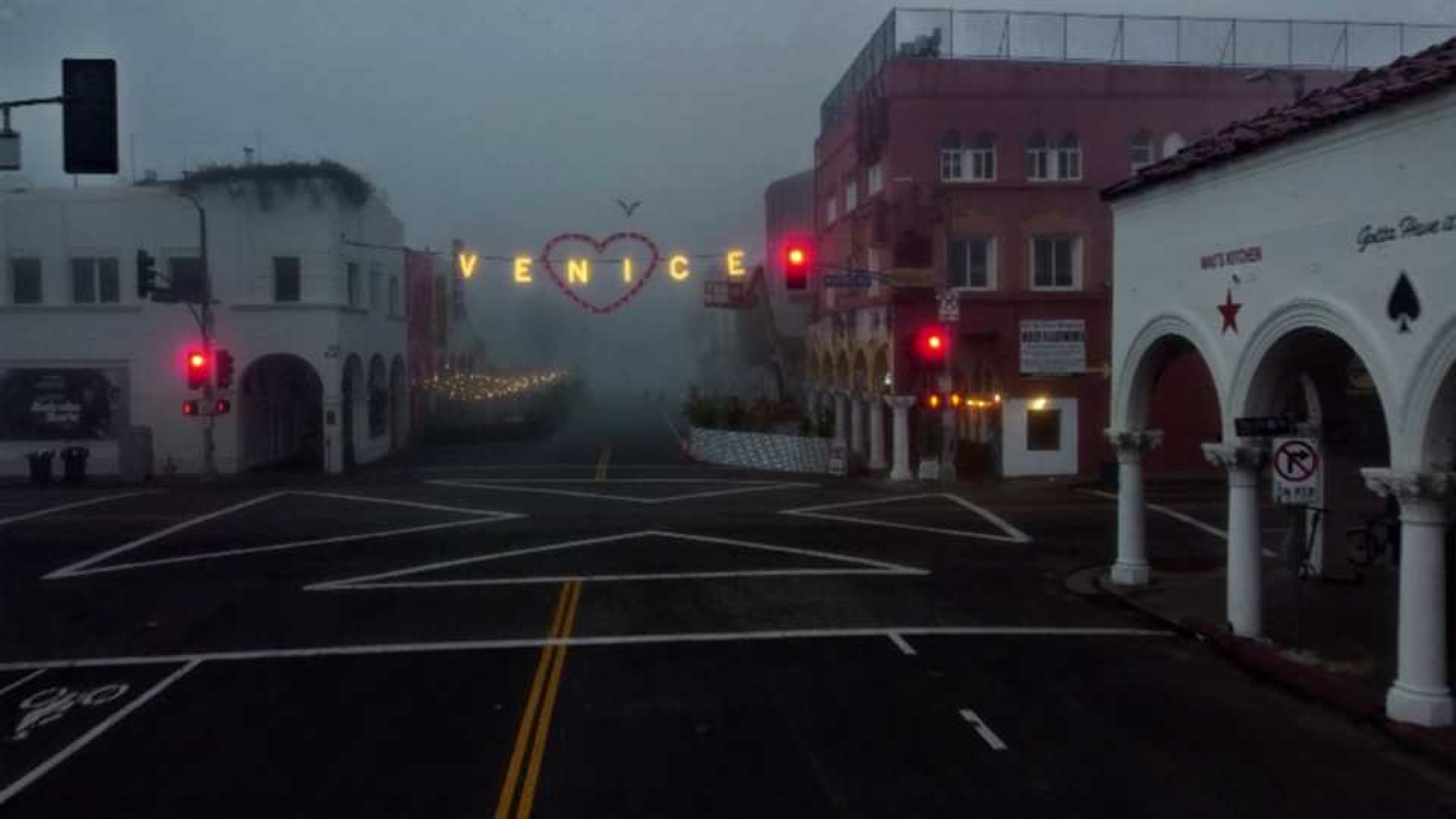}{2074 864 653 639} &
      \zoomcrop{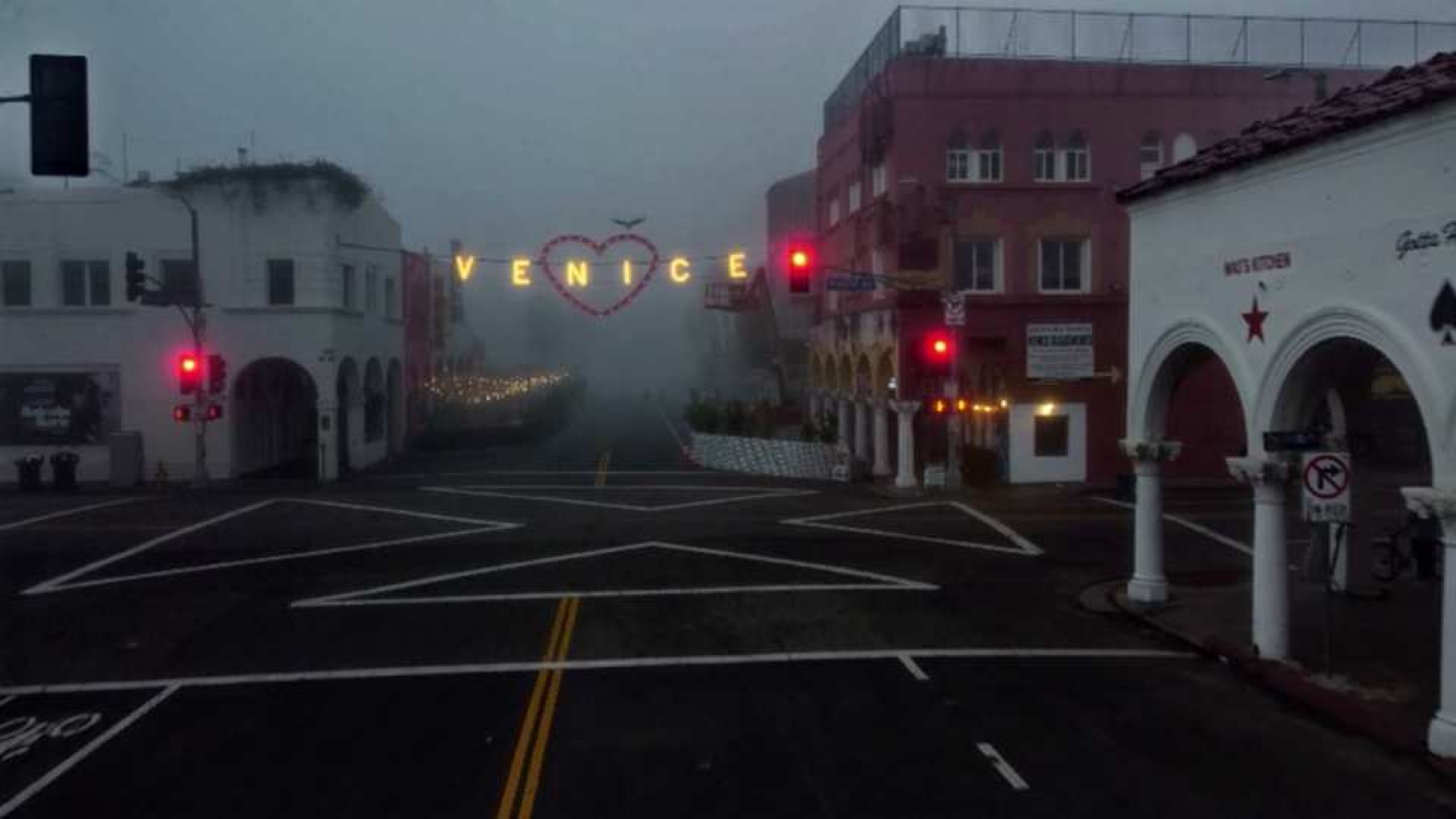}{2074 864 653 639} &
      \zoomcrop{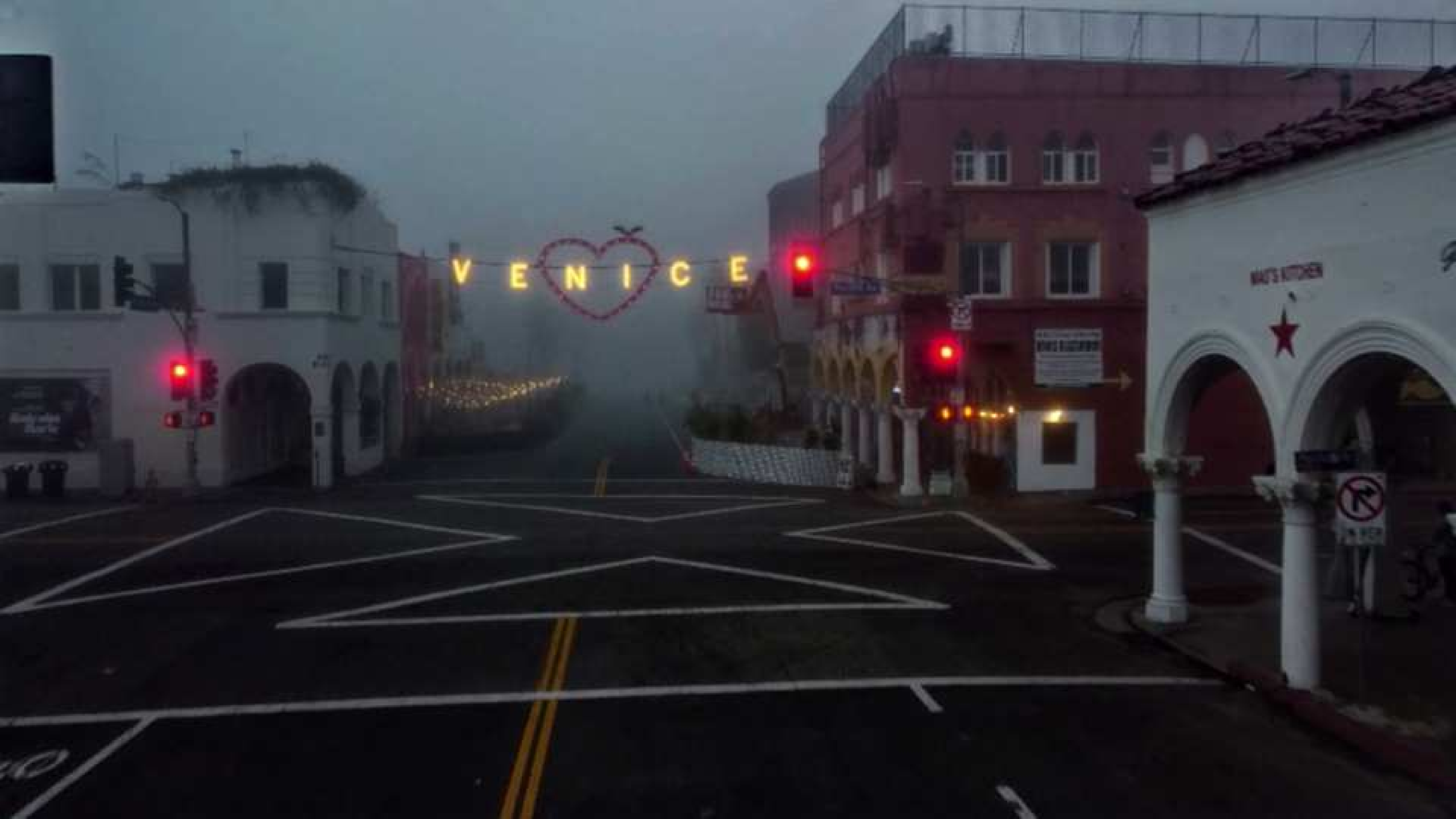}{2074 864 653 639} &
      \zoomcrop{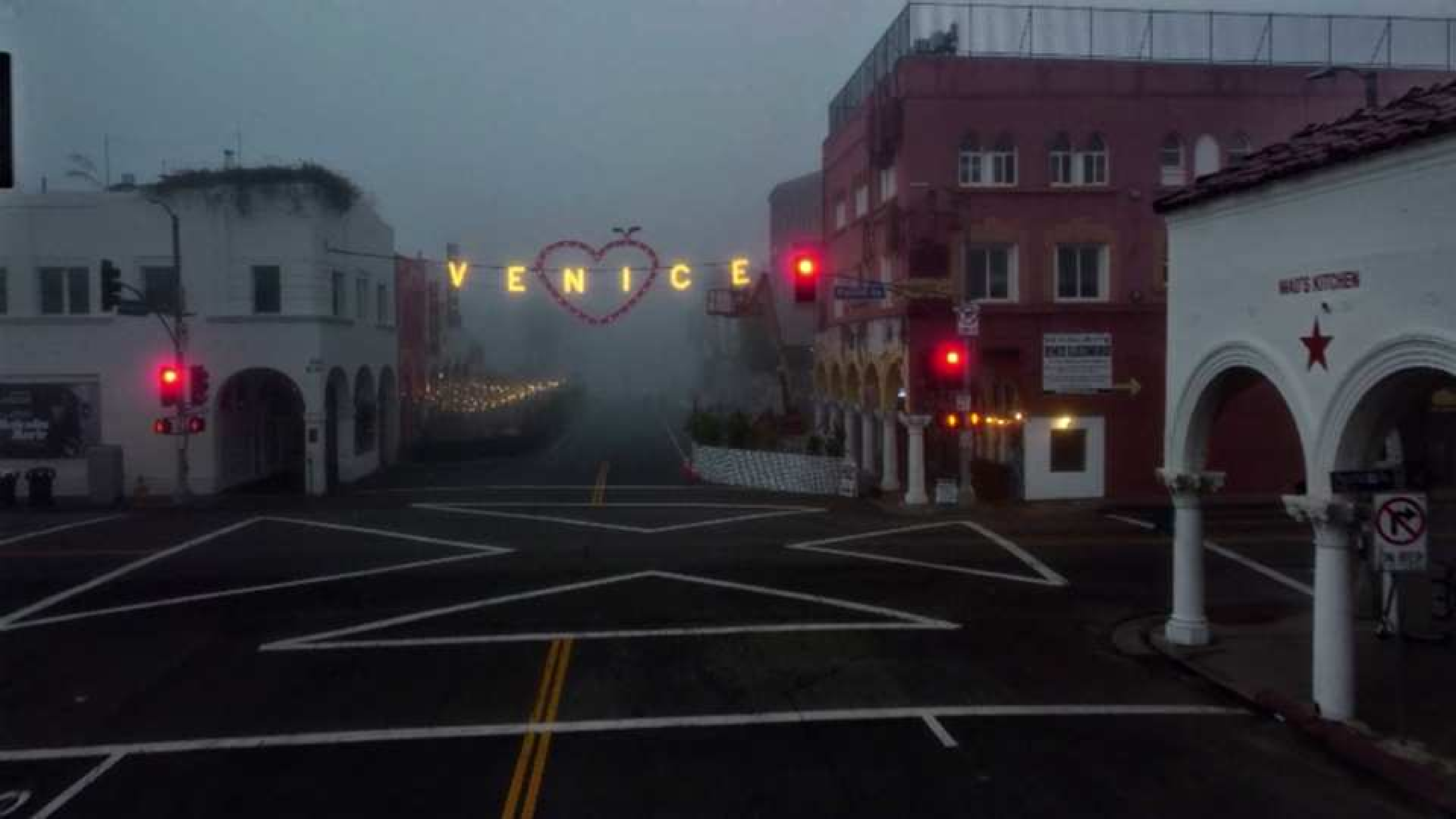}{2074 864 653 639} &
      \zoomcrop{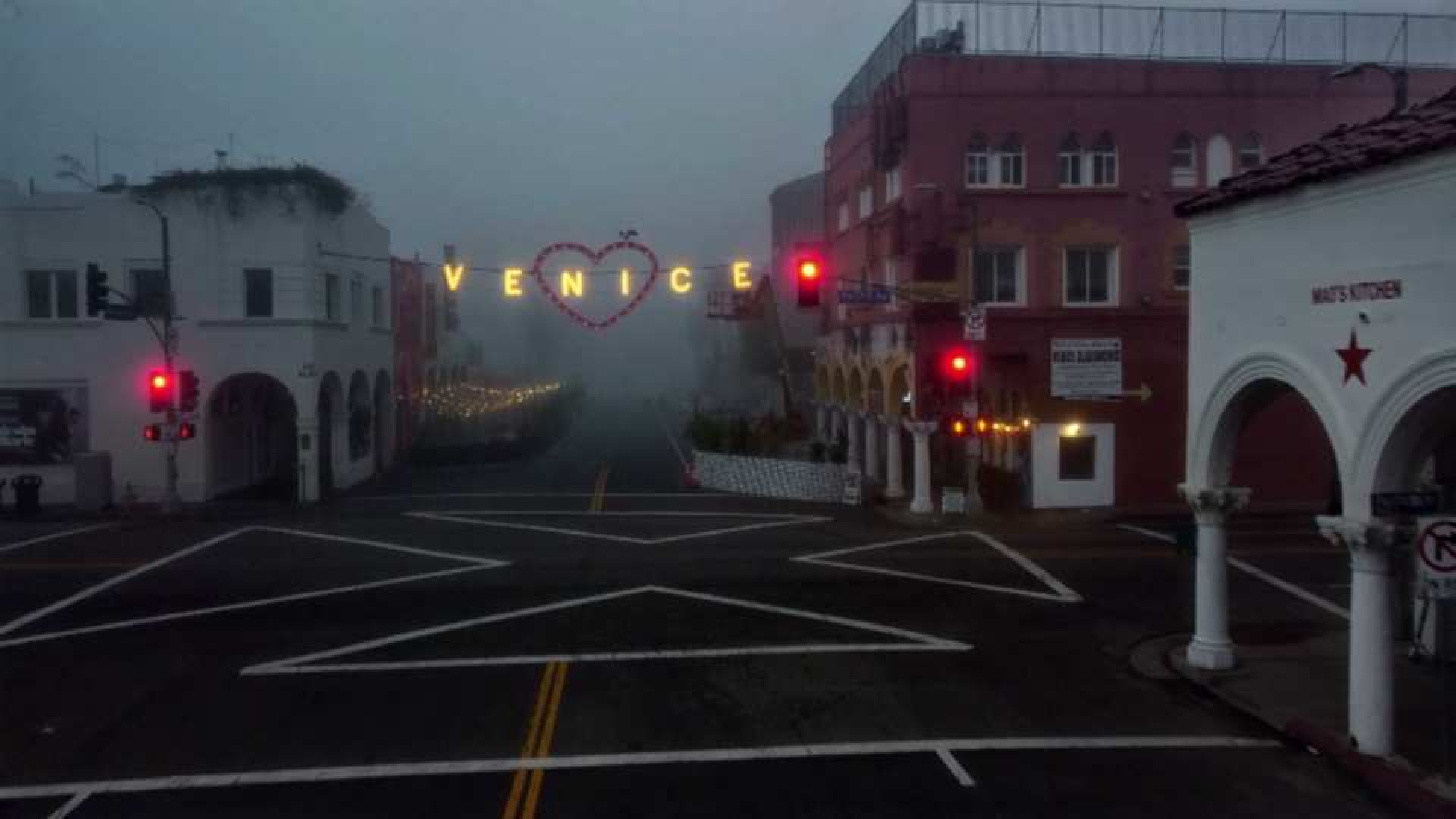}{2074 864 653 639} &
      \zoomcrop{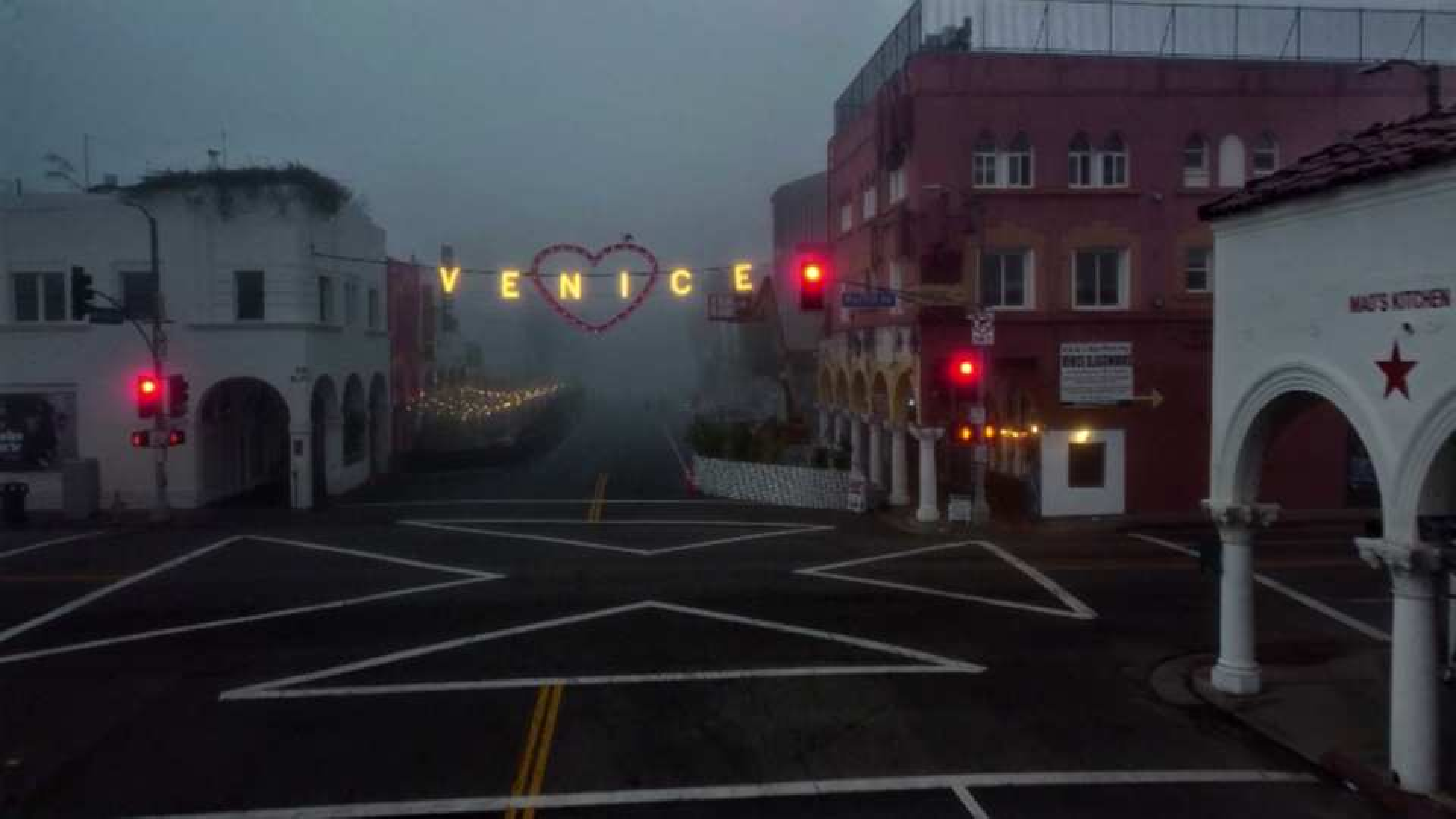}{2074 864 653 639} &
      \zoomcrop{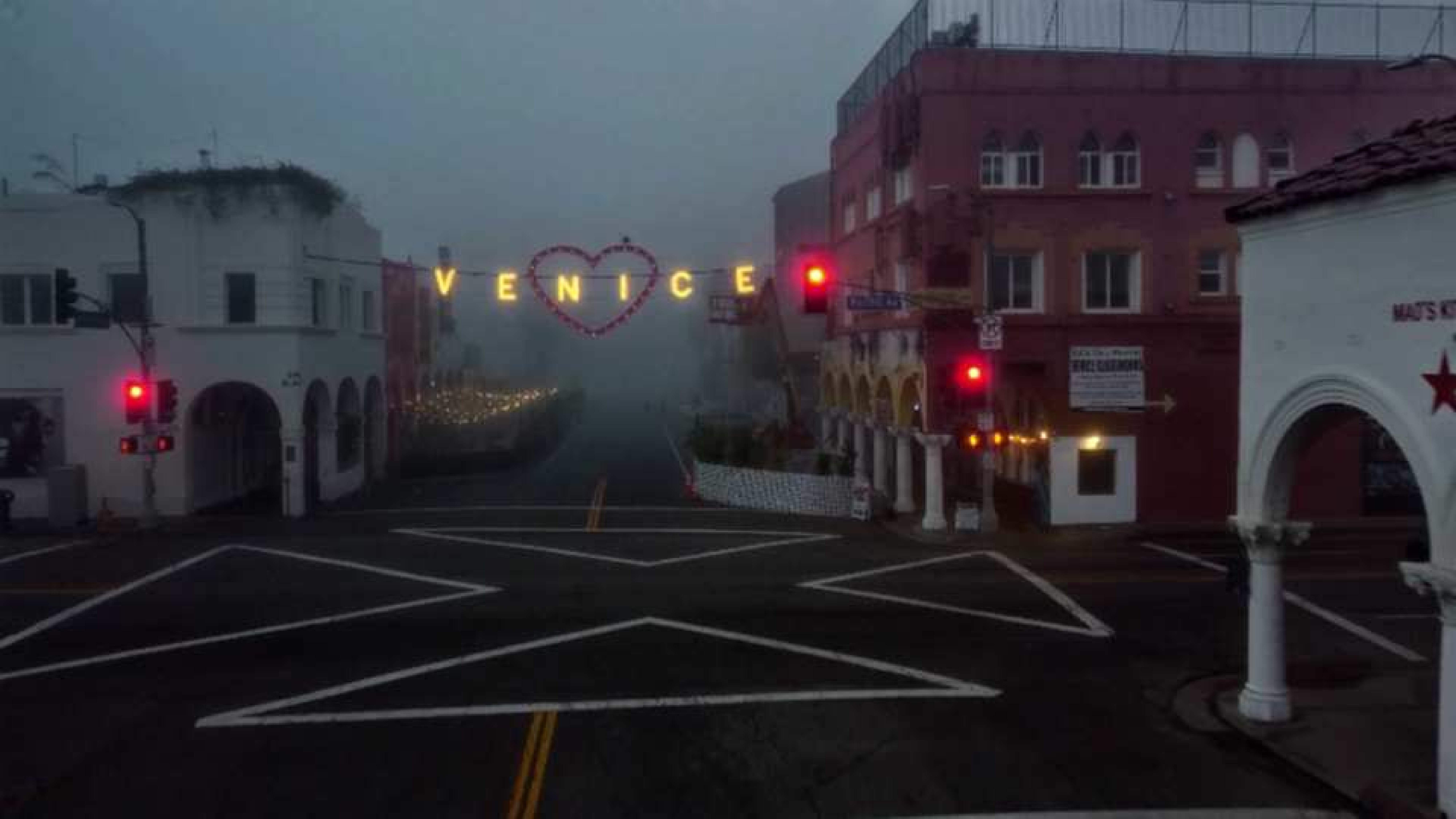}{2074 864 653 639} &
      \zoomcrop{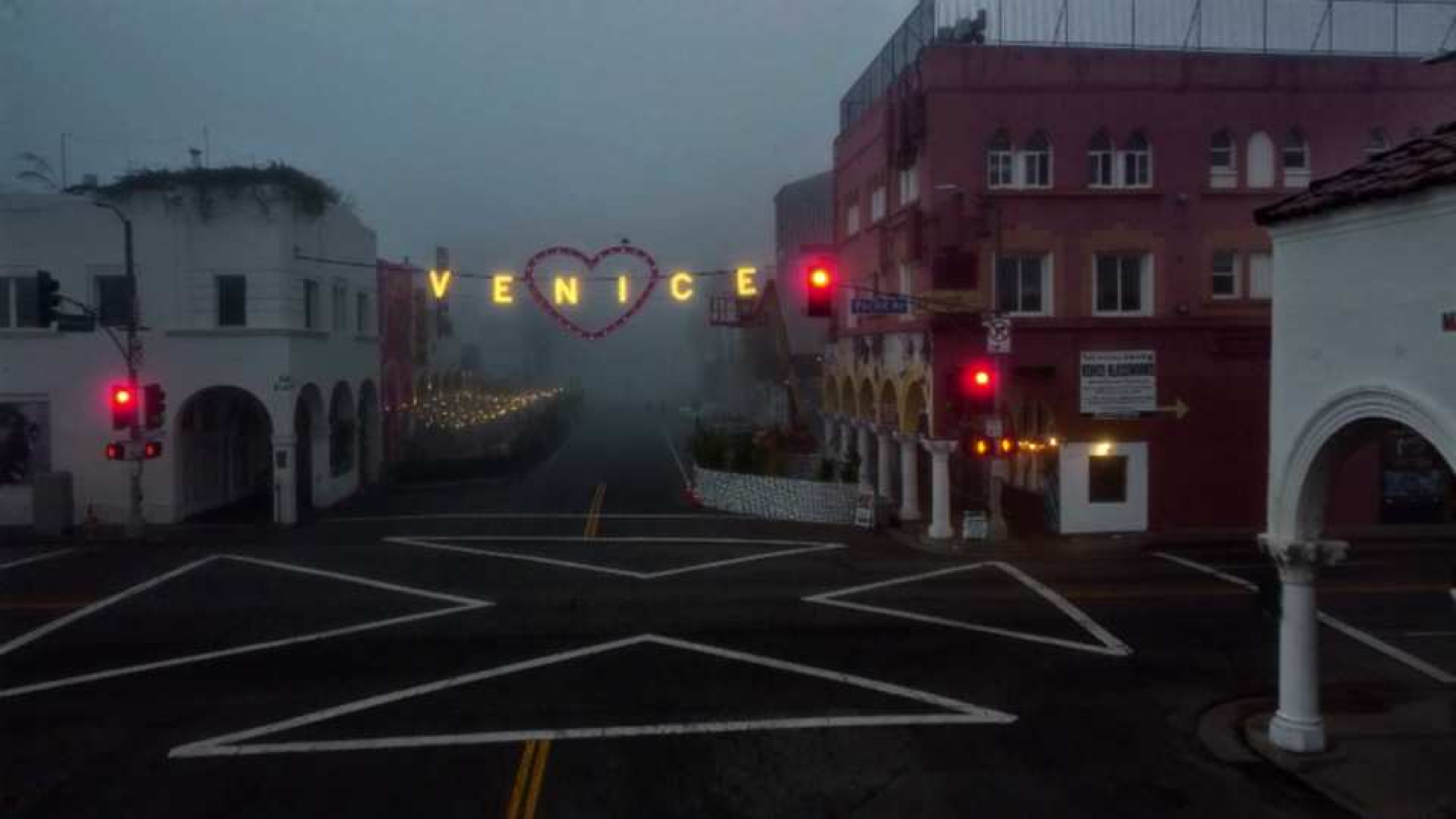}{2074 864 653 639} \\[0pt]
    \rotatebox{0}{\scriptsize\textbf{(d)}} &
      \zoomcrop{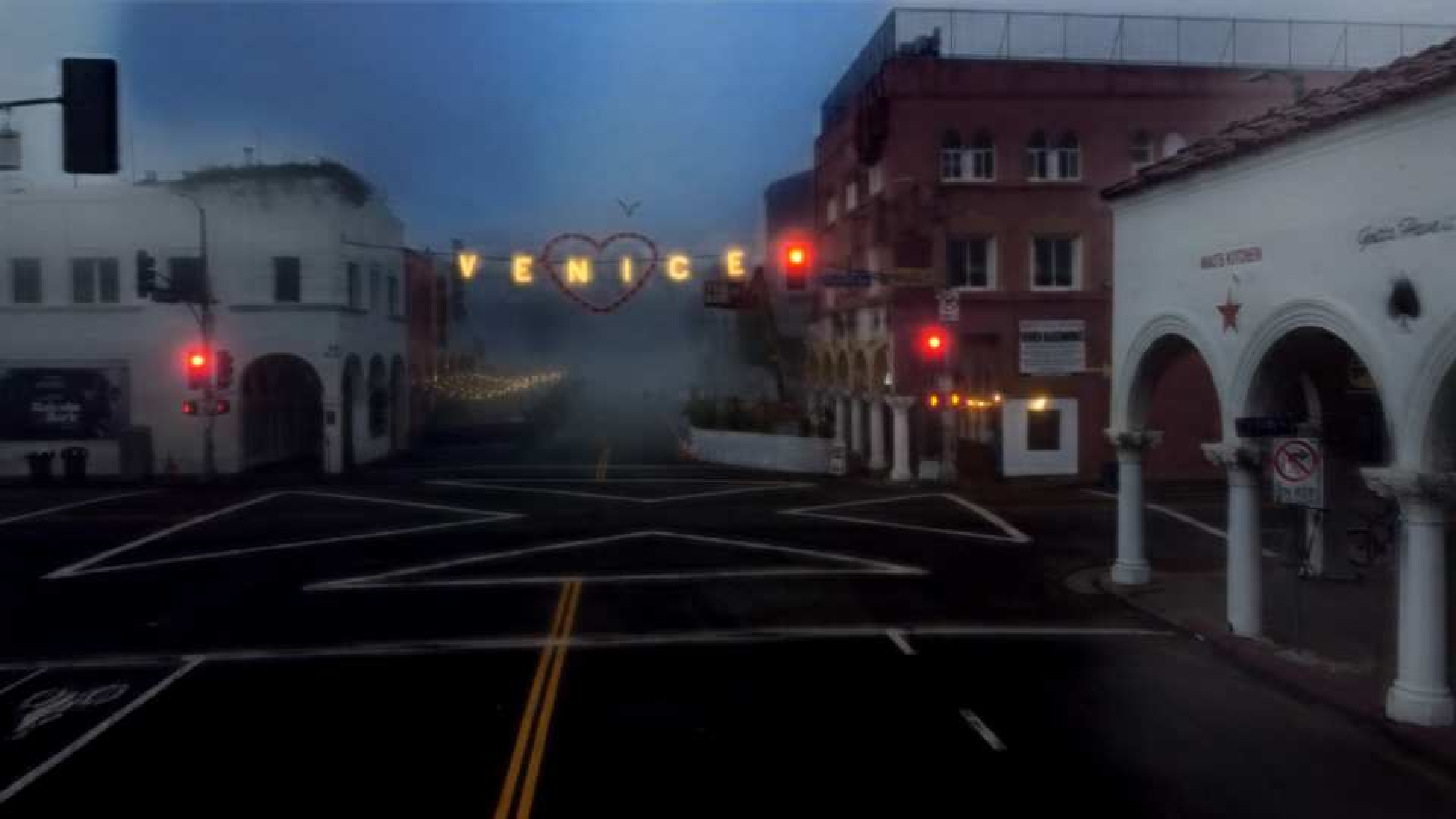}{2074 864 653 639} &
      \zoomcrop{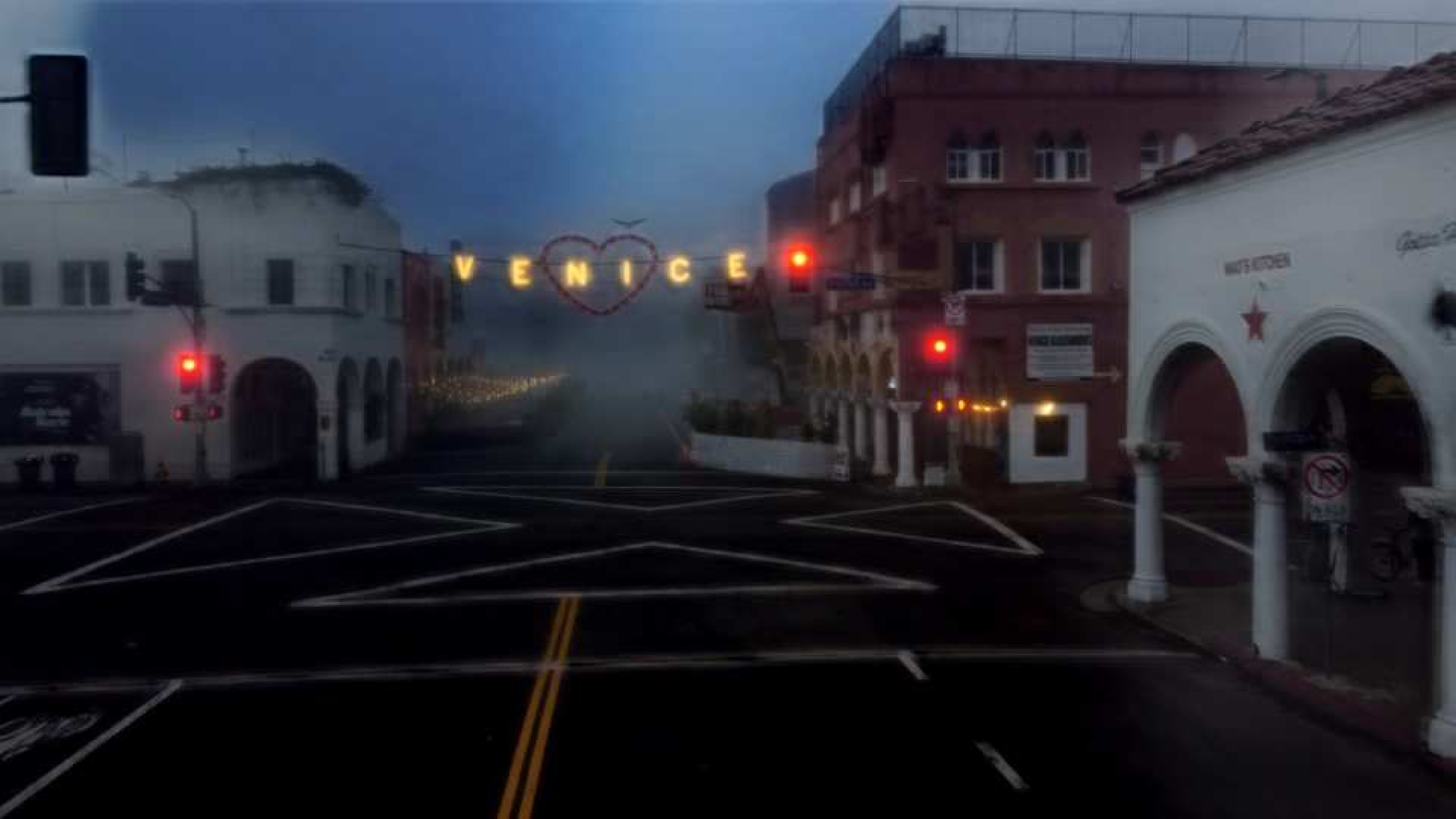}{2074 864 653 639} &
      \zoomcrop{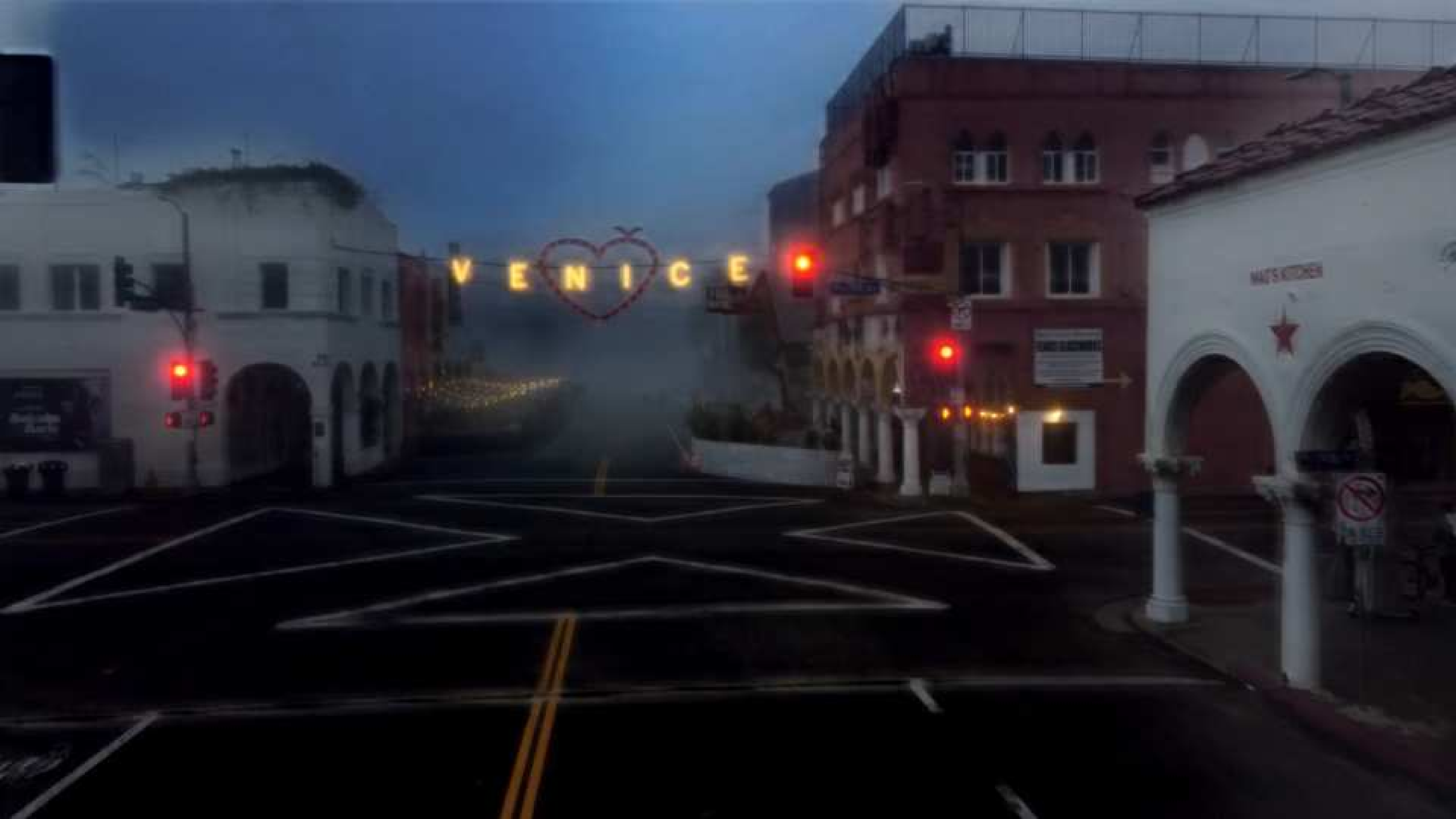}{2074 864 653 639} &
      \zoomcrop{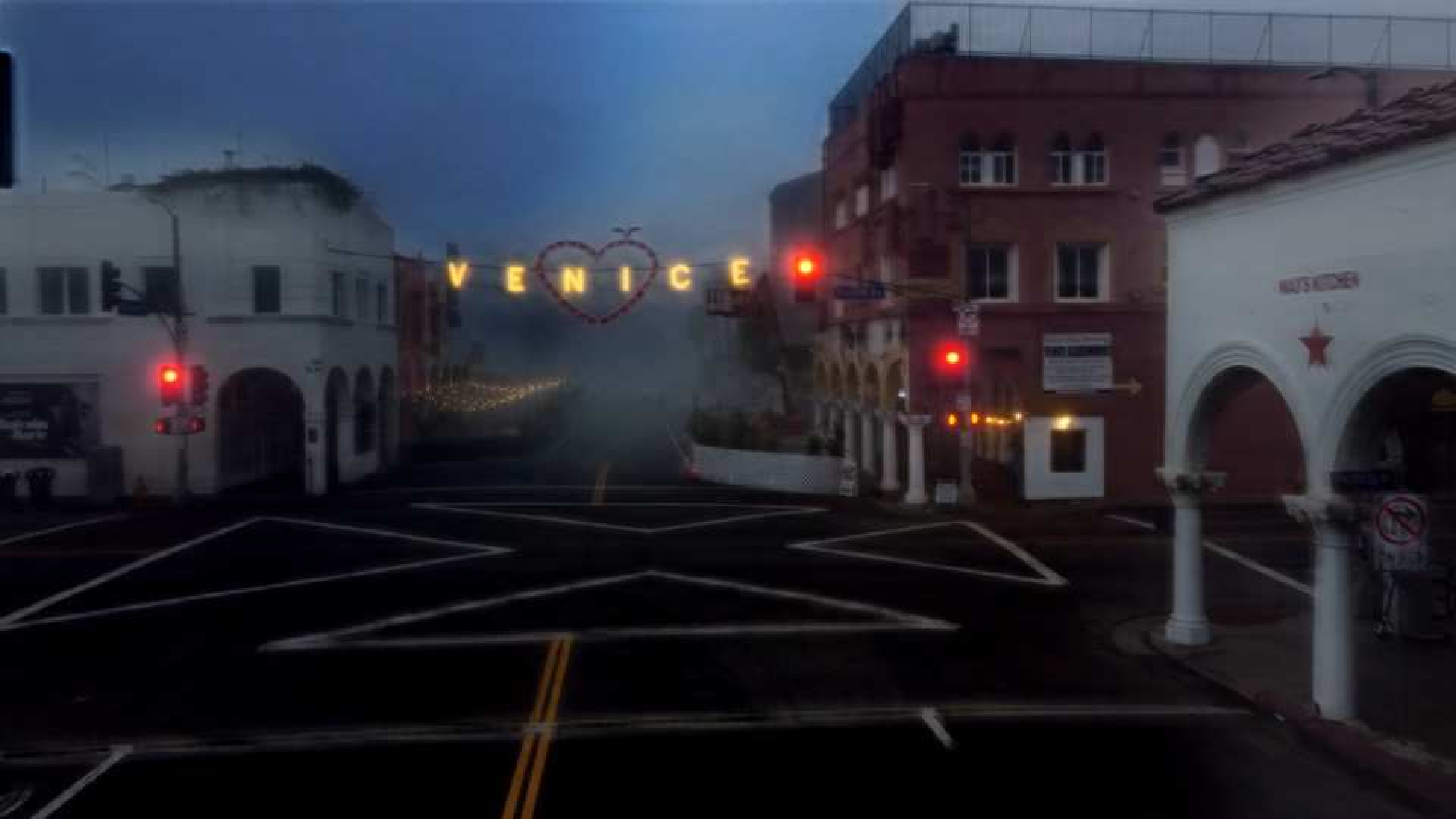}{2074 864 653 639} &
      \zoomcrop{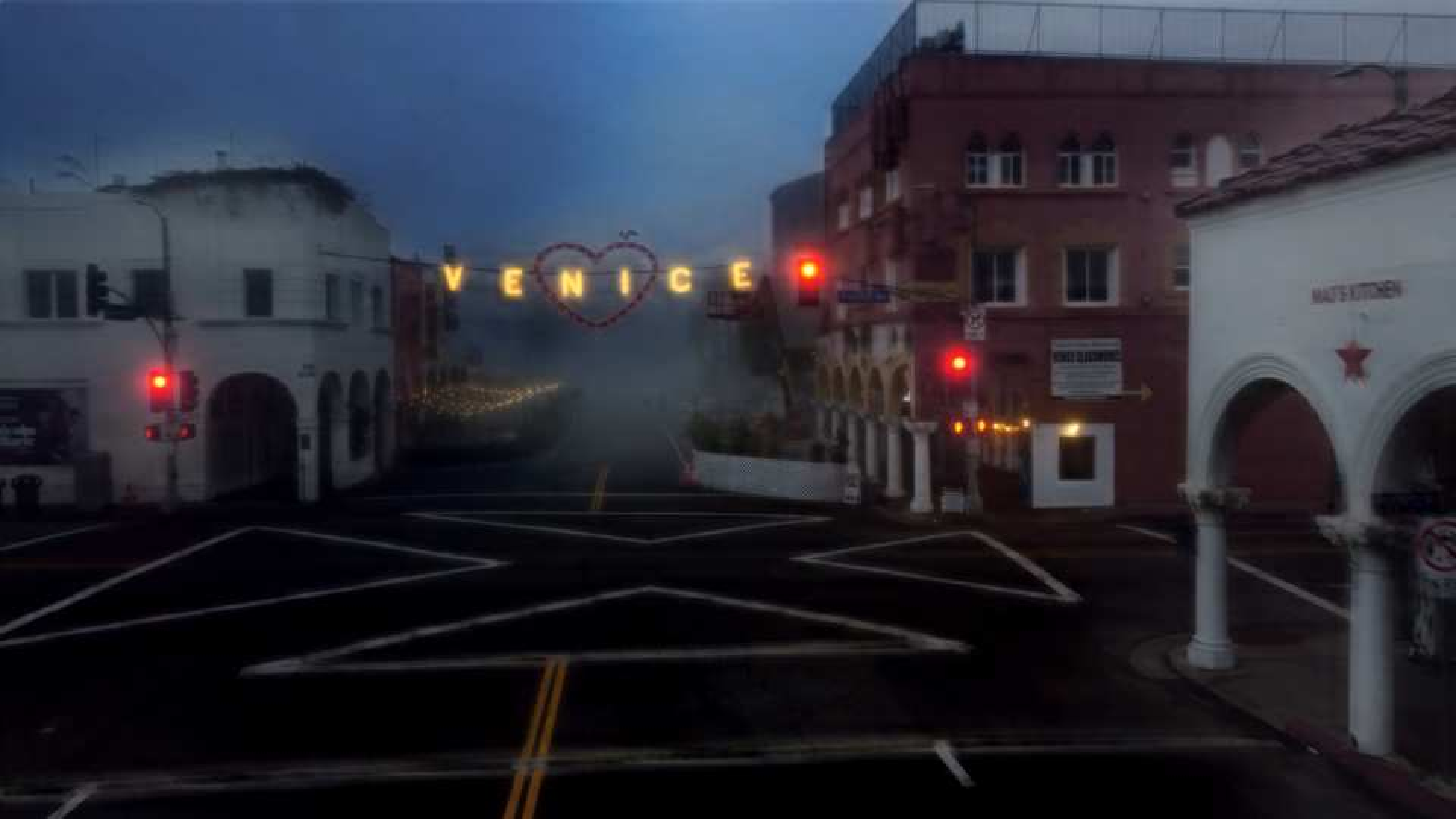}{2074 864 653 639} &
      \zoomcrop{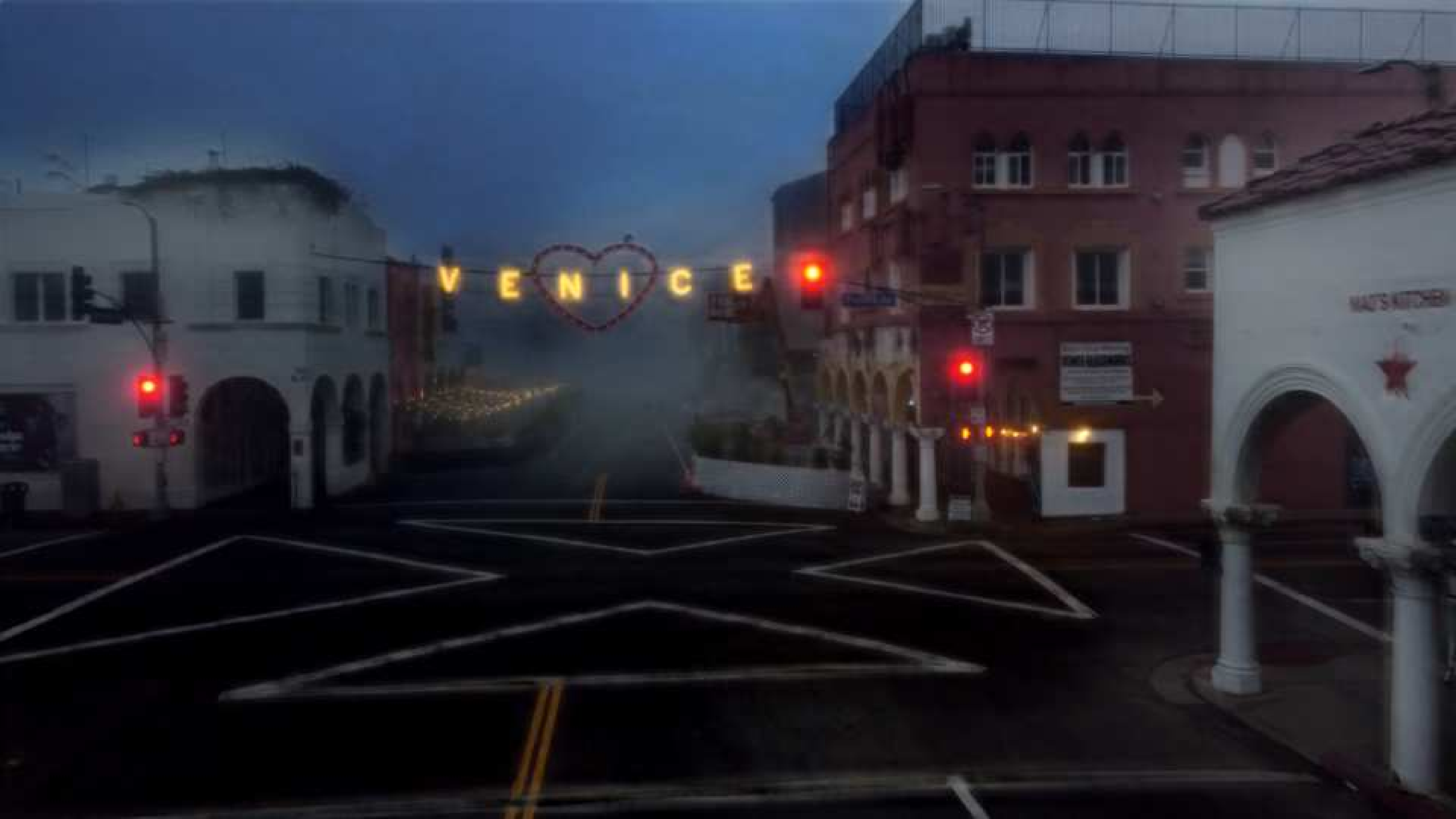}{2074 864 653 639} &
      \zoomcrop{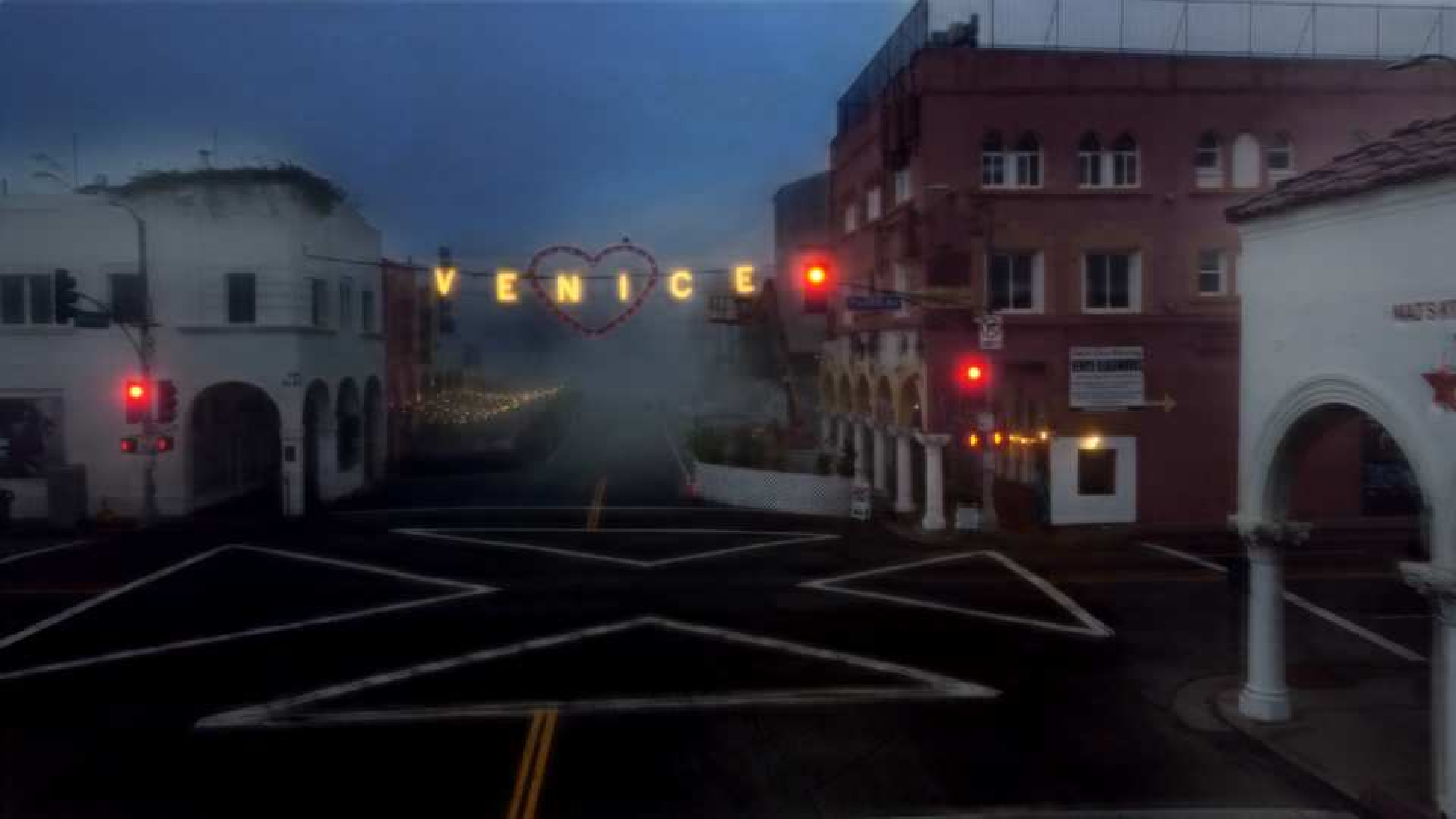}{2074 864 653 639} &
      \zoomcrop{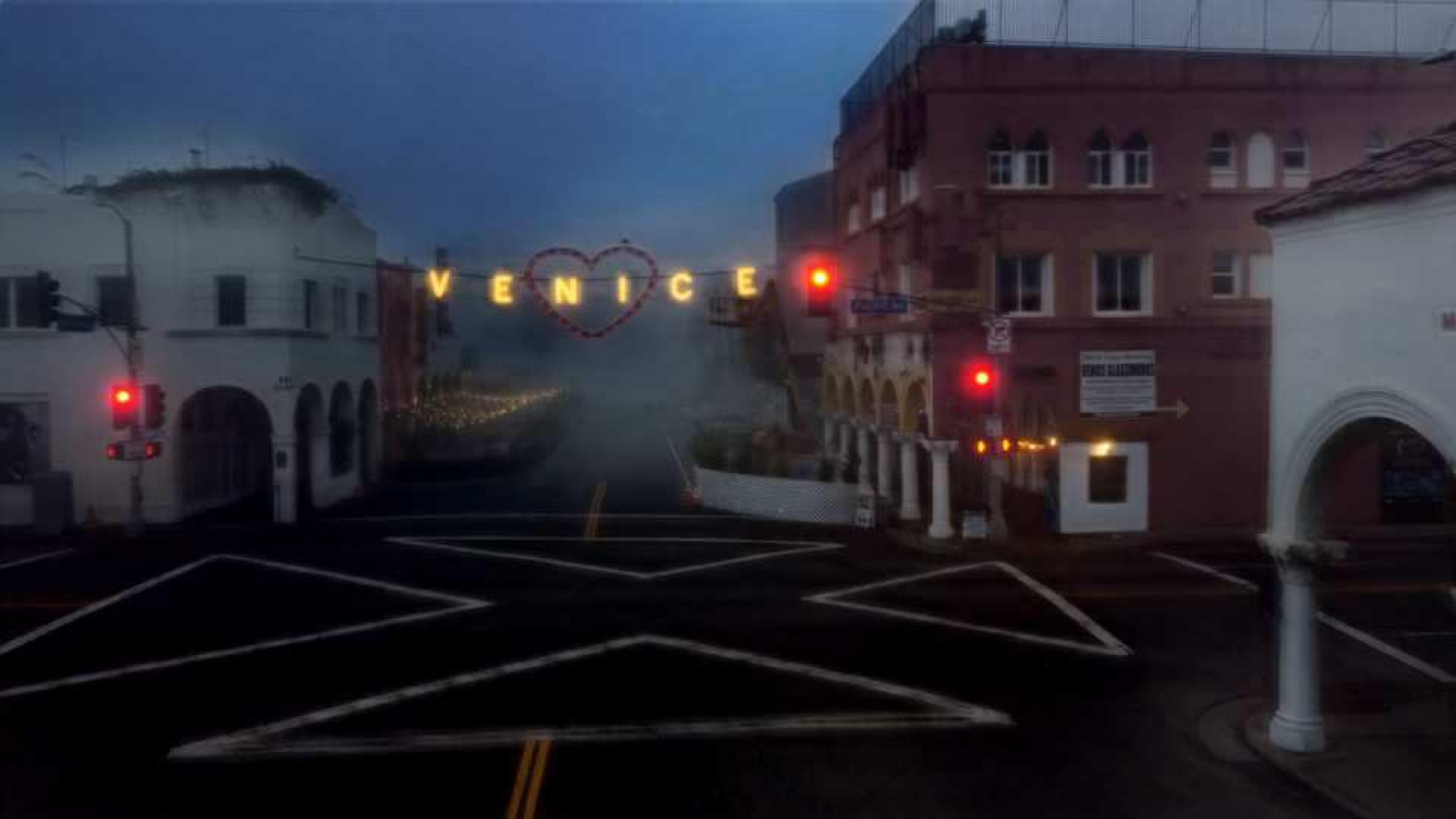}{2074 864 653 639} \\[0pt]
    \rotatebox{0}{\scriptsize\textbf{(e)}} &
      \colorbox{ourgreen}{\zoomcrop{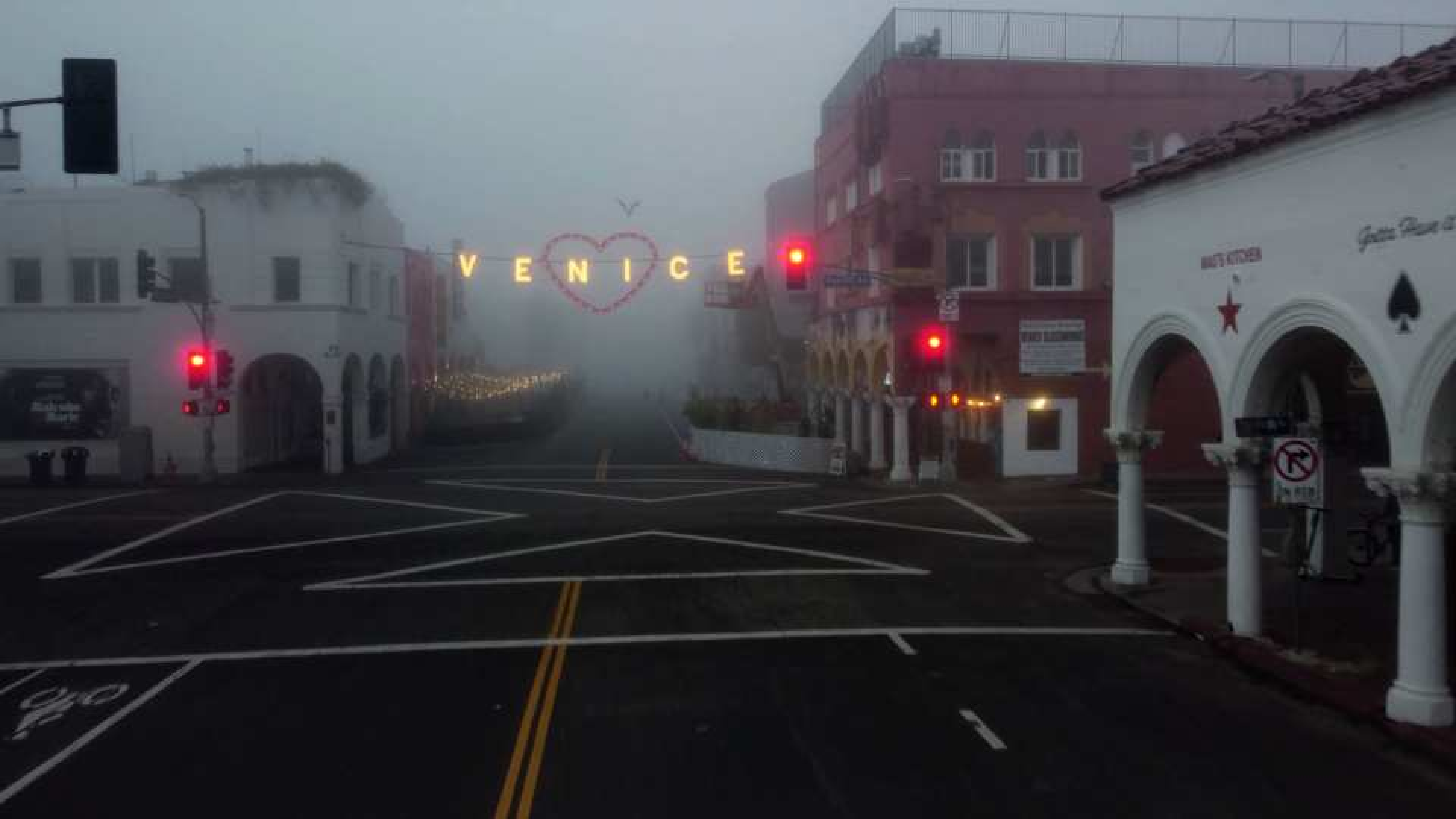}{2074 864 653 639}} &
      \colorbox{ourgreen}{\zoomcrop{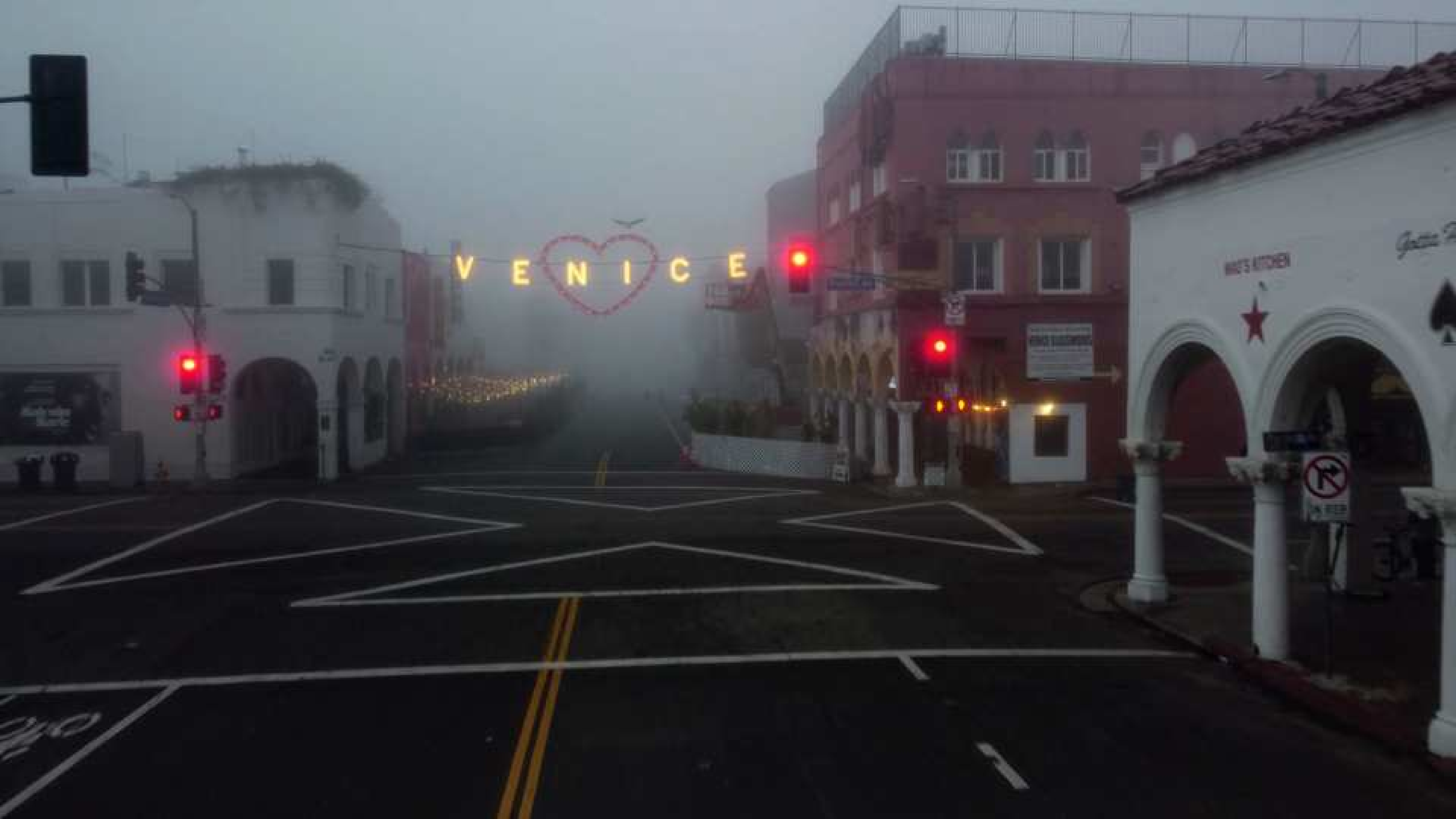}{2074 864 653 639}} &
      \colorbox{ourgreen}{\zoomcrop{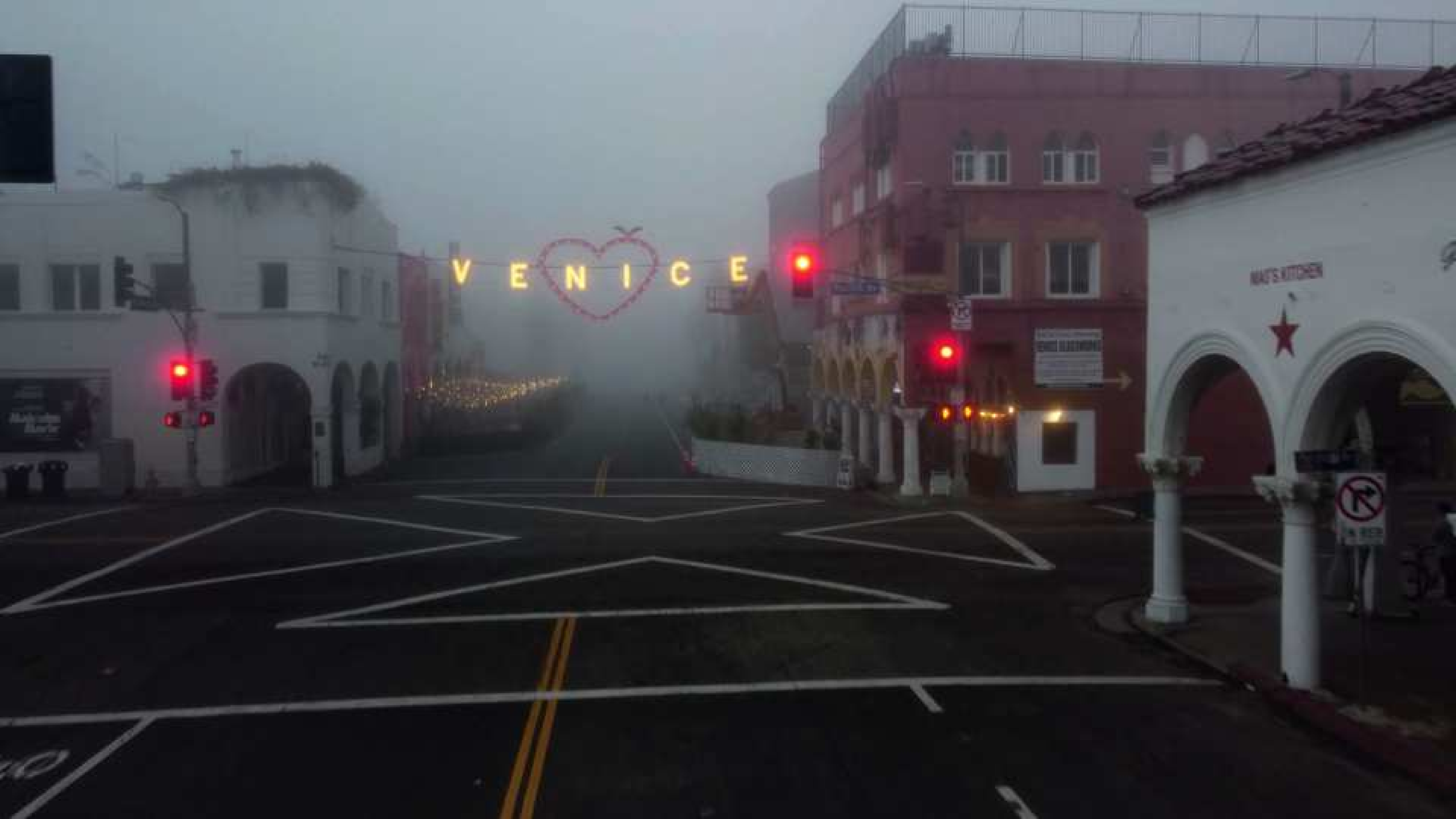}{2074 864 653 639}} &
      \colorbox{ourgreen}{\zoomcrop{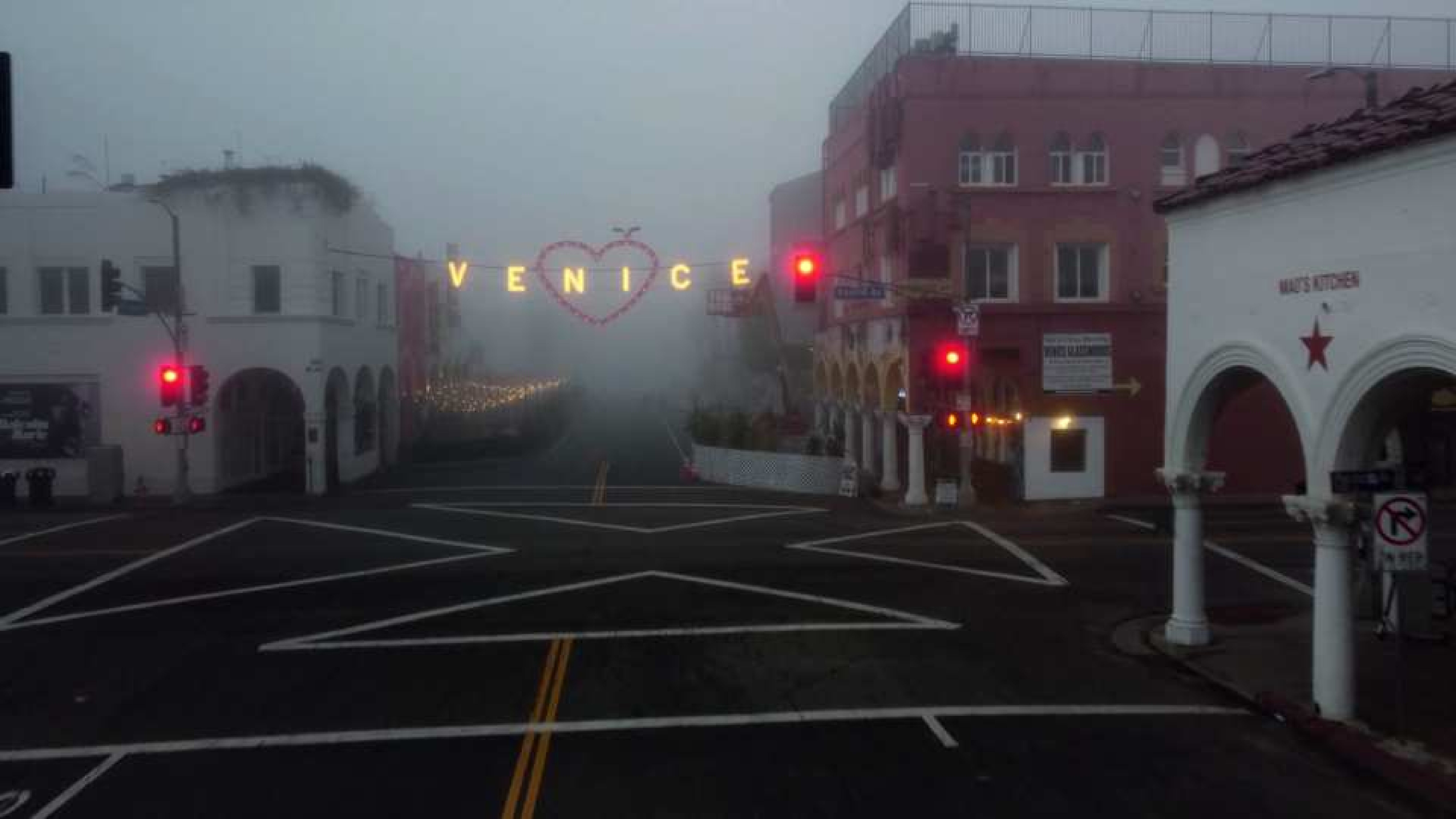}{2074 864 653 639}} &
      \colorbox{ourgreen}{\zoomcrop{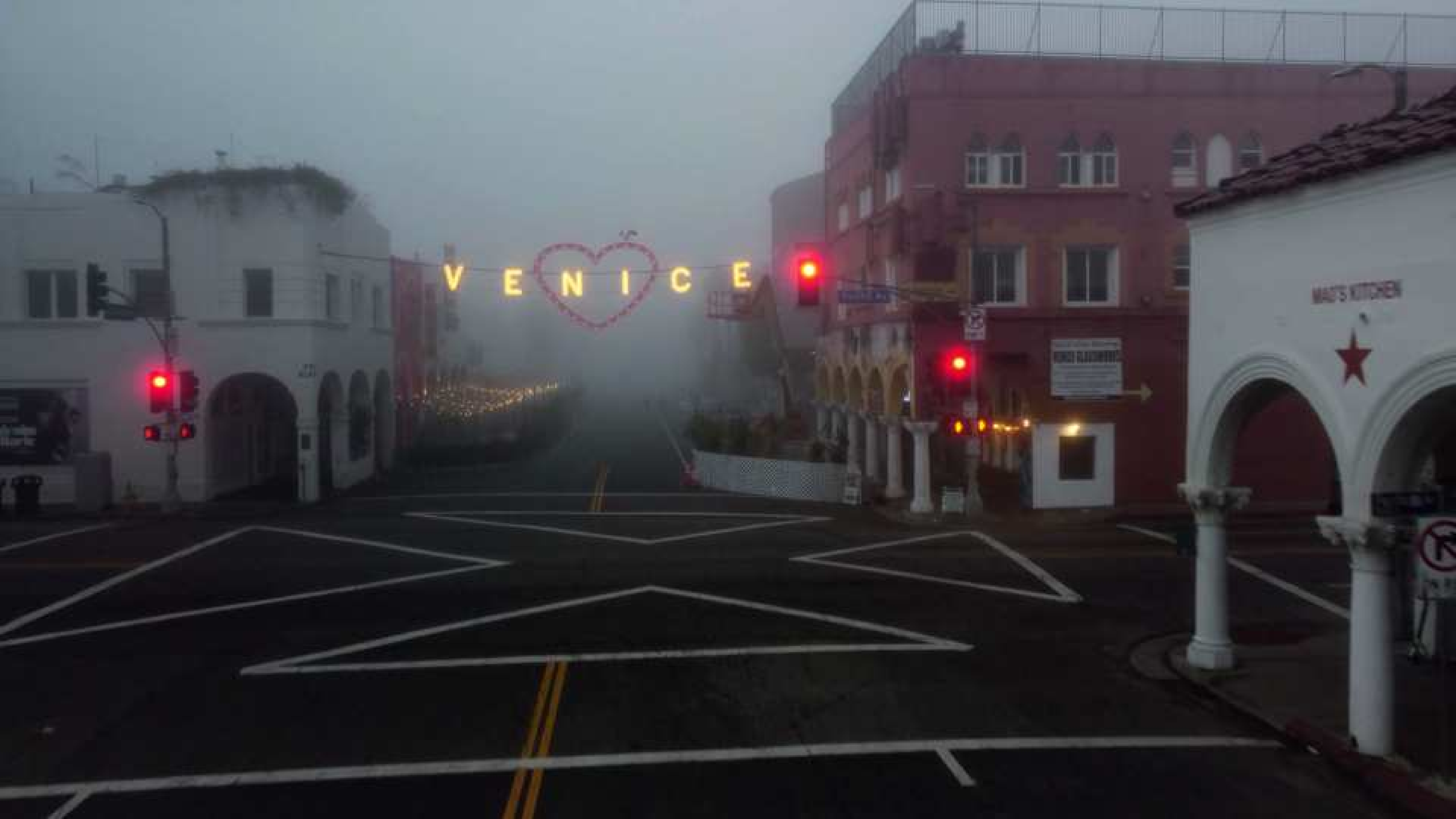}{2074 864 653 639}} &
      \colorbox{ourgreen}{\zoomcrop{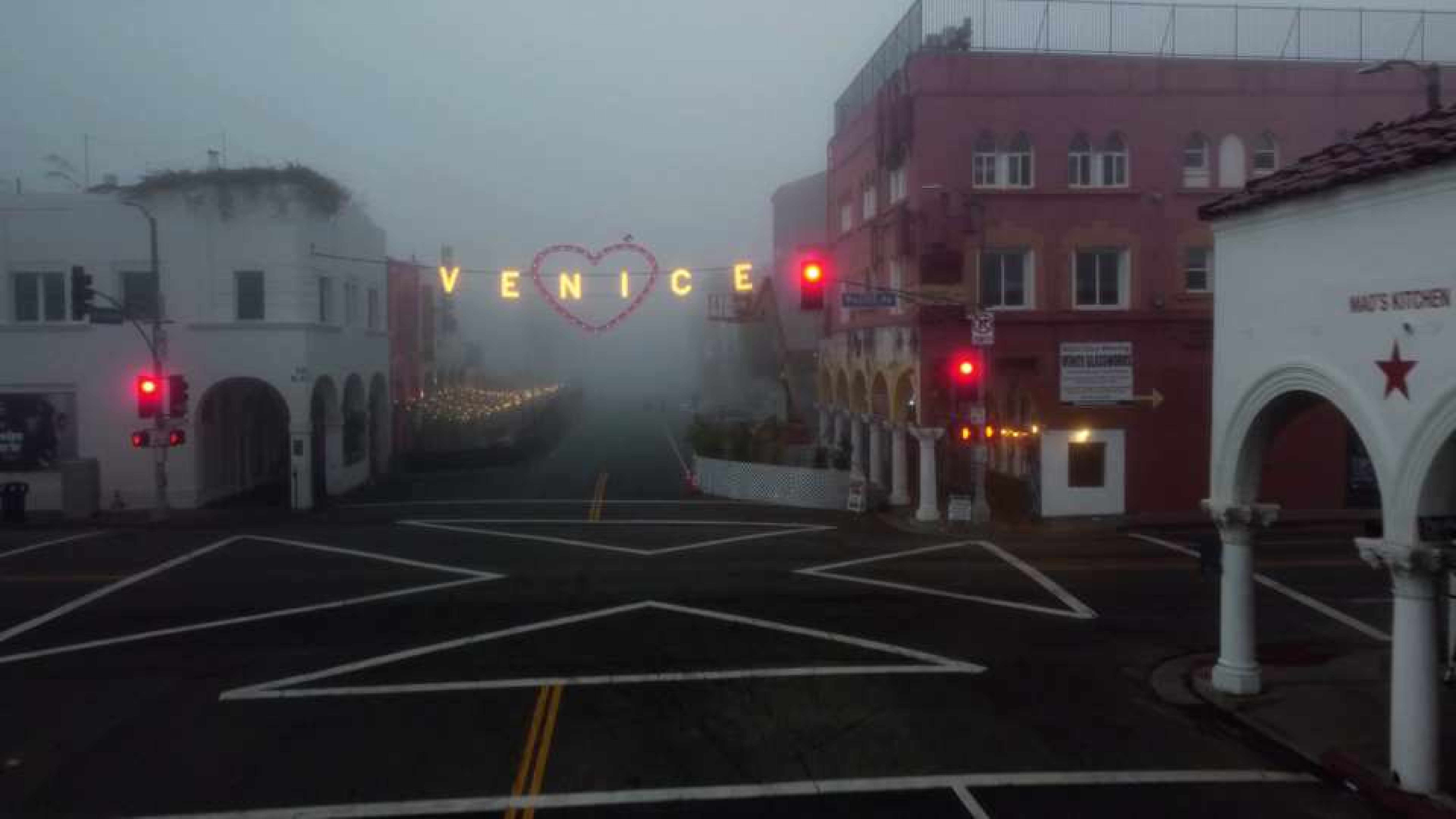}{2074 864 653 639}} &
      \colorbox{ourgreen}{\zoomcrop{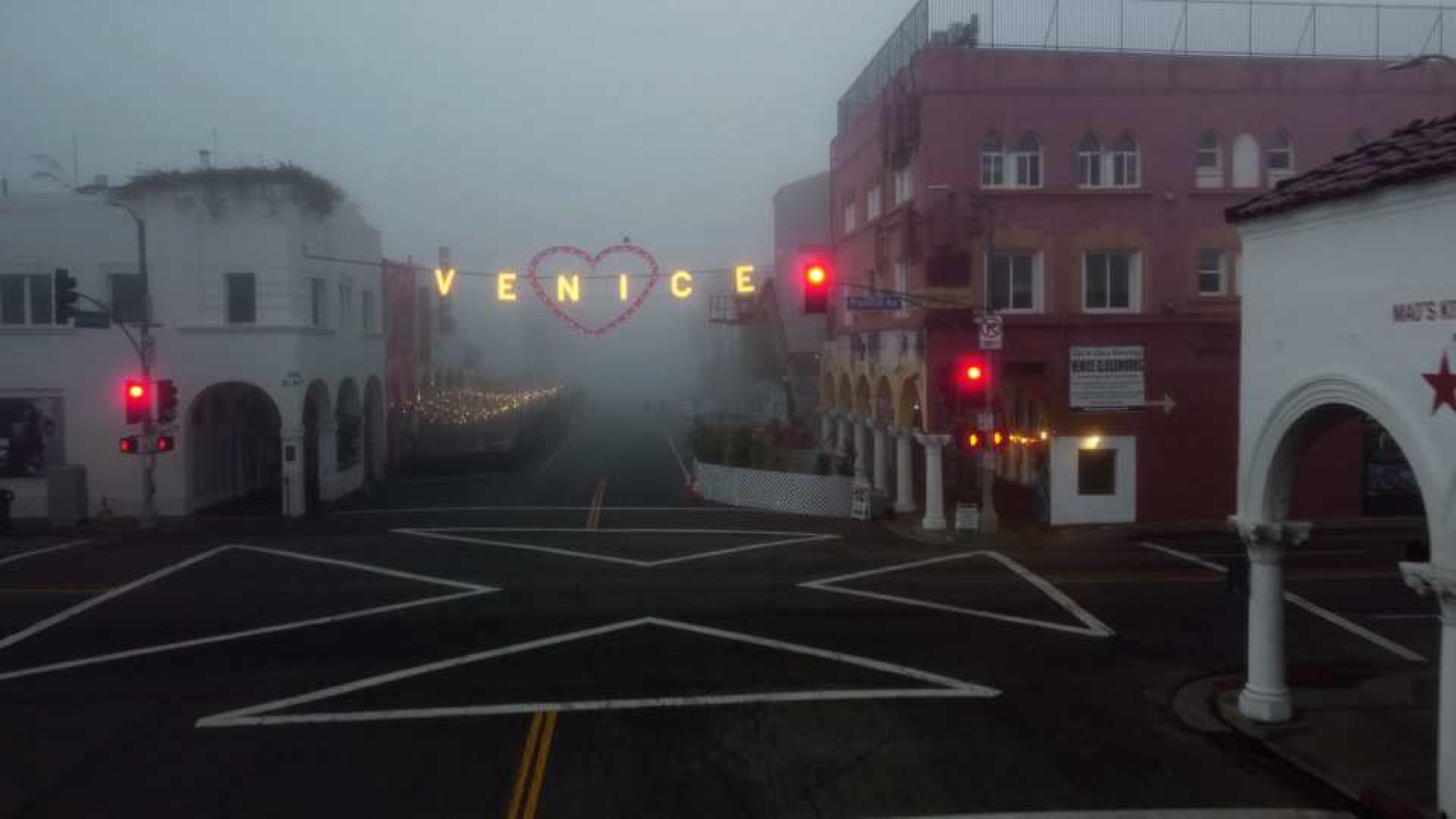}{2074 864 653 639}} &
      \colorbox{ourgreen}{\zoomcrop{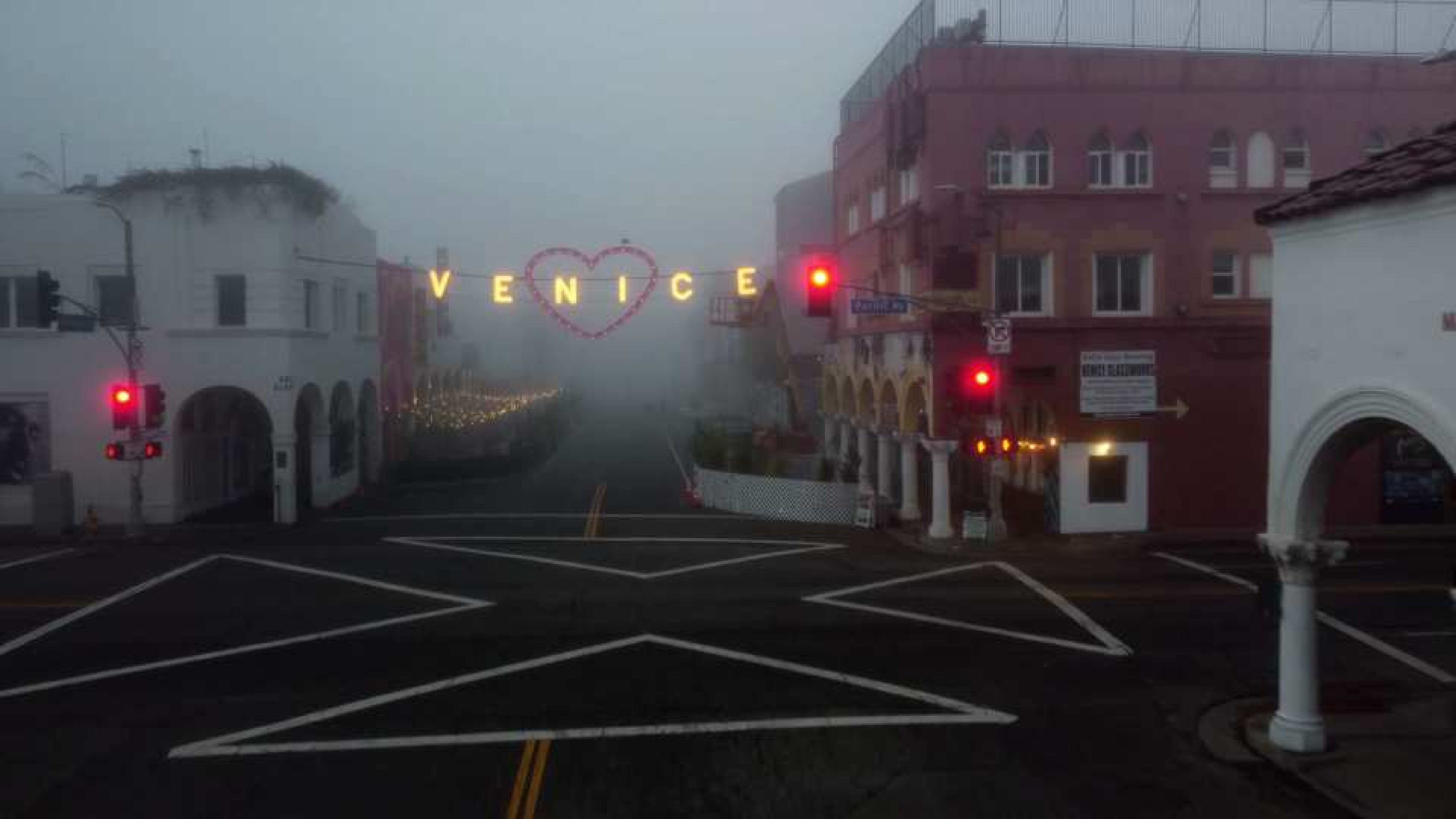}{2074 864 653 639}} \\
  \end{tabular}
  \caption{\textbf{Real-world 4K dehazing across eight consecutive frames.} Top: lake-and-skyline scene; bottom: Venice street. Row labels: (a)~CG-IDN, (b)~DVD, (c)~ViWS-Net, (d)~MAP-Net, \textbf{(e)}~Ours. Red boxes in the hazy row indicate the cropped region shown in rows (a)--(e). LiBrA-Net preserves thin structures and avoids color casts across both scenes; competing methods introduce halos, residual haze, or saturated shifts.}
  \label{fig:realworld_qualitative}
\end{figure}

\section{Grid Representation Structure}
\label{app:grid_anatomy}

The spatial--color grid $\mathcal{G}_\chi$ does not reduce to a depth-only template: it organizes affine coefficients across the full chromatic axis. Figure~\ref{fig:grid_anatomy_appendix} visualizes this structure.
Panel~(a) plots per-bin $\|\mathcal{G}_\chi\|_2$ across all eight color bins and 12 affine channels; every bin carries substantial magnitude with distinct per-channel profiles, confirming that the grid encodes color-dependent correction rather than a single depth-driven gain. Panel~(b) applies t-SNE to 20K randomly sampled grid cells and colors each point by its color-bin index; the clear cluster separation shows that the learned representation self-organizes along the chromatic guide without explicit supervision on the color axis.
These observations complement the Lie-algebraic composition analysis in \S\ref{sec:analysis_chapter}: the grid is both structurally interpretable (color-organized affine descriptor) and mathematically well-behaved (near-identity regime where additive fusion approximates multiplicative composition).

\begin{figure}[h]
  \centering
  \begin{subfigure}[b]{0.42\linewidth}
    \centering
    \includegraphics[width=\linewidth]{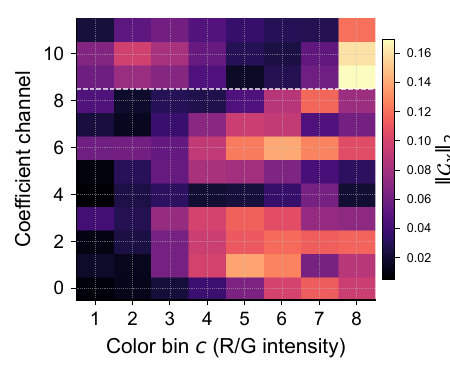}
    \caption{Per-bin $\|\mathcal{G}_\chi\|_2$ across eight color bins and 12 affine channels.}
  \end{subfigure}\hfill
  \begin{subfigure}[b]{0.42\linewidth}
    \centering
    \includegraphics[width=\linewidth]{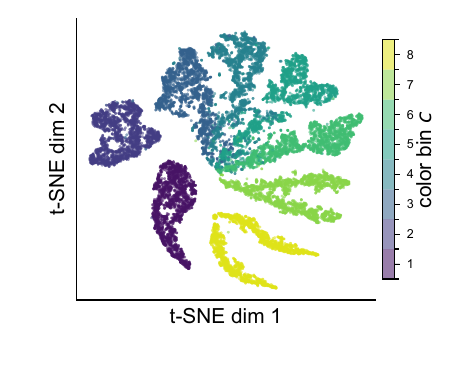}
    \caption{t-SNE of 20K grid cells colored by color-bin index.}
  \end{subfigure}
  \caption{\textbf{Anatomy of the spatial--color grid.}
    The grid exploits all eight color bins with distinct affine profiles~(a) and self-organizes cells into color-coherent clusters~(b), confirming a structured chromatic representation.}
  \label{fig:grid_anatomy_appendix}
\end{figure}

\section{Resolution Invariance of the Bilateral Grid}
\label{app:resolution_invariance}

This section provides the full quantitative analysis of resolution invariance summarized in \S\ref{sec:analysis_chapter}.
We probe the learned grid by predicting it at four resolutions below the native size and comparing against the native-resolution grid via relative Frobenius distance and Pearson~$r$.
Across a ${\geq}\,10{\times}$ resolution range on UHV-4K and ${\geq}\,7{\times}$ on REVIDE, both statistics converge monotonically toward the native anchor (Table~\ref{tab:resolution_invariance}).
Bilinear downsampling discards high-frequency texture but preserves the low-frequency, depth-driven affine statistic the grid encodes, leaving the trilinear slice as the only resolution-dependent operation in the forward pass.
The temporal grid $\mathcal{G}_\tau$ tracks the spatial--color grid $\mathcal{G}_\chi$ closely at every resolution, and the per-frame grid heads preserve frame-specific variation that frame-wise averaging would destroy.

\begin{table}[h]
  \centering
  \caption{\textbf{Resolution invariance of the learned bilateral grid.}
    Each row compares the grid predicted at a downsampled input against the grid at native resolution ($\text{mean}{\pm}\text{std}$ over the test split).}
  \label{tab:resolution_invariance}
  \small
  \setlength{\tabcolsep}{6pt}
  \renewcommand{\arraystretch}{1}
  \begin{tabular}{lcccc}
    \toprule
    \multirow{2}{*}{Input long-edge} & \multicolumn{2}{c}{Relative Frobenius $\downarrow$} & \multicolumn{2}{c}{Pearson $r$ $\uparrow$} \\
    \cmidrule(lr){2-3} \cmidrule(l){4-5}
     & $\mathcal{G}_\chi$ & $\mathcal{G}_\tau$ & $\mathcal{G}_\chi$ & $\mathcal{G}_\tau$ \\
    \midrule
    \tablegroup{5}{(a) UHV-4K ($3840{\times}2160$)}
    \midrule
    360p  & 0.078\,$\pm$\,0.035 & 0.086\,$\pm$\,0.072 & 0.997\,$\pm$\,0.002 & 0.995\,$\pm$\,0.008 \\
    540p  & 0.061\,$\pm$\,0.034 & 0.062\,$\pm$\,0.052 & 0.998\,$\pm$\,0.002 & 0.997\,$\pm$\,0.005 \\
    1080p & 0.032\,$\pm$\,0.018 & 0.031\,$\pm$\,0.017 & 0.999\,$\pm$\,0.001 & 0.999\,$\pm$\,0.001 \\
    1440p & 0.025\,$\pm$\,0.013 & 0.023\,$\pm$\,0.013 & 1.000\,$\pm$\,0.000 & 1.000\,$\pm$\,0.000 \\
    \rowcolor{ourgreen} \textbf{4K \textit{(native anchor)}} & \textbf{0.000} & \textbf{0.000} & \textbf{1.000} & \textbf{1.000} \\
    \midrule
    \tablegroup{5}{(b) REVIDE ($2708{\times}1800$)}
    \midrule
    360p  & 0.028\,$\pm$\,0.018 & 0.048\,$\pm$\,0.019 & 0.999\,$\pm$\,0.001 & 0.998\,$\pm$\,0.001 \\
    540p  & 0.019\,$\pm$\,0.013 & 0.032\,$\pm$\,0.012 & 1.000\,$\pm$\,0.000 & 0.999\,$\pm$\,0.001 \\
    1080p & 0.009\,$\pm$\,0.007 & 0.013\,$\pm$\,0.005 & 1.000\,$\pm$\,0.000 & 1.000\,$\pm$\,0.000 \\
    \rowcolor{ourgreen} \textbf{2708p \textit{(native anchor)}} & \textbf{0.000} & \textbf{0.000} & \textbf{1.000} & \textbf{1.000} \\
    \bottomrule
  \end{tabular}
\end{table}


\end{document}